\documentclass[11pt,letterpaper]{article}

\usepackage[margin=1in]{geometry}
\usepackage[T1]{fontenc}
\usepackage[utf8]{inputenc}
\usepackage{lmodern}
\usepackage{microtype}
\usepackage{amsmath,amssymb,amsthm,mathtools}
\usepackage{bm,dsfont}
\allowdisplaybreaks[1]

\usepackage{enumitem}
\usepackage[table]{xcolor}
\usepackage{graphicx}
\usepackage{subcaption}
\usepackage{booktabs,tabularx,makecell,array}
\usepackage{tikz}
\usetikzlibrary{arrows.meta,positioning,fit,calc}
\usepackage{adjustbox}
\usepackage[normalem]{ulem}
\usepackage{algorithm}
\usepackage{algpseudocode}

\theoremstyle{plain}
\newtheorem{theorem}{Theorem}
\newtheorem{lemma}{Lemma}
\newtheorem{proposition}{Proposition}
\newtheorem{corollary}{Corollary}

\theoremstyle{definition}
\newtheorem{assumption}{Assumption}

\newtheorem{question}{Question}
\theoremstyle{remark}

\usepackage[numbers,sort&compress]{natbib}

\usepackage{xurl}

\usepackage[unicode=true]{hyperref}
\hypersetup{
    pdftitle={Steady-State Convergence of Stochastic Approximation},
    pdfauthor={Yixuan Zhang and Qiaomin Xie},
    colorlinks=true,
    linkcolor=blue,
    citecolor=blue,
    urlcolor=blue,
    bookmarksnumbered=true,
    bookmarksopen=false,
    breaklinks=true
}
\usepackage{bookmark}

\renewcommand{\d}{\mathrm{d}}
\newcommand{\E}{\mathbb{E}}

\newcommand{\mS}{\mathcal{S}}
\newcommand{\mA}{\mathcal{A}}
\newcommand{\mT}{\mathcal{T}}

\newcommand{\R}{\mathbb{R}}
\renewcommand{\P}{\mathbb{P}}
\newcommand{\W}{\mathcal{W}}

\newcommand{\tmT}{\widetilde{\mT}}
\newcommand{\law}{\mathcal{L}}

\newcommand{\B}{\mathcal{B}}

\title{Steady-State Convergence of Stochastic Approximation}
\author{
Yixuan Zhang and Qiaomin Xie
\\[0.8em]
Department of Industrial \& Systems Engineering, University of Wisconsin--Madison
\\
\texttt{\{yzhang2554,qiaomin.xie\}@wisc.edu}
}
\date{}

\begin{document}
\maketitle

\begin{abstract}
For constant-stepsize stochastic approximation (SA), the iterates converge in distribution to a stationary law that depends on the stepsize $\alpha.$
Steady-state convergence (SSC) concerns the limit of the scaled stationary distribution as $\alpha \downarrow 0.$ 
Existing SSC theory requires i.i.d.\ or additive noise and global differentiability of the mean operator, and yields suboptimal rates. We
develop a unified SSC theory for constant-stepsize contractive SA driven by Markovian, multiplicative noise, covering  both locally differentiable and locally nondifferentiable mean operators. A key methodological contribution is a multi-step
universality framework that progressively reduces the original stochastic
recursion to tractable auxiliary dynamics while preserving its steady-state
limit. Under local quadratic linearization at the fixed point, we obtain a Gaussian approximation of the scaled steady state at the optimal rate $\mathcal O(\sqrt{\alpha})$ in Wasserstein-2 distance, which further gives finite-time Gaussian approximations for the raw iterates. In the locally nondifferentiable regime, we establish a general SSC
result and show that the leading-order asymptotic bias can be of order
\(\sqrt{\alpha}\), in contrast to the $\alpha$-order bias in the smooth regime. We apply the theory to Markovian linear SA and asynchronous Q-learning,
neither of which is covered by prior results. We further propose a bias-reduction scheme for Q-learning that requires no knowledge of the local smoothness regime, validated by numerical experiments.

\end{abstract}

\noindent\textbf{Keywords:} steady-state convergence; stochastic approximation;
Gaussian approximation; nonsmoothness;
Richardson--Romberg extrapolation; asynchronous Q-learning.

\section{Introduction}\label{sec:Intro}
Stochastic approximation (SA) is a foundational algorithmic framework for
solving fixed-point problems from noisy observations through iterative
updates. SA-type methods are ubiquitous in reinforcement learning (RL),
stochastic control, and optimization
\citep{Bertsekas19-RL-book,Sutton18-RL-book,kushner2003-yin-sa-book,moulines2011non}.
A prototypical constant-stepsize SA recursion takes the form
\begin{equation}\label{eq:general-SA}
\theta_{t+1}^{(\alpha)}
=
\theta_t^{(\alpha)}
+
\alpha\bigl(
\tmT(x_t,\theta_t^{(\alpha)})
-
\theta_t^{(\alpha)}
\bigr),
\qquad t\geq 0,
\end{equation}
where $\alpha>0$ is a constant stepsize and $\{x_t\}_{t\geq 0}$ is a
noise sequence describing how data are generated. Constant stepsizes are
particularly attractive in practice because of their simple implementation
and rapid initial convergence
\citep{chen2020finite,zhang2024constant}. We assume that the noise sequence $\{x_t\}_{t \geq 0}$ admits a
limiting distribution $\mu$---for example, the common marginal distribution
in the i.i.d.\ setting or the stationary distribution in the Markovian
setting. The recursion~\eqref{eq:general-SA} seeks to approximate a fixed
point $\theta^*$ of the associated mean operator
\[
\mT(\theta)
:=
\mathbb{E}_{x\sim\mu}\bigl[\tmT(x,\theta)\bigr],
\qquad
\text{that is,}
\qquad
\mT(\theta^*)=\theta^*.
\]
The general formulation~\eqref{eq:general-SA} covers a broad class of
stochastic iterative algorithms, including stochastic gradient descent (SGD)
for stochastic optimization \citep{lan2020first} and a variety of
methods in reinforcement learning \citep{Sutton18-RL-book}. In this work, we
focus on \emph{contractive stochastic approximation}, where the mean operator
$\mT$ is a contraction with respect to a norm $\|\cdot\|_c$. This contraction
property guarantees that the fixed point $\theta^*$ is unique.

A canonical example is the celebrated Q-learning algorithm in reinforcement learning \citep{Watkins92-QLearning}, in which \(\mT\) corresponds to a linearly
transformed Bellman optimality operator \citep{chen2024lyapunov}. Figure~\ref{fig:q-learning-long-run-tail-average}
illustrates both the long-run behavior and the transient evolution  of
constant-stepsize Q-learning; the experimental setup is described in
Section~\ref{sec:experiment-intro}. The first three panels show that, after a
sufficiently long run, the iterates continue to fluctuate in a neighborhood
of the optimal action-value function $q^*$, with the magnitude of the
fluctuations increasing with the stepsize $\alpha$. The final panel shows the evolution of the error of the corresponding
tail-averaged iterates, revealing rapid initial convergence followed by a
nonvanishing error floor.
\begin{figure}[!htbp]
    \centering
    \includegraphics[width=0.235\textwidth]{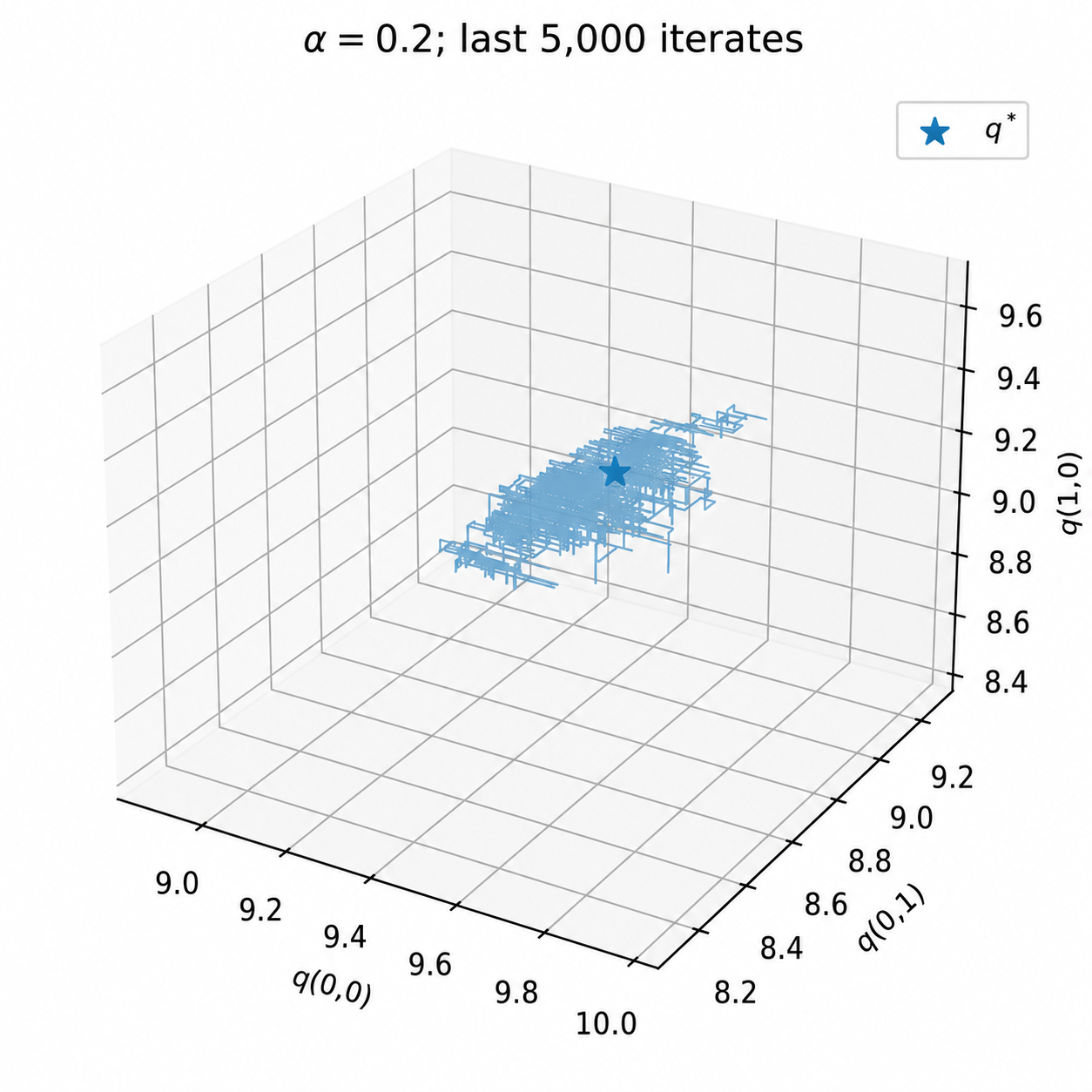}
    \hfill
    \includegraphics[width=0.235\textwidth]{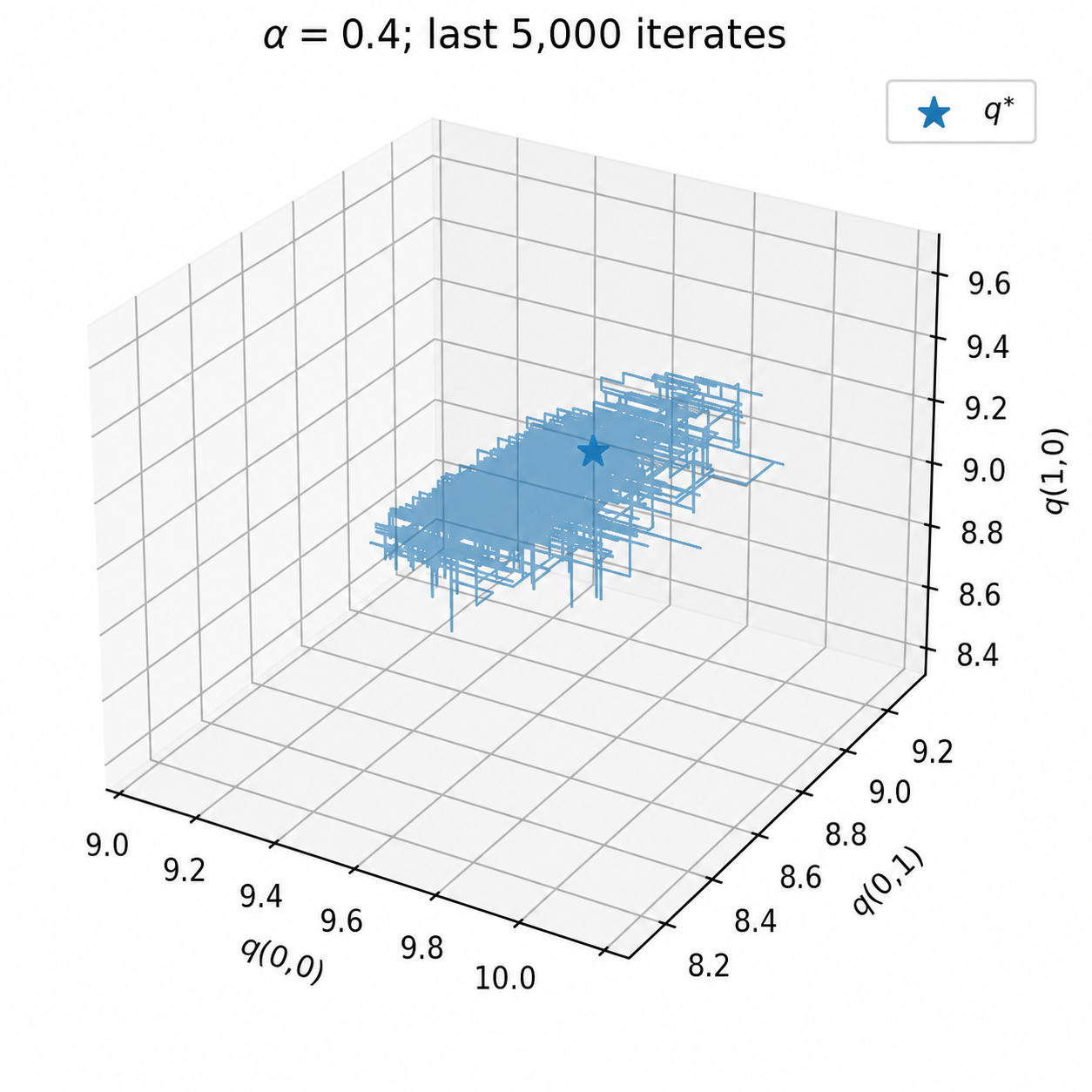}
    \hfill
    \includegraphics[width=0.235\textwidth]{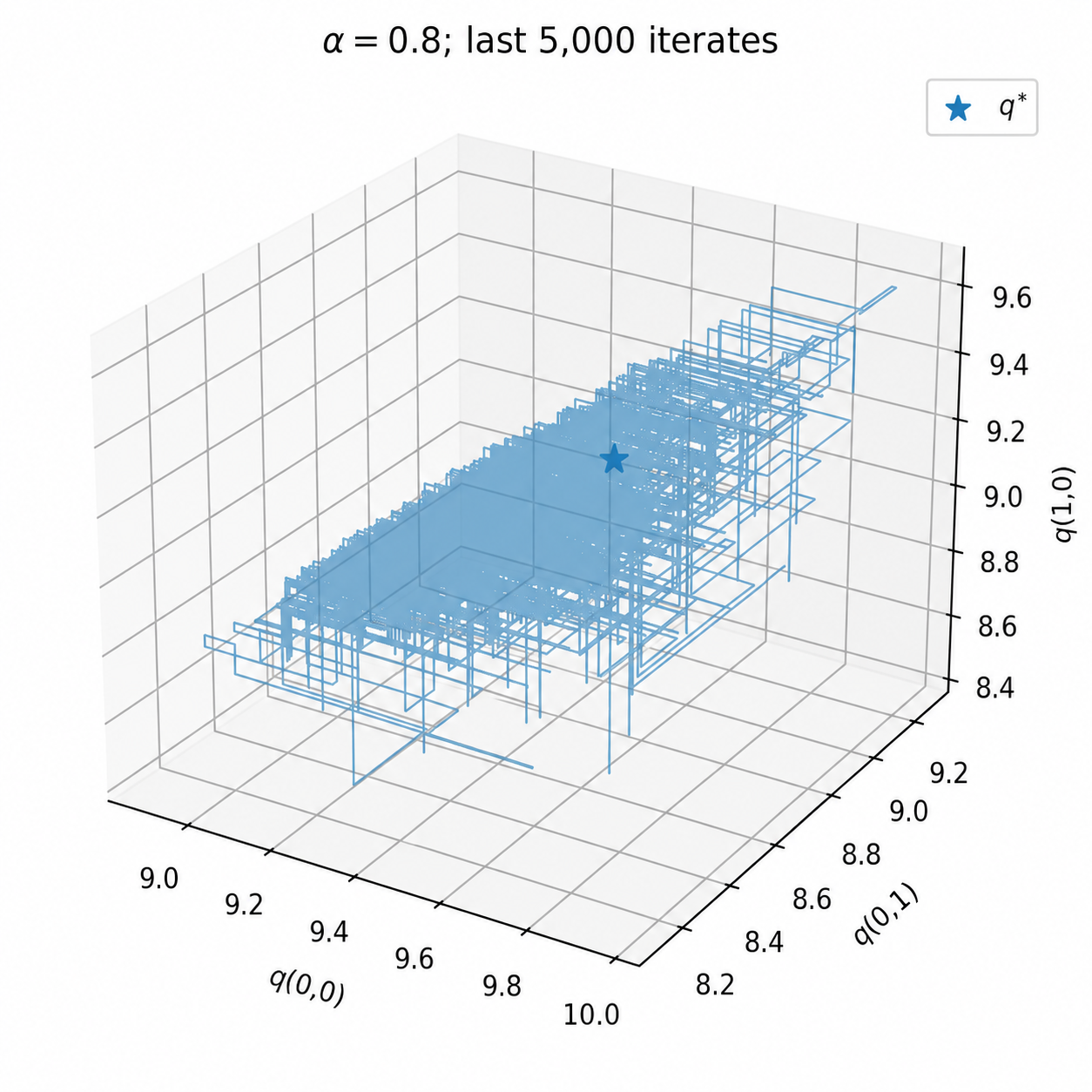}
    \hfill
    \includegraphics[width=0.235\textwidth]{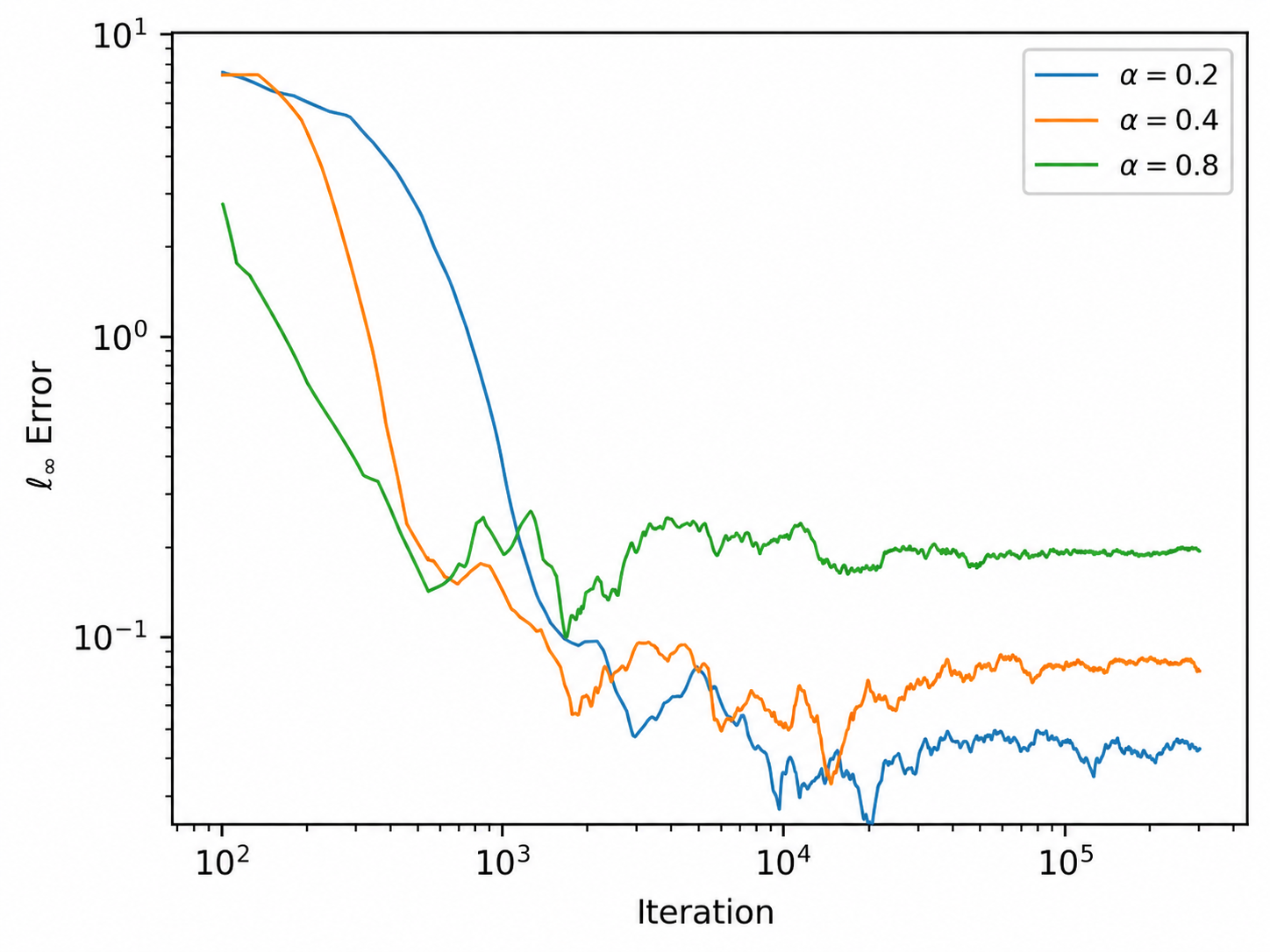}
    
    \caption{
    Long-run behavior and tail-averaging performance of
    constant-stepsize Q-learning. The first three panels display the last
    $5{,}000$ iterates after $300{,}000$ iterations for different stepsizes, with the star indicating
    the optimal $q^*$. Each blue point cloud shows the iterates projected onto the coordinates $q(0,0),q(0,1),q(1,0)$. The final panel plots
    $\bigl\|\bar q_t^{(\alpha)}-q^*\bigr\|_\infty$ on a log--log scale, where
    $\bar q_t^{(\alpha)}$ averages the latter half of the first $t$ iterates.
    }
    \label{fig:q-learning-long-run-tail-average}
\end{figure}

The Q-learning example above illustrates the types of long-run and transient
behavior that can arise in a particular SA algorithm. Such numerical evidence,
however, reveals only the phenomena themselves and does not explain the
mathematical mechanisms underlying them. A rigorous theory should address
questions such as: Why do the iterates continue to fluctuate around the target in the long run? How does the magnitude of these fluctuations depend on the stepsize
$\alpha$? And what quantitative guarantees can be established for a raw 
iterate $\theta_t^{(\alpha)}$ before the long-run regime is reached?
Answering these questions not only deepens our understanding of SA, but can also provide principles for designing more efficient
algorithms and statistical procedures.

A natural way to develop such a theory is to study the stochastic recursion
through the lens of Markov chains. Under suitable conditions on the random
operator $\tmT$ and the noise process $\{x_t\}_{t\ge 0}$, the joint process
$\{(x_t,\theta_t^{(\alpha)})\}_{t\ge 0}$ forms a time-homogeneous Markov chain
that converges geometrically fast to a unique stationary distribution
\citep{dieuleveut2020bridging, huo2023bias}. Thus, the blue clouds in
Figure~\ref{fig:q-learning-long-run-tail-average} can be interpreted as
empirical visualizations of the corresponding long-run distributions of the
iterates. Moreover, this geometric convergence toward stationarity helps explain the
rapid initial decay observed in the final panel of
Figure~\ref{fig:q-learning-long-run-tail-average}. We denote a random pair
drawn from the stationary distribution of the joint chain by
$(x_\infty,\theta_\infty^{(\alpha)})$. A growing literature has developed
fine-grained characterizations of the stationary iterate
$\theta_\infty^{(\alpha)}$
\citep{huo2023bias,zhang2024constant,huo2024collusion}.

A central feature of the stationary iterate $\theta_\infty^{(\alpha)}$ is
that, under a constant stepsize, its mean generally does not coincide with the
target fixed point:
\[
    \E[\theta_\infty^{(\alpha)}] \neq \theta^*.
\]
This phenomenon is also suggested empirically by
Figure~\ref{fig:q-learning-long-run-tail-average}: the blue clouds are not
perfectly centered around the fixed point, and the tail-averaging error in
the final panel incurs a nonvanishing error. The
discrepancy $\E[\theta_\infty^{(\alpha)}]-\theta^*$ is commonly referred to as the \emph{asymptotic bias}. When the mean operator
$\mT$ is locally differentiable around $\theta^*$, under additional regularity conditions, existing analyses exploit
a Taylor expansion of $\mT$ to show
that the bias admits the expansion
\begin{equation}\label{eq:bias-smooth}
    \E\big[\theta_\infty^{(\alpha)}\big]-\theta^*
    =
    \alpha c + o(\alpha),
\end{equation}
where $c$ is a vector independent of $\alpha$. Such fine-grained
characterizations of the asymptotic bias provide a foundation for principled
bias-reduction schemes 
\citep{dieuleveut2020bridging,huo2023bias,zhang2024constant,huo2024collusion}.

Despite the recent advance, our understanding of the stationary law of
$\theta_\infty^{(\alpha)}$ remains incomplete. In particular, two fundamental
questions remain open.

\begin{question}\label{q:smooth}
When the mean operator $\mT$ is locally differentiable, can one go beyond the  bias expansion~\eqref{eq:bias-smooth} and obtain a fine-grained characterization of the full distribution of $\theta_\infty^{(\alpha)}$? In particular, under an appropriate scaling, does the stationary distribution admit a Gaussian approximation, and can the approximation error be quantified nonasymptotically? Moreover, can such a  Gaussian approximation be leveraged to obtain a quantitative Gaussian approximation for the raw iterate $\theta_t^{(\alpha)}$ at finite time?
\end{question}

\begin{question}\label{q:nonsmooth}
When the mean operator $\mT$ is locally nondifferentiable, can one still obtain an asymptotic bias characterization analogous to \eqref{eq:bias-smooth}? If so, what structural conditions suffice, and how does the leading-order bias scale with the stepsize $\alpha$? In particular, does the linear-in-$\alpha$ scaling persist in the locally nondifferentiable setting?
\end{question}

To address these two questions, we develop a general theory of
\emph{steady-state convergence} and show that it provides a unified framework for analyzing both the locally  differentiable and locally nondifferentiable regimes described above.
\subsection{Steady-State Convergence}

Recently, in the setting where the noise sequence $\{x_t\}_{t\ge 0}$ is
i.i.d., a growing line of work has developed a theory for the
steady-state behavior of constant-stepsize SA
\citep{chen2022stationary,zhang2024prelimit,wang2026steady}.
These works consider the diffusion-scaled iterates and their stationary
counterpart,
\[
Y_t^{(\alpha)} := (\theta_t^{(\alpha)}-\theta^*)/\sqrt{\alpha},
\qquad
Y_\infty^{(\alpha)} := (\theta_\infty^{(\alpha)}-\theta^*)/\sqrt{\alpha}.
\]
The central goal is to establish the \emph{steady-state convergence (SSC)} of $\{Y^{(\alpha}_{\infty}\}_{\alpha}$:
\[
    Y_\infty^{(\alpha)}
    \Rightarrow
    Y_\infty,
    \qquad
    \text{as } \alpha\downarrow 0,
\]
for a limiting random variable $Y_\infty$. This limit provides a
distributional characterization of the stationary fluctuations of
$\theta_\infty^{(\alpha)}$ around $\theta^*$ at their natural
$\sqrt{\alpha}$ scale. As illustrated by the red route in
Figure~\ref{fig:limit-plot}, steady-state convergence corresponds to first
letting $t\to\infty$ for a fixed stepsize $\alpha$ and then sending
$\alpha\downarrow0$.

\begin{figure}[htbp!]
    \centering
     \begin{adjustbox}{max width=\linewidth}
\begin{tikzpicture}%
     \draw[dashed,->] (0,2.5)--(0,0.5);
      \draw[dashed, ->] (0.5,0)--(4.5,0);
       \draw[red, thick, ->] (0.5,3)--(4.5,3);
        \draw[red, thick, ->] (5,2.5)--(5,0.5);
    \draw (0,3) node {$Y_t^{(\alpha)}$};
    \draw (5,3) node {$Y^{(\alpha)}_\infty$};
    \draw (2.5,3.2) node {$t\to\infty $};
     \draw (6,1.5) node {$\alpha\downarrow0 $};
     \draw (-0.2,1.5) node {$\alpha t = \bar{t}\quad$ $\alpha \downarrow 0$};
    \draw (0,0) node {$\overline{Y}_{\bar{t}}$};
    \draw (2.5 ,-0.3) node {$\bar{t} \to \infty$};
    \draw (5,0) node {$Y_\infty$};
    \end{tikzpicture}
\end{adjustbox}
    \caption{Two methods for studying steady-state convergence. This paper considers the red route; the dashed route is the classical interchange-of-limits method via the SDE limit $\overline Y_{\bar t}$.
    }
    \label{fig:limit-plot}
\end{figure}

The implications of such an SSC result are twofold.
First, under the global differentiability conditions imposed in prior work, the limiting random variable $Y_\infty$ has been shown to be
Gaussian \citep{chen2022stationary} in some cases. More recently,
\cite{wang2026steady} establish explicit rates for Gaussian approximation.
Such quantitative Gaussian approximations provide a principled reference
distribution for statistical inference on the raw iterate error
$\theta_t^{(\alpha)}-\theta^*$ by controlling the discrepancy between the
transient diffusion-scaled iterate $Y_t^{(\alpha)}$ and its steady-state
Gaussian limit $Y_\infty$. Although \cite{wang2026steady} also treat
Markovian data, their analysis in both the i.i.d.\ and Markovian settings
requires an additive-noise structure of the form
$\tmT(x,\theta)=\mT(\theta)+h(x)$, and the resulting Gaussian approximation
rates are generally suboptimal. These limitations leave open whether sharp
quantitative Gaussian approximations can be established  under substantially weaker structural assumptions.

Second, when \(\mT\) is locally nondifferentiable at \(\theta^*\), classical
bias analyses based on linearization or Taylor expansion of the mean operator
are no longer directly applicable. Nevertheless, \cite{zhang2024prelimit}
establishes a general SSC result for a broad class of mean operators that are
one-sided directionally differentiable at \(\theta^*\), encompassing several
widely used nondifferentiable algorithms. 
This general SSC need not yield either an explicit characterization
of the steady-state limit or a quantitative convergence rate. Importantly, once
SSC is established, the asymptotic bias can still be characterized through
the steady-state limit \(Y_\infty\):
\begin{equation}\label{eq:bias-nonsmooth}
    \E\bigl[\theta_\infty^{(\alpha)}\bigr]-\theta^*
    =
    \sqrt{\alpha}\,\E[Y_\infty]
    +
    o(\sqrt{\alpha}).
\end{equation}
Moreover, \cite{zhang2024prelimit} shows that, in genuinely
nondifferentiable settings, the limiting mean \(\E[Y_\infty]\) can be
nonzero, giving rise to a leading-order bias of order \(\sqrt{\alpha}\).
This stands in sharp contrast to the order-\(\alpha\) bias arising in the
smooth settings admitting \eqref{eq:bias-smooth}. Such a characterization also enables
principled bias-reduction procedures for estimating \(\theta^*\), analogous
to those developed from \eqref{eq:bias-smooth} in the smooth setting.
However, establishing an analogous general SSC result for SA under Markovian
noise remains an open problem.

The existing SSC results discussed above are summarized
in Table~\ref{tab:ssc-literature}. Here, $\W_p$ denotes the Wasserstein
distance of order $p$, and $\law(X)$ denotes the probability law of a random
variable $X$. We defer the formal definition of $\W_p$ to
Section~\ref{sec:notation}.

\begin{table}[htbp!]
\centering

\definecolor{tablerule}{RGB}{120,120,120}
\arrayrulecolor{tablerule}
\setlength{\arrayrulewidth}{0.8pt}

\renewcommand{\arraystretch}{1.25}
\setlength{\tabcolsep}{5pt}

\footnotesize

\begin{tabularx}{0.94\textwidth}{
    |>{\centering\arraybackslash}p{0.18\textwidth}
    |>{\centering\arraybackslash}X
    |>{\centering\arraybackslash}p{0.26\textwidth}|
}
\hline

\bfseries Regime
&
\bfseries I.i.d.\ Noise
&
\bfseries Markovian Noise
\\[0.15em]

\hline

\begin{minipage}[c]{\linewidth}
\centering
$\mT$ is globally continuously differentiable. Noise is
additive.

\end{minipage}
&
\begin{minipage}[c]{\linewidth}
\centering
\[
\W_1\!\left(
    \mathcal{L}\!\left(Y^{(\alpha)}_\infty\right),
    \mathcal{N}(0,V)
\right)
\in
\mathcal O\!\left(
    \sqrt{\alpha}\log(1/\alpha)
\right).
\]
\citep{wang2026steady}
\end{minipage}
&
\begin{minipage}[c]{\linewidth}
\centering
Same rate as in the i.i.d.\ setting.\\
\citep{wang2026steady}
\end{minipage}
\\[0.4em]

\hline

\begin{minipage}[c]{\linewidth}
\centering
$\mT$ is one-sided directionally differentiable at \(\theta^*\).
\end{minipage}
&
\begin{minipage}[c]{\linewidth}
\centering
\[
\lim_{\alpha\downarrow 0}
\W_2\!\left(
    \mathcal{L}\!\left(Y^{(\alpha)}_\infty\right),
    \mathcal{L}\!\left(Y_\infty\right)
\right)
=
0.
\]
\citep{zhang2024prelimit}
\end{minipage}
&
\begin{minipage}[c]{\linewidth}
\centering
{\Large $\diagup$}
\end{minipage}
\\[0.4em]

\hline
\end{tabularx}

\caption{Existing steady-state convergence results. $V$ denotes a problem-dependent asymptotic covariance matrix, defined in Section~\ref{sec:differentiable}.}
\label{tab:ssc-literature}
\end{table}
Therefore, SSC provides a key tool for addressing
Questions~\ref{q:smooth} and~\ref{q:nonsmooth}. However, to the best of our
knowledge, there is no general and sharp SSC theory for
contractive SA under Markovian and multiplicative noise. This gap is particularly consequential
in practice, as several widely used algorithms fall outside the scope of
existing results. For instance, Markovian linear SA \citep{srikant2019finite}, which arises naturally in
reinforcement learning and stochastic control, typically involves
multiplicative noise, while Q-learning \citep{Watkins92-QLearning} combines Markovian data with a Bellman
optimality operator that may be locally nondifferentiable at the fixed point.
Neither setting is covered by the existing SSC results
discussed above. In this work, we develop a general and sharp SSC theory for contractive SA that accommodates both settings, thereby
providing a unified framework for analyzing these practically important
algorithms.

\subsection{Our Contributions}\label{sec:contribution}
\noindent\textbf{Multi-Step Universality Reduction.}
Our first contribution is a \emph{multi-step universality reduction}
framework for establishing SSC. The main idea is the
following. To prove the SSC $\lim_{\alpha\downarrow 0}
\W_2\!\left(
    \law\!\left(Y_\infty^{(\alpha)}\right),
    \law\!\left(Y_\infty\right)
\right)
=
0,$ we introduce an auxiliary diffusion-scaled steady-state
$\mathcal Y_\infty^{(\alpha)}$, derived from a simpler SA recursion, whose SSC
is easier to analyze. We then quantify the discrepancy between the original and
auxiliary steady states in $\W_2$, namely, $\W_2\!\left(
    \law\!\left(Y_\infty^{(\alpha)}\right),
    \law\!\left(\mathcal Y_\infty^{(\alpha)}\right)
\right).$ For general contractive SA, we show that this simplification can be carried
out through a sequence of universality reductions, as summarized by the first
two arrows in each branch of Figure~\ref{fig:proof-roadmap}.

We first establish an \emph{additive-noise universality reduction} to an
auxiliary steady state $A_\infty^{(\alpha)}$, obtained by replacing the
original random operator $\tmT(x,\theta)$ with its additive-noise
counterpart
\[
\mT(\theta)+h(x),
\qquad\mbox{where }
h(x):=\tmT(x,\theta^*)-\theta^*.
\]
This reduction incurs an $\mathcal O(\sqrt{\alpha})$ error in $\W_2$
(Proposition~\ref{prop:additive}). Starting from $A_\infty^{(\alpha)}$, we then develop two further universality
reductions.

\begin{itemize}
    \item 
    When $\mT$ admits a local quadratic linearization at $\theta^*$, we establish a
    \emph{Jacobian-drift universality reduction} to
    $B_\infty^{(\alpha)}$ by replacing the mean operator $\mT(\theta)$ with
    its linearization 
    \[
        \theta^* + J(\theta-\theta^*),
        \qquad\mbox{where }
        J=\nabla \mT(\theta^*).
    \]
    This reduction also incurs an $\mathcal O(\sqrt{\alpha})$ error in $\W_2$
    (Proposition~\ref{prop:Jacobian}). The resulting recursion defining
    $B_\infty^{(\alpha)}$ has a linear drift and additive noise, and is
    therefore much more amenable to SSC analysis.

    \item For a general contractive  \(\mT\), we establish an
\emph{independent-Gaussian-noise universality reduction} to
\(Z_\infty^{(\alpha)}\) by replacing the additive noise sequence
\(\{h(x_t)\}\) with an i.i.d.\ centered Gaussian sequence whose covariance
matches the long-run covariance of \(h(x_t)\). This reduction incurs an error
of order \(\mathcal O(\alpha^{1/4})\) in \(\W_2\);
see Proposition~\ref{prop:ind-Gaussian}. The resulting
\(Z_\infty^{(\alpha)}\) is precisely the diffusion-scaled steady state of an
SA recursion with addtive i.i.d.\ Gaussian noise, for which SSC was established in
\cite{zhang2024prelimit} under one-sided directional differentiability of
\(\mT\) at \(\theta^*\). A key technical ingredient in this universality reduction is a uniform
blockwise Gaussian coupling (Proposition~\ref{prop:uniform}), together with a
\(\W_p\)-decoupling argument (Lemma~\ref{lem:wp-decoupling}), both of which
may be of independent interest.
\end{itemize}
\begin{figure}[htbp!]
\centering
\begin{adjustbox}{max width=\linewidth}
\begin{tikzpicture}[
    >=Stealth,
    state/.style={font=\normalsize},
    labeltext/.style={font=\small, align=center},
    arrow/.style={->, thick},
    connector/.style={thick},
    branchbox/.style={draw, thick, rounded corners=4pt, inner sep=9pt}
]

\node[state] (Y) at (-1.4,0) {$Y_\infty^{(\alpha)}$};
\node[state] (A) at (2.4,0) {$A_\infty^{(\alpha)}$};

\coordinate (F)  at (3.1,0);
\coordinate (Fu) at (3.1,1.45);
\coordinate (Fd) at (3.1,-1.45);

\node[state] (Balpha) at (7.6,1.45) {$B_\infty^{(\alpha)}$};
\node[state] (B)      at (11.4,1.45) {$\mathcal N(0,V)$};

\node[state] (Zalpha) at (7.6,-1.45) {$Z_\infty^{(\alpha)}$};
\node[state] (Z)      at (11.4,-1.45) {$Z_\infty$};

\draw[arrow] (Y) -- (A);
\node[labeltext] at (0.5,0.50) {Additive-Noise\\Universality};
\node[labeltext] at (0.5,-0.42) {(Proposition~\ref{prop:additive})};

\draw[connector] (A) -- (F);
\draw[connector] (F) -- (Fu);
\draw[connector] (F) -- (Fd);

\draw[arrow] (Fu) -- (Balpha);
\node[labeltext] at (5,1.90) {Jacobian-Drift\\Universality};
\node[labeltext] at (5,1.00) {(Proposition~\ref{prop:Jacobian})};

\draw[arrow] (Fd) -- (Zalpha);
\node[labeltext] at (5,-1.00) {Independent Gaussian\\Noise Universality};
\node[labeltext] at (5,-1.90) {(Proposition~\ref{prop:ind-Gaussian})};

\draw[arrow] (Balpha) -- (B);
\node[labeltext] at (9.5,1.80) {$\alpha\downarrow 0$};
\node[labeltext] at (9.5,1.00) {(Proposition~\ref{prop:geometric-gaussian})};

\draw[arrow] (Zalpha) -- (Z);
\node[labeltext] at (9.5,-1.10) {$\alpha\downarrow 0$};
\node[labeltext] at (9.5,-1.90) {\citep{zhang2024prelimit}};

\node[branchbox, fit=(Balpha)(B)] (smoothbox) {};
\node[branchbox, fit=(Zalpha)(Z)] (nonsmoothbox) {};

\node[labeltext, below=5pt of smoothbox] {Local Quadratic Linearization};
\node[labeltext, below=5pt of nonsmoothbox] {Local One-Sided Directional Differentiability};

\end{tikzpicture}
\end{adjustbox}
\caption{Proof roadmap for the multi-step universality reductions.}
\label{fig:proof-roadmap}
\end{figure}
\noindent\textbf{Steady-State Convergence.}
Building on the multi-step universality reductions described above, we
establish SSC for contractive SA in two regimes. Our results are
summarized in Table~\ref{tab:ssc}, with the main new results highlighted in
red.

\begin{itemize}
    \item 
    When $\mT$ admits a local quadratic linearization at $\theta^*$, we first establish a
    quantitative Gaussian approximation for the auxiliary steady-state
    $B_\infty^{(\alpha)}$: $\W_2\!\left(
            \law\!\left(B_\infty^{(\alpha)}\right),
            \mathcal N(0,V)
        \right)
        =
        \mathcal O(\sqrt{\alpha}),$ where $V$ is the unique solution to the Lyapunov equation determined by
    the Jacobian
    $J$ and the long-run covariance of the 
    noise $h(x_t)$
    (Proposition~\ref{prop:geometric-gaussian}). Combining this result with the additive-noise and Jacobian-drift universality
reductions yields the steady-state Gaussian approximation for the
original SA recursion (Theorem~\ref{thm:steady-state-gaussian-approx}):
\[
    \W_2\!\left(
        \law\!\left(Y_\infty^{(\alpha)}\right),
        \mathcal N(0,V)
    \right)
    =
    \mathcal O(\sqrt{\alpha}).
\]
    This substantially strengthens existing steady-state Gaussian approximation
results in several respects: we require a quadratic linearization remainder
only at $\theta^*$, with no differentiability assumption elsewhere; we allow general
multiplicative noise instead of imposing an additive-noise structure; we
obtain the optimal $\mathcal O(\sqrt{\alpha})$ rate without the additional
logarithmic factor appearing in prior work; and our result holds in the
stronger $\W_2$ metric rather than $\W_1$. As a byproduct, the steady-state Gaussian approximation also yields a
finite-time Gaussian approximation for the raw iterates
\(\theta_t^{(\alpha)}\) of general contractive SA
(Corollary~\ref{cor:raw-Gaussian}), whereas existing results on raw-iterate
Gaussian approximation are restricted to independent data
\cite{wei2026gaussian,haque2026accurately}.

    \item 
   When $\mT$ is one-sided
directionally
differentiable at $\theta^*$, the steady-state
convergence theory for i.i.d.\ data developed in
\cite{zhang2024prelimit} can be lifted to the Markovian setting through our
universality reductions. In particular, as illustrated in
Figure~\ref{fig:proof-roadmap}, the auxiliary i.i.d.\ Gaussian-noise recursion
satisfies $\lim_{\alpha\downarrow 0}
    \W_2\!\left(
        \law\!\left(Z_\infty^{(\alpha)}\right),
        \law(Z_\infty)
    \right)
    =
    0.$ Applying the triangle inequality and combining with the independent-Gaussian-noise universality reduction shows that $Z_\infty$ is also the steady-state
limit of the original Markovian SA. We therefore identify
$Y_\infty:=Z_\infty$ and obtain
\[
    \lim_{\alpha\downarrow 0}
    \W_2\!\left(
        \law\!\left(Y_\infty^{(\alpha)}\right),
        \law(Y_\infty)
    \right)
    =
    0.
\]
This extends the general SSC result established for i.i.d.\ data in
\cite{zhang2024prelimit} to the Markovian-noise setting. Moreover, the same
criterion developed in \cite{zhang2024prelimit} can be used to characterize
when \(\E[Y_\infty]\neq 0\), yielding the
\(\sqrt{\alpha}\)-order asymptotic bias expansion in
\eqref{eq:bias-nonsmooth} and enabling principled bias-reduction procedures
under Markovian noise.
\end{itemize}

\begin{table}[htbp!]
\centering

\definecolor{tablerule}{RGB}{120,120,120}
\definecolor{resultred}{RGB}{190,0,0}
\arrayrulecolor{tablerule}
\setlength{\arrayrulewidth}{0.8pt}

\renewcommand{\arraystretch}{1.25}
\setlength{\tabcolsep}{5pt}

\footnotesize

\begin{tabularx}{0.94\textwidth}{
    |>{\centering\arraybackslash}p{0.18\textwidth}
    |>{\centering\arraybackslash}X
    |>{\centering\arraybackslash}p{0.26\textwidth}|
}
\hline

\bfseries Regime
&
\bfseries I.i.d.\ Noise
&
\bfseries Markovian Noise
\\[0.15em]

\hline

\begin{minipage}[c]{\linewidth}
\centering
\textcolor{resultred}{$\mT$ admits a local quadratic linearization at $\theta^*$.}
\end{minipage}
&
\begin{minipage}[c]{\linewidth}
\centering
\textcolor{resultred}{
\[
\W_2\!\left(
    \law\!\left(Y_\infty^{(\alpha)}\right),
    \mathcal N(0,V)
\right)
=
\mathcal O(\sqrt{\alpha}).
\]
}
\end{minipage}
&
\begin{minipage}[c]{\linewidth}
\centering
\textcolor{resultred}{
Same $\mathcal O(\sqrt{\alpha})$ rate as in the i.i.d.\ setting.
}
\end{minipage}
\\[0.4em]

\hline

\begin{minipage}[c]{\linewidth}
\centering
$\mT$ is one-sided directionally differentiable at \(\theta^*\).
\end{minipage}
&
\begin{minipage}[c]{\linewidth}
\centering
\[
\lim_{\alpha\downarrow 0}
\W_2\!\left(
    \law\!\left(Y_\infty^{(\alpha)}\right),
    \law(Y_\infty)
\right)
=
0.
\]
\citep{zhang2024prelimit}
\end{minipage}
&
\begin{minipage}[c]{\linewidth}
\centering
\textcolor{resultred}{
Same result as in the i.i.d.\ setting.
}
\end{minipage}
\\[0.4em]

\hline
\end{tabularx}

\caption{Results on steady-state convergence in different operator regimes under different noise models. Red entries highlight the results established in this work.}
\label{tab:ssc}
\end{table}
\noindent\textbf{Applications to Markovian Linear SA and Q-Learning.}
We further apply our SSC theory to two prototypical examples: Markovian
linear SA and Q-learning. For Markovian linear SA, the mean operator is
affine and hence satisfies local quadratic linearization with zero remainder. Our theory therefore provides, to the best of our knowledge, the first steady-state
Gaussian approximation for Markovian LSA, along with the raw-iterate guarantee.

For asynchronous Q-learning, we uncover a sharp distinction governed by
the uniqueness of the optimal action. When the optimal action is unique at
every state, the Bellman optimality operator is locally affine around
the fixed point, so the mean operator satisfies local quadratic linearization;
in the presence of optimal-action ties, it is generally
locally nondifferentiable and the asymptotic bias can be of order
\(\sqrt{\alpha}\). Our theory covers both regimes, providing a unified
steady-state analysis of asynchronous Q-learning and, to the best of our
knowledge, the first such characterization in the literature. We further
propose a unified extrapolation scheme that cancels the leading-order bias
without prior knowledge of whether optimal-action ties are present. Numerical
experiments in Section~\ref{sec:experiments} further illustrate and support
these theoretical findings.
\subsection{Related Approaches to Steady-State Convergence}
SSC is a classical problem in stochastic dynamical
systems, exemplified by queueing networks \citep{GamaZeev2006}. In the queueing network problems, one typically considers a
sequence of systems approaching a heavy-traffic regime. After appropriate
space--time scaling, the queue-length process is first shown to converge over
finite time horizons to a limiting diffusion, often a reflected diffusion.
SSC asks whether the stationary distributions of the
prelimit queueing systems converge, under the same scaling, to the stationary
distribution of this limiting diffusion. Equivalently, one seeks to justify
an \emph{interchange of limits}
\citep{GamaZeev2006,Gurv2014a,YeYao2016,YeYao2018}:
the heavy-traffic limit and the long-time limit can be taken in either order.
In the terminology of Figure~\ref{fig:limit-plot}, this amounts to showing
that the solid and dashed paths lead to the same limiting distribution. For the general contractive SA setting considered here, however, such a route is not directly
available. In particular, it is not clear  whether the
diffusion-scaled transient process admits an SDE limit
$\overline{Y}_{\bar t}$, especially in the presence of Markovian noise, let alone whether the corresponding interchange of
limits can be justified.

Another classical route to SSC is based on the
\emph{Basic Adjoint Relationship} (BAR) associated with the generator of the
underlying Markov process. At stationarity, BAR characterizes the invariant
distribution through identities satisfied by suitable test functions.
When combined with exponential test functions, this approach can yield
convergence of moment generating functions and, consequently, weak
convergence of the stationary distributions
\citep{BravDaiMiya2017,BravDaiMiya2023,chen2022stationary}.
However, even in the comparatively simple setting of i.i.d.\ noise and a
globally differentiable mean operator $\mT$, existing BAR-based analyses for
SA require additional regularity and structural assumptions that can be
difficult to verify in general \citep{chen2022stationary}. These limitations
become even more pronounced when $\mT$ is only locally nondifferentiable and
the noise process is Markovian.

The $\sqrt{\alpha}$ scaling also suggests a natural connection with Langevin
diffusions and the extensive nonasymptotic theory of the Unadjusted Langevin
Algorithm (ULA)
\citep{durmus2017nonasymptotic,durmus2019high}. However, the rescaled SA recursion
$Y_t^{(\alpha)}$ reduces to a ULA-type recursion only under restrictive
structural conditions, as discussed in
\cite{zhang2024prelimit}. In particular, such a reduction requires additive
Gaussian noise, together with a gradient-field structure and an appropriate
positive-homogeneity property for the local dynamics induced by $\mT$ around
$\theta^*$. Consequently, standard ULA techniques are not directly applicable
to the general contractive SA setting considered here.

\subsection{ Notation}\label{sec:notation}
We write $\mathbb{S}^{d-1} := \{\theta \in \mathbb{R}^d : \|\theta\| = 1\}$ for the unit sphere in $\mathbb{R}^d$, where $\|\cdot\|$ denotes the Euclidean norm. For a matrix $A\in\mathbb{R}^{d\times d}$, we denote its spectral radius by
$\rho(A)$.
We use \(\gamma\) for the contraction modulus of the mean operator \(\mT\)
and \(J\) for its Jacobian at the fixed point, whenever this Jacobian exists.
In the Q-learning application, \(\rho\) denotes the MDP discount factor;
\(\rho(A)\) continues to denote spectral radius and is distinguished by its
matrix argument.
 We use $\overline{\B}_d(\theta, \epsilon)$ to denote an closed ball in $\R^d$ centering at $\theta \in \R^d$ with radius $\epsilon>0$ with respect to $\|\cdot\|$. 
 
Let $(\Omega,\mathcal F,\mathbb P)$ be a probability space and $(\mathcal M,d)$ a
Polish metric space. For a Borel measurable random variable
$Y:\Omega\to\mathcal M$, we write $\mathcal L(Y)$ for the law (distribution) of
$Y$. Let $\mathcal P(\mathcal M)$ denote the set of Borel probability measures on
$\mathcal M$. For $p\in[1,\infty)$, define the $p$-moment class
\[
\mathcal P_p(\mathcal M)
:=\Bigl\{\mu\in\mathcal P(\mathcal M): \int_{\mathcal M} d(x,x_0)^p\,\mu(\mathrm dx)
<\infty \text{ for some (equivalently any) } x_0\in\mathcal M\Bigr\}.
\]
For $\mu,\nu\in\mathcal P(\mathcal M)$, let $\Pi(\nu,\mu)$ be the collection of
all couplings of $(\nu,\mu)$, i.e., probability measures $\pi$ on
$\mathcal M\times\mathcal M$ whose marginals are $\nu$ and $\mu$. The
Wasserstein--$p$ distance on $(\mathcal M,\mathsf d)$ is defined by
\[
\mathcal W_{p,\mathsf d}(\nu,\mu)
:=
\inf_{\pi\in\Pi(\nu,\mu)}
\left(
\int_{\mathcal M\times\mathcal M} \mathsf d(x,y)^p \,\pi(\mathrm dx,\mathrm dy)
\right)^{1/p}, \qquad \forall\nu,\mu\in\mathcal P_p(\mathcal M).
\]
When $\mathcal M=\mathbb R^d$ and $\mathsf d(x,y)=\|x-y\|$ is the Euclidean metric, we abbreviate
$\mathcal W_{p,d}$ as $\mathcal W_p$:
\[
\mathcal W_p(\nu,\mu)
:=
\inf_{\pi\in\Pi(\nu,\mu)}
\left(
\int_{\mathbb R^d\times\mathbb R^d} \|x-y\|^p \,\pi(\mathrm dx,\mathrm dy)
\right)^{1/p},
\qquad \forall \nu,\mu \in \mathcal P_p(\mathbb R^d).
\]
Throughout the paper, unless otherwise specified, the notation
\(\mathcal O(\cdot)\) and \(\lesssim\) suppresses finite constants that are
independent of \(\alpha\) and \(t\), but may depend on other fixed problem
parameters. The notation \(\widetilde{\mathcal O}(\cdot)\) additionally
suppresses logarithmic factors in \(1/\alpha\).

\subsection{Organization of the Paper}
The remainder of the paper is organized as follows.
Section~\ref{sec:problem} introduces the problem setup and standing
assumptions.
Section~\ref{sec:additive} establishes additive-noise universality, which
serves as the common starting point for our SSC analysis.
Section~\ref{sec:differentiable} develops steady-state and raw-iterate
Gaussian approximations under local quadratic linearization.
Section~\ref{sec:nondifferentiable} establishes general SSC under one-sided
directional differentiability  and discusses its implications for asymptotic bias
characterization and reduction.
Sections~\ref{sec:linearSA} and~\ref{sec:Q-learning} apply the theory to
Markovian linear SA and asynchronous Q-learning, respectively.
Section~\ref{sec:experiments} presents numerical experiments comparing
tail averaging with regime-specific, misspecified, and unified
Richardson--Romberg extrapolations.
We conclude with a discussion of future research directions.
Technical preliminaries, detailed proofs, and additional experimental
details are provided in the appendices.
\section{Problem Setup}\label{sec:problem}
We consider the following constant-stepsize stochastic approximation (SA) recursion:
\begin{equation}\label{eq:update}
\theta_{t+1}^{(\alpha)}
= \theta_t^{(\alpha)}
+ \alpha\Big( \tmT\big(x_t,\theta_t^{(\alpha)}\big)
- \theta_t^{(\alpha)} \Big),
\end{equation}
where \(\alpha>0\) is a fixed stepsize, \(\{x_t\}_{t\ge 0}\) is a stochastic process on \(\mathcal X\) that admits a unique limiting distribution \(\mu\), and
\(\tmT:\mathcal X\times \mathbb R^d\to \mathbb R^d\) is a random operator.
The corresponding mean operator is defined by
\[
\mT(\theta) := \mathbb E_{x\sim \mu}\big[\tmT(x,\theta)\big],
\qquad \theta\in\mathbb R^d .
\]
In this work, we focus on contractive SA recursions satisfying the following assumption.

\begin{assumption}[Contractive SA]\label{assumption:contraction} 
There exist a norm $\|\cdot\|_c$ and a constant
$\gamma\in(0,1)$ such that
\[
    \|\mT(\theta)-\mT(\theta')\|_c
    \leq
    \gamma \|\theta-\theta'\|_c,
    \qquad
    \forall\,\theta,\theta'\in\mathbb{R}^d.
\]
\end{assumption}
In this work, we consider two classes of noise sequences: i.i.d. noise and
Markovian noise. In both settings, we impose the following regularity
conditions on the noise sequence and the random operator. These assumptions
are standard in the analysis of SA; see, for example,
\cite{dieuleveut2020bridging,zhang2024prelimit} for the i.i.d. setting and
\cite{chen2024lyapunov,zhang2024constant,huo2023bias} for the Markovian
setting.

\begin{assumption}[Noise and Operator Regularity]
\label{assumption:noise}
The random operator \(\tmT\) and the noise sequence \(\{x_t\}_{t\ge 0}\)
satisfy one of the following two conditions.

\begin{enumerate}[label=(\roman*), ref=\theassumption(\roman*)]

\item \textbf{I.i.d. noise.}\label{assumption:noise:iid}
The sequence \(\{x_t\}_{t\ge 0}\) is i.i.d. with common distribution \(\mu\).
Moreover, there exists a constant \(L>0\) such that, for all
\(\theta,\theta'\in\mathbb R^d\),
\begin{align*}
\left(
\mathbb E_{x\sim\mu}
\left[
\big\|\tmT(x,\theta)-\tmT(x,\theta')\big\|^4
\right]
\right)^{1/4}
\le L\|\theta-\theta'\|, \qquad \left(
\mathbb E_{x\sim\mu}
\left[
\big\|\tmT(x,\theta^*)\big\|^4
\right]
\right)^{1/4}
\le L .
\end{align*}

\item \textbf{Markovian noise.}\label{assumption:noise:markovian}
The sequence \(\{x_t\}_{t\ge 0}\) is an irreducible and aperiodic Markov chain
on \(\mathcal X\), with transition kernel \(P\) and stationary distribution
\(\mu\). The chain is uniformly ergodic: there exist constants
\(c_{\mathrm{mix}}\ge 0\) and \(\rho_{\mathrm{mix}}\in(0,1)\) such that
\begin{equation}\label{eq:uniform-GE}
\big\|P^t(x,\cdot)-\mu(\cdot)\big\|_{\mathrm{TV}}
\le c_{\mathrm{mix}}\rho_{\mathrm{mix}}^t,
\qquad
\forall x\in\mathcal X,\ \forall t\ge 1 .
\end{equation}
Moreover, there exists a constant \(L>0\) such that, for all
\(\theta,\theta'\in\mathbb R^d\) and \(x\in\mathcal X\),
\begin{equation*}
\big\|\tmT(x,\theta)-\tmT(x,\theta')\big\|
\le L\|\theta-\theta'\|,
\qquad
\big\|\tmT(x,\theta^*)\big\|
\le L .
\end{equation*}

\end{enumerate}
\end{assumption}

We remark that under Assumption~\ref{assumption:noise:iid}, the iterates $\{\theta_t\}_{t \geq 0}$ from \eqref{eq:update} form a time-homogeneous Markov chain; meanwhile, under Assumption~\ref{assumption:noise:markovian}, we should augment the state $\{(\theta_t,x_t)\}_{t \geq 0}$ to define a time-homogeneous Markov chain. Without loss of generosity, we consider the joint process $\{(\theta_t,x_t)\}_{t \geq 0}$ as the induced Markov chain.

We define $\tau_\alpha
    :=
    \min\left\{
        t\geq 1:
        \sup_{x\in\mathcal X}
        \bigl\|P^t(x,\cdot)-\mu(\cdot)\bigr\|_{\mathrm{TV}}
        \leq \alpha
    \right\},$ which denotes the mixing time of the data process to accuracy \(\alpha\).
Under Assumption~\ref{assumption:noise:iid}, we have \(\tau_\alpha=1\), 
whereas under Assumption~\ref{assumption:noise:markovian}, $\tau_\alpha=\mathcal{O}\bigl(\log(1/\alpha)\bigr).$ We  note that the update~\eqref{eq:update} under
Assumptions~\ref{assumption:contraction} and \ref{assumption:noise} already covers many widely used algorithms,
including Markovian linear SA \citep{huo2023bias} and asynchronous Q-learning
\citep{zhang2024constant}. We discuss these two examples in detail in
Sections~\ref{sec:linearSA} and~\ref{sec:Q-learning}, respectively.

Beyond the update~\eqref{eq:update} and Assumption~\ref{assumption:noise},
\cite{chen2024lyapunov} also incorporates an additional martingale-difference
noise sequence $\{w_t\}_{t\ge 0}$ and considers the more general recursion
\begin{equation}\label{eq:updategeneral}
\theta_{t+1}^{(\alpha)}
= \theta_t^{(\alpha)} + \alpha\big(\tmT(x_t,\theta_t^{(\alpha)}) - \theta_t^{(\alpha)} + w_t(\theta_t^{(\alpha)})\big).
\end{equation}
They impose a weaker condition on $\{w_t\}_{t\ge 0}$: for each
$\theta\in\R^d$, the noise satisfies a linear-growth bound $\|w_t(\theta)\|\lesssim\|\theta\|+1$ almost surely, uniformly over $t$. Notably, this assumption does \emph{not} require $w_t(\theta)$ to be uniformly Lipschitz
in $\theta$. Consequently, their recursion generally cannot be rewritten in the
Markovian SA form~\eqref{eq:update} under Assumption~\ref{assumption:noise}.
Moreover, since their goals differ from ours, it is unclear whether the more
general dynamics~\eqref{eq:updategeneral} (or an augmented state incorporating $\{x_t\}_{t \geq 0}$ and 
$\{w_t\}_{t \geq 0}$) defines a Markov chain that admits a well-behaved limiting
distribution---a question that is beyond the scope of this paper.

In this work, we study \emph{steady-state convergence}, where the steady state
refers to the limiting distribution of the induced Markov chain as
$t\to\infty$, whenever such a limit exists. The existence and uniqueness of a
steady-state distribution have been established in a variety of important
settings. For example, \cite{dieuleveut2020bridging,zhang2024prelimit} consider contractive SA with
i.i.d.\ noise, \cite{huo2023bias} study Markovian linear SA,
\cite{zhang2024constant} analyze asynchronous Q-learning, and
\cite{huo2024collusion} consider Markovian nonlinear SA with smooth,
strongly monotone dynamics. Rather than imposing the specific conditions considered
in these individual settings, we encapsulate their common consequence in the
following assumption, tailored to general Markovian contractive SA. As
discussed above, this assumption can be verified in each of the aforementioned
settings.
\begin{assumption}[Geometric Distributional Convergence to Steady State]
\label{assumption:convergence}
For every SA recursion~\eqref{eq:update} under
Assumptions~\ref{assumption:contraction} and~\ref{assumption:noise},
there exists $\alpha_0>0$ such that, for every
$\alpha\in(0,\alpha_0)$, the Markov chain $\{(\theta_t,x_t)\}_{t \geq 0}$ converges to a unique stationary distribution $\bar\nu_\alpha\in\mathcal P_2(\R^d\times \mathcal X)$. 
Let $\law\bigl(\theta_\infty^{(\alpha)}\bigr)$ be the first marginal of $\Bar{\nu}_\alpha$.  Moreover, for any initial distribution
$\law(\theta_0^{(\alpha)})\in\mathcal P_2(\mathbb R^d)$,
\[
    \W_2\!\left(
        \law\bigl(\theta_t^{(\alpha)}\bigr),
        \law\bigl(\theta_\infty^{(\alpha)}\bigr)
    \right)
    \leq
    c_1(1-\alpha c_2)^t,
    \qquad
    t\geq c_3\tau_\alpha,
\]
where \(c_1\) may depend on the initial distribution and $c_1,c_2,c_3>0$ are constants independent of $\alpha$ and $t$.
\end{assumption}
Under Assumption~\ref{assumption:convergence}, we define the diffusion-scaled
steady-state iterate $Y_\infty^{(\alpha)}
    :=
    \frac{\theta_\infty^{(\alpha)}-\theta^*}{\sqrt{\alpha}}.$ Our primary goal is to establish the existence of a limiting random variable
$Y_\infty$ such that
\begin{equation}\label{eq:SSC}
    \lim_{\alpha\downarrow 0}
    \W_2\!\left(
        \law\!\left(Y_\infty^{(\alpha)}\right),
        \law(Y_\infty)
    \right)
    =
    0.
\end{equation}
We further provide regime-specific refinements of
\eqref{eq:SSC}. Under local quadratic linearization, we precisely
characterize $\law(Y_\infty)$ and establish a quantitative convergence rate
for~\eqref{eq:SSC}; see Section~\ref{sec:differentiable}. In the locally
nondifferentiable setting, we show that,
under additional structural conditions, the limiting distribution need not be
centered, so that $\E[Y_\infty]\neq 0$; see
Section~\ref{sec:nondifferentiable}.

\section{Additive-Noise Universality}\label{sec:additive}
Before turning to the SSC analysis in
Sections~\ref{sec:differentiable} and~\ref{sec:nondifferentiable}, we first
establish an \emph{additive-noise universality} result. This universality
reduction serves as the first key step in our analysis and will be used
both under local quadratic linearization and in the locally nondifferentiable
regime.

Consider the following auxiliary SA recursion:
\begin{equation}\label{eq:auxiliary-additive}
a_{t+1}^{(\alpha)}
=
a_t^{(\alpha)}
+
\alpha\left(
\mT\big(a_t^{(\alpha)}\big)
-
a_t^{(\alpha)}
+
h(x_t)
\right),
\qquad t\geq 0,
\end{equation}
where $h(x)
:=
\tmT(x,\theta^*)-\theta^*$ denotes the noise at equilibrium. The recursion in \eqref{eq:auxiliary-additive} has the same mean operator
\(\mT\) as the original recursion \eqref{eq:update}, while its noise is additive. Moreover, \(h\) is centered under the stationary
distribution \(\mu\), since
\[
\mathbb E_{x\sim\mu}[h(x)]
=
\mathbb E_{x\sim\mu}\big[\tmT(x,\theta^*)\big]-\theta^*
=
\mT(\theta^*)-\theta^*
=
0 .
\]
Recursion~\eqref{eq:auxiliary-additive} is a special case of the general SA
recursion~\eqref{eq:update} and satisfies
Assumptions~\ref{assumption:contraction} and~\ref{assumption:noise}.
For all sufficiently small $\alpha>0$, it admits a unique steady-state
law $\law(a_\infty^{(\alpha)})\in\mathcal P_2(\mathbb R^d)$. For this
additive-noise auxiliary, existence and uniqueness follow directly from
synchronous contraction, rather than from an additional application of
Assumption~\ref{assumption:convergence}; the stationary construction and
the marginal $\W_2$ convergence needed below are proved in
Section~\ref{sec:additive-marginal-convergence}.
Analogously to the scaled steady-state variable associated with the original
SA recursion \eqref{eq:update}, we
define the scaled steady-state variable of  \eqref{eq:auxiliary-additive} by $A_\infty^{(\alpha)}
:=
\frac{a_\infty^{(\alpha)}-\theta^*}{\sqrt{\alpha}} .$ The following proposition states the additive-noise universality result.
\begin{proposition}[Additive-Noise Universality]\label{prop:additive}
Under Assumptions \ref{assumption:contraction}, \ref{assumption:noise} and \ref{assumption:convergence}, 
\[
\W_2\Big(
\law\big(Y_\infty^{(\alpha)}\big),
\law\big(A_\infty^{(\alpha)}\big)
\Big)
=
\mathcal O\big(\alpha^{1/2}\big).
\]
\end{proposition}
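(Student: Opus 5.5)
We will use a synchronous coupling. Drive the original recursion \eqref{eq:update} and the auxiliary recursion \eqref{eq:auxiliary-additive} by the \emph{same} noise path $\{x_t\}$, and start both from the same deterministic initial point $\theta_0=a_0=\theta^*$. Take the data chain $x_0\sim\mu$ in the Markovian case, so that $\{x_t\}$ is stationary. Set $e_t:=\theta_t^{(\alpha)}-a_t^{(\alpha)}$. Subtracting the two recursions and adding and subtracting $\mT$ gives
\[
e_{t+1}=e_t+\alpha\bigl(\mT(\theta_t)-\mT(a_t)-e_t\bigr)+\alpha\,\xi_t,
\]
where
\[
\xi_t:=\bigl(\tmT(x_t,\theta_t)-\tmT(x_t,\theta^*)\bigr)-\bigl(\mT(\theta_t)-\mT(\theta^*)\bigr).
\]
The noise $\xi_t$ is multiplicative: it is centered under $\mu$ for each fixed $\theta$, and it is bounded by $2L\|\theta_t-\theta^*\|$. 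Since the deterministic part is a $(1-\alpha(1-\gamma))$-contraction in $\|\cdot\|_c$, the goal is a uniform-in-$t$ bound $\E\|e_t\|^2=\mathcal O(\alpha^2)$. Dividing by $\alpha$ under the diffusion scaling then yields $\mathcal O(\alpha)$ in squared $\W_2$. Finally, let $t\to\infty$ and use Assumption~\ref{assumption:convergence} together with the marginal $\W_2$ convergence of $a_t$ from Section~\ref{sec:additive-marginal-convergence}. The triangle inequality then gives $\W_2(\law(Y_\infty^{(\alpha)}),\law(A_\infty^{(\alpha)}))\le \limsup_t(\E\|e_t\|^2)^{1/2}/\sqrt\alpha=\mathcal O(\sqrt\alpha)$.

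For the Lyapunov step, use a smooth surrogate $M(\cdot)$ of $\tfrac12\|\cdot\|_c^2$, namely a generalized Moreau envelope as in \cite{chen2024lyapunov}. This is needed because $\|\cdot\|_c$ need not be smooth. It gives a drift inequality
\[
\E M(e_{t+1})\le (1-c\alpha)\E M(e_t)+\alpha\E\langle\nabla M(e_t),\xi_t\rangle+\mathcal O(\alpha^2)\E\|\theta_t-\theta^*\|^2 .
\]
We also need the standard moment bound $\sup_t\E\|\theta_t-\theta^*\|^2=\mathcal O(\alpha\tau_\alpha)$, obtained by the same Lyapunov argument applied directly to $\theta_t-\theta^*$. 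This bound is not enough on its own if we settle for the $\mathcal O(\alpha^2\tau_\alpha)$ quadratic term. We therefore want the sharper statement $\E\|\theta_t-\theta^*\|^2=\mathcal O(\alpha)$, which holds at stationarity, or at least to absorb the $\log$ factor. Note that in the i.i.d.\ case the cross term vanishes by conditioning on $\mathcal F_t$, and the result follows immediately with $\E\|e_t\|^2\lesssim\alpha\cdot\alpha=\alpha^2$.

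The main obstacle is the cross term $\E\langle\nabla M(e_t),\xi_t\rangle$ in the Markovian case, since $\xi_t$ is not a martingale difference. Our first tool is the Poisson equation. For each $\theta$, let $\hat F(x,\theta)$ solve
\[
\hat F-P\hat F=\tmT(\cdot,\theta)-\tmT(\cdot,\theta^*)-\bigl(\mT(\theta)-\mT(\theta^*)\bigr).
\]
Uniform ergodicity makes $\hat F$ exist, and it is Lipschitz in $\theta$ with $\|\hat F(x,\theta)\|\lesssim\|\theta-\theta^*\|$. Writing $\xi_t=\hat F(x_t,\theta_t)-P\hat F(x_t,\theta_t)$ splits the cross term into a martingale difference with zero mean, a telescoping term, and increment terms involving $\nabla M(e_{t+1})-\nabla M(e_t)$ and $\hat F(x_{t+1},\theta_{t+1})-\hat F(x_{t+1},\theta_t)$. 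Both increments are $\mathcal O(\alpha)$-small multiples of $\|e_t\|+\|\theta_t-\theta^*\|+1$. Each increment term is then $\mathcal O(\alpha)\cdot\E[\|\theta_t-\theta^*\|(\|e_t\|+\|\theta_t-\theta^*\|+\sqrt\alpha)]$, which Young's inequality turns into $\mathcal O(\alpha^2)$ plus a small multiple of $\alpha\E M(e_t)$. The telescoping term summed against the geometric weights $(1-c\alpha)^{t-s}$ contributes only boundary terms of size $\alpha\|e_t\|\|\theta_t-\theta^*\|$, which are again handled by Young's inequality. This Poisson route avoids any $\tau_\alpha$ factor. The alternative is to condition back $\tau_\alpha$ steps, which would cost a $\log(1/\alpha)$ factor. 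Combining everything, $\sup_t\E M(e_t)=\mathcal O(\alpha^2)$, which completes the argument above.
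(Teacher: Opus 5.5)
Your proposal matches the paper's proof essentially step for step. The steps are: synchronous coupling from $\theta^*$ with $x_0\sim\mu$; a Moreau-envelope drift for the (scaled) difference; and, in the Markovian case, a Poisson solution of the centered operator fluctuation paired with $\nabla M_\eta$ (this is the paper's $\mathcal U_\alpha$), split into a martingale part, a telescoping part handled by summation by parts, and $\mathcal O(\alpha)$ increment terms absorbed via Young's inequality. The one point you should assert rather than hedge is the sharp bound $\sup_{t\ge t_\alpha}\E\|\theta_t^{(\alpha)}-\theta^*\|^2\lesssim\alpha$ along the non-stationary coupled trajectory, which the paper takes from Lemma~\ref{lem:fourthmoment}. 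With only the $\mathcal O(\alpha\tau_\alpha)$ bound, the log factor cannot be ``absorbed,'' and your argument would yield $\mathcal O(\sqrt{\alpha\log(1/\alpha)})$ instead of $\mathcal O(\sqrt\alpha)$.
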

Proposition~\ref{prop:additive} shows that replacing the original
state-dependent noise by the additive  equilibrium noise \(h(x_t)\) changes the scaled
steady-state law by at most
\(\mathcal{O}(\alpha^{1/2})\) in Wasserstein-2 distance. 

Thus, to establish SSC, it suffices to analyze
\(A_\infty^{(\alpha)}\), whose additive-noise structure is more tractable than
that of \(Y_\infty^{(\alpha)}\). We develop the resulting SSC theory in
Sections~\ref{sec:differentiable} and~\ref{sec:nondifferentiable}, under
local quadratic linearization and, more generally, local one-sided
directional differentiability, respectively. The proof of
Proposition~\ref{prop:additive} is deferred to
Section~\ref{sec:proof-additive}.

\section{Steady-State Gaussian Approximation under Local Quadratic Linearization}
\label{sec:differentiable}

In this section, we focus on the regime in which the mean operator
\(\mT\) admits a local linearization with a quadratic remainder at its fixed
point \(\theta^*\).

\begin{assumption}[Local Quadratic Linearization]
\label{assumption:local-quadratic}
There exist a matrix \(J\in\R^{d\times d}\) and constants
\(\epsilon>0\) and \(L_R\in[0,\infty)\) such that
\[
    \bigl\|
        \mT(\theta^*+u)-\mT(\theta^*)-Ju
    \bigr\|
    \leq L_R\|u\|^2,
    \qquad
    \forall u\in\R^d\text{ with }\|u\|\leq\epsilon.
\]
\end{assumption}

Assumption~\ref{assumption:local-quadratic} implies that \(\mT\) is differentiable
at \(\theta^*\), with \(J=\nabla\mT(\theta^*)\), but does not require
differentiability at other points in a neighborhood of \(\theta^*\).
A sufficient condition is that \(\mT\) has a locally Lipschitz Jacobian
near \(\theta^*\). Assumption~\ref{assumption:local-quadratic} is satisfied by a broad class of
commonly used stochastic algorithms. Examples include:
(i) stochastic gradient descent for minimizing a differentiable, strongly convex
objective whose gradient admits a local quadratic linearization at its minimizer;
(ii) Markovian linear stochastic approximation; and
(iii) asynchronous Q-learning when the optimal policy is unique.
We discuss the latter two examples in detail in
Sections~\ref{sec:linearSA} and~\ref{sec:Q-learning}, respectively.

Our main result in this section establishes a quantitative Gaussian
approximation for the appropriately scaled steady-state distribution under
Assumption~\ref{assumption:local-quadratic}, with an explicit optimal convergence rate
in \(\W_2\). As a direct application, Section~\ref{sec:raw-gaussian} derives a
Gaussian approximation for the raw iterates. We then present the two
main ingredients underlying the proof in
Sections~\ref{sec:Jacobian} and~\ref{sec:geometric-weight}, respectively.

Before stating the main result, we introduce several quantities that will be
used throughout the analysis.  Define the long-run covariance of the stationary equilibrium-noise process
\(\{h(x_t)\}_{t\geq0}\) by
\[
    \Sigma_h
    :=
    \lim_{n\to\infty}
    \frac{1}{n}
    \operatorname{Cov}\!\left(
        \sum_{t=0}^{n-1}h(x_t)
    \right).
\] 
If \(\{x_t\}_{t\geq0}\) is an i.i.d.\ sequence with common distribution
\(\mu\), then
\[
    \Sigma_h
    =
    \E_{\mu}\!\left[h(x)h(x)^\top\right]
    =
    \operatorname{Cov}_{\mu}\!\left(
        \tmT(x,\theta^*)
    \right).
\]
If \(\{x_t\}_{t\geq0}\) is Markovian, we extend it to a two-sided stationary
chain \(\{x_t\}_{t\in\mathbb Z}\), in which case
\[
    \Sigma_h
    =
    \sum_{\ell\in\mathbb Z}
    \E_{\mu}\!\left[
        h(x_0)h(x_\ell)^\top
    \right].
\]
Under Assumption~\ref{assumption:noise:markovian}, uniform 
ergodicity and boundedness of $h$ imply that $\left\|
        \E_\mu\!\left[h(x_0)h(x_\ell)^\top\right]
    \right\|
    \lesssim \rho_{\mathrm{mix}}^{|\ell|},$ for any $\ell\in\mathbb Z$,
where \(\rho_{\mathrm{mix}}\) is as in \eqref{eq:uniform-GE}; see, e.g.,
\cite[Chapter~16]{Meyn12_book}.
Hence the covariance series defining $\Sigma_h$ is absolutely convergent. Finally, let \(V\) denote the unique positive-semidefinite solution to the
Lyapunov equation
\begin{equation}\label{eq:Lyapunov}
(J-I_d)V + V(J-I_d)^\top + \Sigma_h = 0.
\end{equation}
Indeed, Assumption~\ref{assumption:contraction} implies
\(\rho(J)<1\), and hence \(J-I_d\) is Hurwitz, so the solution is uniquely
given by $V
    =
    \int_0^\infty
    e^{(J-I_d)t}\Sigma_h e^{(J-I_d)^\top t}\,\mathrm dt.$ We are now ready to state the steady-state Gaussian approximation theorem.

\begin{theorem}[Steady-State Gaussian Approximation]
\label{thm:steady-state-gaussian-approx}
Under Assumptions~\ref{assumption:contraction},
\ref{assumption:noise}, \ref{assumption:convergence}, and
\ref{assumption:local-quadratic},
\[
        \W_2\Big(
            \law\big(Y_\infty^{(\alpha)}\big),
            \mathcal N(0,V)
        \Big)
         \in
    \mathcal{O}(\sqrt{\alpha}).
    \]
\end{theorem}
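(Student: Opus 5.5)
The plan follows the upper branch of the roadmap in Figure~\ref{fig:proof-roadmap} and combines three $\W_2$ estimates with the triangle inequality:
\[
\W_2\bigl(\law(Y_\infty^{(\alpha)}),\mathcal N(0,V)\bigr)
\le
\W_2\bigl(\law(Y_\infty^{(\alpha)}),\law(A_\infty^{(\alpha)})\bigr)
+\W_2\bigl(\law(A_\infty^{(\alpha)}),\law(B_\infty^{(\alpha)})\bigr)
+\W_2\bigl(\law(B_\infty^{(\alpha)}),\mathcal N(0,V)\bigr).
\]
The first term is $\mathcal O(\sqrt\alpha)$ by Proposition~\ref{prop:additive}. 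Here $B_\infty^{(\alpha)}$ is the scaled stationary law of the recursion
\[
b_{t+1}=b_t+\alpha\bigl((J-I_d)(b_t-\theta^*)+h(x_t)\bigr),
\]
so it suffices to show that the second and third terms are each $\mathcal O(\sqrt\alpha)$.

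\textbf{Jacobian-drift step.} Couple $a_t$ and $b_t$ synchronously, driven by the same $\{x_t\}$, and set $e_t=a_t-b_t$. Then
\[
e_{t+1}=\bigl(I+\alpha(J-I)\bigr)e_t+\alpha R(a_t-\theta^*),
\qquad
R(u):=\mT(\theta^*+u)-\theta^*-Ju .
\]
Since $\rho(J)<1$, there is a norm in which $I+\alpha(J-I)$ contracts by a factor $1-c\alpha$. This gives
\[
\|e_\infty\|_{L^2}\lesssim \sup_t\|R(a_t-\theta^*)\|_{L^2}.
\]
On the ball $\|u\|\le\epsilon$, Assumption~\ref{assumption:local-quadratic} gives $\|R(u)\|\le L_R\|u\|^2$. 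Off the ball, the Lipschitz bound gives $\|R(u)\|\lesssim\|u\|\le\|u\|^2/\epsilon$. Hence $\|R(u)\|\lesssim\|u\|^2$ globally, and
\[
\|R(a_\infty-\theta^*)\|_{L^2}\lesssim \bigl(\E\|a_\infty-\theta^*\|^4\bigr)^{1/2}=\mathcal O(\alpha),
\]
using a fourth-moment bound $\E\|a_\infty-\theta^*\|^4=\mathcal O(\alpha^2)$. That bound should follow from a Lyapunov argument on $\|\cdot\|_c^4$ (Moreau envelope if $\|\cdot\|_c$ is nonsmooth), using $\tau_\alpha$-step blocking for the Markov noise, which costs only logs that can be removed at stationarity. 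Dividing by $\sqrt\alpha$ gives the $\mathcal O(\sqrt\alpha)$ bound for the second term.

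\textbf{Gaussian step (Proposition~\ref{prop:geometric-gaussian}).} The recursion for $b_t$ is linear, so
\[
B_\infty^{(\alpha)}=\sqrt\alpha\sum_{k\ge0}M^k h(x_{-k}),
\qquad M=I+\alpha(J-I),
\]
a geometrically weighted sum over a two-sided stationary chain. Its covariance $V_\alpha$ solves the discrete Lyapunov-type equation with cross-correlations. Expanding this equation gives $V_\alpha=V+\mathcal O(\alpha)$, and the Gaussian comparison cost is $\W_2\bigl(\mathcal N(0,V_\alpha),\mathcal N(0,V)\bigr)=\mathcal O(\alpha)$ provided $V$ is nondegenerate; otherwise it is $\mathcal O(\sqrt\alpha)$ via $\|V_\alpha^{1/2}-V^{1/2}\|\le\|V_\alpha-V\|^{1/2}$, which still suffices. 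For the CLT part, split the sum into blocks of length $m\asymp 1/\alpha$ (or $\tau_\alpha$-separated blocks for Markov noise). Each block is a nearly independent sum that can be coupled to a Gaussian with $\W_2$ error $\mathcal O(\sqrt m)$ in absolute scale by a Zaitsev/Rio-type bounded-increment coupling. Since the weights vary slowly within a block, each block error is scaled by $\sqrt\alpha\,\|M^{k}\|$. The block errors do not simply add: summing them in $L^2$ as independent-like terms gives roughly $\sqrt\alpha\cdot(\sum_j(1-c\alpha m)^{2j})^{1/2}\cdot\mathcal O(1)$.

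\textbf{Main obstacle.} I expect the Gaussian step to be hardest: obtaining exactly $\sqrt\alpha$, with no $\log(1/\alpha)$, in $\W_2$ under Markov dependence. I would first try a Stein/Lindeberg replacement at the level of the stationary recursion: swap $h(x_t)$ for Gaussian increments one step at a time, using the Poisson equation $h=g-Pg$ to convert the Markov noise into martingale differences plus telescoping terms of size $\mathcal O(\alpha)$. Under this reduction, the scaled martingale recursion has a Gaussian stationary comparison at rate $\sqrt\alpha$, the single-step third-moment error. The $\W_2$ control would then come from the synchronous contraction of the linear recursion rather than from block concatenation.
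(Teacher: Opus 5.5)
\textbf{Overall.} Your top-level argument matches the paper's: a triangle inequality through $A_\infty^{(\alpha)}$ and $B_\infty^{(\alpha)}$, with Proposition~\ref{prop:additive} for the first term. Your Jacobian-drift step is correct and is essentially the paper's proof of Proposition~\ref{prop:Jacobian}. Contracting $(1-\alpha)I_d+\alpha J$ directly in $\|\cdot\|_c$, instead of building a Lyapunov $P$-norm, is fine. One caveat: the fourth-moment bound there must be free of $\log(1/\alpha)$. That is what Lemma~\ref{lem:fourthmoment} supplies. A $\tau_\alpha$-blocking drift argument leaves a $\tau_\alpha$ factor and hence an $\mathcal O(\sqrt\alpha\log(1/\alpha))$ bound, and ``removing logs at stationarity'' requires a Poisson-equation drift, not stationarity alone.

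\textbf{The gap.} The genuine gap is the third term, $\W_2(\law(B_\infty^{(\alpha)}),\mathcal N(0,V))=\mathcal O(\sqrt\alpha)$. You correctly flag it as the crux, but you do not establish it, and neither route you sketch delivers it.

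In the blocking route, the bookkeeping is inconsistent. An absolute-scale error of $\mathcal O(\sqrt m)$ per block is trivial, and multiplied by $\sqrt\alpha$ with $m\asymp1/\alpha$ it gives $\mathcal O(1)$, not $\mathcal O(\sqrt\alpha)$. More fundamentally, with $m\asymp1/\alpha$ the weights $Q_\alpha^k=(1-\alpha)I_d+\alpha J$ raised to the $k$ (your $M^k$) change by a constant factor across a block, and there are only $\mathcal O(1)$ effective blocks. So an $\mathcal O(1)$-error Gaussian coupling of such a weighted Markov block sum is already the whole problem, not a reduction of it. Shorter blocks with frozen weights lose polynomially (compare the $\alpha^{1/4}$ the paper's own blocking yields in Proposition~\ref{prop:ind-Gaussian}), and $\tau_\alpha$-separation costs logs.

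The Lindeberg route has three problems:
\begin{itemize}
\item One-step swapping controls $|\E f(B_\infty^{(\alpha)})-\E f(G)|$ only for smooth $f$. Smoothing back to $\W_2$ loses a power of $\alpha$.
\item The ``single-step third-moment error'' heuristic ignores the random conditional covariance $\E[D_tD_t^\top\mid x_{t-1}]-\Sigma_h$ of Markov martingale differences.
\item Synchronous contraction cannot turn a noise replacement into a $\W_2$ bound. Two linear recursions driven by different noises differ by $\sqrt\alpha\sum_kQ_\alpha^k(h(x_{-k})-w_{-k})$, which is small only under a cumulative coupling of the noises. That is exactly the strong-approximation problem you were trying to avoid.
\end{itemize}

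\textbf{The missing ingredient.} What you need is a sharp $\W_2$ Gaussian approximation for \emph{weighted} sums of Markov-chain martingale differences, Lemma~\ref{lemma:GA-MDS}. If $S=\sum_iM_iD_i$ with $D_i=g(x_i)-Pg(x_{i-1})$ and $\mathrm{Cov}(S)=I$, then $\W_2^2(\law(S),\mathcal N(0,I))\lesssim(1+(\sum_i\|M_i\|^2)^{1/2})\sum_i\|M_i\|^4$. The paper applies it as follows.
\begin{itemize}
\item It uses the same Poisson decomposition you mention, $h(x_t)=D_t+R_{t-1}-R_t$ with $R_t=Pg(x_t)$.
\item Summation by parts shows the telescoping part of $B_\infty^{(\alpha)}$ is $\mathcal O(\sqrt\alpha)$ in $L^2$, because $\alpha\sum_k\|Q_\alpha^k\|\lesssim1$.
\item The martingale part is truncated at $n\gtrsim1/\alpha$ terms and projected onto $\mathrm{Range}(V)$. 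This subspace contains $\mathrm{Range}(\Sigma_h)$ and is $Q_\alpha$-invariant, so a singular $\Sigma_h$ is harmless. It is then whitened by a uniformly well-conditioned covariance.
\item The resulting weights $M_i\propto\sqrt\alpha\,Q_\alpha^{n-i}$ satisfy $\sum_i\|M_i\|^2=\mathcal O(1)$ and $\sum_i\|M_i\|^4\lesssim\alpha^2\sum_ke^{-4c\alpha k}=\mathcal O(\alpha)$. This gives $\mathcal O(\sqrt\alpha)$ with no logarithm and no blocking.
\item For the covariance comparison, $V_\alpha-V=\alpha^2\sum_kQ_\alpha^k(I_d-J)V(I_d-J)^\top(Q_\alpha^k)^\top\succeq0$ has trace $\mathcal O(\alpha)$, so $\W_2^2(\mathcal N(0,V_\alpha),\mathcal N(0,V))=\mathcal O(\alpha)$.
\end{itemize}
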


\noindent\textbf{Comparison with Prior Work.}
To the best of our knowledge, \cite{wang2026steady} provide the only prior
steady-state Gaussian approximation result in a comparable setting. Their
analysis assumes that $\mT$ is globally continuously differentiable, requires
additive noise, and measures the approximation error in $\W_1$. In contrast,
Theorem~\ref{thm:steady-state-gaussian-approx} requires a local quadratic
linearization at $\theta^*$ without imposing global differentiability,
allows general multiplicative noise, and establishes convergence in the
stronger $\W_2$ metric.

\noindent\textbf{Sharpness of the Rate.}
The \(\sqrt{\alpha}\) rate is sharp in general.
Indeed, existing bias characterizations under additional local smoothness
conditions show that $\E\bigl[\theta_\infty^{(\alpha)}\bigr]-\theta^*
    =
    \alpha c+o(\alpha)$ for some vector $c$ independent of $\alpha$ \citep{dieuleveut2020bridging,zhang2024constant,huo2024collusion,zhang2025piecewise}. Consequently,
\[
    \E\bigl[Y_\infty^{(\alpha)}\bigr]
    =
    \frac{
        \E\bigl[\theta_\infty^{(\alpha)}\bigr]-\theta^*
    }{\sqrt{\alpha}}
    =
    \sqrt{\alpha}c+o\bigl(\sqrt{\alpha}\bigr).
\]
Since $\mathcal N(0,V)$ is centered, the definition of $\W_2$ together with
Jensen's inequality implies that, whenever $c\neq 0$,
\[
    \W_2\!\left(
        \law\bigl(Y_\infty^{(\alpha)}\bigr),
        \mathcal N(0,V)
    \right)
    \geq
    \left\|
        \E\bigl[Y_\infty^{(\alpha)}\bigr]
        -
        \E_{X\sim\mathcal N(0,V)}[X]
    \right\|
    =
    \sqrt{\alpha}\,\|c\|
    +
    o\bigl(\sqrt{\alpha}\bigr).
\]
Thus, the rate in
Theorem~\ref{thm:steady-state-gaussian-approx} is optimal in general. In
particular, it removes the additional \(\log(1/\alpha)\) factor appearing in
\cite{wang2026steady}.

\subsection{Application: Gaussian Approximation for Raw Iterates}
\label{sec:raw-gaussian}

As an immediate consequence of
Theorem~\ref{thm:steady-state-gaussian-approx} and the geometric convergence
of the SA iterates to steady state, we obtain a finite-time Gaussian
approximation for the raw iterates.
\begin{corollary}[Raw Iterate Gaussian Approximation]
\label{cor:raw-Gaussian}
Suppose Assumptions~\ref{assumption:contraction},
\ref{assumption:noise}, \ref{assumption:convergence}, and
\ref{assumption:local-quadratic} hold. Then there exist constants
$c_1,c_2,c_3>0$, independent of $\alpha$ and $t$, such that, for all
sufficiently small $\alpha>0$ and $t\geq c_3\tau_\alpha$,
\[
    \W_2\left(
        \law\left(
            \frac{\theta_t^{(\alpha)}-\theta^*}{\sqrt{\alpha}}
        \right),
        \mathcal N(0,V)
    \right)
    \leq
    \frac{c_1(1-c_2\alpha)^t}{\sqrt{\alpha}}
    +
    \mathcal O\bigl(\sqrt{\alpha}\bigr).
\]
Consequently, for any fixed \(c_4\geq 1/c_2\), setting $\alpha = c_4\log t/t$ yields
\[
    \W_2\left(
        \law\left(
            \frac{\theta_t^{(\alpha)}-\theta^*}{\sqrt{\alpha}}
        \right),
        \mathcal N(0,V)
    \right)
    \in 
    \mathcal O\left(
        \sqrt{\frac{\log t}{t}}
    \right).
\]
\end{corollary}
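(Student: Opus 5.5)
The plan is a triangle inequality in $\W_2$ that splits the error into a transient part and a steady-state part. Since the map $\theta\mapsto(\theta-\theta^*)/\sqrt{\alpha}$ is affine with Lipschitz constant $1/\sqrt{\alpha}$, Wasserstein distances between laws of $\theta$ scale exactly by this factor. Writing $Y_t^{(\alpha)}=(\theta_t^{(\alpha)}-\theta^*)/\sqrt{\alpha}$, I would bound
\[
    \W_2\bigl(\law(Y_t^{(\alpha)}),\mathcal N(0,V)\bigr)
    \le
    \W_2\bigl(\law(Y_t^{(\alpha)}),\law(Y_\infty^{(\alpha)})\bigr)
    +
    \W_2\bigl(\law(Y_\infty^{(\alpha)}),\mathcal N(0,V)\bigr).
\]
The first term equals $\alpha^{-1/2}\,\W_2\bigl(\law(\theta_t^{(\alpha)}),\law(\theta_\infty^{(\alpha)})\bigr)$. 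By Assumption~\ref{assumption:convergence}, for $\alpha<\alpha_0$ and $t\ge c_3\tau_\alpha$ this is at most $c_1(1-c_2\alpha)^t/\sqrt{\alpha}$. The constant $c_1$ depends on the initial law but not on $\alpha$ or $t$. The second term is $\mathcal O(\sqrt{\alpha})$ by Theorem~\ref{thm:steady-state-gaussian-approx}. Together these give the first display.

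For the second display, I use $(1-c_2\alpha)^t\le e^{-c_2\alpha t}$ and substitute $\alpha=c_4\log t/t$, which gives $e^{-c_2c_4\log t}=t^{-c_2c_4}\le t^{-1}$ because $c_2c_4\ge 1$. The transient term is then at most
\[
    c_1 t^{-1}\sqrt{t/(c_4\log t)}=\mathcal O\bigl((t\log t)^{-1/2}\bigr),
\]
which is dominated by $\sqrt{\alpha}=\sqrt{c_4\log t/t}$. The total is $\mathcal O(\sqrt{\log t/t})$.

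The step that needs care is checking that this choice of $\alpha$ is admissible for large $t$. First, $\alpha\to0$, so $\alpha$ is eventually below $\alpha_0$ and the ``sufficiently small'' threshold. Second, the requirement $t\ge c_3\tau_\alpha$ must hold. Since $\tau_\alpha=\mathcal O(\log(1/\alpha))=\mathcal O(\log t)$ (and $\tau_\alpha=1$ in the i.i.d.\ case), this holds for all large $t$. Finite small $t$ are absorbed into the constant, since both laws have finite second moments. Finally, the $\mathcal O(\sqrt\alpha)$ constant from Theorem~\ref{thm:steady-state-gaussian-approx} is uniform in $\alpha$ on $(0,\alpha_0)$, which is what the substitution needs.
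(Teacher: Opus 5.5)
Your proposal is correct and follows the same route the paper indicates: the paper calls the corollary an ``immediate consequence'' of Theorem~\ref{thm:steady-state-gaussian-approx} and geometric convergence and gives no separate proof. You pass through $\law(Y_\infty^{(\alpha)})$ by the triangle inequality, bound the transient term with Assumption~\ref{assumption:convergence} after the exact $1/\sqrt{\alpha}$ rescaling of $\W_2$, bound the steady-state term with the theorem, and then substitute $\alpha=c_4\log t/t$. Your admissibility checks ($\alpha\to 0$, and $c_3\tau_\alpha=\mathcal O(\log t)\le t$ for large $t$) supply the details the paper leaves implicit.
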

\noindent\textbf{Comparison with Prior Work.} A closely related result is \cite{wei2026gaussian}, which establishes a
Gaussian approximation bound for the raw iterate of constant-stepsize SGD
under strong convexity and i.i.d.\ data. With
\(\alpha\asymp \log t/t\), their result yields a rate of order
\(\sqrt{\log t/t}\) in the convex-set distance. Corollary~\ref{cor:raw-Gaussian}
achieves the same dependence on \(t\) in $\W_2$ distance for the
broader class of contractive SA considered here, while also allowing
Markovian noise. Since the convex-set and $\W_2$ distances are not
comparable in general, neither result directly implies the other. Extending
our Gaussian approximation guarantees to the convex-set distance is an
interesting direction for future work.

Another closely related result is \cite{haque2026accurately}, which provides
 Gaussian approximation bounds for raw iterates of a more general
class of SA algorithms, without requiring global contraction, but under
independent data and in $\W_1$ distance. Rather than proceeding
through a steady-state Gaussian approximation as we do, they establish the raw-iterate
Gaussian approximation directly. In the regime considered here, however,
their bound contains an error term of order
\(\sqrt{\alpha}\log(1/\alpha)\), which, under
\(\alpha\asymp \log t/t\), yields $\mathcal O\left((\log t)^{3/2}/\sqrt t\right)$. In contrast, our optimal \(\mathcal O(\sqrt{\alpha})\) steady-state Gaussian approximation
leads to the sharper rate $\mathcal O(\sqrt{\log t/t})$ in the stronger $\W_2$ distance.

\subsection{Step 1: Jacobian-Drift Universality}
\label{sec:Jacobian}

The proof of Theorem~\ref{thm:steady-state-gaussian-approx} proceeds by
establishing a quantitative Gaussian approximation for
\(A_\infty^{(\alpha)}\). The argument
consists of two main steps. In this subsection, we develop the first step,
which replaces the mean operator \(\mT\) by its linearization at the fixed
point \(\theta^*\).

More precisely, we introduce an auxiliary SA recursion driven by the same
additive equilibrium-noise process \(\{h(x_t)\}_{t\geq0}\), but with \(\mT\) replaced by
the linear operator $\mT_{\mathrm{lin}}(\theta)
    :=
    \theta^*+J(\theta-\theta^*),$ where $J:=\nabla T(\theta^*)$. The resulting recursion is
\begin{equation}
\label{eq:auxiliary-Jacobian}
    b_{t+1}^{(\alpha)}
    =
    b_t^{(\alpha)}
    +
    \alpha\Big(
        (J-I_d)\bigl(b_t^{(\alpha)}-\theta^*\bigr)
        +
        h(x_t)
    \Big),
    \qquad t\geq 0.
\end{equation}
Recursion~\eqref{eq:auxiliary-Jacobian} is a special case of the general SA
recursion~\eqref{eq:update} and satisfies
Assumptions~\ref{assumption:contraction} and~\ref{assumption:noise}
with the same contraction norm \(\|\cdot\|_c\) and contraction factor
\(\gamma\); see, e.g., \cite[Proposition~6.4]{arora2021alternative}.
Consequently, Assumption~\ref{assumption:convergence} implies that, for all
sufficiently small \(\alpha>0\),
recursion~\eqref{eq:auxiliary-Jacobian} admits a unique steady-state
distribution $\law\bigl(b_\infty^{(\alpha)}\bigr)
    \in \mathcal{P}_2(\mathbb{R}^d).$ We define its scaled steady-state variable by $B_\infty^{(\alpha)}
    :=
    \frac{b_\infty^{(\alpha)}-\theta^*}{\sqrt{\alpha}}.$ The following proposition establishes the Jacobian-drift universality result.

\begin{proposition}[Jacobian-Drift Universality]
\label{prop:Jacobian}
Under Assumptions~\ref{assumption:contraction},
\ref{assumption:noise}, \ref{assumption:convergence}, and
\ref{assumption:local-quadratic},
\[
    \W_2\left(
        \law\bigl(A_\infty^{(\alpha)}\bigr),
        \law\bigl(B_\infty^{(\alpha)}\bigr)
    \right)
   \in
    \mathcal{O}(\sqrt{\alpha}).
\]
\end{proposition}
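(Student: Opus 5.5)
We prove Proposition~\ref{prop:Jacobian} by a synchronous coupling: run recursions \eqref{eq:auxiliary-additive} and \eqref{eq:auxiliary-Jacobian} on the same noise path $\{x_t\}$. Both are started at stationarity, in the sense that $a_0^{(\alpha)}\sim\law(a_\infty^{(\alpha)})$ and $b_0^{(\alpha)}\sim\law(b_\infty^{(\alpha)})$, jointly with $x_0\sim\mu$; we may take these as the stationary joint laws. Write $e_t:=a_t^{(\alpha)}-b_t^{(\alpha)}$ and $u_t:=a_t^{(\alpha)}-\theta^*$. The noise $h(x_t)$ cancels exactly in the difference, so
\[
e_{t+1}=e_t+\alpha\Bigl(\mT(a_t^{(\alpha)})-\mT(b_t^{(\alpha)})-e_t\Bigr)+\alpha\Bigl(\mT(b_t^{(\alpha)})-\mT_{\mathrm{lin}}(b_t^{(\alpha)})\Bigr).
\]
It is cleaner to split the difference the other way, as
\[
e_{t+1}=\bigl(I+\alpha(J-I)\bigr)e_t+\alpha R(u_t),\qquad R(u):=\mT(\theta^*+u)-\theta^*-Ju.
\]
With this splitting the linear part is the contraction matrix and the remainder is evaluated along the $a$-trajectory. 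Since $J$ is a $\gamma$-contraction in $\|\cdot\|_c$, we have $\|I+\alpha(J-I)\|_c\le 1-\alpha(1-\gamma)$. Unrolling the recursion and using norm equivalence gives
\[
\bigl(\E\|e_t\|^2\bigr)^{1/2}\lesssim (1-\alpha(1-\gamma))^t\bigl(\E\|e_0\|^2\bigr)^{1/2}+\alpha\sum_{s<t}(1-\alpha(1-\gamma))^{t-1-s}\bigl(\E\|R(u_s)\|^2\bigr)^{1/2}.
\]

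The remainder is bounded by Assumption~\ref{assumption:local-quadratic}, which gives $\|R(u)\|\le L_R\|u\|^2$ on $\|u\|\le\epsilon$. Outside this ball, the Lipschitz property of $\mT$ (implied by contraction) gives $\|R(u)\|\lesssim\|u\|\le \|u\|^2/\epsilon$. Hence $\|R(u)\|\lesssim\|u\|^2$ globally, and therefore $(\E\|R(u_s)\|^2)^{1/2}\lesssim(\E\|u_s\|^4)^{1/2}$. Under stationarity this equals $(\E\|a_\infty^{(\alpha)}-\theta^*\|^4)^{1/2}$.

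\textbf{The main obstacle} is the fourth-moment bound $\E\|a_\infty^{(\alpha)}-\theta^*\|^4\lesssim\alpha^2$, uniformly in small $\alpha$. With it, each remainder term is $O(\alpha)$. The geometric sum then contributes $\alpha\cdot O(\alpha)/(\alpha(1-\gamma))=O(\alpha)$. Letting $t\to\infty$, the initial term vanishes, and stationarity is preserved along the coupling. So $(\E\|e_\infty\|^2)^{1/2}=O(\alpha)$, and after dividing by $\sqrt\alpha$ we get $\W_2\le O(\sqrt\alpha)$. To make the limit rigorous, we use that the pair $(a_t,b_t)$ has stationary marginals for every $t$, so $\W_2(\law(a_\infty),\law(b_\infty))\le(\E\|e_t\|^2)^{1/2}$ for every $t$.

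For the moment bound, the plan is to use a Lyapunov drift on $M(u)$, where $M$ is the Moreau envelope of $\tfrac12\|\cdot\|_c^2$ (smooth, as in \cite{chen2024lyapunov}). We raise it to the power two, i.e.\ work with $M(u)^2$. Markovian correlation of $h(x_t)$ is handled by conditioning $\tau_\alpha$ steps back, in the standard way. This yields $\E\|u_\infty\|^4\lesssim(\alpha\tau_\alpha)^2$. That carries a $\log$ factor, which would spoil optimality.

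To remove the log, I would first try a different route that exploits the additive noise. Decompose $u_t=\xi_t+w_t$, where $\xi$ solves the linear recursion $\xi_{t+1}=(1-\alpha)\xi_t+\alpha h(x_t)$ and $w_t$ absorbs the drift. Use the Poisson-equation solution $\hat h$ of $h$ (bounded by uniform ergodicity) to rewrite $\sum_s(1-\alpha)^{t-s}\alpha h(x_s)$ as a martingale sum plus $O(\alpha)$ boundary terms. Burkholder's inequality then gives $\E\|\xi\|^4\lesssim\alpha^2$ with no mixing-time factor. The remainder $w$ satisfies a contractive recursion driven by $\mT(\theta^*+u_t)-\theta^*$, which is Lipschitz in $u_t$. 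A Gronwall-type argument then transfers the bound $O(\alpha^{1/2})$ in $L^4$ to $u$. This Poisson-equation step is where I expect most of the technical work to lie.
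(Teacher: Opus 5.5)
Your proposal is correct and is essentially the paper's proof: the same synchronous coupling, the same splitting $e_{t+1}=(I+\alpha(J-I))e_t+\alpha R(u_t)$ with the remainder evaluated along the $a$-trajectory, the same global bound $\|R(u)\|\lesssim\|u\|^2$, and the same fourth-moment estimate yielding an $\mathcal O(\alpha)$ unscaled $L^2$ error. The only cosmetic differences are in how the linear part is contracted. The paper initializes both chains at $\theta^*$ and contracts in a Lyapunov $P$-norm (Lemma~\ref{lem:Jacobian-hurwitz}) via a squared recursion with Young's inequality. You instead glue stationary initial laws and use $\|(1-\alpha)I+\alpha J\|_c\le 1-\alpha(1-\gamma)$ with Minkowski, which is slightly more economical.

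The step you flag as the main obstacle is taken in the paper directly from Lemma~\ref{lem:fourthmoment}, an $\alpha^2$ bound with no mixing-time factor whose proof is omitted. Your argument is a valid self-contained proof of that bound for the additive-noise recursion: the Poisson-equation/Burkholder bound on $\xi$, combined with the pathwise comparison $\|w_t\|_c\le\gamma\alpha\sum_{s<t}(1-\alpha(1-\gamma))^{t-1-s}\|\xi_s\|_c$ started from $u_0=0$ and then passed to the stationary limit. Note that this comparison uses the $\gamma$-contraction of $\mT$, not merely its Lipschitz continuity.
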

Starting from the recursion~\eqref{eq:auxiliary-additive},
we linearize \(\mT\) at \(\theta^*\). Proposition~\ref{prop:Jacobian} shows
that this changes the scaled steady-state law by at most
\(\mathcal O(\sqrt{\alpha})\) in Wasserstein-\(2\) distance. It therefore remains to analyze \(B_\infty^{(\alpha)}\). Since its recursion
has linear drift and additive noise, \(B_\infty^{(\alpha)}\) admits a
geometrically weighted-sum representation, enabling a direct quantitative
Gaussian approximation. We carry out this analysis in the next subsection
and defer the proof of Proposition~\ref{prop:Jacobian} to
Section~\ref{sec:proof-Jacobian}.

\subsection{Step 2: Gaussian Approximation for Geometrically Weighted Sums}
\label{sec:geometric-weight}

Having reduced the original SA recursion to a linear recursion with additive
noise in Section~\ref{sec:Jacobian}, we now establish a quantitative Gaussian
approximation for its scaled steady-state distribution. Define
\[
    B_t^{(\alpha)}
    :=
    \frac{b_t^{(\alpha)}-\theta^*}{\sqrt{\alpha}},
    \qquad
    Q_\alpha
    :=
    (1-\alpha)I_d+\alpha J.
\]
Then
\begin{equation}
\label{eq:scaled-linearized-recursion}
    B_{t+1}^{(\alpha)}
    =
    Q_\alpha B_t^{(\alpha)}
    +
    \sqrt{\alpha}\,h(x_t).
\end{equation}
Since \(J-I_d\) is Hurwitz, standard linear stability theory implies that,
for all sufficiently small \(\alpha>0\), there exist constants
\(c_1,c_2>0\), independent of \(\alpha\) and $k$, such that
\begin{equation}\label{eq:Qalpha-geometric-decay}
 \|Q_\alpha^k\|
    \leq
    c_1 e^{-c_2\alpha k},
    \qquad k\geq0.
\end{equation}
Hence, on a two-sided stationary extension
\(\{x_t\}_{t\in\mathbb Z}\), the stationary solution admits the representation
\begin{equation}
\label{eq:geometric-sum-representation}
    B_\infty^{(\alpha)}
    \overset{\mathrm d}{=}
    \sqrt{\alpha}
    \sum_{k=0}^{\infty}
    Q_\alpha^k h(x_{-k-1}),
\end{equation}
where the series converges in \(L^2\). Thus,
\(B_\infty^{(\alpha)}\) is a geometrically weighted sum of the centered
process \(\{h(x_t)\}\), to which we apply a quantitative Gaussian
approximation in the following proposition.

\begin{proposition}[Gaussian Approximation for Geometrically Weighted Sums]
\label{prop:geometric-gaussian}
Under Assumptions~\ref{assumption:contraction},
\ref{assumption:noise}, \ref{assumption:convergence}, and
\ref{assumption:local-quadratic},
\[
\W_2\left(
    \law\bigl(B_\infty^{(\alpha)}\bigr),
    \mathcal{N}(0,V)
\right)
\in 
\mathcal{O}(\sqrt{\alpha}).
\]
\end{proposition}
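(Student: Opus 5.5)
The plan is to prove the bound directly on the geometric-sum representation \eqref{eq:geometric-sum-representation}, in three steps: a Poisson-equation martingale decomposition, a comparison of covariances, and a martingale central limit theorem in $\W_2$. In the i.i.d.\ case the martingale step is trivial, so I describe the Markovian case.

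\textbf{Step 1: martingale decomposition.} Let $g$ be the bounded solution of the Poisson equation $g-Pg=h$; it exists by uniform ergodicity \eqref{eq:uniform-GE}, and $\|g\|_\infty\lesssim 1$. Write
\[
h(x_t)=\xi_{t+1}+\bigl(Pg(x_t)-Pg(x_{t+1})\bigr),
\qquad
\xi_{t+1}:=g(x_{t+1})-Pg(x_t).
\]
Here $\{\xi_t\}$ is a bounded, stationary martingale difference sequence with respect to the natural filtration. Substituting into the sum and applying Abel summation to the coboundary part gives
\[
\sqrt\alpha\sum_{k\ge0}Q_\alpha^k\bigl(Pg(x_{-k-1})-Pg(x_{-k})\bigr)
=-\sqrt\alpha\,Pg(x_0)+\sqrt\alpha\sum_{k\ge1}\bigl(Q_\alpha^{k-1}-Q_\alpha^{k}\bigr)Pg(x_{-k}).
\]
Since $\|Q_\alpha^{k-1}-Q_\alpha^k\|=\alpha\|(I-J)Q_\alpha^{k-1}\|\lesssim \alpha e^{-c_2\alpha k}$ by \eqref{eq:Qalpha-geometric-decay}, this remainder is $O(\sqrt\alpha)$ in $L^2$, even bounding it crudely by $\sqrt\alpha\cdot\alpha\cdot\sum_k e^{-c_2\alpha k}$. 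Hence $B_\infty^{(\alpha)}=M_\alpha+R_\alpha$ with $\|R_\alpha\|_{L^2}=O(\sqrt\alpha)$ and
\[
M_\alpha=\sqrt\alpha\sum_{k\ge0}Q_\alpha^k\xi_{-k}.
\]

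\textbf{Step 2: covariance comparison.} By stationarity, $\E[\xi\xi^\top]=\Sigma_h$; this is the standard identity for the Poisson-equation martingale. Therefore
\[
V_\alpha:=\operatorname{Cov}(M_\alpha)=\alpha\sum_k Q_\alpha^k\Sigma_h (Q_\alpha^k)^\top .
\]
This solves the discrete Lyapunov equation $V_\alpha=Q_\alpha V_\alpha Q_\alpha^\top+\alpha\Sigma_h$. Expanding $Q_\alpha=I+\alpha(J-I)$ gives
\[
(J-I)V_\alpha+V_\alpha(J-I)^\top+\Sigma_h=-\alpha(J-I)V_\alpha(J-I)^\top,
\]
so $\|V_\alpha-V\|=O(\alpha)$ by invertibility of the Lyapunov operator. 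Then, by $\W_2$ bounds between Gaussians (e.g.\ $\W_2(\mathcal N(0,V_\alpha),\mathcal N(0,V))\le\|V_\alpha^{1/2}-V^{1/2}\|_F$), we get $\W_2(\mathcal N(0,V_\alpha),\mathcal N(0,V))=O(\alpha)$ when $V\succ0$, and $O(\sqrt\alpha)$ in general via $\|A^{1/2}-B^{1/2}\|\le\|A-B\|^{1/2}$. Both are enough.

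\textbf{Step 3 (main obstacle): martingale CLT in $\W_2$ at rate $\sqrt\alpha$.} I need $\W_2(\law(M_\alpha),\mathcal N(0,V_\alpha))=O(\sqrt\alpha)$ with no logarithm. $M_\alpha$ is a weighted martingale sum with about $1/\alpha$ effective terms, each of size $\sqrt\alpha$, so $\sqrt\alpha$ is the Berry--Esseen scale. The difficulty is the conditional variance $\E[\xi_{t}\xi_t^\top\mid x_{t-1}]=:\Phi(x_{t-1})$, which is not constant.

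First, I would reduce to a martingale with deterministic conditional covariance. To do this, apply the Poisson equation again to $\Phi-\Sigma_h$. The same Abel-summation argument as in Step 1, now with weights $\alpha Q_\alpha^k(\cdot)(Q_\alpha^k)^\top$, shows that the random part of the quadratic variation is $O(\sqrt\alpha)$ in $L^2$, rather than $O(\sqrt{\alpha\log(1/\alpha)})$.

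Then I would apply a Lindeberg/Stein-type replacement of each $\sqrt\alpha Q_\alpha^k\xi_{-k}$ by an independent Gaussian with matching conditional covariance, in the spirit of Rio's $\W_2$ bounds for martingales (or Röllin/Fan Berry--Esseen with quadratic-variation error terms). The two error sources are the third-moment terms, which contribute $\sum_k(\sqrt\alpha e^{-c\alpha k})^3\lesssim\sqrt\alpha$, and the quadratic-variation mismatch, already shown to be $O(\sqrt\alpha)$. I expect the technical crux is getting $\W_2$ rather than $\W_1$ without losing a log. If the direct Lindeberg route loses a log, I would fall back on a time-reversal trick: $M_\alpha$ is the stationary solution of $M\mapsto Q_\alpha M+\sqrt\alpha\xi$, so I would compare it step by step with the Gaussian AR(1) $G\mapsto Q_\alpha G+\sqrt\alpha\zeta$ through a synchronous one-step $\W_2$ comparison whose per-step error is $O(\alpha^{3/2})$, contracted by the factor $1-c\alpha$ and then summed.

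Finally, the triangle inequality combines $R_\alpha$, Step 2, and Step 3 to give the claimed $O(\sqrt\alpha)$ bound.
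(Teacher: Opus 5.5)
Your Steps 1 and 2 match the paper's argument: the Poisson split $B_\infty^{(\alpha)}=M_\alpha+E_\alpha$ with an $O(\sqrt{\alpha})$ boundary term from summation by parts, and the comparison of $V_\alpha=\alpha\sum_k Q_\alpha^k\Sigma_h(Q_\alpha^k)^\top$ with $V$ via the discrete and continuous Lyapunov equations. (One harmless slip: with $\xi_{t+1}:=g(x_{t+1})-Pg(x_t)$, the right-hand side of your identity equals $h(x_{t+1})$, not $h(x_t)$; by stationarity this only shifts indices.)

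The gap is Step~3. It carries the whole difficulty, and you do not establish it.

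\emph{The Lindeberg/Stein route.} A Lindeberg/Stein replacement controls smooth test functions. Your tally $\sum_k(\sqrt{\alpha}e^{-c\alpha k})^3\lesssim\sqrt{\alpha}$ bounds a third-order smooth distance, not $\W_2$. The passage to $\W_2$, together with the random quadratic variation, is where generic martingale bounds typically lose a logarithm or a fractional power of the quadratic-variation error. Knowing that this error is $O(\sqrt{\alpha})$ in $L^2$ does not by itself give $O(\sqrt{\alpha})$ in $\W_2$.

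\emph{The fallback.} It fails for Markovian data. Since $\xi_{t+1}$ depends on $x_t$, which is correlated with $M_t$, you cannot push a coupling of $M_t$ and $G_t$ through one step with a shared independent increment unless you condition on $x_t$. But conditionally on $x_t$, the increment has covariance $\alpha\Phi(x_t)$ rather than $\alpha\Sigma_h$. That is an $O(\alpha)$ one-step discrepancy, not $O(\alpha^{3/2})$, and propagated additively with factor $1-c\alpha$ it sums to $O(1)$. The averaging of $\Phi-\Sigma_h$ that your second Poisson equation provides is a linear, test-function cancellation, which a step-by-step $\W_2$ recursion cannot exploit. For i.i.d.\ data with $\Sigma_h\succ0$ the fallback does work via a one-step KL/Talagrand bound, but that case is already covered by Lemma~\ref{lem:GA-independent}.

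\emph{How the paper closes Step~3.} It uses a dedicated estimate, Lemma~\ref{lemma:GA-MDS}. For deterministic matrices $M_i$ and the Poisson martingale differences $D_i$ of a uniformly ergodic chain with $\operatorname{Cov}(\sum_i M_iD_i)=I$, it gives $\W_2^2\le C\bigl(1+(\sum_i\|M_i\|_{\mathrm{op}}^2)^{1/2}\bigr)\sum_i\|M_i\|_{\mathrm{op}}^4$. The paper applies this as follows.
\begin{enumerate}
\item Take the truncated sum $M_{\alpha,n}$ with $n\geq C_0/\alpha$.
\item Project onto $\operatorname{Range}(V)$, which is $Q_\alpha$-invariant, so a singular $\Sigma_h$ is handled.
\item Whiten by $\overline V_{\alpha,n}^{-1/2}$. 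The resulting weights satisfy $\sum_i\|M_i\|_{\mathrm{op}}^2\lesssim1$ and $\sum_i\|M_i\|_{\mathrm{op}}^4\lesssim\alpha$, so $\W_2\lesssim\sqrt{\alpha}$.
\item Let $n\to\infty$.
\end{enumerate}
Without a weighted-martingale $\W_2$ bound of this strength, or a proof of one, your argument does not reach the $O(\sqrt{\alpha})$ rate. Any whitening or KL step in your Step~3 would also need the same range reduction when $\Sigma_h$ is singular.
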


Proposition~\ref{prop:geometric-gaussian} shows that
\(B_\infty^{(\alpha)}\) converges to \(\mathcal N(0,V)\) at rate
\(\mathcal O(\sqrt{\alpha})\) in \(\W_2\). Its proof exploits the weighted-sum
representation~\eqref{eq:geometric-sum-representation}. In the Markovian
setting, we use the Poisson equation to decompose the additive functional into
a martingale sum and a telescoping remainder, apply a quantitative Gaussian
approximation to the weighted martingale sum, and show that the remainder
contributes only \(\mathcal O(\sqrt{\alpha})\). For i.i.d.\ data, the argument
reduces to a Gaussian approximation for weighted sums of independent random
vectors. The required Gaussian approximation results are collected in
Section~\ref{sec:prior-GA}, and the proof of
Proposition~\ref{prop:geometric-gaussian} is deferred to
Section~\ref{sec:proof-geometric-gaussian}.

Finally, Theorem~\ref{thm:steady-state-gaussian-approx} follows by combining
Propositions~\ref{prop:additive}, \ref{prop:Jacobian}, and
\ref{prop:geometric-gaussian} with the triangle inequality:
\begin{align*}
\W_2\left(
    \law\bigl(Y_\infty^{(\alpha)}\bigr),
    \mathcal N(0,V)
\right)
&\leq
\W_2\left(
    \law\bigl(Y_\infty^{(\alpha)}\bigr),
    \law\bigl(A_\infty^{(\alpha)}\bigr)
\right) +
\W_2\left(
    \law\bigl(A_\infty^{(\alpha)}\bigr),
    \law\bigl(B_\infty^{(\alpha)}\bigr)
\right) \\
&\quad+
\W_2\left(
    \law\bigl(B_\infty^{(\alpha)}\bigr),
    \mathcal N(0,V)
\right)
\in 
\mathcal O\bigl(\sqrt{\alpha}\bigr).
\end{align*}
\section{General Steady-State Convergence}\label{sec:nondifferentiable}
In this section, we remove the local quadratic linearization requirement on mean operator $\mT$. Instead, we focus a broad class of mean operator $\mT$, which was studied in the prior work \cite{zhang2024prelimit} when the data is i.i.d.
\begin{assumption}\label{assumption:one-side}
The mean operator $\mT$ is one-sided direactionally differentiable at its fixed point $\theta^*$: there exists a function $G$: $\mathbb{S}^{d-1}\to \R^d$ such that
\begin{align*}
    \lim_{w \to 0^+} \Big\|\frac{\mT(\theta^* + w\theta)-\mT( \theta^*)}{w} - G(\theta)\Big\| = 0, \quad \forall \theta \in \mathbb{S}^{d-1}.
\end{align*}
\end{assumption}
The class of one-sided directionally differentiable mappings is broad,
encompassing several widely studied nonsmooth subclasses, including
\(g\circ F\) composite functions
\citep{shapiro2003class,sagastizabal2013composite},
prox-regular functions~\citep{poliquin1996prox}, and Clarke-regular
functions~\citep{clarke1990optimization,hiriart2004fundamentals}.
It also includes classical max-type functions, spectral mappings such as the
largest-eigenvalue map, the sparsity-promoting \(\ell_1\)-norm, and their
compositions with smooth transformations. A prominent example of a max-type structure at the fixed point arises in
asynchronous Q-learning when the optimal policy is not unique, which we
discuss in detail in Section~\ref{sec:Q-learning}.

Under the broader Assumption~\ref{assumption:one-side}, together with the
standing assumptions of Section~\ref{sec:problem}, we obtain the following
general SSC result.

\begin{theorem}[General Steady-State Convergence]
\label{thm:general-ssc}
Under Assumptions~\ref{assumption:contraction},
\ref{assumption:noise}, \ref{assumption:convergence}, and
\ref{assumption:one-side}, there exists a unique probability measure
\(\law(Y_\infty)\in\mathcal P_2(\R^d)\), determined solely by the mean operator
\(\mT\) and the long-run covariance matrix \(\Sigma_h\), such that
\[
    \lim_{\alpha \downarrow 0}
    \W_2\left(
        \law\bigl(Y_\infty^{(\alpha)}\bigr),
        \law\bigl(Y_\infty\bigr)
    \right)
    =
    0.
\]
\end{theorem}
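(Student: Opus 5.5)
The plan follows the lower branch of the roadmap in Figure~\ref{fig:proof-roadmap}. First, Proposition~\ref{prop:additive} gives $\W_2(\law(Y_\infty^{(\alpha)}),\law(A_\infty^{(\alpha)}))=\mathcal O(\sqrt\alpha)$, so it suffices to prove SSC for the additive-noise auxiliary $A_\infty^{(\alpha)}$.

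Second, introduce the i.i.d.\ Gaussian auxiliary recursion
\[
z_{t+1}^{(\alpha)}=z_t^{(\alpha)}+\alpha\bigl(\mT(z_t^{(\alpha)})-z_t^{(\alpha)}+\xi_t\bigr),\qquad \xi_t\overset{\mathrm{iid}}{\sim}\mathcal N(0,\Sigma_h),
\]
with scaled stationary variable $Z_\infty^{(\alpha)}:=(z_\infty^{(\alpha)}-\theta^*)/\sqrt\alpha$. By Proposition~\ref{prop:ind-Gaussian}, $\W_2(\law(A_\infty^{(\alpha)}),\law(Z_\infty^{(\alpha)}))=\mathcal O(\alpha^{1/4})$.

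The proof of that proposition is the main obstacle. Since $\mT$ is merely contractive and not linear, one cannot sum geometric weights. Instead I would partition time into blocks of length $m\asymp \alpha^{-1/2}$, which is much larger than the mixing time $\tau_\alpha$ and much smaller than the relaxation time $1/\alpha$. Within a block, $\mT(a_t)$ is frozen at the block start, at a cost of $\mathcal O(\alpha m)$ in the scaled variable. Each block sum $\sum h(x_t)$ is then coupled with a Gaussian of covariance $m\Sigma_h$, using the uniform blockwise Gaussian coupling of Proposition~\ref{prop:uniform}. This coupling must be uniform over the starting state of the chain, and the $\W_p$-decoupling Lemma~\ref{lem:wp-decoupling} handles the residual dependence between blocks. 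Because the drift contracts synchronously in $\|\cdot\|_c$, the per-block errors accumulate geometrically with ratio $1-c\alpha m$, and balancing the two error sources gives the stated $\alpha^{1/4}$ rate.

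Third, the $z$-recursion is exactly an SA with additive i.i.d.\ Gaussian noise. It satisfies Assumption~\ref{assumption:contraction}, Assumption~\ref{assumption:noise:iid} (Gaussian moments are finite), and the one-sided directional differentiability of Assumption~\ref{assumption:one-side}. I would therefore invoke the general SSC theorem of \cite{zhang2024prelimit}. It yields a unique $\law(Z_\infty)\in\mathcal P_2(\R^d)$ with $\W_2(\law(Z_\infty^{(\alpha)}),\law(Z_\infty))\to0$. This limit is determined only by $\mT$ (through $G$) and $\Sigma_h$, since the auxiliary recursion involves no other data.

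Finally, set $Y_\infty:=Z_\infty$ and apply the triangle inequality:
\[
\W_2\bigl(\law(Y_\infty^{(\alpha)}),\law(Y_\infty)\bigr)\le \mathcal O(\sqrt\alpha)+\mathcal O(\alpha^{1/4})+\W_2\bigl(\law(Z_\infty^{(\alpha)}),\law(Z_\infty)\bigr)\to0 .
\]
Uniqueness of the limit follows because $\W_2$ limits are unique.
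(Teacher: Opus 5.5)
Your proposal is correct and is the paper's own argument. Proposition~\ref{prop:additive} and Proposition~\ref{prop:ind-Gaussian} reduce $Y_\infty^{(\alpha)}$ to the scaled steady state of the i.i.d.\ Gaussian-noise recursion, \cite{zhang2024prelimit} supplies its limit $Z_\infty$ (which depends only on $\mT$ and $\Sigma_h$), and the triangle inequality with $Y_\infty:=Z_\infty$ finishes the proof.

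Your sketch of the universality step also matches the paper: blocks of length $n\asymp\alpha^{-1/2}$, Proposition~\ref{prop:uniform} built from Lemma~\ref{lem:wp-decoupling}, and per-block contraction $1-c\alpha n$. One clarification: the $\alpha^{1/4}$ balance comes from accumulating \emph{squared} errors through the Moreau-envelope Lyapunov function $\E[M_\eta(\Delta_{nt})]$. There the cross term with the block noise vanishes for the Gaussian part and is handled by a one-block lag for the Markovian part. A triangle-inequality accumulation of $L^2$ errors at this block length would only give an $\mathcal O(1)$ bound.
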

To the best of our knowledge, \cite{zhang2024prelimit} is the only existing
work establishing a general SSC result for contractive SA without requiring
 \(\mT\) to be locally differentiable at \(\theta^*\).
That result, however, is restricted to i.i.d.\  noise.
Theorem~\ref{thm:general-ssc} extends this theory from the i.i.d.\ setting to
Markovian data.

From an applications perspective, a direct consequence of
Theorem~\ref{thm:general-ssc} is a  characterization of the
asymptotic bias. Indeed, since convergence in
\(\W_2\) implies convergence of first moments, we have
\begin{equation}\label{eq:general-bias}
\E\bigl[\theta_\infty^{(\alpha)}\bigr]-\theta^*
    =
    \sqrt{\alpha}\,
    \E\bigl[Y_\infty^{(\alpha)}\bigr]
    =
    \sqrt{\alpha}\,\E[Y_\infty]
    +
    o\bigl(\sqrt{\alpha}\bigr).
\end{equation}
When the data are i.i.d., \cite{zhang2024prelimit} shows that, in genuinely
nondifferentiable settings, the steady-state limit \(Y_\infty\) need not be
centered, so the asymptotic bias may have a nonvanishing leading term of
order \(\sqrt{\alpha}\). This characterization can in turn be used to design
and analyze bias-reduction procedures. Since
Theorem~\ref{thm:general-ssc} shows that \(Y_\infty\) depends only on the mean
operator \(\mT\) and the long-run covariance matrix \(\Sigma_h\), the same
bias analysis and bias-reduction principles extend to the more general
Markovian setting, as discussed in Section~\ref{sec:bias}.

From a technical perspective, the key technical ingredient is a \emph{Gaussian-noise universality principle}. Let \(Z_\infty^{(\alpha)}\) denote the scaled steady state of an auxiliary SA recursion with the same mean operator \(\mT\), but driven by additive i.i.d.\ Gaussian noise with covariance \(\Sigma_h\). We show that $\W_2\left(
        \law\bigl(A_\infty^{(\alpha)}\bigr),
        \law\bigl(Z_\infty^{(\alpha)}\bigr)
    \right)
    \in
    \mathcal{O}\left(
       \alpha^{1/4}
    \right).$ Thus, at the diffusion scale, the steady-state distribution depends on the Markovian noise only through its long-run covariance \(\Sigma_h\). Combined with Proposition~\ref{prop:additive}, this universality principle also gives a direct proof of Theorem~\ref{thm:general-ssc}. Indeed, the auxiliary Gaussian-noise recursion falls within the i.i.d.\ setting studied in \cite{zhang2024prelimit}, which yields \[ \W_2\left( \law\bigl(Z_\infty^{(\alpha)}\bigr), \law(Y_\infty) \right) \longrightarrow 0 \qquad\text{as }\alpha\downarrow0 \] for a limiting random variable \(Y_\infty\) determined by \(\mT\) and \(\Sigma_h\). Therefore, by the triangle inequality, \[ \begin{aligned} \W_2\left( \law\bigl(Y_\infty^{(\alpha)}\bigr), \law(Y_\infty) \right) &\leq \W_2\left( \law\bigl(Y_\infty^{(\alpha)}\bigr), \law\bigl(A_\infty^{(\alpha)}\bigr) \right) + \W_2\left( \law\bigl(A_\infty^{(\alpha)}\bigr), \law\bigl(Z_\infty^{(\alpha)}\bigr) \right) \\ &\quad+ \W_2\left( \law\bigl(Z_\infty^{(\alpha)}\bigr), \law(Y_\infty) \right) \longrightarrow 0. \end{aligned} \] The Gaussian-noise universality principle is developed in Section~\ref{sec:ind-Gaussian-universality}.

\subsection{Bias Characterization and Reduction for Locally Nondifferentiable SA}
\label{sec:bias}

Equation~\eqref{eq:general-bias} provides a general characterization of the
asymptotic bias for contractive SA whenever the mean operator \(\mT\) satisfies
Assumption~\ref{assumption:one-side}. Under the local quadratic linearization
condition in Assumption~\ref{assumption:local-quadratic}, however,
\eqref{eq:general-bias} is not sharp, because its
\(\sqrt{\alpha}\)-order term vanishes. Indeed, since
\(\mathcal N(0,V)\) is centered, Theorem~\ref{thm:steady-state-gaussian-approx}
implies
\begin{align*}
    \left\|
        \E\bigl[\theta_\infty^{(\alpha)}\bigr]-\theta^*
    \right\|_2=
    \sqrt{\alpha}\,
    \left\|
        \E\bigl[Y_\infty^{(\alpha)}\bigr]
    \right\|_2 \leq
    \sqrt{\alpha}\,
    \W_2\left(
        \law\bigl(Y_\infty^{(\alpha)}\bigr),
        \mathcal N(0,V)
    \right)
    \in
    \mathcal O(\alpha).
\end{align*}
Under stronger smoothness conditions, prior work further identifies the
leading \(\alpha\)-order term explicitly, as in \eqref{eq:bias-smooth}; see,
for example, \cite{dieuleveut2020bridging,zhang2024constant,huo2024collusion}.

The most informative regime of \eqref{eq:general-bias} is when
\(\E[Y_\infty]\neq0\). To characterize when this occurs, we adopt the same
criterion as in \cite{zhang2024prelimit}. Recall that
\(G:\mathbb S^{d-1}\to\R^d\) denotes the one-sided directional derivative map of \(\mT\) at \(\theta^*\) in
Assumption~\ref{assumption:one-side}, and define its positively homogeneous extension \(H:\R^d\to\R^d\) by $H(y):=\|y\|\,G\left(\frac{y}{\|y\|}\right)$ with \(H(0):=0\). Notice that if \(\mT\) is differentiable at \(\theta^*\) with
\(\nabla \mT(\theta^*)=J\), then \(H(y)=Jy\). In fact,
\cite{zhang2024prelimit} shows that these two conditions are equivalent.
The following corollary is therefore an immediate consequence of
\cite[Theorem~3]{zhang2024prelimit}.
\begin{corollary}[Bias Characterization]
\label{co:bias}
Under Assumptions~\ref{assumption:contraction},
\ref{assumption:noise}, \ref{assumption:convergence}, and
\ref{assumption:one-side}, suppose that \(\Sigma_h\) is positive definite.
If there exists an index \(i\in\{1,\ldots,d\}\) such that the \(i\)-th
coordinate \(H_i\) of \(H\) has a nonsingleton Fenchel subdifferential or Fenchel
superdifferential at \(0\), then $\E[Y_\infty]\neq 0.$
\end{corollary}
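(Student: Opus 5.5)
The plan is to reduce the statement to the i.i.d.\ criterion of \cite[Theorem~3]{zhang2024prelimit} through the identification of $Y_\infty$ supplied by Theorem~\ref{thm:general-ssc}. That theorem, together with the Gaussian-noise universality principle behind it, shows that $\law(Y_\infty)$ depends only on $\mT$ and $\Sigma_h$. More precisely, $Y_\infty$ is the $\W_2$-limit of $Z_\infty^{(\alpha)}$, the scaled steady state of the auxiliary recursion
\[
z_{t+1}^{(\alpha)} = z_t^{(\alpha)} + \alpha\bigl(\mT(z_t^{(\alpha)}) - z_t^{(\alpha)} + \xi_t\bigr),
\qquad \xi_t \overset{\mathrm{i.i.d.}}{\sim} \mathcal N(0,\Sigma_h).
\]
The first step is therefore to record that $Y_\infty$ coincides in law with the steady-state limit that \cite{zhang2024prelimit} constructs for this i.i.d.\ additive-Gaussian recursion. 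Uniqueness of $\W_2$-limits makes this immediate.

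The second step is to check that the auxiliary recursion satisfies the hypotheses of \cite[Theorem~3]{zhang2024prelimit}.
\begin{itemize}
\item Contraction of $\mT$ is unchanged, so Assumption~\ref{assumption:contraction} holds.
\item The random operator $\mT(\theta)+\xi$ is $1$-Lipschitz in $\theta$ with Lipschitz constant independent of the noise, and it has Gaussian, hence finite fourth, moments at $\theta^*$. This verifies Assumption~\ref{assumption:noise:iid} after possibly enlarging $L$.
\item One-sided directional differentiability at $\theta^*$ (Assumption~\ref{assumption:one-side}) concerns only $\mT$, so the same $G$ and the same $H$ are used.
\item The noise covariance $\Sigma_h$ is positive definite by hypothesis. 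As I recall, this nondegeneracy is the extra requirement in \cite[Theorem~3]{zhang2024prelimit}: it makes the limit law have full support, so a nonsingleton Fenchel sub/superdifferential of some $H_i$ at $0$ forces a strict Jensen-type inequality in the stationarity identity $\E[H(Y_\infty)]=\E[Y_\infty]$. This identity is obtained from $\E[\mT(\theta_\infty)-\theta_\infty]=0$ after scaling by $\sqrt{\alpha}$ and passing to the limit using Assumption~\ref{assumption:one-side}.
\end{itemize}

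The third step applies \cite[Theorem~3]{zhang2024prelimit} to conclude $\E[Y_\infty]\neq 0$. By \eqref{eq:general-bias} this is then the $\sqrt{\alpha}$-order bias statement.

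The only potential obstacle is matching conventions. One must check that the limit in \cite{zhang2024prelimit} is formed under the same $\sqrt{\alpha}$ scaling with the same stepsize parametrization $\alpha(\mT(\theta)-\theta+\text{noise})$, and that their criterion is stated for $H$ built from Euclidean directions as here. If their theorem were phrased in terms of the noise law rather than only its covariance, I would note that our Gaussian noise is a special case of their admissible i.i.d.\ noise. Everything else is a direct invocation of earlier results.
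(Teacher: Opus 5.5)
Your proposal is correct and follows the paper's own argument. The paper likewise treats the corollary as an immediate consequence of \cite[Theorem~3]{zhang2024prelimit}, applied to the auxiliary i.i.d.\ Gaussian-noise recursion~\eqref{eq:auxiliary-Gaussian}, whose steady-state limit is exactly $Y_\infty$ by Theorem~\ref{thm:general-ssc} and therefore depends only on $\mT$ and $\Sigma_h$. Making the hypothesis check explicit is worthwhile; one small correction is that the Euclidean Lipschitz constant of $\mT$ is $\gamma u_{cs}/l_{cs}$ rather than $1$, but your remark about enlarging $L$ already covers this.
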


Roughly speaking, the condition in Corollary~\ref{co:bias} captures a
genuinely nondifferentiable regime in which \(\mT\) is not differentiable at
\(\theta^*\) and the associated \(H\) is nonlinear. Provided
that the noise at equilibrium, $h(x):=\tmT(x,\theta^*)-\theta^*,$ is nondegenerate in the long run, this condition implies
\(\E[Y_\infty]\neq0\). Consequently, \eqref{eq:general-bias} yields an asymptotic bias of order \(\sqrt{\alpha}\), in sharp contrast to the
\(\mathcal O(\alpha)\) bias under local quadratic linearization. In
Section~\ref{sec:Q-learning}, we show that this nondifferentiability condition
is satisfied for asynchronous Q-learning when some state admits multiple
optimal actions.

\noindent\textbf{Bias Reduction.} A classical approach to bias reduction is Richardson--Romberg (RR)
extrapolation applied to tail-averaged iterates. For simplicity, suppose that
\(t\) is even and define the tail average by
\[
    \bar{\theta}_t^{(\alpha)}
    :=
    \frac{2}{t}
    \sum_{k=t/2}^{t-1}\theta_k^{(\alpha)}.
\]
Under ergodicity, $\bar{\theta}_t^{(\alpha)}
    \to\E\bigl[\theta_\infty^{(\alpha)}\bigr]$ as $t\to\infty.$
RR extrapolation~\cite{hildebrand1987introduction} then combines tail averages
corresponding to two different stepsizes. A natural choice is \(\alpha\) and
\(2\alpha\), leading to
\[
    \widetilde{\theta}_t^{(\alpha)}
    :=
    \lambda_1\bar{\theta}_t^{(\alpha)}
    +
    \lambda_2\bar{\theta}_t^{(2\alpha)}.
\]
The coefficients \((\lambda_1,\lambda_2)\) are chosen according to the
leading-order of the asymptotic bias. When the leading bias is of
order \(\sqrt{\alpha}\), the appropriate choice is $(\lambda_1,\lambda_2)
    =
    (2+\sqrt{2},-1-\sqrt{2}),$ which cancels the leading \(\sqrt{\alpha}\)-order bias term
\cite{zhang2024prelimit}. Corollary~\ref{co:bias} then provides a theoretical
guarantee that these extrapolated tail-averaged iterates achieve bias
reduction for Markovian SA with a locally
nondifferentiable mean operator \(\mT\).

In some settings, it may be unclear whether the leading 
bias is of order \(\alpha\) or \(\sqrt{\alpha}\), although it is known to take
one of these two forms. In this case, one can use three stepsizes and choose
the corresponding extrapolation coefficients so as to cancel both the
\(\sqrt{\alpha}\)- and \(\alpha\)-order terms simultaneously. We discuss this
three-level extrapolation scheme in the Q-learning setting in
Section~\ref{sec:Q-learning}.

\subsection{Independent Gaussian Noise Universality}\label{sec:ind-Gaussian-universality}
For general contractive SA, the mean operator \(\mT\) need not be
differentiable at \(\theta^*\), so the Jacobian-drift universality 
is unavailable. Building on the additive-noise universality principle, we
instead establish an \emph{independent Gaussian-noise universality}:
the steady state can be approximated by that of an auxiliary recursion with
the same mean operator \(\mT\), but with the Markovian, non-Gaussian 
\(\{h(x_t)\}_{t\geq0}\) replaced by i.i.d.\ Gaussian 
\(\{w_t\}_{t\geq0}\). Specifically, the auxiliary recursion is given by
\begin{equation}
\label{eq:auxiliary-Gaussian}
    \beta_{t+1}^{(\alpha)}
    =
    \beta_t^{(\alpha)}
    +
    \alpha\Big(
        \mT(\beta_t^{(\alpha)})
        -
        \beta_t^{(\alpha)}
        +
        w_t
    \Big),
    \qquad t\geq0,
\end{equation}
where \(\{w_t\}_{t\geq0} \overset{\mathrm{i.i.d.}}{\sim} \mathcal{N}(0,\Sigma_h).\) Recursion~\eqref{eq:auxiliary-Gaussian} satisfies the assumptions of the general SA recursion~\eqref{eq:update}. Hence, for all sufficiently small \(\alpha>0\), it admits a unique steady-state law \(\law(\beta_\infty^{(\alpha)})\in\mathcal{P}_2(\mathbb{R}^d)\). Define the scaled steady state by $Z_\infty^{(\alpha)}
    :=
    \frac{\beta_\infty^{(\alpha)}-\theta^*}{\sqrt{\alpha}}.$ The following proposition states the independent Gaussian
noise universality result.

\begin{proposition}[Independent Gaussian Noise Universality]
\label{prop:ind-Gaussian}
Under Assumptions~\ref{assumption:contraction}, \ref{assumption:noise},
and \ref{assumption:convergence},
\[
    \W_2\bigl(\law(A_\infty^{(\alpha)}),\law(Z_\infty^{(\alpha)})\bigr)
    \in\mathcal O(\alpha^{1/4}).
\]
\end{proposition}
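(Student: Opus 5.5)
We couple the stationary additive-noise recursion and the Gaussian-noise recursion through a blockwise construction. Fix a block length $m=m_\alpha$, to be tuned later, with $\tau_\alpha\ll m\ll 1/\alpha$. Both recursions are driven by the same contraction $\mT$. Over one block of length $m$, the scaled iterate moves by roughly $\sqrt{\alpha}\sum_{s\in\text{block}} h(x_s)$ plus a drift of size $\mathcal O(\alpha m)$ times the current state. The plan is therefore to compare the block sums $S_k:=\sum_{s=km}^{(k+1)m-1}h(x_s)$ with Gaussian block sums $\sum_{s} w_s\sim\mathcal N(0,m\Sigma_h)$.

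\textbf{Step 1: blockwise Gaussian coupling.} This is the role of Proposition~\ref{prop:uniform}, which we invoke. On a common probability space we construct i.i.d.\ Gaussian vectors $\{W_k\}$ with $W_k\sim\mathcal N(0,m\Sigma_h)$ such that
\[
\sup_k \bigl(\E\|S_k-W_k\|^2\bigr)^{1/2}\lesssim \sqrt{\tau_\alpha}+m^{1/2-\delta},
\]
or a comparable sublinear error. Two ingredients are needed. First, a Poisson-equation martingale decomposition shows that $S_k$ equals a martingale sum plus an $\mathcal O(1)$ telescoping remainder. Second, a martingale or KMT-type coupling controls the martingale part. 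Independence across blocks is not automatic under Markovian data. We would handle it by inserting short gaps of length $\tau_\alpha$ between blocks and using the $\W_p$-decoupling Lemma~\ref{lem:wp-decoupling}: uniform ergodicity lets the joint law of the gapped blocks be replaced by a product law at total-variation cost $\alpha$ per block, which converts to a $\W_2$ cost through bounded moments.

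\textbf{Step 2: error propagation via contraction.} We run both recursions from stationarity, where the Markov chain is also stationary. Inside each block, the Gaussian recursion uses $w_s=W_k/m$ plus an independent bridge component; equivalently, we couple the $w_s$ so that their block sum equals $W_k$. Let $e_t:=(a_t-\beta_t)/\sqrt\alpha$ in the scaled coordinates. Within a block, the Lipschitz property of $\mT$ and the fact that $\mT(a)-a$ is Lipschitz give
\[
e_{(k+1)m}=e_{km}+\alpha\sum_{s}\bigl(\mT\text{-drift difference}\bigr)+\sqrt{\alpha}(S_k-W_k).
\]
The drift difference splits into two parts. The first is a contracting part, using $\|\cdot\|_c$ and the factor $(1-\alpha(1-\gamma))$ per step. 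The second is an intra-block discrepancy of size $\alpha\cdot\sqrt{\alpha}\,\|\text{partial-sum differences}\|$. Here we exploit that within-block partial sums of $h$ and of $w$ differ by at most $\mathcal O(\sqrt m)$ in $L^2$, so this term contributes $\mathcal O(\alpha m\cdot\sqrt{\alpha}\sqrt m)$ per block. The coupling error from Step 1 contributes a centered-martingale-like term. If we make the coupling errors martingale differences, which follows from the conditional construction in Step 1, their squared norms add rather than accumulate linearly. Iterating over blocks with contraction rate $e^{-c\alpha m}$ per block gives
\[
\limsup_t\E\|e_t\|^2\lesssim \frac{\alpha\,(\text{coupling err})^2+\alpha^3m^3}{\alpha m}.
\]

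\textbf{Step 3: tuning and the main obstacle.} With coupling error of order $\sqrt m$ (crude), the right-hand side is $\mathcal O(1+\alpha^2m^2)$, which is not small. We therefore need the sublinear bound $\E\|S_k-W_k\|^2\lesssim m^{1/2}$ (up to $\tau_\alpha$ factors), which yields $\E\|e\|^2\lesssim m^{-1/2}+\alpha^2m^2$. Choosing $m\asymp\alpha^{-1/2}$ gives $\E\|e\|^2\lesssim\alpha^{1/4}$, i.e.\ $\W_2\lesssim\alpha^{1/8}$. Reaching $\alpha^{1/4}$ requires either the intra-block term to be $\alpha^2m$ rather than $\alpha^2m^2$, which holds because the partial-sum differences are mean-zero and their cross terms with the drift are small, or a tighter coupling. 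The claimed $\alpha^{1/4}$ corresponds to balancing $m^{-1/2}$ against $\alpha m$ at $m\asymp\alpha^{-2/3}$, or to a direct balance of the form $\E\|e\|^2\lesssim (\alpha m)+1/(\alpha m^{2})$.

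The main obstacle is establishing the uniform-in-block Gaussian coupling with sublinear error for Markovian $h(x_s)$ while keeping the coupling errors adapted, so that they are martingale-difference-like across blocks. This is where Proposition~\ref{prop:uniform} and Lemma~\ref{lem:wp-decoupling} do the real work. Once that is in place, the final step is to pass to stationary laws: the coupled pair is run to $t\to\infty$, using the stationary construction for the additive auxiliary from Section~\ref{sec:additive-marginal-convergence} and Assumption~\ref{assumption:convergence} for $\beta$. The $\W_2$ bound then transfers by lower semicontinuity.
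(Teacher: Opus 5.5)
There is a genuine gap. As set up, your argument does not reach $\alpha^{1/4}$ (your own bookkeeping gives $\alpha^{1/8}$). The two quantitative inputs that would close it are either understated or asserted incorrectly.

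First, you underuse Proposition~\ref{prop:uniform}, which you say you invoke. In unscaled form it gives
\[
\Bigl\|\sum_{k=mn}^{(m+1)n-1}\bigl(h(x_k)-w_k\bigr)\Bigr\|_{L^2}=\sqrt n\,\bigl\|S_m^{(n)}-Z_m^{(n)}\bigr\|_{L^2}\le C
\]
uniformly in the block index and in $n$. That is an $\mathcal O(1)$ block error, not $m^{1/2-\delta}$ or $m^{1/4}$. It comes from three ingredients: the Markov property at block boundaries; the mixture inequality (Lemma~\ref{lem:mixture-Wp}) combined with the meeting-time coupling (Lemma~\ref{lem:forward-meeting}), which bounds the conditional decoupling cost by $C/n$ in $\W_2^2$; and the $n^{-1/2}$ rate of Lemma~\ref{lem:GA-MC}. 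No gaps of length $\tau_\alpha$ are needed.

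Second, in that construction the coupling errors are \emph{not} martingale differences. Each Gaussian block is independent of the boundary $\sigma$-field $\mathcal F_m^{(n)}$, but $\E\bigl[\sum_{r=0}^{n-1}h(x_{mn+r})\mid\mathcal F_m^{(n)}\bigr]=\sum_{r=0}^{n-1}P^{r+1}h(x_{mn-1})$ is of order one, the same size as the error itself. If you handle the resulting cross term $\sqrt\alpha\,\E\langle e_{km},S_k-W_k\rangle$ by Cauchy--Schwarz and Young, then even with the $\mathcal O(1)$ coupling you get $\limsup\E\|e\|^2\lesssim\alpha m+1/(\alpha m^2)$, i.e.\ only $\W_2\lesssim\alpha^{1/6}$. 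Both balances you propose in Step~3 ($m^{-1/2}$ against $\alpha m$, and $\alpha m$ against $1/(\alpha m^2)$) give $\E\|e\|^2\asymp\alpha^{1/3}$, hence $\alpha^{1/6}$, not $\alpha^{1/4}$.

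The missing idea is how the paper neutralizes this cross term. Write $H_t$ for the geometrically weighted Markov block and $\Delta_{nt}$ for the scaled difference at a block boundary. The Gaussian part drops out by independence, and
\begin{align*}
\E\langle\nabla M_\eta(\Delta_{nt}),H_t\rangle
&=\E\bigl\langle\nabla M_\eta(\Delta_{n(t-1)}),\E[H_t\mid\mathcal F_{t-1}^{(n)}]\bigr\rangle\\
&\quad+\E\bigl\langle\nabla M_\eta(\Delta_{nt})-\nabla M_\eta(\Delta_{n(t-1)}),\E[H_t\mid\mathcal F_{t}^{(n)}]\bigr\rangle.
\end{align*}
Here $\E[H_t\mid\mathcal F_{t-1}^{(n)}]=\mathcal O(\rho_{\mathrm{mix}}^n)$. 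The second term is $\mathcal O(\sqrt\alpha)$ by the Lipschitz gradient of $M_\eta$ and the block-increment bound $\|\Delta_{s+n}-\Delta_s\|_{L^2}=\mathcal O(\sqrt\alpha)$, which itself relies on the $\mathcal O(1)$ coupling. After the $\sqrt\alpha$ prefactor the cross term is $\mathcal O(\alpha)$, as is $\E\|R\|^2$.

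Your intra-block drift term is also booked as if it were centered. For nonsmooth $\mT$ it has $L^2$ size $\alpha^{3/2}m^{3/2}$ with no conditional centering, so it enters through the cross term. Young's inequality against the contraction margin then gives a steady-state contribution $\alpha m$, not $\alpha^2m^2$. With per-block contraction $(1-\sqrt\gamma)\alpha n$, this yields $\limsup_t\E[M_\eta(\Delta_{nt})]\lesssim 1/n+\alpha n$, and $n=\lfloor\alpha^{-1/2}\rfloor$ gives $\W_2\lesssim\alpha^{1/4}$.

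Finally, $\mT$ contracts only in a general norm $\|\cdot\|_c$, so any step where centered errors add in square needs a smooth surrogate of $\tfrac12\|\cdot\|_c^2$. The paper uses the Moreau envelope of Lemma~\ref{lem:Moreau-envelope} with $\gamma u_{cm}/l_{cm}\le r<\sqrt\gamma$. A squared Euclidean norm need not contract under the drift.
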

We note that \cite{zhang2024prelimit} also proves SSC via an independent
Gaussian-noise universality principle; see their Proposition~2. Their setting,
however, assumes i.i.d.\ data, whereas we allow Markovian noise. This
Markovian-to-independent reduction is an additional difficulty. Nevertheless,
our universality bound matches the \(\mathcal O(\alpha^{1/4})\) rate in
\cite{zhang2024prelimit}.

To illustrate the difficulty, consider the key blockwise coupling estimate used in \cite{zhang2024prelimit}. For the steady-state comparison, we may initialize the data process in stationarity, \(x_0\sim\mu\). For any fixed \(n\in\mathbb N\), define 
\begin{equation}\label{eq:def_block_sum}
    S_m^{(n)}
    :=
    \frac{1}{\sqrt n}
    \sum_{j=nm}^{n(m+1)-1} h(x_j),
    \qquad
    Z_m^{(n)}
    :=
    \frac{1}{\sqrt n}
    \sum_{j=nm}^{n(m+1)-1} w_j,
    \qquad m\geq0.
\end{equation}
For each \(n\), we seek a coupling, which may depend on \(n\), between
\(\{h(x_t)\}_{t\geq0}\) and \(\{w_t\}_{t\geq0}\) such that
\[
    \sup_{m\geq0}
    \left(
        \E\!\left[
            \bigl\|S_m^{(n)}-Z_m^{(n)}\bigr\|^2
        \right]
    \right)^{1/2}
   \in 
    \mathcal O\!\left(n^{-1/2}\right).
\]
Throughout this subsection, the constant hidden in the
\(\mathcal O(\cdot)\) notation is independent of \(n\). When \(\{x_t\}_{t\geq0}\) are i.i.d., this follows directly by coupling each block independently and applying the optimal \(\mathcal W_2\) Gaussian approximation rate \(\mathcal O(n^{-1/2})\) for sums of independent random variables \cite{bonis2020stein}.

This argument does not directly extend to Markovian noise. Although
\(\mathcal O(n^{-1/2})\) Gaussian approximation rates hold for Markov-chain
additive functionals \cite{zhang2026martingale}, the block sums
\(\{S_m^{(n)}\}_{m\geq0}\) remain dependent. Thus, coupling each block
independently does not guarantee independent Gaussian counterparts.

We overcome this issue by first decoupling the block sums while preserving
their marginals, using the following Wasserstein--\(p\) decoupling lemma.
\begin{lemma}[Wasserstein--$p$ Decoupling]\label{lem:wp-decoupling}
Let $(\Omega,\mathcal F,\mathbb P)$ be a probability space, let $(\mathcal M,d)$ be a
Polish metric space, and let $\mathcal G\subseteq\mathcal F$ be a sub-$\sigma$-field. Fix $p\ge 1$.
Let $Y:\Omega\to\mathcal M$ be Borel measurable with
$\mathcal L(Y)\in\mathcal P_p(\mathcal M)$.  Then there exists an extension $(\widetilde\Omega,\widetilde{\mathcal F},
\widetilde{\mathbb P})$ of $(\Omega,\mathcal F,\mathbb P)$ and an $\mathcal M$-valued
random variable $Y^*$ on $(\widetilde\Omega,\widetilde{\mathcal F},\widetilde{\mathbb P})$
such that
\begin{enumerate}
\item $\mathcal L(Y^*)=\mathcal L(Y)$,
\item $Y^*$ is independent of $\mathcal G$, and
\item $\mathbb E\big[d(Y,Y^*)^p\big]
=
\mathbb E\left[\W_{p,d}\big(\mathcal L(Y\mid\mathcal G),\mathcal L(Y)\big)^p\right].$
\end{enumerate}
\end{lemma}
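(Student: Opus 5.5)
The plan is to build $Y^*$ by gluing a measurable family of optimal couplings, one for each value of the conditional law. Since $\mathcal M$ is Polish, a regular conditional distribution $\kappa(\omega,\cdot)=\mathcal L(Y\mid\mathcal G)(\omega)$ exists and is a $\mathcal G$-measurable probability kernel. Because $\mathcal L(Y)\in\mathcal P_p$ and $\E[\int d(y,x_0)^p\kappa(\cdot,dy)]=\E[d(Y,x_0)^p]<\infty$, we have $\kappa(\omega,\cdot)\in\mathcal P_p(\mathcal M)$ almost surely. We modify $\kappa$ on a $\mathcal G$-null set, setting it equal to $\mathcal L(Y)$ there. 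Write $\nu:=\mathcal L(Y)$.

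\textbf{Measurable selection of optimal couplings.} For each $\eta\in\mathcal P_p(\mathcal M)$, the set of optimal couplings $\Pi^{\mathrm{opt}}(\eta,\nu)$ is nonempty and compact. We then invoke the standard measurable-selection result (e.g.\ Villani, \emph{Optimal Transport}, Cor.~5.22): there is a measurable map $\eta\mapsto\pi_\eta$ with $\pi_\eta\in\Pi^{\mathrm{opt}}(\eta,\nu)$. Disintegrate $\pi_\eta(dy,dz)=\eta(dy)\,K_\eta(y,dz)$. This can be done jointly measurably in $(\eta,y)$ because $\mathcal M$ is Polish; for instance, disintegrate the measure $\pi_\eta$ on $\mathcal M^2$ with respect to its first coordinate, parametrized over the Polish space $\mathcal P_p(\mathcal M)$. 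Establishing this joint measurability is the main technical obstacle. If a direct citation is unavailable, I would instead consider the single measure
\[
\Gamma(d\omega',dy)=\mathbb P|_{\mathcal G}(d\omega')\,\kappa(\omega',dy)
\]
and disintegrate $\Gamma(d\omega',dy)\,K_{\kappa(\omega')}(y,dz)$ as a whole.

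\textbf{Extension.} Let $\widetilde\Omega=\Omega\times\mathcal M$ with the product $\sigma$-field, and define
\[
\widetilde{\mathbb P}(d\omega,dz)=\mathbb P(d\omega)\,K_{\kappa(\omega)}(Y(\omega),dz),\qquad Y^*(\omega,z)=z.
\]
This is a genuine extension, since the first marginal of $\widetilde{\mathbb P}$ is $\mathbb P$.

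\textbf{Verification.} Fix $G\in\mathcal G$ and bounded measurable functions $f,g$. Conditioning on $\mathcal G$ and using that $\mathcal L(Y\mid\mathcal G)=\kappa$ gives
\[
\widetilde{\E}[\mathbf 1_G f(Y)g(Y^*)]=\E\Big[\mathbf 1_G\int\!\!\int f(y)g(z)\,\pi_{\kappa(\omega)}(dy,dz)\Big].
\]
Take $f\equiv1$. The inner integral becomes $\int g\,d\nu$, because the second marginal of every $\pi_\eta$ is $\nu$. Hence $\widetilde{\E}[\mathbf 1_G g(Y^*)]=\mathbb P(G)\int g\,d\nu$, which yields items 1 and 2 simultaneously.

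For item 3, apply the same identity with $G=\Omega$ to the nonnegative function $d(y,z)^p$, using monotone convergence. This gives
\[
\widetilde{\E}[d(Y,Y^*)^p]=\E\Big[\int d^p\,d\pi_{\kappa}\Big]=\E\big[\W_{p,d}(\kappa,\nu)^p\big],
\]
where the last equality holds by optimality of $\pi_\kappa$. This is exactly the claimed equality.
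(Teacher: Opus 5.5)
Your proposal is correct and takes essentially the same route as the paper. Both arguments use a regular conditional law placed in $\mathcal P_p(\mathcal M)$ after a null-set modification, a measurable selection of optimal couplings with $\mathcal L(Y)$, a jointly measurable disintegration, and the extension $\widetilde{\mathbb P}(\mathrm d\omega,\mathrm dz)=\mathbb P(\mathrm d\omega)\,K(\omega,Y(\omega),\mathrm dz)$ with $Y^*(\omega,z)=z$, and both conclude by identifying the conditional law of $(Y,Y^*)$ given $\mathcal G$ with the selected optimal plan. The only differences are cosmetic: you cite Villani's selection corollary instead of the parametric Kantorovich selection theorem the paper uses, and you disintegrate over the parameter $\eta\in\mathcal P_p(\mathcal M)$ before composing with $\omega\mapsto\mathcal L(Y\mid\mathcal G)(\omega)$, whereas the paper disintegrates $\pi_\omega$ directly with a $(\mathcal G\otimes\mathcal B(\mathcal M))$-measurable kernel.
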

Lemma~\ref{lem:wp-decoupling} can be viewed as a Wasserstein--\(p\)
extension of the Dedecker--Prieur \(\tau\)-coupling
\cite{dedecker2004coupling} from \(p=1\) to arbitrary \(p\geq1\).
Indeed, when \(p=1\), the identity becomes
\[
\mathbb E\big[d(Y,Y^*)\big]
=
\mathbb E \left[ \mathcal W_{1,d} \big(\mathcal L(Y\mid\mathcal G),\,\mathcal L(Y)\big)\right]
\;=:\;
\tau(\mathcal G,Y),
\]
which is precisely the \(\tau\)-dependence coefficient.
 Moreover,  when  $d(x,y)=\mathds 1_{\{x\neq y\}}$, the Wasserstein distance \(\W_{1,d}\) coincides with the total variation
distance $\mathrm{TV}(\nu,\mu):=\sup_{A\in\mathcal B(\mathcal X)}|\nu(A)-\mu(A)|$. In this case, Lemma~\ref{lem:wp-decoupling} reduces to Berbee's coupling
\cite{doukhan1995invariance}:
\[
\mathbb P(Y\neq Y^*)
=
\mathbb E\left[\mathrm{TV}\big(\mathcal L(Y\mid\mathcal G),\mathcal L(Y)\big)\right] = \frac{1}{2} \mathbb{E}\Big[\sup _{\|f\|_{\infty} \leq 1}|\mathbb{E}[f(Y) \mid \mathcal{G}]-\mathbb{E}[f(Y)]|\Big]
=:
\beta(\mathcal G,\sigma(Y)),
\]
Using Lemma~\ref{lem:wp-decoupling}, we first couple
\(\{S_m^{(n)}\}_{m\geq0}\) with an i.i.d.\ sequence
\(\{S_m'^{(n)}\}_{m\geq0}\) having the same marginals. The decoupling error is
characterized by Lemma~\ref{lem:wp-decoupling}(iii) and controlled using
uniform ergodicity. We then couple each decoupled block to a Gaussian
block sum using fresh independent randomization. The construction is carried
out sequentially: each Gaussian block is independent of the information
available at the preceding block boundary, and the original Markov property
is preserved at these boundaries. This does not assert adaptedness at
interior times within a block. The two-stage construction is illustrated in
Figure~\ref{fig:decoupling} and forms the basis of the proof of
Proposition~\ref{prop:uniform}.

\begin{figure}[htbp!]
\centering
\begin{adjustbox}{max width=\linewidth}
\begin{tikzpicture}[thick]
  \node (t0) {$S_0^{(n)}$};
  \node            (tdotsL) [right=12mm of t0] {$S_1^{(n)}$};
  \node (tk)  [right=12mm of tdotsL] {$S_2^{(n)}$};
  \node (tk1) [right=12mm of tk] {...};

  \node (x0)  [below=4mm of t0] {$S_0'^{(n)}$};
  \node            (xdotsL) [right=12mm of x0] {$S_1'^{(n)}$};
  \node (xk)  [right=12mm of xdotsL] {$S_2'^{(n)}$};
  \node (xk1) [right=12mm of xk] {...};

    \node (y0)  [below=4mm of x0] {$Z_0^{(n)}$};
  \node            (ydotsL) [right=12mm of y0] {$Z_1^{(n)}$};
  \node (yk)  [right=12mm of ydotsL] {$Z_2^{(n)}$};
  \node (yk1) [right=12mm of yk] {...};

  \draw[->] (t0) -- (tdotsL);
  \draw[->] (tdotsL) -- (tk);
  \draw[->] (tk) -- (tk1);

  \draw[->,blue] (t0) -- (x0);
  \draw[->,blue] (tk) -- (xk);
  \draw[->,blue] (tk1) -- (xk1);
  \draw[->,blue] (tdotsL) -- (xdotsL);

    \draw[->,blue] (t0) -- (xdotsL);
  \draw[->,blue] (tdotsL) -- (xk);
  \draw[->,blue] (tk) -- (xk1);

    \draw[->,red] (x0) -- (y0);
  \draw[->,red] (xk) -- (yk);
  \draw[->,red] (xk1) -- (yk1);
  \draw[->,red] (xdotsL) -- (ydotsL);

\end{tikzpicture}
\end{adjustbox}
\caption{The black arrows represent the Markov-chain dependence across blocks, the blue
arrows represent the Wasserstein--\(p\) decoupling constructed via
Lemma~\ref{lem:wp-decoupling}, and the red arrows represent the subsequent
optimal-transport Gaussian coupling.
}
\label{fig:decoupling}
\end{figure}
\begin{proposition}[Uniform Blockwise Gaussian Coupling]
\label{prop:uniform}
Let \(\{x_t\}_{t\geq0}\) be a stationary uniformly ergodic Markov chain on
\(\mathcal X\), with transition kernel \(P\), stationary distribution
\(\mu\), and \(x_0\sim\mu\), satisfying \eqref{eq:uniform-GE}.
Let \(h:\mathcal X\to\mathbb R^d\) satisfy \(\E_\mu[h]=0\) and
\(\sup_{x\in\mathcal X}\|h(x)\|<\infty\), and let \(\Sigma_h\) be its
long-run covariance matrix. For every fixed \(n\in\mathbb N\), there exists a coupling between
\(\{h(x_t)\}_{t\geq0}\) and an i.i.d.\ Gaussian sequence
\(\{w_t\}_{t\geq0}\) such that the following holds for the block sum sequences $\{S^{(n)}_m\}_m$ and $\{Z^{(n)}_m\}_m$ defined in \eqref{eq:def_block_sum}:
\begin{equation}
\label{eq:uniform-block-W2}
    \sup_{m\geq0}
    \left(
        \E\!\left[
            \bigl\|S_m^{(n)}-Z_m^{(n)}\bigr\|^2
        \right]
    \right)^{1/2}
    \in \mathcal{O}(n^{-1/2}).
\end{equation}
\end{proposition}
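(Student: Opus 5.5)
The plan is the two-stage construction sketched in Figure~\ref{fig:decoupling}. Fix \(n\), and let \(\mathcal F_m:=\sigma(x_0,\dots,x_{nm-1})\) be the information at the \(m\)-th block boundary.

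\textbf{Stage 1 (decoupling).} Proceed sequentially in \(m\). At step \(m\), apply Lemma~\ref{lem:wp-decoupling} with \(p=2\), \(Y=S_m^{(n)}\), and \(\mathcal G=\mathcal F_m\vee\sigma(S_0'^{(n)},\dots,S_{m-1}'^{(n)})\) (the latter enlarged by the auxiliary randomization already used). This produces \(S_m'^{(n)}\) with \(\law(S_m'^{(n)})=\law(S_m^{(n)})\), independent of \(\mathcal G\). Because of this independence, the sequence \(\{S_m'^{(n)}\}_m\) is i.i.d.: stationarity of the chain gives identical marginals, and independence follows inductively.

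To bound the decoupling error, note that the extra randomization is independent of the future of the chain given \(x_{nm-1}\), so by the Markov property
\[
\law(S_m^{(n)}\mid\mathcal G)=\law(S_m^{(n)}\mid x_{nm-1}).
\]
By Lemma~\ref{lem:wp-decoupling}(iii),
\[
\E\|S_m^{(n)}-S_m'^{(n)}\|^2=\E\bigl[\W_2^2\bigl(\law(S_m^{(n)}\mid x_{nm-1}),\law(S_m^{(n)})\bigr)\bigr].
\]
To control this, split \(S_m^{(n)}\) into its first \(k\) summands and the rest, with \(k\asymp\log n\) or even \(k\) a fixed constant. The first \(k\) terms contribute at most \(2k\|h\|_\infty/\sqrt n\) in any coupling. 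For the remaining terms, couple the chain started from \(P(x_{nm-1},\cdot)\) with the chain started from \(\mu\) so that the two paths coalesce after \(k\) steps except on an event of probability at most \(c_{\mathrm{mix}}\rho_{\mathrm{mix}}^k\). On that bad event the discrepancy is bounded by \(2\sqrt n\|h\|_\infty\), so its contribution to \(\W_2^2\) is at most \(c_{\mathrm{mix}}\rho_{\mathrm{mix}}^k\cdot 4n\|h\|_\infty^2\).

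Choosing \(k=C\log n\) with \(C\) large gives \(\W_2\lesssim \log n/\sqrt n\). This is where I expect the main obstacle: the log factor must be removed to reach \(n^{-1/2}\). The refinement I would try first is to bound the tail contribution by \(\sum_j \rho_{\mathrm{mix}}^{j}\) differences rather than by crude coalescence. Concretely, write \(\E[h(x_{nm+j})\mid x_{nm-1}]=P^{j+1}h(x_{nm-1})\), whose norm is \(\lesssim\rho_{\mathrm{mix}}^{j}\). Then use the Poisson equation \(h=g-Pg\) with \(g\) bounded (available by uniform ergodicity) to write
\[
\sqrt n\,S_m^{(n)}=M_m+g(x_{nm})-g(x_{n(m+1)}),
\]
where \(M_m\) is a martingale sum. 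The boundary terms are \(O(1/\sqrt n)\) after scaling. The conditional law of the martingale part given \(x_{nm-1}\) depends on the start only through a coupling that coalesces geometrically fast. A coupling-time argument then gives \(\W_2^2\lesssim \E[\tau_{\mathrm{coal}}]/n=O(1/n)\), because the discrepancy before coalescence is at most \(2\|h\|_\infty\tau_{\mathrm{coal}}/\sqrt n\) and \(\tau_{\mathrm{coal}}\) has geometric tails, hence a finite second moment. In fact this coalescence argument works directly, without the Poisson equation: the two paths agree after the coalescence time, and \(\E[\tau_{\mathrm{coal}}^2]<\infty\) yields \(\W_2\lesssim n^{-1/2}\) uniformly in \(m\).

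\textbf{Stage 2 (Gaussianization).} The \(S_m'^{(n)}\) are i.i.d. copies of the stationary block sum. By the optimal \(\W_2\) Gaussian approximation for Markov additive functionals \cite{zhang2026martingale}, together with the bound \(\|\operatorname{Cov}(S_0^{(n)})-\Sigma_h\|\lesssim 1/n\) (from the geometric decay of autocovariances), we get \(\W_2(\law(S_0^{(n)}),\mathcal N(0,\Sigma_h))\lesssim n^{-1/2}\). The covariance correction costs only \(O(n^{-1/2})\) in \(\W_2\).

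Using fresh independent randomization, couple each \(S_m'^{(n)}\) optimally with a \(\mathcal N(0,\Sigma_h)\) vector \(G_m\), independently across \(m\). The \(G_m\) are then i.i.d., and each \(G_m\) is independent of \(\mathcal F_m\). Finally, to produce the individual \(w_j\), generate \((w_{nm},\dots,w_{n(m+1)-1})\) conditionally on their normalized sum equalling \(G_m\), using independent randomization. This yields an i.i.d. \(\mathcal N(0,\Sigma_h)\) sequence with \(Z_m^{(n)}=G_m\).

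The triangle inequality, combining the two stages, gives \eqref{eq:uniform-block-W2}. All constants are uniform in \(m\) by stationarity.
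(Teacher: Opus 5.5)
Your proposal is correct and follows essentially the same route as the paper: sequential $\W_2$-decoupling via Lemma~\ref{lem:wp-decoupling}, with the Markov property reducing the conditional law to $\law(S_m^{(n)}\mid x_{nm-1})$; a coalescing coupling whose meeting time has a uniformly bounded second moment, giving the $n^{-1/2}$ decoupling error (the paper packages this step as the mixture inequality, Lemma~\ref{lem:mixture-Wp}, combined with the pairwise meeting coupling of Lemma~\ref{lem:forward-meeting}); and then an optimal coupling with $\mathcal N(0,\Sigma_h)$ using fresh randomization, followed by a Gaussian bridge to recover the individual $w_t$. The $\log n$ splitting and the Poisson-equation detour are unnecessary, and the coalescence bound should read $\W_2^2\lesssim \E[\tau_{\mathrm{coal}}^2]/n$ rather than $\E[\tau_{\mathrm{coal}}]/n$.
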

Proposition~\ref{prop:uniform} is a key technical ingredient in the proof of
Proposition~\ref{prop:ind-Gaussian}.  The proofs of
Lemma~\ref{lem:wp-decoupling} and Proposition~\ref{prop:uniform} are deferred
to Section~\ref{sec:proof-uniform}, and that of
Proposition~\ref{prop:ind-Gaussian} to
Section~\ref{sec:proof-universal-noise}.

\section{Applications to Markovian Linear Stochastic Approximation}
\label{sec:linearSA}
In this section, we specialize the steady-state Gaussian approximation results
of Section~\ref{sec:differentiable} to Markovian linear stochastic
approximation (LSA), a canonical SA model with multiplicative Markovian noise
and a linear mean operator.

Consider the constant-stepsize Markovian LSA recursion \cite{srikant2019finite}
\begin{equation}
\label{eq:update-LSA}
    \theta_{t+1}^{(\alpha)}
    =
    \theta_t^{(\alpha)}
    +
    \alpha\left(
        \mathsf A(x_t)\theta_t^{(\alpha)}
        +
        s(x_t)
    \right),
    \qquad t\geq0,
\end{equation}
where  \(\{x_t\}_{t\geq0}\) is a uniformly ergodic Markov chain on
\(\mathcal X\) with unique stationary distribution \(\mu\), satisfying
\eqref{eq:uniform-GE}. Let \(\mathsf A:\mathcal X\to\R^{d\times d}\) and \(s:\mathcal X\to\R^d\) be measurable and uniformly bounded. Define $\overline{\mathsf A}
    :=
    \E_{x\sim\mu}[\mathsf A(x)]
    \in\R^{d\times d}$ and $
    \overline s
    :=
    \E_{x\sim\mu}[s(x)]
    \in\R^d.$ Assume that \(\overline{\mathsf A}\) is Hurwitz, as is standard in the analysis of Markovian LSA; see, e.g., \cite{srikant2019finite,huo2023bias,durmus2025finite}. The target point is the unique solution to \(\overline{\mathsf A}\theta+\overline s=0\), namely, $\theta^* := -\overline{\mathsf A}^{-1}\overline s. $ 
    
Markovian LSA encompasses several widely used reinforcement-learning
algorithms, including TD(0) and TD(\(\lambda\)) with linear function
approximation for policy evaluation
\cite{tsitsiklis1996analysis,bhandari2021finite,srikant2019finite,mou2021optimal}.

We first rewrite \eqref{eq:update-LSA} as a contractive SA recursion of the
form \eqref{eq:update}. Specifically,
\begin{equation}
\label{eq:recursion-LSA}
    \theta_{t+1}^{(\alpha)}
    =
    \theta_t^{(\alpha)}
    +
    \alpha\kappa
    \left[
        \left(I_d+\frac{\mathsf A(x_t)}{\kappa}\right)
        \theta_t^{(\alpha)}
        +
        \frac{s(x_t)}{\kappa}
        -
        \theta_t^{(\alpha)}
    \right],
    \qquad t\geq0,
\end{equation}
where \(\kappa>0\) is a fixed constant satisfying
\[
    \kappa
    >
    \max_{\lambda\in\sigma(\overline{\mathsf A})}
    \frac{|\lambda|^2}{-2\operatorname{Re}(\lambda)},\qquad\sigma(\overline{\mathsf A})
    :=
    \left\{
        \lambda\in\mathbb C:
        \det(\overline{\mathsf A}-\lambda I_d)=0
    \right\}
\]
By Lemma~\ref{lem:LSA}, \eqref{eq:recursion-LSA} satisfies the assumptions of
the general SA recursion~\eqref{eq:update} with stepsize \(\alpha\kappa\),
random operator $\tmT(x,\theta)
    :=
    \left(I_d+\frac{\mathsf A(x)}{\kappa}\right)\theta
    +\frac{s(x)}{\kappa},$ and mean operator $\mT(\theta)
    =
    \left(I_d+\frac{\overline{\mathsf A}}{\kappa}\right)\theta
    +\frac{\overline s}{\kappa}.$ Its Jacobian is $ J_{\mathrm{LSA}}
    :=\nabla\mT(\theta^*)
    =I_d+\frac{\overline{\mathsf A}}{\kappa},$ whereas the equilibrium noise is $ h(x)
    :=\tmT(x,\theta^*)-\theta^*
    =\frac{\mathsf A(x)\theta^*+s(x)}{\kappa}.$ Define
\[
    \Sigma_{\mathrm{LSA}}
    :=
    \lim_{n\to\infty}
    \frac{1}{n}
    \operatorname{Cov}\left(
        \sum_{t=0}^{n-1}
        \bigl(\mathsf A(x_t)\theta^*+s(x_t)\bigr)
    \right).
\]
Since the mean operator is affine, Assumption~\ref{assumption:local-quadratic}
holds with \(J=J_{\mathrm{LSA}}\) and \(L_R=0\).
Theorem~\ref{thm:steady-state-gaussian-approx} and
\eqref{eq:Lyapunov} then yield
\[
    \W_2\left(
        \law\left(
            \frac{\theta_\infty^{(\alpha)}-\theta^*}
                 {\sqrt{\alpha\kappa}}
        \right),
        \mathcal N(0,V)
    \right)
    =
    \mathcal O(\sqrt{\alpha}),\qquad\frac{\overline{\mathsf A}}{\kappa}V
    +
    V\frac{\overline{\mathsf A}^\top}{\kappa}
    +
    \frac{\Sigma_{\mathrm{LSA}}}{\kappa^2}
    =
    0.
\]
Setting \(V_{\mathrm{LSA}}:=\kappa V\) gives the following steady-state Gaussian approximation for
Markovian LSA.
\begin{corollary}[Steady-state Gaussian Approximation for Markovian LSA]
\label{cor:lsa-steady-state-gaussian}
Under the setting of Section~\ref{sec:linearSA}, 
\begin{equation}
\label{eq:lsa-scaled-gaussian-bound}
    \W_2\left(
        \law\left(
            \frac{
                \theta_\infty^{(\alpha)}-\theta^*
            }{
                \sqrt{\alpha}
            }
        \right),
        \mathcal N(0,V_{\mathrm{LSA}})
    \right)
    \in \mathcal O(\sqrt{\alpha}), \qquad \overline{\mathsf A} V_{\mathrm{LSA}} + V_{\mathrm{LSA}}\overline{\mathsf A}^\top   + \Sigma_{\operatorname{LSA}} = 0.
\end{equation}
\end{corollary}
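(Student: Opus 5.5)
The plan is to obtain Corollary~\ref{cor:lsa-steady-state-gaussian} from Theorem~\ref{thm:steady-state-gaussian-approx}, applied to the reformulated recursion~\eqref{eq:recursion-LSA}, followed by a deterministic rescaling. First, I will invoke Lemma~\ref{lem:LSA} to confirm that~\eqref{eq:recursion-LSA} is an instance of~\eqref{eq:update}. Its stepsize is $\alpha'=\alpha\kappa$, and its random operator $\tmT(x,\theta)=(I_d+\mathsf A(x)/\kappa)\theta+s(x)/\kappa$ satisfies Assumptions~\ref{assumption:contraction}, \ref{assumption:noise} and~\ref{assumption:convergence}. The choice of $\kappa$ is made so that $\rho(I_d+\overline{\mathsf A}/\kappa)<1$: for each eigenvalue $\lambda$, $|1+\lambda/\kappa|^2=1+(2\operatorname{Re}\lambda+|\lambda|^2/\kappa)/\kappa<1$. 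This yields a contraction in a suitable weighted norm. Uniform boundedness of $\mathsf A,s$ gives the Lipschitz and boundedness conditions in Assumption~\ref{assumption:noise:markovian}.

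Second, because $\mT$ is affine, Assumption~\ref{assumption:local-quadratic} holds with $J=J_{\mathrm{LSA}}$ and $L_R=0$. Theorem~\ref{thm:steady-state-gaussian-approx} with stepsize $\alpha'$ then gives
\[
\W_2\Bigl(\law\bigl((\theta_\infty^{(\alpha)}-\theta^*)/\sqrt{\alpha\kappa}\bigr),\mathcal N(0,V)\Bigr)=\mathcal O(\sqrt{\alpha\kappa})=\mathcal O(\sqrt\alpha).
\]
Here $V$ solves~\eqref{eq:Lyapunov} with $J-I_d=\overline{\mathsf A}/\kappa$. The equilibrium noise is $h=(\mathsf A\theta^*+s)/\kappa$, so $\Sigma_h=\Sigma_{\mathrm{LSA}}/\kappa^2$. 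One point needs checking: the steady state of the reformulated chain is the same as that of~\eqref{eq:update-LSA}. This is immediate, because the two recursions coincide pathwise.

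Third, I will rescale. Multiplication by $\sqrt\kappa$ is a linear map with Lipschitz constant $\sqrt\kappa$, and $\W_2$ scales accordingly. This gives
\[
\W_2\Bigl(\law\bigl((\theta_\infty^{(\alpha)}-\theta^*)/\sqrt\alpha\bigr),\mathcal N(0,\kappa V)\Bigr)\le\sqrt\kappa\,\mathcal O(\sqrt\alpha).
\]
Multiplying the Lyapunov equation $\tfrac{\overline{\mathsf A}}{\kappa}V+V\tfrac{\overline{\mathsf A}^\top}{\kappa}+\Sigma_{\mathrm{LSA}}/\kappa^2=0$ by $\kappa^2$ shows that $V_{\mathrm{LSA}}=\kappa V$ satisfies $\overline{\mathsf A}V_{\mathrm{LSA}}+V_{\mathrm{LSA}}\overline{\mathsf A}^\top+\Sigma_{\mathrm{LSA}}=0$. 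Uniqueness of this solution follows because $\overline{\mathsf A}$ is Hurwitz.

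The only substantive step is the first one, namely verifying the contraction and that Assumption~\ref{assumption:convergence} holds for Markovian LSA. I would delegate this entirely to Lemma~\ref{lem:LSA} and to the known geometric convergence for Markovian LSA~\cite{huo2023bias}. Everything else is bookkeeping of constants.
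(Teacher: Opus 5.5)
Your proposal is correct and follows the paper's own argument. Like the paper, you rewrite Markovian LSA as the contractive recursion with stepsize $\alpha\kappa$ via Lemma~\ref{lem:LSA}, apply Theorem~\ref{thm:steady-state-gaussian-approx} with $J=J_{\mathrm{LSA}}$, $L_R=0$, and $\Sigma_h=\Sigma_{\mathrm{LSA}}/\kappa^2$, and then set $V_{\mathrm{LSA}}=\kappa V$; the explicit $\sqrt{\kappa}$ rescaling of the $\W_2$ bound and the pathwise identity of the two recursions that you spell out are exactly the bookkeeping the paper leaves implicit.
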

As a consequence, Corollary~\ref{cor:raw-Gaussian} applies directly to
Markovian LSA and yields Gaussian approximation guarantees for the raw
iterates. To the best of our knowledge, our SSC framework therefore provides
the first steady-state Gaussian approximation, together with corresponding
raw-iterate Gaussian approximation guarantees, for Markovian LSA.

\section{Applications to Asynchronous Q-Learning}
\label{sec:Q-learning}

In this section, we specialize the SSC results of
Sections~\ref{sec:differentiable} and~\ref{sec:nondifferentiable} to
asynchronous Q-learning, a canonical SA model with multiplicative Markovian
noise and a potentially nondifferentiable mean operator.

\subsection{Model Setup}
\label{sec:q-learning-model}

Consider a discounted Markov decision process $\bigl(\mS,\mA,\mathsf P,r,\rho\bigr),$ where \(\mS\) and \(\mA\) are finite state and action spaces,
respectively, $\mathsf P(\cdot\mid s,a)\in\Delta(\mS)$ is the transition kernel, $r:\mS\times\mA\to[0,1]$ is the reward function, and \(\rho\in(0,1)\) is the discount factor. For a stationary policy
\(\pi:\mS\to\Delta(\mA)\), its Q-function is
\[
    q^\pi(s,a)
    :=
    \E^\pi\left[
        \sum_{t=0}^{\infty}
        \rho^t r(S_t,A_t)
        \,\middle|\,
        S_0=s,\ A_0=a
    \right],
\]
where, for \(t\geq1\), $A_t\sim\pi(\cdot\mid S_t)$ and $S_{t+1}\sim\mathsf P(\cdot\mid S_t,A_t).$ The optimal Q-function is $q^*(s,a)
    :=
    \sup_{\pi}q^\pi(s,a).$ It is the unique fixed point of the Bellman optimality operator
\(\mathcal B:\R^{|\mS||\mA|}\to\R^{|\mS||\mA|}\), defined by
\begin{equation}
\label{eq:q-bellman-operator}
    [\mathcal B q](s,a)
    :=
    r(s,a)
    +
    \rho
    \sum_{s'\in\mS}
    \mathsf P(s'\mid s,a)
    \max_{a'\in\mA}q(s',a').
\end{equation}
We consider the off-policy setting in which the data are generated by a
fixed stationary behavior policy
\(\pi_b:\mS\to\Delta(\mA)\). More precisely, $A_t\sim\pi_b(\cdot\mid S_t)$ and $S_{t+1}\sim\mathsf P(\cdot\mid S_t,A_t).$ 
We impose the following standard coverage and ergodicity condition for
asynchronous Q-learning \cite{chen2024lyapunov}.

\begin{assumption}[Coverage and Ergodicity under the Behavior Policy]
\label{assumption:MC}
The behavior policy \(\pi_b\) satisfies $\pi_b(a\mid s)>0$ for any $(s,a)\in\mathcal S\times\mathcal A,$ and the Markov chain \(\{S_t\}_{t\geq0}\) induced by \(\pi_b\) is irreducible
and aperiodic.
\end{assumption}

Let \(\rho_{\pi_b}\) denote the stationary distribution of
\(\{S_t\}_{t\ge0}\), and define the stationary state--action distribution $\nu(s,a)
    :=
    \rho_{\pi_b}(s)\pi_b(a\mid s).$ Since \(\mathcal S\) and \(\mathcal A\) are finite,
Assumption~\ref{assumption:MC} implies $\nu_{\min}
    :=
    \min_{(s,a)\in\mathcal S\times\mathcal A}\nu(s,a)
    >0.$ Moreover, $x_t:=(S_t,A_t,S_{t+1})$
is an irreducible and aperiodic finite-state Markov chain on
\[
    \mathcal X
    :=
    \left\{
        (s,a,s')\in\mathcal S\times\mathcal A\times\mathcal S:
        \mathsf P(s'\mid s,a)>0
    \right\}.
\]
Hence, \(\{x_t\}_{t\ge0}\) is uniformly ergodic, with stationary distribution
\[
    \mu(s,a,s')
    =
    \nu(s,a)\mathsf P(s'\mid s,a).
\]

\noindent\textbf{Asynchronous Q-Learning.}
Let \(d_Q:=|\mS||\mA|\), and identify each \(Q\)-function with a vector in
\(\R^{d_Q}\). For each \((s,a)\in\mS\times\mA\), let
\(e_{s,a}\in\R^{d_Q}\) denote the corresponding standard basis vector.
For \(x=(s,a,s')\in\mathcal X\), define
\begin{equation*}
    F(x,q)
    :=
    e_{s,a}
    \left(
        r(s,a)
        +
        \rho\max_{a'\in\mA}q(s',a')
        -
        q(s,a)
    \right).
\end{equation*}
The constant-stepsize asynchronous Q-learning recursion \cite{chen2024lyapunov} is
\begin{equation*}
    q_{t+1}^{(\alpha)}
    =
    q_t^{(\alpha)}
    +
    \alpha F\bigl(X_t,q_t^{(\alpha)}\bigr),
    \qquad t\geq0.
\end{equation*}
Equivalently, defining the random  operator $\widetilde{\mathcal T}(x,q)
    :=
    q+F(x,q),$ we may write
\begin{equation}\label{eq:Q-learning}
 q_{t+1}^{(\alpha)}
    =
    q_t^{(\alpha)}
    +
    \alpha \big(\tmT(x_t, q_t^{(\alpha)})-q_t^{(\alpha)}\big),\qquad t \geq 0.
\end{equation}
Let $D_\nu
    :=
    \operatorname{diag}
    \bigl(\{\nu(s,a)\}_{(s,a)\in\mS\times\mA}\bigr).$ Taking expectation with respect to the stationary distribution \(\mu\) gives
the mean operator
\begin{equation}
\label{eq:q-mean-operator}
    \mT(q):=
    \E_{x\sim\mu}
    \left[
        \tmT(x,q)
    \right] =
    q+D_\nu\bigl(\mathcal Bq-q\bigr) =
    (I_{d_Q}-D_\nu)q+D_\nu\mathcal Bq.
\end{equation}
Since the Bellman optimality operator is a
\(\rho\)-contraction in the supremum norm, the mean operator satisfies
\[
    \|\mT(q)-\mT(q')\|_\infty
    \leq
    \gamma\|q-q'\|_\infty,
    \qquad
    \forall q,q'\in\R^{d_Q},
\]
where
\[
    \gamma:=1-(1-\rho)\nu_{\min}\in(0,1).
\]
Thus, Assumption~\ref{assumption:contraction} holds with
\(\|\cdot\|_c=\|\cdot\|_\infty\) and contraction modulus \(\gamma\).
Moreover, \(\mT(q^*)=q^*\).
Therefore, the boundedness of the rewards, the properties of the max operator,
and the preceding discussion verify that the Q-learning recursion
\eqref{eq:Q-learning} satisfies the assumptions of the general SA recursion
\eqref{eq:update}. Let \(q_\infty^{(\alpha)}\) denote a random variable whose
distribution is the \(Q\)-marginal of the stationary distribution of the
Markov chain \(\{(x_t,q_t^{(\alpha)})\}_{t\ge0}\).

\subsection{Equilibrium Noise and Its Long-Run Covariance}
\label{sec:q-equilibrium-noise}
We write $ V^*(s)
    :=
    \max_{a\in\mA}q^*(s,a)$ and $\mA^*(s)
    :=
    \arg\max_{a\in\mA}q^*(s,a)$. The equilibrium noise  is
\begin{equation*}
    h(x):=
    \widetilde{\mathcal T}
    \bigl((s,a,s'),q^*\bigr)
    -
    q^* =
    e_{s,a}(r(s,a)
    +
    \rho V^*(s')
    -
    q^*(s,a)).
\end{equation*}
The Bellman optimality equation implies that $\E\left[
        h(S_t,A_t,S_{t+1})
        \,\middle|\,
        S_t,A_t
    \right]
    =
    0.$ Consequently, \(\{h(x_t)\}_{t\geq0}\) is a martingale difference
sequence with respect to the natural trajectory filtration. Therefore, its long-run covariance matrix reduces to the zero-lag covariance
\begin{equation*}
    \Sigma_Q
:=
    \sum_{\ell\in\mathbb Z}
    \E_\mu\left[
        h(x_0)h(x_\ell)^\top
    \right] =
    \E_\mu\left[
        h(x_0)h(x_0)^\top
    \right]=
    \rho^2\cdot\operatorname{diag}
    \left(
        \left\{
            \nu(s,a)\sigma_{s,a}^2
        \right\}_{(s,a)\in\mS\times\mA}
    \right),
\end{equation*}
where $\sigma_{s,a}^2
    :=
    \operatorname{Var}_{s'\sim\mathsf P(\cdot\mid s,a)}
    \left( V^*(s')
    \right).$ Since \(\nu(s,a)>0\) for every $(s,a) \in \mathcal S \times \mathcal A$,
\begin{equation}
\label{eq:q-covariance-pd-condition}
    \Sigma_Q \text{ is positive-definite}
    \quad\Longleftrightarrow\quad
    \sigma_{s,a}^2>0
    \quad
    \text{for every }(s,a)\in\mS\times\mA.
\end{equation}

\subsection{Steady-State Gaussian Approximation Under No Optimal-Action Ties}
\label{sec:q-no-ties}

We first consider the case where each state admits a unique optimal action.
For each \(s\in\mS\), let \(a^*(s)\) denote this action. Since \(\mathcal S\) and \(\mathcal A\) are finite, the minimum optimality gap
\[
    \Delta_*
    :=
    \min_{s\in\mS}
    \left\{
        q^*(s,a^*(s))
        -
        \max_{a\neq a^*(s)}q^*(s,a)
    \right\}>0.
\]
Thus, whenever $\|q-q^*\|_\infty\leq \frac{\Delta_*}{2},$ the greedy action associated with \(q\) remains \(a^*(s)\) at every state.
It follows that the Bellman and mean operators are affine in a neighborhood
of \(q^*\). Define the state--action transition matrix induced by the unique optimal
policy by
\begin{equation*}
    \bigl[P^{\pi^*}\bigr]_{(s,a),(s',a')}
    :=
    \mathsf P(s'\mid s,a)
    \mathds{1}_{\{a'=a^*(s')\}}.
\end{equation*}
In a neighborhood of \(q^*\), $\mathcal Bq
    =
    r+\rho P^{\pi^*}q,$ and hence the Jacobian of the mean operator at $q^*$ is $J_Q
    :=
    I_{d_Q}-D_\nu
    +
    \rho D_\nu P^{\pi^*}.$ Let $L_Q
    :=
    J_Q-I_{d_Q}
    =
    D_\nu
    \bigl(\rho P^{\pi^*}-I_{d_Q}\bigr).$
Thus, Assumption~\ref{assumption:local-quadratic} holds with \(J=J_Q\) and
\(L_R=0\), since \(\mT(q)-\mT(q^*)=J_Q(q-q^*)\) for all \(q\) sufficiently
close to \(q^*\).
Theorem~\ref{thm:steady-state-gaussian-approx} and
\eqref{eq:Lyapunov} then yield the following result.
\begin{corollary}[Steady-state Gaussian approximation for Asynchronous
Q-learning]
\label{cor:q-learning-gaussian}
Suppose that Assumption~\ref{assumption:MC} holds. If \(\mA^*(s)\) is a singleton for every $s\in\mS,$ 
\begin{equation*}
    \W_2\left(
        \law\left(
            \frac{
                q_\infty^{(\alpha)}-q^*
            }{
                \sqrt{\alpha}
            }
        \right),
        \mathcal N(0,V_Q)
    \right)
    \in \mathcal O(\sqrt{\alpha}), \qquad L_QV_Q
    +
    V_QL_Q^\top
    +
    \Sigma_Q
    =
    0.
\end{equation*}
\end{corollary}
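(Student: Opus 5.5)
The plan is to obtain Corollary~\ref{cor:q-learning-gaussian} as a direct specialization of Theorem~\ref{thm:steady-state-gaussian-approx}. The work is to check each standing assumption for the recursion \eqref{eq:Q-learning} and then to identify $V$ with $V_Q$.

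First, Assumption~\ref{assumption:contraction} holds with $\|\cdot\|_c=\|\cdot\|_\infty$ and $\gamma=1-(1-\rho)\nu_{\min}$, as verified in Section~\ref{sec:q-learning-model}. For Assumption~\ref{assumption:noise:markovian}, note that $\{x_t\}$ is an irreducible aperiodic finite-state chain under Assumption~\ref{assumption:MC}, so it is uniformly ergodic. Next, $\tmT(x,\cdot)=q+F(x,\cdot)$ is Lipschitz uniformly in $x$, because the max map is $1$-Lipschitz in $\|\cdot\|_\infty$ and all norms on $\R^{d_Q}$ are equivalent. Finally, $\|\tmT(x,q^*)\|$ is bounded uniformly in $x$, since the rewards lie in $[0,1]$ and $\|q^*\|_\infty\le 1/(1-\rho)$.

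Assumption~\ref{assumption:convergence} for constant-stepsize asynchronous Q-learning I would take from the cited prior work \cite{zhang2024constant}, as the paper indicates in Section~\ref{sec:problem}. That work gives geometric $\W_2$ convergence to a unique stationary law after a burn-in of order $\tau_\alpha$.

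For Assumption~\ref{assumption:local-quadratic}, set $u=q-q^*$ and assume $\|u\|_\infty\le\Delta_*/2$. For every $s'$ and every $a'\neq a^*(s')$,
\[
q(s',a^*(s'))\ge q^*(s',a^*(s'))-\tfrac{\Delta_*}{2}\ge q^*(s',a')+\tfrac{\Delta_*}{2}\ge q(s',a').
\]
Hence $\max_{a'}q(s',a')=q(s',a^*(s'))$, which gives $\mathcal Bq=r+\rho P^{\pi^*}q$. By \eqref{eq:q-mean-operator}, $\mT(q^*+u)-\mT(q^*)=J_Qu$ exactly. Choosing $\epsilon$ so that the Euclidean ball of radius $\epsilon$ lies in this $\ell_\infty$ ball (for instance $\epsilon=\Delta_*/2$), the assumption holds with $J=J_Q$ and $L_R=0$.

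Theorem~\ref{thm:steady-state-gaussian-approx} then gives the $\mathcal O(\sqrt\alpha)$ bound toward $\mathcal N(0,V)$, where $V$ solves \eqref{eq:Lyapunov} with $J-I=L_Q$ and $\Sigma_h$ equal to the long-run covariance of $h(x_t)$. It remains to show $\Sigma_h=\Sigma_Q$. By the Bellman equation, $\E[h(x_t)\mid S_t,A_t]=0$. For $\ell\ge1$, $h(x_0)$ is measurable with respect to $(S_0,A_0,S_1)$, which is part of the history up to $(S_\ell,A_\ell)$. Conditioning on that history kills the cross term $\E_\mu[h(x_0)h(x_\ell)^\top]$, and the case $\ell\le-1$ follows by transposition. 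So only the lag-zero term survives. Computing it coordinatewise, $h(x)$ has a single nonzero entry at $(s,a)$, equal to $\rho(V^*(s')-\E[V^*(s')\mid s,a])$. This yields the diagonal matrix $\Sigma_Q$ stated in Section~\ref{sec:q-equilibrium-noise}. Substituting into \eqref{eq:Lyapunov} gives $L_QV_Q+V_QL_Q^\top+\Sigma_Q=0$, and $L_Q$ is Hurwitz because $\rho(J_Q)<1$, so $V_Q$ is unique.

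The only step that is not mechanical is Assumption~\ref{assumption:convergence}, which is not proved in this paper. If the cited result were stated in a form other than $\W_2$, I would need to translate it. A fallback is to argue directly by synchronous coupling with the $\|\cdot\|_\infty$ contraction, handling the Markovian noise through blocks of length $\tau_\alpha$.
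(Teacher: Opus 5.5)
Your proposal is correct and follows the same route as the paper: verify the standing assumptions for \eqref{eq:Q-learning}; use the optimality gap $\Delta_*$ to show that $\mT$ is exactly affine near $q^*$, so Assumption~\ref{assumption:local-quadratic} holds with $J=J_Q$ and $L_R=0$; identify $\Sigma_h=\Sigma_Q$ from the martingale-difference property of $h(x_t)$; and invoke Theorem~\ref{thm:steady-state-gaussian-approx} with $J-I=L_Q$. As in the paper, you rely on \cite{zhang2024constant} for Assumption~\ref{assumption:convergence}, and your explicit $\Delta_*/2$ inequality chain and lag-$\ell$ covariance argument simply spell out steps the paper states in a line.
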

As a consequence, Corollary~\ref{cor:raw-Gaussian} applies directly to
asynchronous Q-learning under the no-tie condition, yielding Gaussian
approximation guarantees for the raw iterates. To the best of our knowledge,
our SSC framework therefore provides the first steady-state Gaussian
approximation for asynchronous Q-learning, together with corresponding
Gaussian approximation guarantees for the raw iterates.

\subsection{\texorpdfstring{$\sqrt{\alpha}$}{sqrt(alpha)}-Bias in the Presence of Optimal-Action Ties}
\label{sec:q-ties}

We next consider the case where some state admits multiple optimal actions.
In this regime, the mean operator \(\mT\) is nondifferentiable at \(q^*\) and
satisfies the conditions of Corollary~\ref{co:bias}. Consequently, the
leading-order steady-state bias is of order \(\sqrt{\alpha}\), as stated in
the following corollary. Its proof is deferred to
Section~\ref{sec:proof-Q-learning}.

\begin{corollary}[\(\sqrt{\alpha}\) -Bias under Optimal-Action Ties]
\label{cor:q-learning-nonsmooth-bias}
Suppose that Assumption~\ref{assumption:MC} holds and that $\Sigma_Q$ is positive-definite.  If there exists a state \(\bar s\in\mS\) such that $|\mA^*(\bar s)|>1,$ then there exists an vector \(c_Q\in \R^{d_Q}\) and $c_Q \neq 0$ such that
\begin{equation*}
    \E\bigl[q_\infty^{(\alpha)}\bigr]-q^*
    =
    \sqrt{\alpha} c_Q 
    +
    o\bigl(\sqrt{\alpha}\bigr).
\end{equation*}
\end{corollary}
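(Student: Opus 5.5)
The plan is to obtain the corollary from Theorem~\ref{thm:general-ssc}, the bias expansion \eqref{eq:general-bias}, and Corollary~\ref{co:bias}. Set $c_Q:=\E[Y_\infty]$. Then the only work is to show $c_Q\neq 0$, and for that it suffices to check the hypotheses of Corollary~\ref{co:bias} for the Q-learning mean operator \eqref{eq:q-mean-operator}.

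\textbf{Step 1: standing assumptions.} Assumptions~\ref{assumption:contraction} and \ref{assumption:noise:markovian} were verified in Section~\ref{sec:q-learning-model}. For Assumption~\ref{assumption:convergence}, I cite the geometric convergence result for asynchronous Q-learning in \cite{zhang2024constant}, as the paper already indicates. Positive definiteness of $\Sigma_h=\Sigma_Q$ is a hypothesis.

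\textbf{Step 2: directional derivative.} Next I compute the directional derivative needed for Assumption~\ref{assumption:one-side}. For a direction $y$, write $\mathcal B(q^*+wy)(s,a)=r(s,a)+\rho\sum_{s'}\mathsf P(s'\mid s,a)\max_{a'}\bigl(q^*(s',a')+wy(s',a')\bigr)$. For $w>0$ smaller than half the gap between optimal and suboptimal values at each state, the maximum is attained within $\mA^*(s')$. It therefore equals $V^*(s')+w\max_{a'\in\mA^*(s')}y(s',a')$ exactly. This yields the positively homogeneous map
\[
H(y)=(I_{d_Q}-D_\nu)y+\rho D_\nu \Bigl(\sum_{s'}\mathsf P(s'\mid s,a)\max_{a'\in\mA^*(s')}y(s',a')\Bigr)_{(s,a)},
\]
with exact (not merely limiting) local expansion. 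Hence Assumption~\ref{assumption:one-side} holds, with $G=H|_{\mathbb S^{d-1}}$.

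\textbf{Step 3: nonsingleton subdifferential.} Now pick $(s,a)$ with $\mathsf P(\bar s\mid s,a)>0$. Such a pair exists because the induced chain is irreducible, so $\bar s$ is reached from some state–action pair. Consider the coordinate
\[
H_{s,a}(y)=(1-\nu(s,a))y(s,a)+\rho\nu(s,a)\sum_{s'}\mathsf P(s'\mid s,a)\max_{a'\in\mA^*(s')}y(s',a').
\]
This is a sum of a linear term and a nonnegative combination of max functions, so it is convex. Its term for $s'=\bar s$ is $\rho\nu(s,a)\mathsf P(\bar s\mid s,a)\max_{a'\in\mA^*(\bar s)}y(\bar s,a')$, a max over at least two distinct coordinates with positive weight.

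The Fenchel subdifferential at $0$ of this convex, positively homogeneous function is a Minkowski sum of the subdifferentials of the individual terms. It therefore contains the translate of the convex hull of $\{\rho\nu(s,a)\mathsf P(\bar s\mid s,a)e_{\bar s,a'}:a'\in\mA^*(\bar s)\}$ by fixed vectors. That hull is a nonsingleton. Concretely, $e_{\bar s,a_1}$ and $e_{\bar s,a_2}$ with the same base point both give valid subgradients. Hence the subdifferential of $H_{s,a}$ at $0$ is not a singleton, and Corollary~\ref{co:bias} gives $\E[Y_\infty]\neq 0$.

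\textbf{Step 4: conclusion.} Finally, \eqref{eq:general-bias} gives $\E[q_\infty^{(\alpha)}]-q^*=\sqrt\alpha\,c_Q+o(\sqrt\alpha)$.

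\textbf{Main obstacle.} I expect the main delicacy to be Step 3: matching the exact notion of Fenchel sub/superdifferential used in \cite[Theorem~3]{zhang2024prelimit} and ensuring the chosen coordinate genuinely depends nonlinearly on the tied actions. Existence of $(s,a)$ with $\mathsf P(\bar s\mid s,a)>0$ via irreducibility handles the latter. For the former, I would exhibit two explicit distinct subgradients as above.
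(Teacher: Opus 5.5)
Your proposal is correct and follows essentially the same route as the paper. You compute the exact local one-sided directional derivative $H$ of the Q-learning mean operator, use irreducibility (and coverage, giving $\nu(\hat s,\hat a)>0$) to find $(\hat s,\hat a)$ with $\mathsf P(\bar s\mid\hat s,\hat a)>0$, and apply the subdifferential sum rule to show $\partial H_{(\hat s,\hat a)}(0)$ is nonsingleton, so Corollary~\ref{co:bias} and \eqref{eq:general-bias} give $c_Q=\E[Y_\infty]\neq 0$. The only small imprecision is that the admissible range of $w$ should be $w\|y\|_\infty<\Delta_*/2$, which is harmless for $y$ on the unit sphere.
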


\noindent\textbf{Complete Bias Characterization for Asynchronous Q-learning.}
Under the no-optimal-action-ties assumption, \cite{zhang2024constant} study
the same asynchronous Q-learning recursion and show that the leading-order
asymptotic bias is linear in \(\alpha\). However, neither their work nor the
existing literature provides a corresponding bias characterization in the
presence of optimal-action ties. Our SSC theory fills this gap by showing
that, under ties, the asymptotic bias can be of order \(\sqrt{\alpha}\). Together, these results provide a complete leading-order characterization of
the asymptotic bias of asynchronous Q-learning.

\noindent\textbf{Covariance Condition.}
The positive-definiteness of \(\Sigma_Q\) is a convenient sufficient
condition for applying Corollary~\ref{co:bias}. In the present
deterministic-reward setting, by \eqref{eq:q-covariance-pd-condition}, it is equivalent to
\[
    \sigma_{s,a}^2
    :=
    \operatorname{Var}_{s'\sim\mathsf P(\cdot\mid s,a)}
    \bigl(V^*(s')\bigr)
    >0,
    \qquad
    \forall (s,a)\in\mS\times\mA.
\]

\noindent\textbf{A Unified Extrapolation Scheme.}
Suppose that \(\Sigma_Q\) is positive definite. Depending on the underlying
MDP, the leading-order bias scales as either \(\alpha\) or
\(\sqrt{\alpha}\). Since this regime may be unknown a priori, a robust
approach is to use three stepsizes, \(\alpha\), \(2\alpha\), and \(4\alpha\),
and simultaneously eliminate both candidate leading-order terms. Specifically, define
\begin{equation}
\label{eq:unified-extrapolation}
    \widetilde q_t^{(\alpha)}
    :=
    (4+2\sqrt{2})\,\bar q_t^{(\alpha)}
    -(4+3\sqrt{2})\,\bar q_t^{(2\alpha)}
    +(1+\sqrt{2})\,\bar q_t^{(4\alpha)},
\end{equation}
where \(\bar q_t^{(\alpha)}\) denotes the tail-averaged iterate with stepsize
\(\alpha\). Indeed, the coefficients satisfy
\[
\begin{aligned}
    &(4+2\sqrt{2})-(4+3\sqrt{2})+(1+\sqrt{2})=1,\\
    &(4+2\sqrt{2})
    -\sqrt{2}(4+3\sqrt{2})
    +2(1+\sqrt{2})=0,\\
    &(4+2\sqrt{2})
    -2(4+3\sqrt{2})
    +4(1+\sqrt{2})=0.
\end{aligned}
\]
Thus, the extrapolation preserves an accurate estimate of \(q^*\) while
simultaneously canceling the \(\sqrt{\alpha}\)- and \(\alpha\)-order terms,
yielding a unified bias-reduction scheme that does not require prior knowledge
of whether optimal-action ties are present. In Section~\ref{sec:experiments},
we further illustrate numerically that this unified extrapolation performs
robustly across both regimes, whereas a misspecified RR extrapolation can
lead to substantially worse performance.

\section{Numerical Experiments}\label{sec:experiments}
We illustrate the asymptotic bias theory for asynchronous Q-learning
using two finite MDPs: one in the no-tie regime, where each state admits a
unique optimal action, and one in the tied regime, where optimal-action ties
are present. The complete experimental setup is deferred to
Appendix~\ref{app:numerical-details}.

\noindent\textbf{Regime-Specific RR Extrapolation.}
Our SSC theory, together with the locally differentiable bias characterization
of \cite{zhang2024constant}, predicts different leading-order asymptotic biases
in the two regimes. In the no-tie regime, the mean operator is locally
affine and the bias is of order \(\alpha\); in the tied regime, the
mean operator is locally nondifferentiable and the bias is of order
\(\sqrt{\alpha}\). Accordingly, we use the regime-specific RR extrapolations $\widetilde q_t^{(\alpha)}
    :=
    2\bar q_t^{(\alpha)}
    -
    \bar q_t^{(2\alpha)}$ for the no-tie regime, and $\widetilde q_t^{(\alpha)}
    :=
    (2+\sqrt{2})\bar q_t^{(\alpha)}
    -
    (1+\sqrt{2})\bar q_t^{(2\alpha)}$ for the tied regime, which cancel the corresponding leading-order bias terms.

Figure~\ref{fig:qlearning-correct-rr} compares tail averaging with the
regime-specific RR extrapolation. In both cases, the correct RR
extrapolation significantly reduces the long-run error. These observations
support the distinct leading-order bias characterizations established in the
two regimes. Throughout this section, the solid and dashed curves report the average errors
over independent runs, while the shaded regions indicate pointwise \(95\%\)
confidence intervals.

\begin{figure}[!htbp]
    \centering
    \begin{minipage}{0.49\textwidth}
        \centering
        \includegraphics[width=\textwidth]{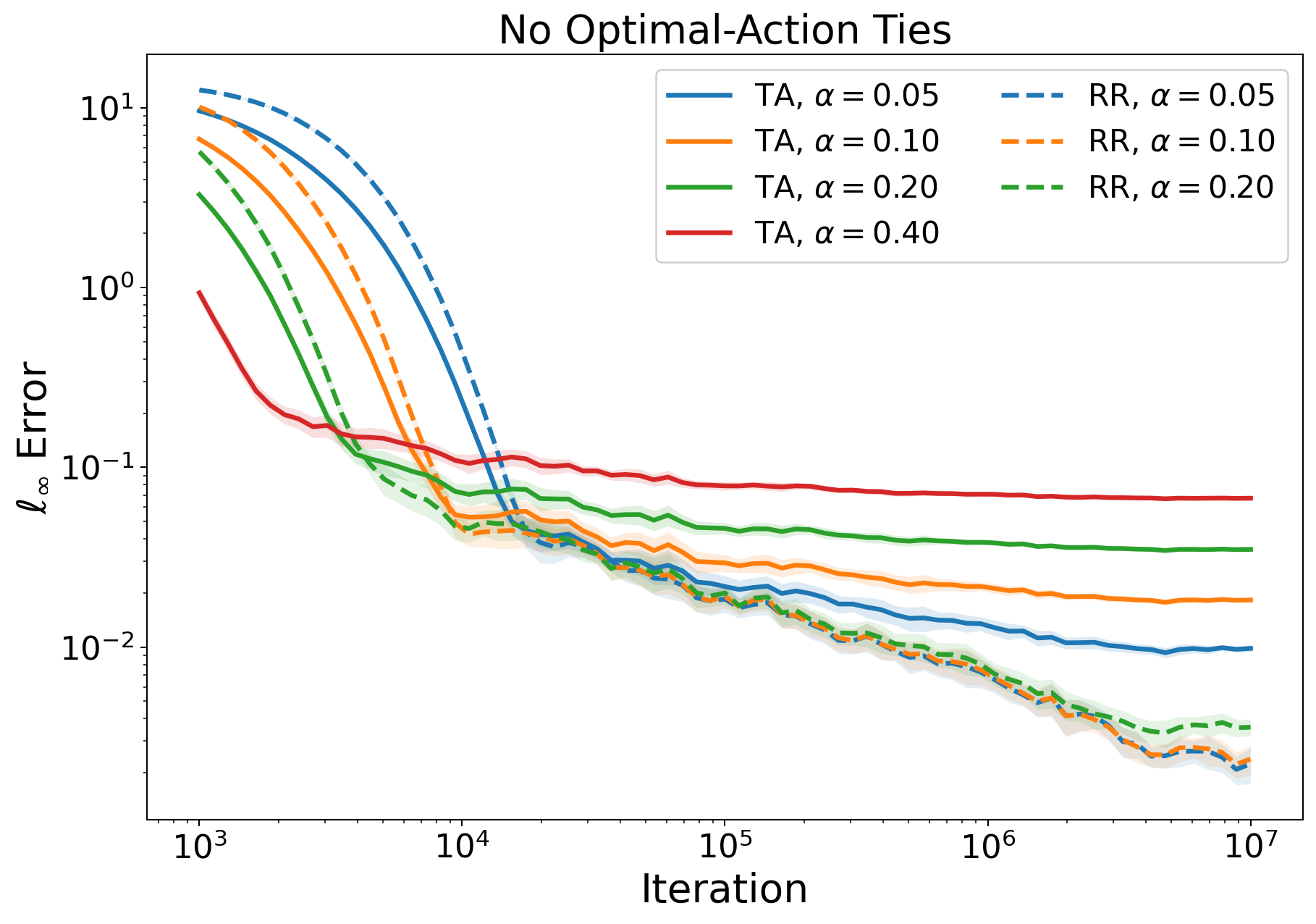}
    \end{minipage}
    \hfill
    \begin{minipage}{0.49\textwidth}
        \centering
        \includegraphics[width=\textwidth]{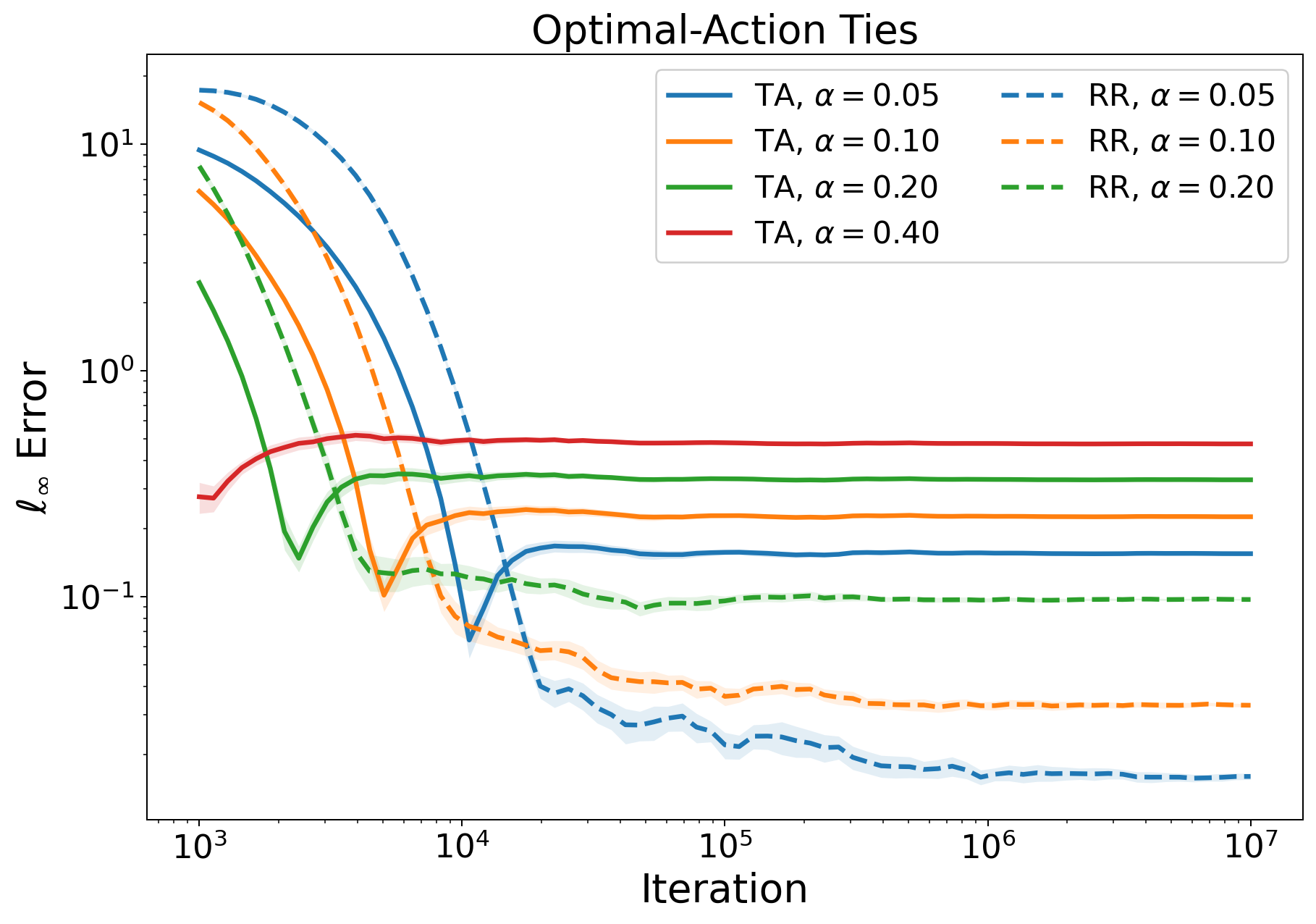}
    \end{minipage}
    \caption{
        Tail averaging and regime-specific RR extrapolation.
    }
    \label{fig:qlearning-correct-rr}
\end{figure}

\noindent\textbf{Misspecified and Unified RR Extrapolation.}
We next examine two alternatives when the bias regime is unknown: a
misspecified extrapolation that is designed for the wrong regime, and the
unified RR extrapolation \eqref{eq:unified-extrapolation}, which simultaneously cancels the
\(\alpha\)- and \(\sqrt{\alpha}\)-order candidate terms.

Figure~\ref{fig:qlearning-wrong-unified} summarizes the comparison. In the
no-tie MDP, the extrapolation designed for a \(\sqrt{\alpha}\)-order bias
fails to remove the dominant linear term and can even degrade performance.
In the tied MDP, the linear-bias RR extrapolation yields only limited
improvement. By contrast, the unified RR extrapolation substantially reduces the
long-run error in both MDPs. This is consistent with the discussion in
Section~\ref{sec:q-ties}: the leading-order bias is determined by the unknown
local structure of the underlying MDP, whereas the unified RR procedure
provides a robust bias-reduction scheme without prior knowledge of the
regime.
\begin{figure}[!htbp]
    \centering
    \begin{minipage}{0.49\textwidth}
        \centering
        \includegraphics[width=\textwidth]{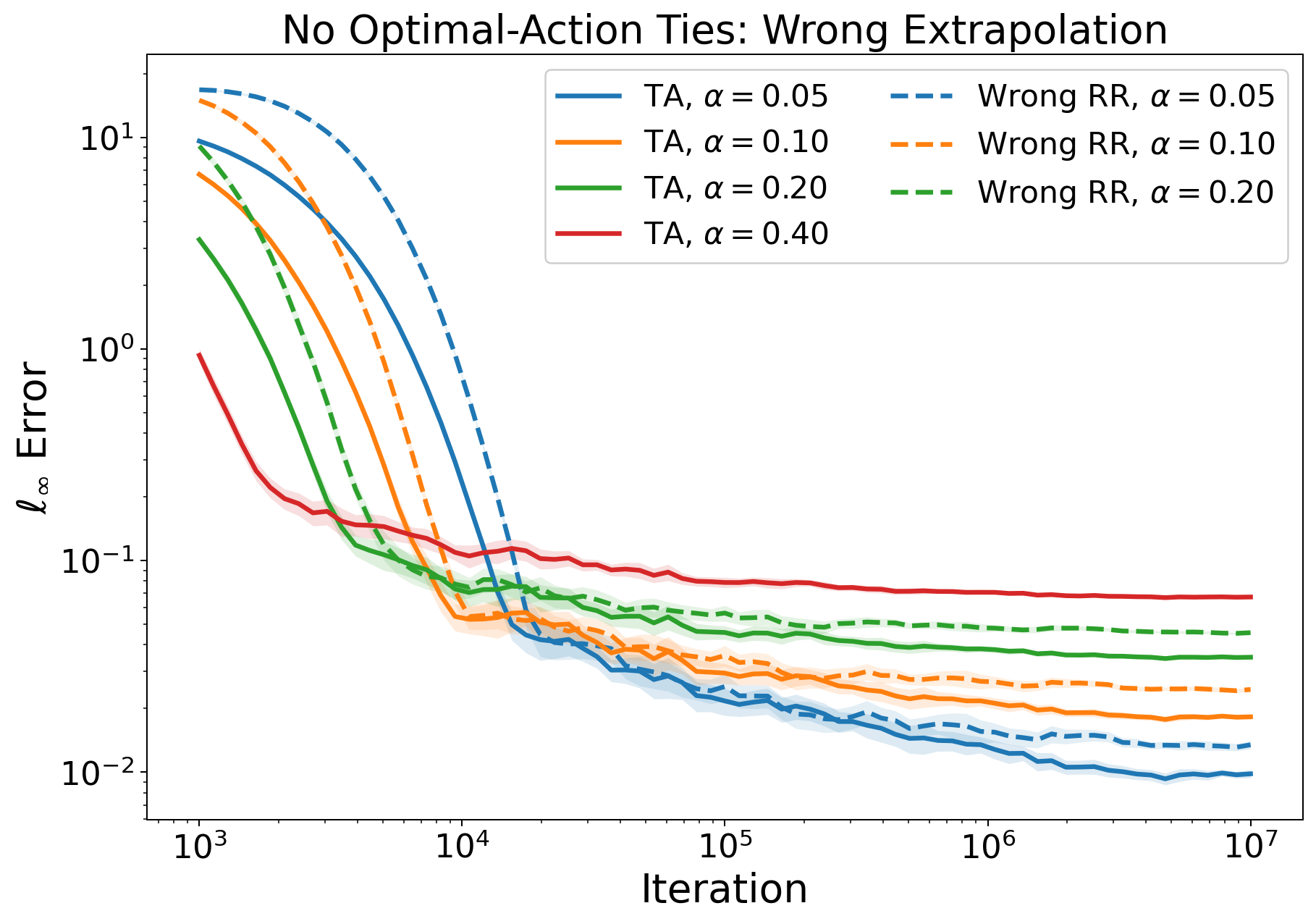}
    \end{minipage}
    \hfill
    \begin{minipage}{0.49\textwidth}
        \centering
        \includegraphics[width=\textwidth]{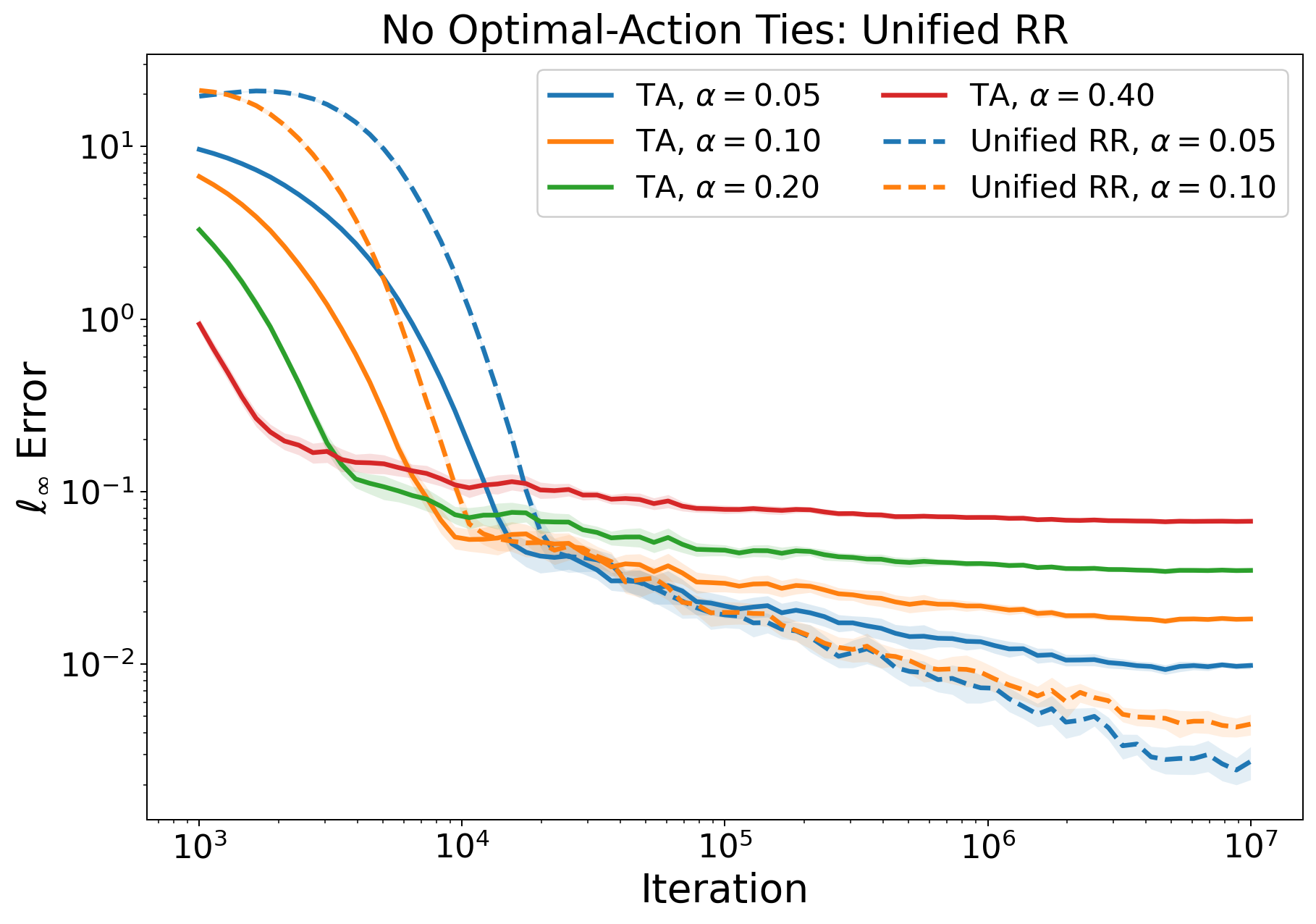}
    \end{minipage}

    \vspace{0.6em}

    \begin{minipage}{0.49\textwidth}
        \centering
        \includegraphics[width=\textwidth]{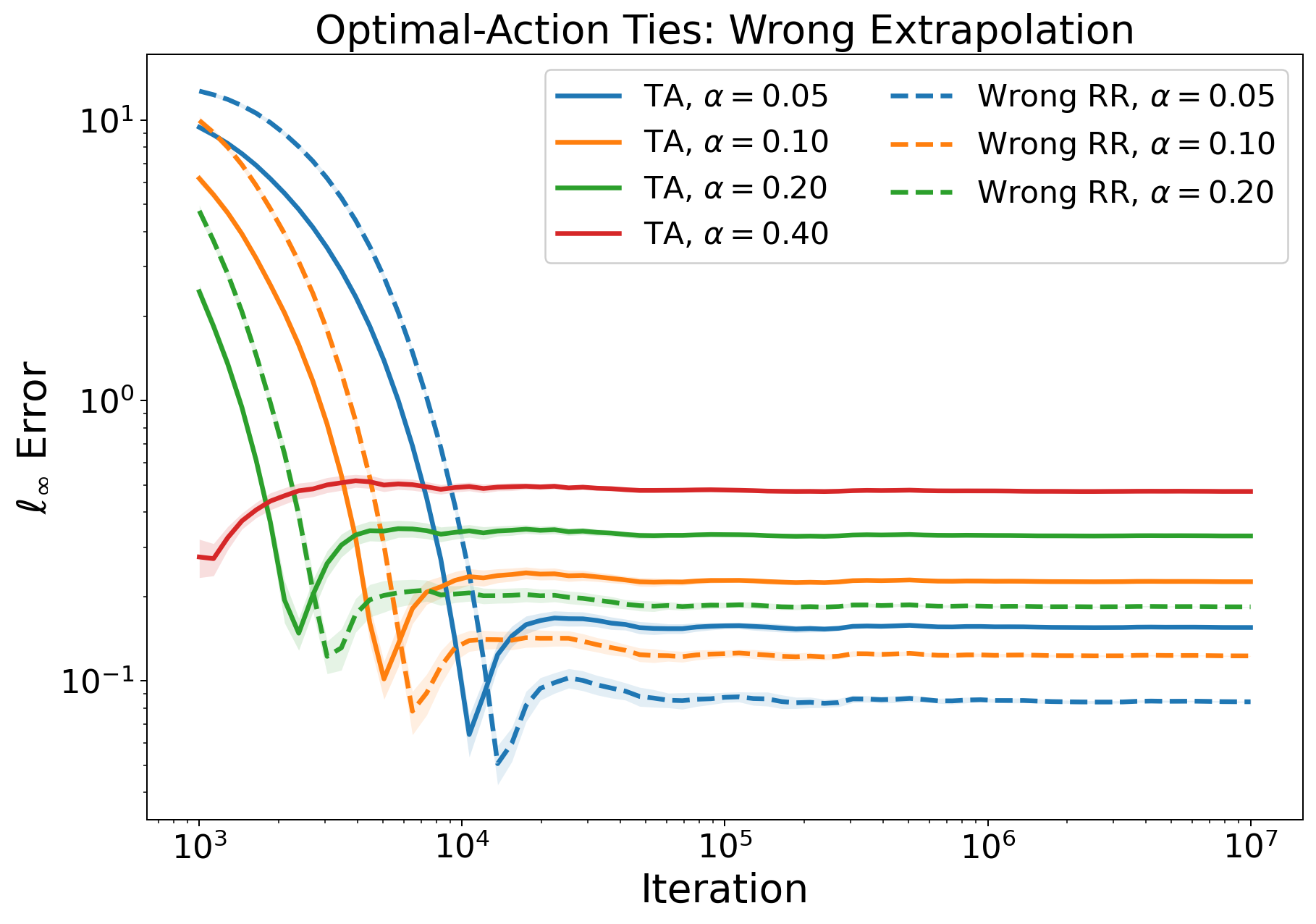}
    \end{minipage}
    \hfill
    \begin{minipage}{0.49\textwidth}
        \centering
        \includegraphics[width=\textwidth]{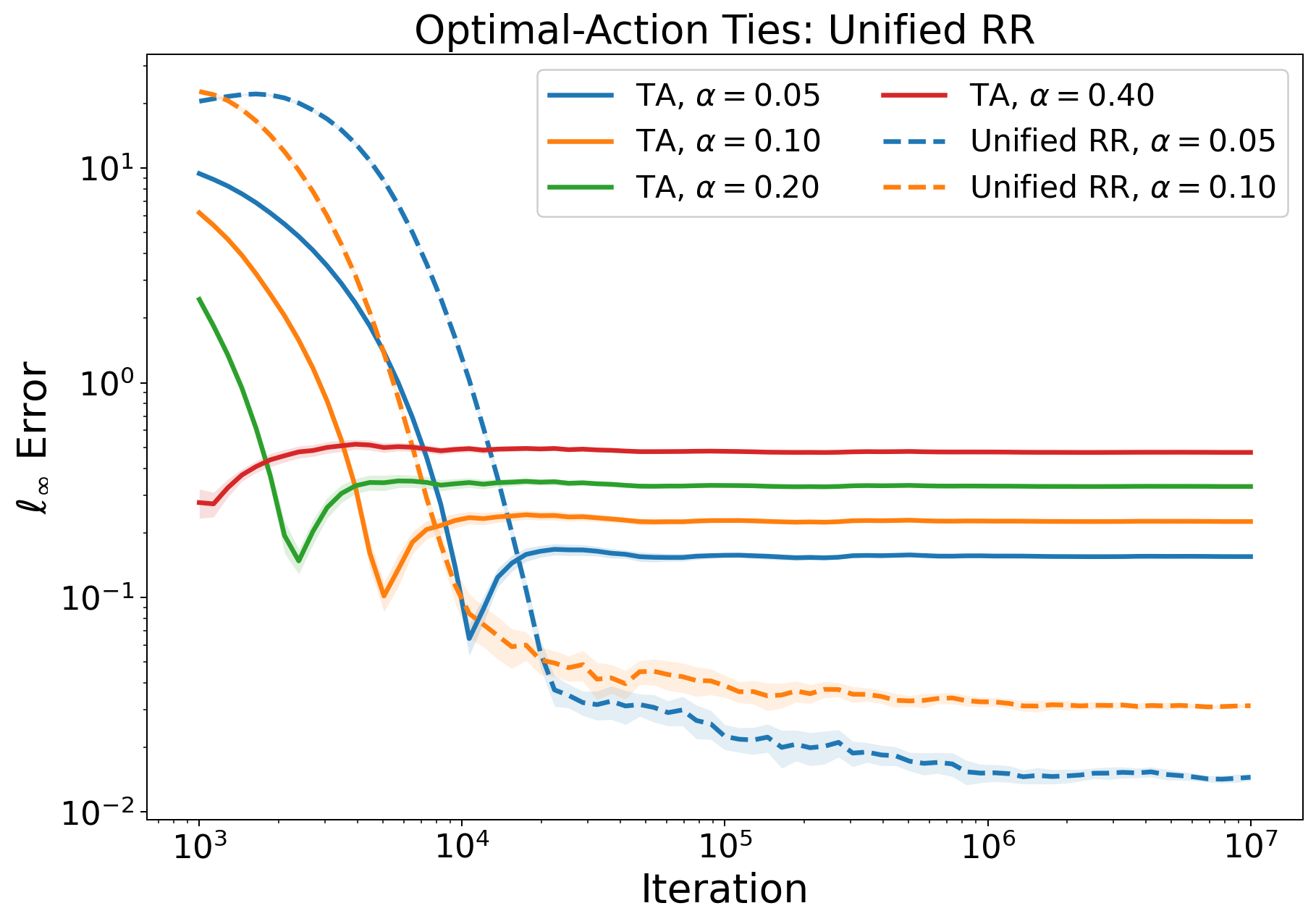}
    \end{minipage}
    \caption{
        Misspecified and unified RR extrapolations.
    }
    \label{fig:qlearning-wrong-unified}
\end{figure}

\section{Conclusion}

In this work, we develop a steady-state convergence (SSC) theory for
constant-stepsize contractive stochastic approximation (SA). Through a multi-step universality framework, we establish steady-state
Gaussian approximations under local quadratic linearization and
develop a general SSC theory for locally one-sided directionally
differentiable mean operators. We further characterize conditions under which
the asymptotic bias is genuinely of order \(\sqrt{\alpha}\). Our results substantially extend existing SSC theory and
yield several applications, including Gaussian approximation guarantees for
raw iterates under local quadratic linearization and principled bias
reduction in the locally nondifferentiable regime. We illustrate the theory
through Markovian linear SA and asynchronous Q-learning. In particular,
for asynchronous Q-learning, we propose a unified bias-reduction scheme,
whose effectiveness is supported by our numerical experiments.

One promising direction for future work is to characterize necessary
conditions for the asymptotic bias to be genuinely of order
\(\sqrt{\alpha}\). Moreover, as conjectured in recent work
\cite{wang2026steady}, the steady-state behavior may differ substantially
when the mean operator is not globally contractive. Developing an SSC theory
with an appropriate scaling in such settings is another interesting
direction.

\section*{Acknowledgments}
Y.\ Zhang and Q.\ Xie are supported in part by NSF grants CNS-1955997, EPCN-2339794 and EPCN-2432546.

\bibliographystyle{alpha} 
\bibliography{main}

\clearpage
\appendix
\section*{Appendices}
\addcontentsline{toc}{section}{Appendices}
\numberwithin{equation}{section}
\numberwithin{theorem}{section}
\numberwithin{lemma}{section}
\numberwithin{proposition}{section}
\numberwithin{corollary}{section}
\numberwithin{assumption}{section}
\numberwithin{definition}{section}
\numberwithin{remark}{section}
\numberwithin{example}{section}
\numberwithin{question}{section}
\numberwithin{property}{section}
\numberwithin{condition}{section}
\numberwithin{observation}{section}
\numberwithin{fact}{section}
\numberwithin{figure}{section}
\numberwithin{table}{section}

\section{Preliminaries}
\subsection{Moreau Envelope}
Let $\phi:\R^d\to\R$ be defined by $\phi(x):=\tfrac12\|x\|_c^2$, where $\|\cdot\|_c$ may be nondifferentiable.  
To handle this, we work with the \emph{Moreau envelope} \citep{parikh2014proximal,chen2024lyapunov,chen2020finite}, which serves as a smooth surrogate of $\phi$.  
Since all norms on $\R^d$ are equivalent \citep{folland1999real}, there exist constants $0<l_{cs}\le u_{cs}<\infty$ such that
\[
l_{cs}\|x\|_2 \le \|x\|_c \le u_{cs}\|x\|_2,\qquad \forall x\in\R^d.
\]
For any $\eta>0$, the Moreau envelope of $\phi$ with respect to the Euclidean prox function $\tfrac12\|\cdot\|_2^2$ is
\begin{equation}\label{eq:Moreau-envelope}
M_\eta(x) = \inf_{u \in \R^d} \Big\{\phi(u) + \frac{1}{2\eta}\|x-u\|_2^2\Big\},\qquad \forall x\in \R^d.
\end{equation}
The basic properties of $M_{\eta}$ are summarized below. 
\begin{lemma}
[Proposition 6 in \cite{zhang2024prelimit}]\label{lem:Moreau-envelope}
$M_\eta$ has the following properties: (1) $M_\eta$ is convex and $\frac{1}{\eta}$-smooth with respect to $\|\cdot\|_2$; (2) there exists a norm $\|\cdot\|_m$ such that $M_\eta(x) = \frac{1}{2}\|x\|_m^2$; (3) it holds that $l_{cm}\|\cdot\|_m\leq \|\cdot\|_c \leq u_{cm}\|\cdot\|_m$, where $l_{cm} = (1+\eta l_{cs}^2)^{\frac{1}{2}} $ and $u_{cm} = (1+\eta u_{cs}^2)^{\frac{1}{2}}$; (4) $\langle \nabla M_\eta(x), y\rangle \leq \|x\|_m \|y\|_m,$ $\forall x, y \in \R^d$; (5) $\nabla M_\eta(cx) = c\nabla M_\eta(x)$, $\forall c \geq 0, x\in \R^d$.
\end{lemma}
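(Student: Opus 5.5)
The plan is to derive all five properties from two structural facts about $M_\eta$. First, it is the infimal convolution of the convex functions $\phi=\tfrac12\|\cdot\|_c^2$ and $\tfrac{1}{2\eta}\|\cdot\|_2^2$. Second, it is positively $2$-homogeneous and even. Property (1) is the standard Moreau-envelope fact for a proper closed convex $\phi$. Convexity holds because the infimum over $u$ of a jointly convex function of $(x,u)$ is convex. Differentiability follows from the representation $\nabla M_\eta(x)=(x-\mathrm{prox}_{\eta\phi}(x))/\eta$. Since $\mathrm{prox}_{\eta\phi}$ is firmly nonexpansive, $x\mapsto x-\mathrm{prox}_{\eta\phi}(x)$ is also firmly nonexpansive, hence $1$-Lipschitz, so $\nabla M_\eta$ is $\tfrac1\eta$-Lipschitz in $\|\cdot\|_2$.

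For (2), substitute $u=cu'$ in \eqref{eq:Moreau-envelope} to get $M_\eta(cx)=c^2M_\eta(x)$ for $c\ge0$. Substituting $u\mapsto -u$ gives $M_\eta(-x)=M_\eta(x)$. In addition, $M_\eta(x)>0$ for $x\neq0$, since both terms cannot vanish simultaneously and the infimum is attained. Define $\|x\|_m:=\sqrt{2M_\eta(x)}$. This is absolutely homogeneous and positive definite, and its unit ball $\{M_\eta\le \tfrac12\}$ is a symmetric convex body. So $\|\cdot\|_m$ is the gauge (Minkowski functional) of that body, and the triangle inequality follows. Property (5) is then immediate: differentiating $M_\eta(cx)=c^2M_\eta(x)$ in $x$ gives $c\nabla M_\eta(cx)=c^2\nabla M_\eta(x)$ for $c>0$. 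For $c=0$, note that $\nabla M_\eta(0)=0$ because $0$ is the minimizer.

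For (4), write $\nabla M_\eta(x)=\|x\|_m g$ with $g\in\partial\|x\|_m$ when $x\neq0$. The case $x=0$ is trivial by (5). Every subgradient of a norm at a nonzero point has dual norm $\|g\|_{m,*}\le1$. Hence $\langle\nabla M_\eta(x),y\rangle\le\|x\|_m\|g\|_{m,*}\|y\|_m\le\|x\|_m\|y\|_m$.

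The only step needing care is (3), with its exact constants. I will argue through conjugates. The conjugate of an infimal convolution is the sum of the conjugates, so $\tfrac12\|y\|_{m,*}^2=\tfrac12\|y\|_{c,*}^2+\tfrac\eta2\|y\|_2^2$. Dualizing the norm equivalence $l_{cs}\|x\|_2\le\|x\|_c\le u_{cs}\|x\|_2$ gives $\|y\|_2/u_{cs}\le\|y\|_{c,*}\le\|y\|_2/l_{cs}$. Substituting, $(1+\eta l_{cs}^2)\|y\|_{c,*}^2\le\|y\|_{m,*}^2\le(1+\eta u_{cs}^2)\|y\|_{c,*}^2$. Dualizing once more reverses the inequalities and inverts the constants: $\|x\|_c^2/(1+\eta u_{cs}^2)\le\|x\|_m^2\le\|x\|_c^2/(1+\eta l_{cs}^2)$. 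This is exactly $l_{cm}\|x\|_m\le\|x\|_c\le u_{cm}\|x\|_m$. The main bookkeeping risk is getting the direction of each inequality right under dualization, so I will check it on the case $\|\cdot\|_c=\|\cdot\|_2$. There $\|x\|_m^2=\|x\|_2^2/(1+\eta)$, consistent with $l_{cs}=u_{cs}=1$.
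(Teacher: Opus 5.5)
Your proposal is correct, and the constants in (3) come out right; your Euclidean sanity check $\|x\|_m^2=\|x\|_2^2/(1+\eta)$ agrees with the statement. The paper does not prove (1)--(4) itself; it defers them to the cited results of Chen et al. For (5) it argues exactly as you do: from the degree-$2$ homogeneity of $M_\eta$ given by (2), via Euler's homogeneous-function theorem, which amounts to differentiating $M_\eta(cx)=c^2M_\eta(x)$.

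Your write-up is therefore a self-contained version of the standard arguments behind those citations:
\begin{itemize}
\item for (1), the prox representation of $\nabla M_\eta$ together with firm nonexpansiveness;
\item for (2), the gauge of $\{M_\eta\le \tfrac12\}$;
\item for (3), conjugation of the infimal convolution;
\item for (4), the dual-norm bound on subgradients of a norm.
\end{itemize}

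If you want to shorten (3) and avoid the double dualization, you can stay in the primal. The norm equivalence gives
$\frac{1}{2\eta u_{cs}^2}\|x-u\|_c^2\le\frac{1}{2\eta}\|x-u\|_2^2\le\frac{1}{2\eta l_{cs}^2}\|x-u\|_c^2$.
For any norm and any $b>0$,
$\inf_u\bigl\{\tfrac12\|u\|_c^2+\tfrac b2\|x-u\|_c^2\bigr\}=\frac{b}{2(1+b)}\|x\|_c^2$.
The lower bound follows from the triangle inequality $\|u\|_c+\|x-u\|_c\ge\|x\|_c$, and the upper bound from taking $u=\frac{b}{1+b}x$. Taking $b=1/(\eta l_{cs}^2)$ and $b=1/(\eta u_{cs}^2)$ gives the two inequalities in (3) directly.
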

The proof of items (1)--(4) above can be found in \cite[Proposition 1]{chen2024lyapunov} and \cite[Lemma A.1]{chen2020finite}. Item (5) follows from (1), (2) and Euler’s homogeneous function theorem \citep{euler2000foundations}.

\subsection{Preliminaries for SA}
In this subsection, we collect several auxiliary results of SA that will be used
throughout the proof. We begin with the following increment bound for
Markovian SA.

\begin{lemma}[Lemma~4 in \cite{chen2024lyapunov}]
\label{lem:increment-markovian}
Under Assumptions~\ref{assumption:contraction} and
\ref{assumption:noise:markovian}, for any \(\alpha>0\) and \(m\geq1\)
satisfying $\alpha m \lesssim 1,$ we have
\[
    \|\theta_{t+m}-\theta_t\|
    \lesssim
    \alpha m\bigl(\|\theta_t\|+1\bigr),
    \qquad \forall t\geq0.
\]
\end{lemma}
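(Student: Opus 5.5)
The bound is deterministic and pathwise: it uses only the uniform Lipschitz and boundedness conditions in Assumption~\ref{assumption:noise:markovian}, and the contraction in Assumption~\ref{assumption:contraction} is not needed. The plan is to bound each one-step increment by $\alpha$ times an affine function of the current iterate. A discrete Gr\"onwall argument then keeps the iterates comparable to $\|\theta_t\|+1$ over a window of length $m$ with $\alpha m\lesssim 1$, and summing the one-step increments gives the claim.

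\textbf{Step 1 (linear growth of the random operator).} For every $x\in\mathcal X$ and $\theta\in\R^d$,
\[
\|\tmT(x,\theta)\|\le \|\tmT(x,\theta)-\tmT(x,\theta^*)\|+\|\tmT(x,\theta^*)\|\le L\|\theta-\theta^*\|+L\le L\|\theta\|+L(\|\theta^*\|+1).
\]
Plugging this into the recursion~\eqref{eq:update} gives, for every $k\ge0$,
\[
\|\theta_{k+1}-\theta_k\|\le \alpha\bigl(\|\tmT(x_k,\theta_k)\|+\|\theta_k\|\bigr)\le \alpha\bigl(c_1\|\theta_k\|+c_2\bigr),
\]
with $c_1:=L+1$ and $c_2:=L(\|\theta^*\|+1)$. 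Both constants are independent of $\alpha$, $k$, and the realization of $\{x_k\}$.

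\textbf{Step 2 (iterates stay comparable over the window).} The triangle inequality gives $\|\theta_{k+1}\|\le(1+\alpha c_1)\|\theta_k\|+\alpha c_2$. Setting $u_k:=\|\theta_k\|+c_2/c_1$, this becomes $u_{k+1}\le(1+\alpha c_1)u_k$. Induction then yields, for $0\le j\le m$,
\[
u_{t+j}\le(1+\alpha c_1)^j u_t\le e^{c_1\alpha m}u_t.
\]
The hypothesis $\alpha m\lesssim 1$ makes $e^{c_1\alpha m}$ an absolute constant. Hence $\|\theta_{t+j}\|\lesssim \|\theta_t\|+1$ uniformly over $0\le j\le m$.

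\textbf{Step 3 (sum the increments).} By telescoping and Steps 1--2,
\[
\|\theta_{t+m}-\theta_t\|\le\sum_{j=0}^{m-1}\alpha\bigl(c_1\|\theta_{t+j}\|+c_2\bigr)\lesssim \alpha m\bigl(\|\theta_t\|+1\bigr).
\]
This is the desired bound. There is no real obstacle here. The only point requiring care is Step 2, where the restriction $\alpha m\lesssim1$ is exactly what keeps the Gr\"onwall factor $(1+\alpha c_1)^m$ bounded by a constant independent of $\alpha$ and $m$.
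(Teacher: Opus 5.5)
Your proof is correct and takes essentially the same route as the standard proof of the cited Lemma~4 in \cite{chen2024lyapunov}; the paper only imports that result and gives no proof of its own. Both arguments get linear growth of $\tmT$ from the uniform Lipschitz and boundedness conditions. They then use the shifted recursion $\|\theta_{k+1}\|+c_2/c_1\le(1+\alpha c_1)(\|\theta_k\|+c_2/c_1)$ with a Gr\"onwall-type bound to keep the iterates comparable to $\|\theta_t\|+1$ over a window with $\alpha m\lesssim 1$, and finally telescope the one-step increments. Your remark that contraction is not needed is also accurate, since the bound is purely pathwise.
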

We also record a fourth-moment bound for the SA iterates. Such a bound was
established for the i.i.d.\ setting in \cite{zhang2024prelimit} and for
Markovian monotone SA in \cite{hadavi2026revisiting}. The extension to the
general Markovian SA is straightforward: one can follow the argument of
\cite{hadavi2026revisiting} verbatim, replacing its Euclidean Lyapunov
function by the Moreau-envelope Lyapunov function \eqref{eq:Moreau-envelope} to accommodate a general contraction norm.
We therefore omit the proof.

\begin{lemma}
\label{lem:fourthmoment}
Under Assumptions~\ref{assumption:contraction} and
\ref{assumption:noise}, there exist constants
\(c_0,c_1,\alpha_0>0\), independent of \(\alpha\) and \(t\), such that,
for every stepsize \(\alpha\in(0,\alpha_0)\), there exists
\(t_\alpha\geq0\) for which
\[
    \E\!\left[
        \|\theta_t^{(\alpha)}-\theta^*\|^4
    \right]
    \leq
    c_0 e^{-c_1\alpha t}
    \E\!\left[
        \|\theta_0^{(\alpha)}-\theta^*\|^4
    \right]
    +
    c_0\alpha^2,
    \qquad
    \forall t\geq t_\alpha.
\]
\end{lemma}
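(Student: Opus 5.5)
The plan is a Lyapunov drift argument for the fourth power of the Moreau-envelope norm. Write $e_t:=\theta_t^{(\alpha)}-\theta^*$ and set $W(e):=M_\eta(e)^2=\tfrac14\|e\|_m^4$, with $\eta$ fixed as in \cite{chen2024lyapunov}. By Lemma~\ref{lem:Moreau-envelope}, $\nabla W(e)=2M_\eta(e)\nabla M_\eta(e)$ and $\|\nabla^2 W(e)\|\lesssim\|e\|_m^2$. We Taylor-expand $W(e_{t+1})$ around $e_t$ with increment
\[
\alpha\bigl(\tmT(x_t,\theta_t)-\theta_t\bigr)=\alpha\bigl(\mT(\theta_t)-\theta_t\bigr)+\alpha\,\xi_t,
\qquad
\xi_t:=\tmT(x_t,\theta_t)-\mT(\theta_t).
\]
This gives three contributions, handled in the next three steps. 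In the i.i.d.\ case (Assumption~\ref{assumption:noise:iid}) the noise term has zero conditional mean, so only the first and third steps are needed. This recovers \cite{zhang2024prelimit}.

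First, the drift term. Contraction (Assumption~\ref{assumption:contraction}) together with items (3)--(4) of Lemma~\ref{lem:Moreau-envelope} gives, for $\eta$ small enough,
\[
\langle\nabla M_\eta(e),\mT(\theta)-\theta\rangle\le -c\,M_\eta(e),
\]
and hence $\langle\nabla W(e),\mT(\theta)-\theta\rangle\le -2c\,W(e)$. Second, the second-order remainder. It is bounded by $\alpha^2\|e_t\|^2\,\|\tmT(x_t,\theta_t)-\theta_t\|^2\lesssim\alpha^2\bigl(\|e_t\|^4+\|e_t\|^2\bigr)$, using the Lipschitz and boundedness parts of Assumption~\ref{assumption:noise}. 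By Young's inequality, $\alpha^2\|e_t\|^2\le \varepsilon\alpha\|e_t\|^4+C\alpha^3/\varepsilon$, so this remainder costs only $O(\alpha^2)W+\varepsilon\alpha W+O(\alpha^3)$ per step.

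Third, the noise cross term $\alpha\langle\nabla W(e_t),\xi_t\rangle$, which is the main obstacle in the Markovian case. One could condition on $\mathcal F_{t-\tau_\alpha}$ and use Lemma~\ref{lem:increment-markovian}, but this produces terms of the form $\alpha^2\tau_\alpha\|e\|^2$. After Young's inequality these give a stationary level $\alpha^2\tau_\alpha^2$, i.e.\ an extra $\log^2(1/\alpha)$ factor, which the stated bound does not allow. Instead I would use the Poisson equation, following \cite{hadavi2026revisiting}. By uniform ergodicity \eqref{eq:uniform-GE}, there is a bounded solution $\hat u(x,\theta)$ of $\hat u-P\hat u=\tmT(\cdot,\theta)-\mT(\theta)$, Lipschitz in $\theta$ with linear growth. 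Then
\[
\xi_t=\bigl[\hat u(x_{t+1},\theta_t)-P\hat u(x_t,\theta_t)\bigr]+\bigl[\hat u(x_t,\theta_t)-\hat u(x_{t+1},\theta_t)\bigr],
\]
a martingale difference plus a telescoping piece. The martingale part has zero conditional mean. The telescoping part is absorbed by a perturbed Lyapunov function
\[
\widetilde W_t:=W(e_t)+\alpha\langle\nabla W(e_t),\hat u(x_t,\theta_t)\rangle .
\]
Its one-step residuals involve $\nabla W(e_{t+1})-\nabla W(e_t)$ and $\hat u(\cdot,\theta_{t+1})-\hat u(\cdot,\theta_t)$. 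Both are of order $\alpha\|e\|^2(\|e\|+1)$, so their contribution is again $O(\alpha^2)(\|e\|^4+\|e\|^2)$, with no mixing-time factor. Moreover $|\widetilde W_t-W(e_t)|\lesssim\alpha(\|e_t\|^4+\|e_t\|^3)$, so the two functions are comparable up to additive $O(\alpha^2)$ after another Young step.

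Combining the three steps, for $\alpha<\alpha_0$ we obtain
\[
\E[\widetilde W_{t+1}]\le(1-c'\alpha)\,\E[\widetilde W_t]+C\alpha^3 .
\]
Iterating and returning from $\widetilde W$ to $W$ gives $\E W(e_t)\lesssim e^{-c'\alpha t}\,\E W(e_0)+\alpha^2$. The waiting time $t_\alpha$ absorbs the initial comparison error between $\widetilde W$ and $W$ (and, if one prefers the $\tau_\alpha$-conditioning route for moments lower than four, the burn-in of $\tau_\alpha$ steps). Finally, norm equivalence (item (3) of Lemma~\ref{lem:Moreau-envelope}) converts the bound on $W$ into the stated bound on $\E\|\theta_t^{(\alpha)}-\theta^*\|^4$.
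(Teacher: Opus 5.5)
Your proposal is correct and follows essentially the route the paper indicates. The paper omits the proof and defers to the Markovian argument of \cite{hadavi2026revisiting}, with the Euclidean Lyapunov function replaced by the Moreau envelope; you likewise use a drift bound for $M_\eta(e)^2$, absorb the Markov-noise cross term through a Poisson-equation correction, and thereby keep the stationary level at $\alpha^2$ with no $\tau_\alpha^2$ factor, as the later uniform bound \eqref{eq:additive-uniform-moments} requires. One point to tidy: $M_\eta$ is only $C^{1,1}$, so your Hessian bound should be stated as the local Lipschitz estimate $\|\nabla W(y)-\nabla W(x)\|\lesssim(\|x\|^2+\|y\|^2)\|y-x\|$, which yields the same second-order remainder.
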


\subsection{Gaussian Approximation Rates}\label{sec:prior-GA}
In this subsection, we collect several Gaussian approximation results that
will be used throughout the proof. We begin with a direct consequence of the
Gaussian approximation result for Markov-chain-induced martingale differences
in \cite[Theorem~1]{zhang2026martingale}.

\begin{lemma}\label{lemma:GA-MDS}
Let $(\mathcal X,\mathcal B)$ be a standard Borel space, and let
$\{x_t\}_{t\geq0}$ be a stationary Markov chain with transition kernel
$P$ and stationary distribution $\mu$. Assume that the chain is uniformly
ergodic in the sense of \eqref{eq:uniform-GE}. Let
$h:\mathcal X\to\R^d$ satisfy
\[
    \E_{x \sim \mu}[h(x)]=0,
    \qquad
    \sup_{x \in \mathcal X}\|h(x)\|<\infty,
\]
and let $g:=\sum_{m=0}^{\infty}P^m h$ be the canonical solution of $g-Pg=h$. For $i=1,\ldots,n$, let
\[
    D_i:=g(x_i)-Pg(x_{i-1}),
    \qquad
    Y_i:=M_iD_i,
    \qquad
    S_n:=\sum_{i=1}^nY_i,
\]
where $M_1,\ldots,M_n\in\R^{d\times d}$ are deterministic matrices. Suppose that $\operatorname{Cov}(S_n)=I_d.$ Then there exists a finite constant $C_{d,P,h}>0$, depending only on
$d$, $P$, and $h$, such that
\begin{equation}\label{eq:main-bound}
    \W_2^2\!\left(
        \law(S_n),\mathcal N(0,I_d)
    \right)
    \leq
    C_{d,P,h}
    \left(
        1+
        \sqrt{\sum_{i=1}^n\|M_i\|_{\mathrm{op}}^2}
    \right)
    \sum_{i=1}^n\|M_i\|_{\mathrm{op}}^4,
\end{equation}
where \(\|\cdot\|_{\mathrm{op}}\) denotes the matrix operator norm induced by
the Euclidean norm.
\end{lemma}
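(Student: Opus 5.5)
The plan is to verify the hypotheses of \cite[Theorem~1]{zhang2026martingale}, a $\W_2$ Gaussian approximation for weighted martingale-difference sums driven by a uniformly ergodic chain, and to express its error terms through $\{M_i\}$. The argument has three steps.

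\emph{Step 1: well-posedness and boundedness.} By \eqref{eq:uniform-GE} and $\E_\mu[h]=0$,
\[
\|P^m h\|_\infty\le 2c_{\mathrm{mix}}\rho_{\mathrm{mix}}^m\sup_x\|h(x)\|.
\]
Hence the series $g=\sum_{m\ge0}P^m h$ converges uniformly, $g$ is bounded, and $g-Pg=h$. It follows that $D_i=g(x_i)-Pg(x_{i-1})$ satisfies $\E[D_i\mid\mathcal F_{i-1}]=0$ for the natural filtration $\mathcal F_{i-1}=\sigma(x_0,\dots,x_{i-1})$, and $\|D_i\|\le 2\|g\|_\infty$ almost surely. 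So $Y_i=M_iD_i$ is a bounded martingale difference sequence with $\|Y_i\|\le 2\|g\|_\infty\|M_i\|_{\mathrm{op}}$.

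\emph{Step 2: conditional covariance.} By the Markov property, $\E[D_iD_i^\top\mid\mathcal F_{i-1}]=\Phi(x_{i-1})$, where
\[
\Phi:=P(gg^\top)-(Pg)(Pg)^\top
\]
is bounded. Since the chain is stationary, $\E\Phi(x_{i-1})=\bar\Phi:=\E_\mu\Phi$. The normalization $\operatorname{Cov}(S_n)=I_d$ then reads $\sum_i M_i\bar\Phi M_i^\top=I_d$. Consequently the deviation of the predictable quadratic variation from its target is
\[
\sum_{i=1}^n \E[Y_iY_i^\top\mid\mathcal F_{i-1}]-I_d=\sum_{i=1}^n M_i\bigl(\Phi(x_{i-1})-\bar\Phi\bigr)M_i^\top .
\]
This is a centered, matrix-weighted additive functional of a bounded function. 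Uniform ergodicity gives covariance decay $\|\operatorname{Cov}(f(x_i),f(x_j))\|\lesssim\rho_{\mathrm{mix}}^{|i-j|}$ for bounded $f$. Expanding the second moment and applying Young's inequality $\|M_i\|^2\|M_j\|^2\le\tfrac12(\|M_i\|^4+\|M_j\|^4)$ to the geometric double sum yields
\[
\E\Bigl\|\sum_{i=1}^n M_i(\Phi(x_{i-1})-\bar\Phi)M_i^\top\Bigr\|^2\lesssim\sum_{i=1}^n\|M_i\|_{\mathrm{op}}^4 .
\]
The fourth-moment (Lyapunov-type) term is bounded directly: $\sum_i\E\|Y_i\|^4\lesssim\sum_i\|M_i\|_{\mathrm{op}}^4$.

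\emph{Step 3: insert into the martingale bound.} The final step substitutes these two quantities into the martingale $\W_2$ bound. That bound controls $\W_2^2$ by a combination of the conditional-variance discrepancy and higher-moment terms, where a factor of order $\sqrt{\sum_i\|M_i\|_{\mathrm{op}}^2}$ (a crude bound on $\|S_n\|$ in $L^2$ up to constants) multiplies the moment terms. This factor is what produces the $(1+\sqrt{\sum_i\|M_i\|_{\mathrm{op}}^2})$ prefactor in \eqref{eq:main-bound}. All constants depend only on $d$, $\|g\|_\infty$, $\|\Phi\|_\infty$, $c_{\mathrm{mix}}$ and $\rho_{\mathrm{mix}}$, and hence only on $(d,P,h)$.

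The main obstacle is Step 3: matching the exact form of the error functional in \cite[Theorem~1]{zhang2026martingale} to the quantities computed here. If that theorem measures the conditional-variance discrepancy in $L^1$ rather than $L^2$, I would pass from the $L^2$ estimate of Step 2 to $L^1$ by Cauchy--Schwarz. I would also check that the resulting square root $\bigl(\sum_i\|M_i\|_{\mathrm{op}}^4\bigr)^{1/2}$ is either already the quantity appearing in that theorem or is absorbed through the $\sqrt{\sum_i\|M_i\|_{\mathrm{op}}^2}$ factor and the normalization constraint $\sum_iM_i\bar\Phi M_i^\top=I_d$. Under that constraint, $\sum_i\|M_i\|_{\mathrm{op}}^2$ is bounded below by a constant whenever $\bar\Phi$ is bounded, and I expect this to be the source of the stated prefactor.
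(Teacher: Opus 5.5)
Your route is the paper's: the lemma is presented as a direct consequence of \cite[Theorem~1]{zhang2026martingale}, with no further argument. Your Steps~1--2 are also correct: $g$ is bounded, $Y_i$ is a bounded martingale difference with $\E[D_iD_i^\top\mid\mathcal F_{i-1}]=\Phi(x_{i-1})$, the normalization reads $\sum_iM_i\bar\Phi M_i^\top=I_d$, and $V_n:=\sum_i\E[Y_iY_i^\top\mid\mathcal F_{i-1}]$ satisfies $\E\|V_n-I_d\|^2\lesssim\sum_i\|M_i\|_{\mathrm{op}}^4$. The gap is Step~3. The rate in \eqref{eq:main-bound} is decided entirely by how the conditional-covariance error enters the martingale bound you invoke. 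You leave this open, and the fallback you propose for the unfavorable case does not work.

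Suppose the bound controls $\W_2^2$ through $\E\|V_n-I_d\|$. This is the scenario you anticipate, and a form common in general-purpose martingale bounds. Cauchy--Schwarz then gives only $\W_2^2\lesssim(\sum_i\|M_i\|_{\mathrm{op}}^4)^{1/2}$, and this cannot be absorbed into the right-hand side of \eqref{eq:main-bound}.

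\begin{itemize}
\item When $\bar\Phi\succ0$, the normalization forces $1/\|\bar\Phi\|_{\mathrm{op}}\le\sum_i\|M_i\|_{\mathrm{op}}^2\le d/\lambda_{\min}(\bar\Phi)$. So the prefactor $1+(\sum_i\|M_i\|_{\mathrm{op}}^2)^{1/2}$ is bounded above and below, and the right-hand side is of order $\sum_i\|M_i\|_{\mathrm{op}}^4$.
\item This is much smaller than its square root whenever the weights are small. Equal weights $M_i=n^{-1/2}\bar\Phi^{-1/2}$ give $n^{-1}$ against $n^{-1/2}$. In the proof of Proposition~\ref{prop:geometric-gaussian}, the loss would degrade $\W_2=\mathcal O(\sqrt\alpha)$ to $\mathcal O(\alpha^{1/4})$.
\item The lower bound on $\sum_i\|M_i\|_{\mathrm{op}}^2$ that you cite points in the wrong direction to help.
\end{itemize}

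What the lemma needs is an estimate in which the fluctuation of $V_n$ costs only $\mathcal O(\sum_i\|M_i\|_{\mathrm{op}}^4)$ in $\W_2^2$. Such an estimate is naturally obtained by exploiting that $V_n-I_d=\sum_iM_i(\Phi(x_{i-1})-\bar\Phi)M_i^\top$ is itself a weighted additive functional of the chain, not merely by bounding a moment of its norm. That Markov-specific structure is what a Gaussian approximation tailored to Markov-chain-induced martingale differences supplies. So you should apply such a result directly to the weighted sum $\sum_iM_iD_i$, with its hypotheses checked as in your Steps~1--2. Substituting your Step~2 estimates into a generic martingale error functional will not give the stated bound.
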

The next lemma follows from \cite[Corollary~1 and Section~EC.3]{zhang2026martingale}.
The possible singularity of $\Sigma_h$ causes no additional difficulty.
Indeed, the Poisson decomposition writes $S_n$ as a stationary
martingale sum plus a uniformly bounded boundary term, and identifies
$\Sigma_h$ with the covariance of a single martingale increment.
Consequently, the martingale component vanishes almost surely on
$\ker(\Sigma_h)$, while its component on $\operatorname{Range}(\Sigma_h)$
can be whitened and treated by \cite[Corollary~1]{zhang2026martingale}
with $p=2$. The boundary term contributes only $O(n^{-1/2})$ in
$\W_2$. We omit
the routine details.
\begin{lemma}\label{lem:GA-MC}
Consider the same Markov chain $\{x_t\}_{t\geq0}$ and function
$h:\mathcal X\to\R^d$ as in Lemma~\ref{lemma:GA-MDS}. Define
\[
    S_n:=\sum_{i=1}^n h(x_i),
    \qquad
    \Sigma_h:=
    \lim_{n\to\infty}
    \frac{1}{n}\operatorname{Cov}(S_n).
\]
Then there exists a finite constant $C_{d,P,h}>0$, depending only on
$d$, $P$, and $h$, such that
\[
    \W_2\left(
        \law\left(S_n/\sqrt n\right),
        \mathcal N(0,\Sigma_h)
    \right)
    \leq C_{d,P,h}/\sqrt{n}.
\]
\end{lemma}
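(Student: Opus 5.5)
The plan is to reduce the statement to the Markov-chain martingale CLT in \cite[Corollary~1]{zhang2026martingale} through the Poisson decomposition. First I would solve the Poisson equation. Uniform ergodicity \eqref{eq:uniform-GE} and boundedness of $h$ give $\|P^m h\|_\infty\lesssim \rho_{\mathrm{mix}}^m$. Hence $g:=\sum_{m\ge0}P^mh$ converges uniformly, is bounded, and satisfies $g-Pg=h$. Writing $h(x_i)=D_i+\bigl(Pg(x_{i-1})-Pg(x_i)\bigr)$ with $D_i:=g(x_i)-Pg(x_{i-1})$, the sum telescopes:
\[
S_n=\sum_{i=1}^n D_i+Pg(x_0)-Pg(x_n).
\]
The boundary term is bounded by $2\|Pg\|_\infty$, so after dividing by $\sqrt n$ it contributes at most $\mathcal O(n^{-1/2})$ in $\W_2$, using the trivial coupling and the triangle inequality. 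It then suffices to show $\W_2\bigl(\law(n^{-1/2}\sum_i D_i),\mathcal N(0,\Sigma_h)\bigr)\lesssim n^{-1/2}$.

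Second, I would identify $\Sigma_h$ with $\Sigma_D:=\E_\mu[D_1D_1^\top]$. The $D_i$ form a stationary martingale difference sequence, so $\operatorname{Cov}(\sum_i D_i)=n\Sigma_D$. Because the boundary term is bounded, $\frac1n\operatorname{Cov}(S_n)=\Sigma_D+\mathcal O(n^{-1/2})$, and therefore $\Sigma_h=\Sigma_D$.

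Third, I would handle a possibly singular $\Sigma_h$. Let $\Pi$ be the orthogonal projector onto $\operatorname{Range}(\Sigma_h)$. On $\ker\Sigma_h$ we have $\E[\|(I-\Pi)D_1\|^2]=0$, so $D_i\in\operatorname{Range}(\Sigma_h)$ almost surely. Choose an isometry $U$ from $\operatorname{Range}(\Sigma_h)$ to $\R^r$ and let $\Lambda=U\Sigma_hU^\top$, which is positive definite. Take $M_i:=n^{-1/2}\Lambda^{-1/2}U$ in Lemma~\ref{lemma:GA-MDS}, working on $\R^r$; this does not change the bound. Then $\operatorname{Cov}(\sum M_iD_i)=I_r$, $\sum\|M_i\|^2=\|\Lambda^{-1}\|$ is constant, and $\sum\|M_i\|^4=\|\Lambda^{-1}\|^2/n$. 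The lemma gives $\W_2^2\lesssim 1/n$, so $\W_2\lesssim n^{-1/2}$ for the whitened sum.

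Finally, I would undo the whitening with the linear map $y\mapsto U^\top\Lambda^{1/2}y$, which is Lipschitz in $\W_2$ with constant $\|\Lambda\|^{1/2}$ and pushes $\mathcal N(0,I_r)$ to $\mathcal N(0,\Sigma_h)$. Combining this with the boundary estimate by the triangle inequality gives the claim. One bookkeeping point is that the statement sums over $i=1,\dots,n$, with $x_0$ the conditioning start, which matches Lemma~\ref{lemma:GA-MDS} directly.

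The main obstacle is the whitening step when $\Sigma_h$ is singular. Specifically, one must verify that the required moment and regularity conditions of \cite{zhang2026martingale} still hold for the projected martingale in $\R^r$, whose increments $UD_i$ are functions of the pair $(x_{i-1},x_i)$ rather than of a single state. If Lemma~\ref{lemma:GA-MDS} is stated only for $\R^d$-valued $D_i$, I would instead apply it in $\R^d$ to $\Pi$-projected sums padded with an independent small Gaussian on the kernel, then let the padding variance go to zero.
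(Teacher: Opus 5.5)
Your proposal is correct and takes essentially the same route as the paper. Both arguments use the Poisson decomposition into a stationary martingale sum plus a bounded boundary term, identify $\Sigma_h$ with the increment covariance, restrict to $\operatorname{Range}(\Sigma_h)$ where the martingale lives almost surely, whiten, and apply the Markov-chain martingale Gaussian approximation of \cite{zhang2026martingale}; you apply it through Lemma~\ref{lemma:GA-MDS} with constant weights $M_i=n^{-1/2}\Lambda^{-1/2}$, while the paper cites its Corollary~1 directly. The obstacle you flag is not a real one: since $U(g-Pg)=Uh$, the projected increments $UD_i=Ug(x_i)-P(Ug)(x_{i-1})$ are exactly the Poisson martingale differences of the bounded centered function $Uh:\mathcal X\to\R^r$, so Lemma~\ref{lemma:GA-MDS} applies verbatim in dimension $r$ (as in the paper's proof of Proposition~\ref{prop:geometric-gaussian}), and the Gaussian-padding fallback is unnecessary.
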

Finally, we record a Gaussian approximation bound for sums of independent
random vectors. While the presentation in \cite{bonis2020stein} focuses on
the balanced-increment regime, we require the following nonuniform
formulation. For completeness, we provide a detailed proof based on the
techniques developed in \cite{bonis2020stein}.
\begin{lemma}\label{lem:GA-independent}
Let \(Y_1,\dots,Y_n\) be independent centered random vectors in
\(\mathbb R^d\), with finite fourth moments. Suppose that $\operatorname{Cov}\left(\sum_{i=1}^n Y_i\right)=I_d .$ Then there exists a constant \(C_d<\infty\), depending only on \(d\), such that
\[
\mathcal W_2^2
\Big(
    \law\big(\sum_{i=1}^n Y_i\big),
    \mathcal N(0,I_d)
\Big)
\le
C_d
\sum_{i=1}^n
\mathbb E\left[\|Y_i\|^4\right] .
\]
\end{lemma}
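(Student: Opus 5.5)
We argue along the Ornstein--Uhlenbeck interpolation, following \cite{bonis2020stein}. The aim is to keep every error term as a sum over $i$ of $\E\|Y_i\|^4$, never as $n\max_i\E\|Y_i\|^4$.

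\textbf{Reduction to the small case.} Set $S:=\sum_{i=1}^nY_i$ and $\Delta:=\sum_{i=1}^n\E\|Y_i\|^4$. If $\Delta\ge 1$, the independent coupling gives
\[
\W_2^2\bigl(\law(S),\mathcal N(0,I_d)\bigr)\le 2\E\|S\|^2+2d=4d\le 4d\,\Delta .
\]
So we may assume $\Delta<1$. By Cauchy--Schwarz this also gives $\sum_i(\E\|Y_i\|^2)^2\le\Delta$, which is needed below.

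\textbf{Interpolation.} Let $Z\sim\mathcal N(0,I_d)$ be independent of $S$, and set $S_t:=e^{-t}S+\sqrt{1-e^{-2t}}\,Z$ for $t\ge0$. Let $\rho_t$ be the score discrepancy of $\law(S_t)$ relative to the Gaussian, that is, $\rho_t:=\nabla\log(\text{density of }S_t)+x$. The standard dynamic estimate along the OU flow gives
\[
\W_2\bigl(\law(S),\mathcal N(0,I_d)\bigr)\le\int_0^\infty\bigl(\E\|\rho_t(S_t)\|^2\bigr)^{1/2}\,\d t .
\]
Gaussian integration by parts represents $\rho_t(S_t)$ as a conditional expectation given $S_t$ of a quantity built from $S$ and $Z$. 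Following Bonis, we then replace $S$ by the leave-one-out sums $S^{(i)}:=S-Y_i$ and expand in each $Y_i$.

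\textbf{Expansion in each summand.} Write $\rho_t(S_t)=\E[\,\cdots\mid S_t]$ as a sum over $i$ of terms involving $Y_i$ and the smooth Gaussian-regularized function of $e^{-t}S^{(i)}+\sqrt{1-e^{-2t}}Z$. Taylor-expand to second order in $e^{-t}Y_i$.
\begin{itemize}
\item The zeroth-order term vanishes because $\E Y_i=0$ and $Y_i$ is independent of $S^{(i)}$.
\item The first-order term produces $\sum_i\E[Y_iY_i^\top]-I_d=0$, using $\operatorname{Cov}(S)=I_d$. This holds up to a correction from conditioning on $S_t$ rather than on $S_t^{(i)}$, and that correction is again of higher order in $Y_i$.
\item The remainder terms are controlled by Gaussian smoothing. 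Derivatives of order $k$ of the regularized function cost $(1-e^{-2t})^{-k/2}$.
\end{itemize}
With conditional-variance (Efron--Stein) estimates exploiting independence across $i$, this should yield
\[
\E\|\rho_t(S_t)\|^2\lesssim_d \frac{e^{-4t}}{(1-e^{-2t})^{\,a}}\,\Delta
\]
for a suitable exponent $a$, together with a cruder bound valid for all $t$, namely $\E\|\rho_t(S_t)\|^2\lesssim_d e^{-2t}/(1-e^{-2t})$. The crucial point is that the squared second-order terms are summed over $i$, since cross terms have zero mean by independence and centering. This is what gives $\sum_i\E\|Y_i\|^4$ (and $\sum_i(\E\|Y_i\|^2)^2\le\Delta$) rather than a maximum. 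It is also the place where the proof departs from the balanced-increment presentation in \cite{bonis2020stein}.

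\textbf{Integration in $t$ (main obstacle).} Split the integral at a threshold $t_0$ with $1-e^{-2t_0}\asymp\Delta$. For $t\ge t_0$, use the refined bound. For $t<t_0$, use the crude bound, or Bonis's trick of bounding $\rho_t$ by a difference of conditional expectations of $S$ and $Z$, which remains $L^2$-bounded.

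The main obstacle is the exponent $a$. If $a=1$, the integral over $t\ge t_0$ diverges logarithmically near $t=0$, which would give a spurious $\log(1/\Delta)$ factor. I would first follow Bonis's refinement: measure the second-order term through the variance of $\sum_i(Y_iY_i^\top-\E Y_iY_i^\top)$ smoothed at scale $t$, so that it scales like $\sqrt{\Delta}$ without an extra singular factor. Integrating then gives
\[
\W_2\bigl(\law(S),\mathcal N(0,I_d)\bigr)\lesssim_d\sqrt{\Delta},
\]
and squaring yields the lemma.
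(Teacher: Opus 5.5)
Your reduction to $\Delta<1$, the OU bound $\W_2\le\int_0^\infty(\E\|\rho_t(S_t)\|^2)^{1/2}\,\d t$, and the crude estimate $\E\|\rho_t(S_t)\|^2\le d\,e^{-2t}/(1-e^{-2t})$ are all correct. The gap is the refined estimate for $t\ge t_0$, which is the whole content of the lemma: you only assert it (``this should yield \dots\ for a suitable exponent $a$''), and the mechanism you describe cannot give it from fourth moments alone. The failing step is the claim that the correction from conditioning on $S_t$ rather than $S_t^{(i)}$ is ``of higher order in $Y_i$''. Conditionally on $(S^{(i)},Y_i)$, the laws of $S_t$ and $S_t^{(i)}$ are Gaussians whose means differ by $e^{-t}Y_i$. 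Along $S_t$, the change of conditioning is therefore governed by the Cameron--Martin factor $\exp(-\langle c_i,Z\rangle-\tfrac12\|c_i\|^2)$, with $c_i:=e^{-t}Y_i/\sqrt{1-e^{-2t}}$ and $\|c_i\|^2=\|Y_i\|^2/\Delta(t)$, where $\Delta(t):=e^{2t}-1$. This factor has conditional second moment $e^{\|Y_i\|^2/\Delta(t)}$, and its Hermite expansion is the same kind of series that underlies Bonis's functional $H(t)$. There the order-$k$ term carries weight $\Delta(t)^{-(k-1)}$ and, once squared, involves $\sum_i\E\|Y_i\|^{2k}$. So the expansion is meaningful only on $\{\|Y_i\|\lesssim\sqrt{\Delta(t)}\}$. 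Even the third-order term needs sixth moments, and its weight $\Delta(t)^{-2}$ has a square root behaving like $1/t$ near $0$, so it costs a $\log(1/\Delta)$ even after your cutoff at $t_0$.

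The missing idea is a $t$-dependent truncation at exactly this scale, and this is what the paper does. It applies Theorem~2 of \cite{bonis2020stein} to the pair $S_t=S+D_I(t)$, where $D_i(t)=(Y_i'-Y_i)\mathbf 1\{\|Y_i\|\vee\|Y_i'\|\le\sqrt{\Delta(t)}\}$, $I$ is uniform, and $s=1/n$. Truncation gives $\E\|D_i(t)\|^{2k}\le C^k\Delta(t)^{k-2}\E\|Y_i\|^4$, which exactly offsets the weight $\Delta(t)^{-(k-1)}$. The discarded part is controlled through $\E[\|Y_i-Y_i'\|^2\mathbf 1_{B_i(t)^c}]\le C\E\|Y_i\|^4/\Delta(t)$. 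Independence, together with the symmetry of $B_i(t)$, kills the cross terms; this is where your ``sum, not max'' intuition is realized. The truncation bias in the second- and even-order means yields a term $\min\{\Delta(t)^{-1},A^2\Delta(t)^{-3}\}$, which plays the role of your cutoff. The outcome is $\E H(t)\lesssim_d A/\Delta(t)+\min\{\Delta(t)^{-1},A^2\Delta(t)^{-3}\}$. The main term $A/\Delta(t)$ is precisely your case $a=1$, and it produces no logarithm, since $\int_0^\infty e^{-2t}(1-e^{-2t})^{-1/2}\,\d t=\int_0^\infty e^{-t}\Delta(t)^{-1/2}\,\d t=1$; a logarithm would appear only at $a=2$. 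So you have misdiagnosed the obstacle. The second-order fluctuation term (weight $\Delta(t)^{-1}$, variance $\lesssim A$) is harmless, and your proposed refinement of it does not address the real difficulty, which is the higher-order and large-jump terms that require the truncation.
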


\begin{proof}[Proof of Lemma \ref{lem:GA-independent}]
Let $S:=\sum_{i=1}^n Y_i$ and $A:=\sum_{i=1}^n \mathbb E\|Y_i\|^4 .$ Throughout the proof, \(C\) denotes a universal constant and \(C_d\) denotes
a constant depending only on \(d\); both may change from line to line. Let \(Y_1',\dots,Y_n'\) be independent copies of
\(Y_1,\dots,Y_n\), independent of everything else. Let \(I\) be uniformly
distributed on \(\{1,\dots,n\}\). For \(t>0\), define $ \Delta(t):=e^{2t}-1,$ and
\[
    B_i(t):=\left\{
        \|Y_i\|\vee \|Y_i'\|\le \sqrt{\Delta(t)}
    \right\},
    \qquad
    D_i(t):=(Y_i'-Y_i)\mathbf 1_{B_i(t)} .
\]
Set $ S_t:=S+D_I(t).$ Conditionally on \(I=i\), the \(i\)-th summand \(Y_i\) in \(S\) is replaced by $Y_i'\mathds 1_{B_i(t)}
    +
    Y_i\mathds 1_{B_i(t)^c}.$ Since the event \(B_i(t)\) is symmetric in \((Y_i,Y_i')\), this replacement
has the same distribution as \(Y_i\). Hence \(S_t\) has the same law as \(S\).
Moreover, $\|S_t-S\|\le 2\sqrt{\Delta(t)} .$ Applying \cite[Theorem 2]{bonis2020stein} with
\(\nu=\operatorname{law}(S)\), \(X_0=S\), \(X_t=S_t\), and \(s=1/n\), we obtain
\[
    \mathcal W_2\bigl(\law(S),\mathcal N(0,I_d)\bigr)
    \le
    \int_0^\infty e^{-t}\bigl(\mathbb E H(t)\bigr)^{1/2}\,dt,
\]
where
\[
H(t)
=
\left\|
    \mathbb E\left[nD_I(t)+S\,\middle|\,S\right]
\right\|^2                                                        +
\frac{1}{\Delta(t)}
\left\|
    \mathbb E\left[
        \frac n2 D_I(t)^{\otimes 2}-I_d
        \,\middle|\,S
    \right]
\right\|^2                                                  +
\sum_{k\ge 3}
\frac{n^2}{k\,k!\,\Delta(t)^{k-1}}
\left\|
    \mathbb E\left[
        D_I(t)^{\otimes k}
        \,\middle|\,S
    \right]
\right\|^2 .
\]
Since \(I\) is uniform, this can be rewritten as
\[
H(t)=
\left\|
    \mathbb E\left[
        \sum_{i=1}^n \bigl(D_i(t)+Y_i\bigr)
        \,\middle|\,S
    \right]
\right\|^2                                                      +
\frac{\left\|
    \mathbb E\left[
        \frac12\sum_{i=1}^n D_i(t)^{\otimes 2}-I_d
        \,\middle|\,S
    \right]
\right\|^2  }{\Delta(t)}
                                                     +
\sum_{k\ge 3}
\frac{\left\|
    \mathbb E\left[
        \sum_{i=1}^n D_i(t)^{\otimes k}
        \,\middle|\,S
    \right]
\right\|^2}{k\,k!\,\Delta(t)^{k-1}}
 .
\]
We now bound the three terms in \(\mathbb E H(t)\). First, define $h_i(Y_i)
    :=
    \mathbb E\left[
        (Y_i-Y_i')\mathbf 1_{B_i(t)^c}
        \,\middle|\,Y_i
    \right].$ Since $D_i(t)+Y_i
    =
    Y_i'\mathbf 1_{B_i(t)}
    +
    Y_i\mathbf 1_{B_i(t)^c},$ and \(\mathbb E[Y_i'\mid Y_i]=0\), we have $\mathbb E\left[D_i(t)+Y_i\,\middle|\,Y_i\right]
    =
    h_i(Y_i).$ Thus, by the tower property and Jensen's inequality,
\[
\mathbb E\Big[
\big\|
    \mathbb E[
        \sum_{i=1}^n \bigl(D_i(t)+Y_i\bigr)
        \mid S
    ]
\big\|^2 \Big]                                                     \le
\mathbb E\Big[
\big\|
    \sum_{i=1}^n h_i(Y_i)
\big\|^2\Big] .
\]
The variables \(h_i(Y_i)\) are independent and centered because \(B_i(t)^c\) is symmetric in \((Y_i,Y_i')\), while
\(Y_i-Y_i'\) is antisymmetric. Therefore,
\[
\mathbb E
\Big[\big\|
    \sum_{i=1}^n h_i(Y_i)
\big\|\Big]^2
=
\sum_{i=1}^n
\mathbb E[\|h_i(Y_i)\|^2]                 \le
\sum_{i=1}^n
\mathbb E\left[
    \|Y_i-Y_i'\|^2\mathbf 1_{B_i(t)^c}
\right].
\]
Using $ B_i(t)^c
    \subseteq
    \{\|Y_i\|>\sqrt{\Delta(t)}\}
    \cup
    \{\|Y_i'\|>\sqrt{\Delta(t)}\}$ and $ \|Y_i-Y_i'\|^2
    \le
    2\|Y_i\|^2+2\|Y_i'\|^2,$ we get
\[
    \mathbb E\left[
        \|Y_i-Y_i'\|^2\mathbf 1_{B_i(t)^c}
    \right]
    \le
    \frac{C}{\Delta(t)}\mathbb E\|Y_i\|^4 .
\]
Hence
\[
\mathbb E\Big[\big\|
    \mathbb E\big[
        \sum_{i=1}^n \bigl(D_i(t)+Y_i\bigr)
        \mid S
    \big]
\big\|^2\Big]
\le
\frac{CA}{\Delta(t)} .
\]
For the second-order term, write $\Sigma_i:=\mathbb E\left[Y_i^{\otimes 2}\right].$ Since \(\sum_{i=1}^n\Sigma_i=I_d\), we have
\[
\frac12\sum_{i=1}^nD_i(t)^{\otimes 2}-I_d
=
\frac12
\sum_{i=1}^n
\left[
    (Y_i'-Y_i)^{\otimes 2}
    -
    2\Sigma_i
\right]                                                    -
\frac12
\sum_{i=1}^n
(Y_i'-Y_i)^{\otimes 2}\mathbf 1_{B_i(t)^c}                 =: U(t)-V(t).
\]
The summands defining \(U(t)\) are independent and centered. Moreover,
\[
    \mathbb E\left[\left\|(Y_i'-Y_i)^{\otimes 2}\right\|^2\right ]
    =
    \mathbb E\left[\|Y_i'-Y_i\|^4\right ]
    \le
    C\mathbb E\left[\|Y_i\|^4\right],\qquad\|\Sigma_i\|^2
    \le
    \left(\mathbb E\left[\|Y_i\|^2\right]\right)^2
    \le
    \mathbb E\left[\|Y_i\|^4\right] .
\]
Therefore, $\mathbb E\left[\|U(t)\|^2\right]\le CA .$ Next, for \(V(t)\), the previous tail bound gives
\[
    \sum_{i=1}^n
    \mathbb E\left[
        \|Y_i'-Y_i\|^2\mathbf 1_{B_i(t)^c}
    \right]
    \le
    \frac{CA}{\Delta(t)}.
\]
On the other hand, using independence and centeredness,
\[
    \sum_{i=1}^n \mathbb E\left[\|Y_i'-Y_i\|^2\right]
    =
    2\sum_{i=1}^n\mathbb E\left[\|Y_i\|^2\right]
    =
    2d.
\]
Consequently,
\[
    \|\mathbb E [V(t)]\|
    \le
    C_d\min\left\{
        1,\frac{A}{\Delta(t)}
    \right\}.
\]
Furthermore,
\[
\sum_{i=1}^n
\mathbb E
\left[\left\|
    (Y_i'-Y_i)^{\otimes 2}\mathbf 1_{B_i(t)^c}
\right\|^2\right]
\le
\sum_{i=1}^n
\mathbb E\left[\|Y_i'-Y_i\|^4\right]                           \le
CA .
\]
Since the summands in \(V(t)\) are independent, we obtain
\[
\mathbb E\left[\|V(t)\|^2\right] \le
2\mathbb E\left[\|V(t)-\mathbb EV(t)\|^2\right]
+
2\|\mathbb E\left[V(t)\right]\|^2                                \le
C_dA
+
C_d\min\left\{
    1,\frac{A^2}{\Delta(t)^2}
\right\}.
\]
By Jensen's inequality,
\[
\frac{1}{\Delta(t)}
\mathbb E\Big[
\big\|
    \mathbb E\big[
        \frac12\sum_{i=1}^nD_i(t)^{\otimes 2}-I_d
        \mid S
    \big]
\big\|^2\Big]                                                 \le
\frac{C_dA}{\Delta(t)}
+
C_d\min\left\{
    \frac1{\Delta(t)},
    \frac{A^2}{\Delta(t)^3}
\right\}.
\]
It remains to control the higher-order terms. Let \(k\ge3\). Since
\(\|D_i(t)\|\le 2\sqrt{\Delta(t)}\),
\[
    \sum_{i=1}^n
    \mathbb E\left[\|D_i(t)\|^{2k}\right]
    \le
    C^k \Delta(t)^{k-2} A .
\]
For the mean terms, if \(k\) is odd, then $\mathbb E\left[D_i(t)^{\otimes k}\right]=0$ by symmetry. If \(k\ge4\) is even, then
\[
\left\|
    \sum_{i=1}^n
    \mathbb E\left[D_i(t)^{\otimes k}\right]
\right\|
 \le
\sum_{i=1}^n
\mathbb E\left[\|D_i(t)\|^k\right]                                   \le
C^k \Delta(t)^{(k-4)/2}A .
\]
We also have the crude estimate
\[
    \sum_{i=1}^n
    \mathbb E\left[\|D_i(t)\|^k\right]
    \le
    C_d C^k \Delta(t)^{(k-2)/2}.
\]
Therefore, for every \(k\ge3\),
\[
\begin{aligned}
\mathbb E\Big[
\big\|
    \sum_{i=1}^n D_i(t)^{\otimes k}
\big\|^2\Big]
\le
C^k\Delta(t)^{k-2}A
+
C_d C^k
\min\left\{
    \Delta(t)^{k-2},
    A^2\Delta(t)^{k-4}
\right\}.
\end{aligned}
\]
Another application of Jensen's inequality gives
\[
\begin{aligned}
\sum_{k\ge 3}
\frac{\left\|
    \mathbb E\left[
        \sum_{i=1}^n D_i(t)^{\otimes k}
        \,\middle|\,S
    \right]
\right\|^2}{k\,k!\,\Delta(t)^{k-1}} \le
C_d
\left(
    \frac{A}{\Delta(t)}
    +
    \min\left\{
        \frac1{\Delta(t)},
        \frac{A^2}{\Delta(t)^3}
    \right\}
\right),
\end{aligned}
\]
because \(\sum_{k\ge3}C^k/(k\,k!)<\infty\). Combining the first-, second-, and higher-order estimates yields
\[
    \mathbb E [H(t)]
    \le
    C_d
    \left(
        \frac{A}{\Delta(t)}
        +
        \min\left\{
            \frac1{\Delta(t)},
            \frac{A^2}{\Delta(t)^3}
        \right\}
    \right).
\]
Therefore,
\[
\begin{aligned}
\mathcal W_2\bigl(\law(S),\mathcal N(0,I_d)\bigr)
&\le
C_d
\int_0^\infty e^{-t}
\left[
    \left(\frac{A}{\Delta(t)}\right)^{1/2}
    +
    \min\left\{
        \frac1{\Delta(t)^{1/2}},
        \frac{A}{\Delta(t)^{3/2}}
    \right\}
\right]\,dt .
\end{aligned}
\]
Since $\int_0^\infty e^{-t}\Delta(t)^{-1/2}\d t=1,$ the first integral is bounded by \(C_d\sqrt A\). For the second integral,
use the change of variables \(u=\Delta(t)\). Then $  e^{-t}\d t
    =
    \frac12(1+u)^{-3/2}\d u
    \le
    \frac12\d u,$
and hence
\[
\begin{aligned}
\int_0^\infty e^{-t}
\min\left\{
    \frac1{\Delta(t)^{1/2}},
    \frac{A}{\Delta(t)^{3/2}}
\right\}\d t                                           &\le
\frac12
\int_0^\infty
\min\left\{
    u^{-1/2},
    A u^{-3/2}
\right\}\d u                                                  \\
&=
\frac12
\left(
    \int_0^A u^{-1/2}\d u
    +
    \int_A^\infty A u^{-3/2}\d u
\right)
\le
2\sqrt A .
\end{aligned}
\]
Thus
\[
    \mathcal W_2\bigl(\law(S),\mathcal N(0,I_d)\bigr)
    \le
    C_d\sqrt A .
\]
Squaring the above inequality completes the proof of Lemma \ref{lem:GA-independent}.
\end{proof}

\section{Proof of Proposition~\ref{prop:additive}}
\label{sec:proof-additive}
We couple the original recursion~\eqref{eq:update} and the additive-noise
recursion~\eqref{eq:auxiliary-additive} using the same data sequence, with
\[
    \theta_0^{(\alpha)}=a_0^{(\alpha)}=\theta^*,
    \qquad x_0\sim\mu.
\]
Only the data process is initialized in stationarity; no joint stationary
law for the coupled iterates is assumed. Define
\[
    Y_t^{(\alpha)}
    :=\frac{\theta_t^{(\alpha)}-\theta^*}{\sqrt\alpha},
    \qquad
    A_t^{(\alpha)}
    :=\frac{a_t^{(\alpha)}-\theta^*}{\sqrt\alpha}.
\]
We suppress the superscript $(\alpha)$ when there is no ambiguity.
Throughout this proof, $C<\infty$ denotes a constant independent of
$\alpha$ and $t$, whose value may change from line to line. Throughout
the remainder of the paper, $\alpha_0>0$ denotes a generic stepsize
threshold, while $t_\alpha$ denotes a generic time depending on $\alpha$;
their values may change from one occurrence to another.

Fix $r\in(\gamma,\sqrt\gamma)$ and then choose $\eta>0$ sufficiently
small that
\begin{equation}
\label{eq:additive-contraction-margin}
    \gamma\frac{u_{cm}}{l_{cm}}\leq r.
\end{equation}
Such a choice is possible because $u_{cm}/l_{cm}\to1$ as
$\eta\downarrow0$. Both $r$ and $\eta$ are fixed independently of
$\alpha$ and $t$. Let
\[
    \Delta_t:=Y_t-A_t,
    \qquad m_t:=\E[M_\eta(\Delta_t)].
\]
By Lemma~\ref{lem:fourthmoment}, applied separately to the two recursions
with the initialization above, for every sufficiently small $\alpha$
there is $t_\alpha$ such that
\begin{equation}
\label{eq:additive-uniform-moments}
    \sup_{t\geq t_\alpha}
    \left(\E\|Y_t\|^4+\E\|A_t\|^4+m_t\right)\leq C.
\end{equation}
Writing $h_t:=h(x_t)$, the scaled recursions satisfy
\begin{align*}
    Y_{t+1}
    &=(1-\alpha)Y_t+\sqrt\alpha\bigl(
        \tmT(x_t,\theta^*+\sqrt\alpha Y_t)
        -\tmT(x_t,\theta^*)+h_t\bigr),\\
    A_{t+1}
    &=(1-\alpha)A_t+\sqrt\alpha\bigl(
        \mT(\theta^*+\sqrt\alpha A_t)-\mT(\theta^*)+h_t\bigr).
\end{align*}
Subtracting gives
\begin{equation}
\label{eq:additive-delta-recursion}
    \Delta_{t+1}=(1-\alpha)\Delta_t+R_t,
\end{equation}
where
\[
    R_t:=\sqrt\alpha\Bigl[
        \tmT(x_t,\theta^*+\sqrt\alpha Y_t)-\tmT(x_t,\theta^*)
        -\mT(\theta^*+\sqrt\alpha A_t)+\mT(\theta^*)
    \Bigr].
\]
The smoothness and homogeneity properties in
Lemma~\ref{lem:Moreau-envelope} imply
\begin{equation}
\label{eq:maininduction}
    m_{t+1}
    \leq(1-\alpha)^2m_t+(1-\alpha)T_{1,t}
        +\frac{1}{2\eta}T_{2,t},
    \qquad
    T_{1,t}:=\E\langle\nabla M_\eta(\Delta_t),R_t\rangle,
    \quad T_{2,t}:=\E\|R_t\|^2.
\end{equation}
The following two lemmas provide the needed bounds. In the Markovian
case, the first bound retains a telescoping Poisson term rather than
assuming that its expectation vanishes.

\begin{lemma}
\label{lem:additiveT1}
Under the coupling and initialization above, there exist
$C<\infty$ and $\alpha_0>0$ such that, for every
$\alpha\in(0,\alpha_0)$, there are a time $t_\alpha$ and a real sequence
$\{u_t\}_{t\geq0}$ satisfying, for all $t\geq t_\alpha$,
\begin{equation}
\label{eq:additive-T1-bound}
    T_{1,t}
    \leq2\sqrt\gamma\,\alpha m_t
        +\sqrt\alpha\,(u_t-u_{t+1})+C\alpha^2,
    \qquad
    |u_t|\leq C\sqrt\alpha\,\sqrt{m_t}.
\end{equation}
In the i.i.d.\ case, one may take $u_t=0$. In the Markovian case,
$u_t:=\E[\mathcal U_\alpha(\Delta_t,Y_t,x_t)]$, where
$\mathcal U_\alpha$ is the Poisson solution constructed in
Section~\ref{sec:proof-additiveT1}.
\end{lemma}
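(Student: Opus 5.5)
The plan is to split $R_t$ into a drift part and a zero-mean (or martingale-like) noise part, and to bound the inner product with $\nabla M_\eta(\Delta_t)$ term by term. Write $\tmT(x_t,\theta^*+\sqrt\alpha Y_t)-\tmT(x_t,\theta^*) = \mT(\theta^*+\sqrt\alpha Y_t)-\mT(\theta^*) + \xi_t$, where
\[
    \xi_t:=\bigl[\tmT(x_t,\theta^*+\sqrt\alpha Y_t)-\tmT(x_t,\theta^*)\bigr]-\bigl[\mT(\theta^*+\sqrt\alpha Y_t)-\mT(\theta^*)\bigr].
\]
Then $R_t=\sqrt\alpha[\mT(\theta^*+\sqrt\alpha Y_t)-\mT(\theta^*+\sqrt\alpha A_t)]+\sqrt\alpha\,\xi_t$.

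\textbf{Drift term.} By Lemma~\ref{lem:Moreau-envelope}(4), the norm equivalences in Lemma~\ref{lem:Moreau-envelope}(3), and Assumption~\ref{assumption:contraction},
\[
    \sqrt\alpha\,\langle\nabla M_\eta(\Delta_t),\mT(\theta^*+\sqrt\alpha Y_t)-\mT(\theta^*+\sqrt\alpha A_t)\rangle\le \alpha\|\Delta_t\|_m\,\gamma\frac{u_{cm}}{l_{cm}}\|\Delta_t\|_m\le 2r\alpha M_\eta(\Delta_t).
\]
Here \eqref{eq:additive-contraction-margin} is used, together with $r<\sqrt\gamma$. Taking expectations gives the $2\sqrt\gamma\,\alpha m_t$ term.

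\textbf{Noise term, i.i.d.\ case.} Here $x_t$ is independent of $(\Delta_t,Y_t)$, which are $\mathcal F_{t-1}$-measurable. Hence $\E[\xi_t\mid\mathcal F_{t-1}]=0$, the cross term vanishes, and we take $u_t=0$.

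\textbf{Noise term, Markovian case.} This is the main obstacle. Define $F(\delta,y,x):=\langle\nabla M_\eta(\delta),\tmT(x,\theta^*+\sqrt\alpha y)-\tmT(x,\theta^*)\rangle$, which is centered under $\mu$ after subtracting the mean-operator part. Introduce the Poisson solution
\[
    \mathcal U_\alpha(\delta,y,x):=\sum_{k\ge0}\Bigl(P^kF(\delta,y,\cdot)(x)-\mu F(\delta,y,\cdot)\Bigr).
\]
Uniform ergodicity \eqref{eq:uniform-GE}, together with the bounds $\|\nabla M_\eta(\delta)\|\lesssim\|\delta\|_m$ and $\|\tmT(x,\theta^*+\sqrt\alpha y)-\tmT(x,\theta^*)\|\le L\sqrt\alpha\|y\|$, gives
\[
    |\mathcal U_\alpha(\delta,y,x)|\lesssim\sqrt\alpha\,\|\delta\|\,\|y\|.
\]
Note that $F$ is bilinear-type in $\delta$ through $\nabla M_\eta$ (Lemma~\ref{lem:Moreau-envelope}(5)) and Lipschitz in $y$.

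The Poisson equation then yields
\[
    \E[F(\Delta_t,Y_t,x_t)-\mu F]=\E[\mathcal U_\alpha(\Delta_t,Y_t,x_t)-P\mathcal U_\alpha(\Delta_t,Y_t,x_t)].
\]
Rewrite the right-hand side as
\[
    (u_t-u_{t+1})+\E\bigl[\mathcal U_\alpha(\Delta_{t+1},Y_{t+1},x_{t+1})-\mathcal U_\alpha(\Delta_t,Y_t,x_{t+1})\bigr],
\]
using the Markov property $\E[\mathcal U_\alpha(\Delta_t,Y_t,x_{t+1})\mid\mathcal F_t]=P\mathcal U_\alpha(\Delta_t,Y_t,x_t)$.

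The remaining difference is controlled by the regularity of $\mathcal U_\alpha$ in $(\delta,y)$. The map $\nabla M_\eta$ is $1/\eta$-Lipschitz, and $\tmT$ is $L$-Lipschitz, so the difference is at most a constant times
\[
    \sqrt\alpha\bigl(\|\Delta_{t+1}-\Delta_t\|\,\|Y_t\|+\|\Delta_t\|\,\|Y_{t+1}-Y_t\|\bigr).
\]
The one-step increments of the scaled iterates are $O(\sqrt\alpha(1+\|Y_t\|+\|A_t\|))$. This gives a bound of order $\alpha\,\E[(1+\|Y_t\|^2+\|A_t\|^2)(\|\Delta_t\|+\ldots)]$.

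This is the delicate point: we need the extra factor $\sqrt\alpha$ so that, after multiplying by the outer $\sqrt\alpha$, the term becomes $C\alpha^2$. If it falls short, I would absorb the excess via Young's inequality into $\epsilon\alpha m_t$, using the slack $r<\sqrt\gamma$, plus $C\alpha^2$. I would also use the uniform fourth-moment bound \eqref{eq:additive-uniform-moments} for $t\ge t_\alpha$ to make all the moment factors $O(1)$.

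Finally, the estimate $|u_t|\le C\sqrt\alpha\sqrt{m_t}$ follows from Cauchy--Schwarz applied to $\E[\|\Delta_t\|\,\|Y_t\|]$, again using \eqref{eq:additive-uniform-moments}.
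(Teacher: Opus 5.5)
There is a genuine gap at the step you flag as delicate, and your Young fallback does not close it. Your overall structure matches the paper's: the mean-operator part is bounded by $2r\alpha m_t$ via the Moreau envelope, the i.i.d.\ case uses conditional centering, and the Markovian case uses a Poisson solution in $x$ with $(\delta,y)$ frozen, telescoped into $u_t-u_{t+1}$ plus a regularity remainder.

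The problem is your increment bound $\|\Delta_{t+1}-\Delta_t\|=O\bigl(\sqrt\alpha(1+\|Y_t\|+\|A_t\|)\bigr)$. With it, the part of the remainder coming from the $\delta$-variation of $\mathcal U_\alpha$ contributes to $T_{1,t}$ a term bounded only by
\[
\sqrt\alpha\cdot C\sqrt\alpha\,\E\bigl[(\|Y_t\|+\|Y_{t+1}\|)\,\|\Delta_{t+1}-\Delta_t\|\bigr]\le C\alpha^{3/2}\,\E\bigl[(1+\|Y_t\|)(1+\|Y_t\|+\|A_t\|)\bigr]=O(\alpha^{3/2}).
\]
This term carries no factor of $\|\Delta_t\|$. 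Young's inequality with the margin $\sqrt\gamma-r$ can only convert terms of the form $\alpha^{3/2}\sqrt{m_t}$ into $\epsilon\alpha m_t+C\alpha^2$; a bare $\alpha^{3/2}$ cannot be absorbed. You would therefore end up with $C\alpha^{3/2}$ in place of $C\alpha^2$. Fed into the drift recursion, that gives only $\limsup_t m_t\lesssim\sqrt\alpha$, i.e.\ an $\alpha^{1/4}$ rate in Proposition~\ref{prop:additive} rather than $\sqrt\alpha$.

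The missing observation is that both scaled recursions are driven by the same additive term $\sqrt\alpha\,h(x_t)$, which cancels in the difference. Indeed $\Delta_{t+1}-\Delta_t=-\alpha\Delta_t+R_t$, and the pointwise Lipschitz bounds on $\tmT(x,\cdot)$ and $\mT$ give $\|R_t\|\le C\alpha(\|Y_t\|+\|A_t\|)$. Hence $\|\Delta_{t+1}-\Delta_t\|\le C\alpha(\|\Delta_t\|+\|Y_t\|)$, a full factor $\sqrt\alpha$ better than the increments of $Y_t$ or $A_t$ individually.

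With this bound the $\delta$-variation term is $O(\alpha^2)$ outright. Only the $y$-variation term, of size $C\alpha^{3/2}\,\E[\|\Delta_t\|(1+\|Y_t\|+\|A_t\|)]\le C\alpha^{3/2}\sqrt{m_t}$, needs the Young absorption, exactly as you intended. For that term your cruder $O(\sqrt\alpha)$ bound on $Y_{t+1}-Y_t$ is harmless, because it multiplies $\|\Delta_t\|$. This is how the paper obtains the $C\alpha^2$ remainder.
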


\begin{lemma}
\label{lem:additiveT2}
Under the same coupling and initialization, there exist
$C<\infty$ and $\alpha_0>0$ such that, for every
$\alpha\in(0,\alpha_0)$, there is $t_\alpha$ for which
$T_{2,t}\leq C\alpha^2$ for all $t\geq t_\alpha$.
\end{lemma}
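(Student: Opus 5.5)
The claim is that $T_{2,t}=\E\|R_t\|^2\le C\alpha^2$ for $t\ge t_\alpha$. I would bound $R_t$ pointwise by the Lipschitz properties of $\tmT$ and $\mT$ and then use the uniform moment bounds \eqref{eq:additive-uniform-moments}.

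\textbf{Pointwise bound on $R_t$.} Write
\[
R_t=\sqrt\alpha\Bigl[\bigl(\tmT(x_t,\theta^*+\sqrt\alpha Y_t)-\tmT(x_t,\theta^*)\bigr)-\bigl(\mT(\theta^*+\sqrt\alpha A_t)-\mT(\theta^*)\bigr)\Bigr].
\]
In the Markovian case, Assumption~\ref{assumption:noise:markovian} gives $\|\tmT(x,\theta)-\tmT(x,\theta')\|\le L\|\theta-\theta'\|$ for every $x$. Hence the first bracket is at most $L\sqrt\alpha\|Y_t\|$ almost surely. For the second bracket, contraction in $\|\cdot\|_c$ together with norm equivalence (or directly averaging the Lipschitz bound over $\mu$) gives $\|\mT(\theta)-\mT(\theta')\|\le C\|\theta-\theta'\|$. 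So the second bracket is at most $C\sqrt\alpha\|A_t\|$. Therefore
\[
\|R_t\|\le C\alpha(\|Y_t\|+\|A_t\|),
\qquad
\|R_t\|^2\le C\alpha^2(\|Y_t\|^2+\|A_t\|^2).
\]

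\textbf{I.i.d.\ case.} The per-realization Lipschitz bound is unavailable, so I would condition on $\mathcal F_{t-1}$. Since $x_t$ is independent of $Y_t$, Assumption~\ref{assumption:noise:iid} gives
\[
\E\bigl[\|\tmT(x_t,\theta^*+\sqrt\alpha Y_t)-\tmT(x_t,\theta^*)\|^2\,\big|\,Y_t\bigr]\le L^2\alpha\|Y_t\|^2,
\]
using the fourth-moment Lipschitz bound and Jensen's inequality. The $\mT$ part is handled exactly as in the Markovian case. This again yields $\E\|R_t\|^2\le C\alpha^2(\E\|Y_t\|^2+\E\|A_t\|^2)$.

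\textbf{Conclusion.} By \eqref{eq:additive-uniform-moments} and Jensen's inequality, $\E\|Y_t\|^2+\E\|A_t\|^2\le C$ for $t\ge t_\alpha$, giving $T_{2,t}\le C\alpha^2$.

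\textbf{Main obstacle.} The only delicate point is the i.i.d.\ conditioning step, which requires $Y_t$ to be measurable with respect to $x_0,\dots,x_{t-1}$ and the initial condition. That holds here because both recursions are driven by the same data sequence with deterministic initialization at $\theta^*$. The uniform fourth-moment bound from Lemma~\ref{lem:fourthmoment} is otherwise all that is needed.
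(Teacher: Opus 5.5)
Your proof is correct and follows the paper's argument: split $R_t$ into the random-operator and mean-operator increments, bound the first with the pointwise Lipschitz condition (Markovian case) or conditionally on $x_0,\ldots,x_{t-1}$ via the fourth-moment Lipschitz condition and Jensen (i.i.d.\ case), bound the second with the Lipschitz property of $\mT$, and finish with the uniform moment bound \eqref{eq:additive-uniform-moments}. The paper applies $\|a-b\|^2\le 2\|a\|^2+2\|b\|^2$ directly instead of first bounding $\|R_t\|$ pointwise, but the two routes are the same argument.
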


Substituting these bounds into \eqref{eq:maininduction} yields
\[
    m_{t+1}
    \leq\bigl((1-\alpha)^2+2\sqrt\gamma\,\alpha(1-\alpha)\bigr)m_t
        +(1-\alpha)\sqrt\alpha\,(u_t-u_{t+1})+C\alpha^2.
\]
Set
\[
    \lambda:=1-\sqrt\gamma>0,
    \qquad q_\alpha:=1-\lambda\alpha,
    \qquad b_\alpha:=(1-\alpha)\sqrt\alpha.
\]
For sufficiently small $\alpha$, the coefficient of $m_t$ is at most
$q_\alpha\in(0,1)$, because it equals
$1-2\lambda\alpha+(1-2\sqrt\gamma)\alpha^2$. Thus
\begin{equation}
\label{eq:additive-poisson-drift}
    m_{t+1}\leq q_\alpha m_t+b_\alpha(u_t-u_{t+1})+C\alpha^2,
    \qquad t\geq t_\alpha.
\end{equation}
Fix an integer $t_0\geq t_\alpha$. Iterating
\eqref{eq:additive-poisson-drift}, for every $t>t_0$,
\begin{align}
    m_t
    &\leq q_\alpha^{t-t_0}m_{t_0}
        +b_\alpha\sum_{k=t_0}^{t-1}
            q_\alpha^{t-1-k}(u_k-u_{k+1})
        +\frac{C\alpha^2}{1-q_\alpha}.
    \label{eq:additive-drift-unrolled}
\end{align}
Summation by parts gives the exact identity
\begin{align*}
    \sum_{k=t_0}^{t-1}q_\alpha^{t-1-k}(u_k-u_{k+1})
    &=q_\alpha^{t-t_0-1}u_{t_0}-u_t
      +(1-q_\alpha)\sum_{k=t_0+1}^{t-1}q_\alpha^{t-1-k}u_k.
\end{align*}
Consequently, using \eqref{eq:additive-T1-bound} and
\eqref{eq:additive-uniform-moments},
\[
    \left|\sum_{k=t_0}^{t-1}
        q_\alpha^{t-1-k}(u_k-u_{k+1})\right|
    \leq2\sup_{k\geq t_0}|u_k|
    \leq C\sqrt\alpha.
\]
Since $b_\alpha\leq\sqrt\alpha$ and
$1-q_\alpha=\lambda\alpha$, \eqref{eq:additive-drift-unrolled} implies
\[
    m_t\leq q_\alpha^{t-t_0}m_{t_0}+C\alpha,
    \qquad
    \limsup_{t\to\infty}m_t\leq C\alpha.
\]
Assumption~\ref{assumption:convergence} gives convergence of the original
marginal law to $\law(Y_\infty^{(\alpha)})$ in $\W_2$ for each fixed
$\alpha$. The corresponding convergence of the additive-noise marginal
to $\law(A_\infty^{(\alpha)})$ follows from the direct contraction
argument in Section~\ref{sec:additive-marginal-convergence} below.
Therefore, the triangle inequality gives
\begin{align*}
    &\W_2\!\left(\law(Y_\infty^{(\alpha)}),
                  \law(A_\infty^{(\alpha)})\right)\\
    &\quad\leq\limsup_{t\to\infty}\Bigl[
        \W_2\!\left(\law(Y_\infty^{(\alpha)}),\law(Y_t)\right)
        +\W_2\!\left(\law(Y_t),\law(A_t)\right)
        +\W_2\!\left(\law(A_t),\law(A_\infty^{(\alpha)})\right)
    \Bigr]\\
    &\quad\leq\limsup_{t\to\infty}
        \bigl(\E\|\Delta_t\|^2\bigr)^{1/2}
    \leq C\sqrt{\limsup_{t\to\infty}m_t}
    \leq C\sqrt\alpha.
\end{align*}
This proves Proposition~\ref{prop:additive}.

\subsection{Proof of Lemma~\ref{lem:additiveT1}}
\label{sec:proof-additiveT1}
Define the centered operator fluctuation
\[
    \varepsilon_\alpha(y,x)
    :=\tmT(x,\theta^*+\sqrt\alpha y)-\tmT(x,\theta^*)
      -\mT(\theta^*+\sqrt\alpha y)+\mT(\theta^*).
\]
Then $\E_\mu[\varepsilon_\alpha(y,x)]=0$ for every fixed $y$, and
\begin{align*}
    T_{1,t}
    &=\sqrt\alpha\,
      \E\left\langle\nabla M_\eta(\Delta_t),
          \mT(\theta^*+\sqrt\alpha Y_t)
          -\mT(\theta^*+\sqrt\alpha A_t)\right\rangle+\sqrt\alpha\,
      \E\left\langle\nabla M_\eta(\Delta_t),
          \varepsilon_\alpha(Y_t,x_t)\right\rangle.
\end{align*}
The mean-operator contribution is bounded in both noise settings by
\begin{align*}
    \sqrt\alpha
      \E\left\langle\nabla M_\eta(\Delta_t),
          \mT(\theta^*+\sqrt\alpha Y_t)
          -\mT(\theta^*+\sqrt\alpha A_t)\right\rangle\leq\frac{\alpha\gamma}{l_{cm}}
        \E\bigl[\|\Delta_t\|_m\|\Delta_t\|_c\bigr]
    \leq\frac{2\alpha\gamma u_{cm}}{l_{cm}}m_t
    \leq2r\alpha m_t,
\end{align*}
where we used Lemma~\ref{lem:Moreau-envelope} and
\eqref{eq:additive-contraction-margin}.

\noindent\textbf{I.i.d.\ noise.}
Here $(Y_t,A_t)$ is a function of $x_0,\ldots,x_{t-1}$ and is independent
of $x_t$. Conditional centering makes the fluctuation contribution zero.
Thus $T_{1,t}\leq2r\alpha m_t\leq2\sqrt\gamma\,\alpha m_t$, proving
\eqref{eq:additive-T1-bound} with $u_t=0$.

\noindent\textbf{Markovian noise.}
For fixed $(\delta,y)\in\R^d\times\R^d$, let
\[
    \mathcal H_\alpha(\delta,y,x)
    :=\langle\nabla M_\eta(\delta),\varepsilon_\alpha(y,x)\rangle.
\]
This function is centered under $\mu$. By
Assumption~\ref{assumption:noise:markovian}, norm equivalence, and the
Lipschitz property of $\nabla M_\eta$,
\[
    \sup_x|\mathcal H_\alpha(\delta,y,x)|
    \leq C\sqrt\alpha\,\|\delta\|_c\|y\|_c.
\]
Define the canonical Poisson solution, with $P$ acting only on the
$x$-coordinate, by
\[
    \mathcal U_\alpha(\delta,y,\cdot)
    :=\sum_{k=0}^{\infty}P^k\mathcal H_\alpha(\delta,y,\cdot).
\]
Uniform ergodicity ensures convergence of this series and gives
\begin{equation}
\label{eq:additive-poisson-equation}
    \mathcal U_\alpha(\delta,y,\cdot)
      -P\mathcal U_\alpha(\delta,y,\cdot)
    =\mathcal H_\alpha(\delta,y,\cdot),
\end{equation}
as well as
\begin{equation}
\label{eq:additive-poisson-size}
    \sup_x|\mathcal U_\alpha(\delta,y,x)|
    \leq C\sqrt\alpha\,\|\delta\|_c\|y\|_c.
\end{equation}
Furthermore, the operator regularity and smoothness of $M_\eta$ imply
\begin{align*}
    |\mathcal H_\alpha(\delta',y',x)
        -\mathcal H_\alpha(\delta,y,x)|\leq C\sqrt\alpha\Bigl[
        (\|y\|_c+\|y'\|_c)\|\delta'-\delta\|_c
        +(\|\delta\|_c+\|\delta'\|_c)\|y'-y\|_c\Bigr].
\end{align*}
The difference on the left is also centered under $\mu$. Summing its
uniform-ergodicity bound yields
\begin{equation}
\label{eq:additive-poisson-regularity}
\begin{aligned}
    |\mathcal U_\alpha(\delta',y',x)
        -\mathcal U_\alpha(\delta,y,x)|\leq C\sqrt\alpha\Bigl[
        (\|y\|_c+\|y'\|_c)\|\delta'-\delta\|_c
        +(\|\delta\|_c+\|\delta'\|_c)\|y'-y\|_c\Bigr].
\end{aligned}
\end{equation}
Let $\mathcal F_t:=\sigma(x_0,\ldots,x_t)$ and set
\[
    \mathcal M_{t+1}
    :=\mathcal U_\alpha(\Delta_t,Y_t,x_{t+1})
       -P\mathcal U_\alpha(\Delta_t,Y_t,x_t).
\]
Because  $\Delta_t$ and $Y_t$ are $\mathcal F_t$-measurable, and the
Markov property gives
$\E[\mathcal M_{t+1}\mid\mathcal F_t]=0$. Equation
\eqref{eq:additive-poisson-equation} therefore gives the decomposition
\begin{equation}
\label{eq:additive-poisson-decomposition}
\begin{aligned}
    \mathcal H_\alpha(\Delta_t,Y_t,x_t)
    &=\mathcal U_\alpha(\Delta_t,Y_t,x_t)
      -\mathcal U_\alpha(\Delta_{t+1},Y_{t+1},x_{t+1})
      +\mathcal R_{t+1}^{\mathrm{PE}}+\mathcal M_{t+1},
\end{aligned}
\end{equation}
where
\[
    \mathcal R_{t+1}^{\mathrm{PE}}
    :=\mathcal U_\alpha(\Delta_{t+1},Y_{t+1},x_{t+1})
       -\mathcal U_\alpha(\Delta_t,Y_t,x_{t+1}).
\]
Thus, writing
$u_t:=\E[\mathcal U_\alpha(\Delta_t,Y_t,x_t)]$, we have the exact identity
\begin{equation}
\label{eq:additive-poisson-expectation}
    \E[\mathcal H_\alpha(\Delta_t,Y_t,x_t)]
    =u_t-u_{t+1}+\E[\mathcal R_{t+1}^{\mathrm{PE}}].
\end{equation}
By \eqref{eq:additive-poisson-size}, Cauchy--Schwarz, and
\eqref{eq:additive-uniform-moments},
\[
    |u_t|
    \leq C\sqrt\alpha\,
          (\E\|\Delta_t\|_c^2)^{1/2}(\E\|Y_t\|_c^2)^{1/2}
    \leq C\sqrt\alpha\,\sqrt{m_t},
    \qquad t\geq t_\alpha.
\]
It remains to bound the Poisson remainder. The coupled difference
recursion and the operator Lipschitz bounds give
\begin{equation}
\label{eq:additive-delta-increment}
    \|\Delta_{t+1}-\Delta_t\|_c
    \leq C\alpha\bigl(\|\Delta_t\|_c+\|Y_t\|_c\bigr).
\end{equation}
Similarly, boundedness of $h$ and the original scaled recursion imply
\begin{equation}
\label{eq:additive-Y-increment}
    \|Y_{t+1}-Y_t\|_c
    \leq C\bigl(\alpha\|Y_t\|_c+\sqrt\alpha\bigr).
\end{equation}
Using these estimates in \eqref{eq:additive-poisson-regularity}, for
$0<\alpha\leq1$, gives
\begin{align*}
    \sqrt\alpha\,\E|\mathcal R_{t+1}^{\mathrm{PE}}|
    &\leq C\alpha^2\Bigl(
        \E\|Y_t\|_c^2+\E[\|\Delta_t\|_c\|Y_t\|_c]\Bigr)
        +C\alpha^{3/2}\E\|\Delta_t\|_c
        +C\alpha^{5/2}\E\|Y_t\|_c\\
    &\leq C\alpha^2+C\alpha^{3/2}\sqrt{m_t},
    \qquad t\geq t_\alpha,
\end{align*}
where the second inequality follows from Cauchy--Schwarz,
\eqref{eq:additive-uniform-moments}, and norm equivalence. The strictly
positive margin $\sqrt\gamma-r$ permits the explicit Young bound
\[
    C\alpha^{3/2}\sqrt{m_t}
    \leq2(\sqrt\gamma-r)\alpha m_t
        +\frac{C^2}{8(\sqrt\gamma-r)}\alpha^2.
\]
Consequently,
\[
    \sqrt\alpha\,\E|\mathcal R_{t+1}^{\mathrm{PE}}|
    \leq2(\sqrt\gamma-r)\alpha m_t+C\alpha^2.
\]
Combining this estimate with \eqref{eq:additive-poisson-expectation}
and the $2r\alpha m_t$ mean-operator bound proves
\[
    T_{1,t}
    \leq\bigl(2r+2(\sqrt\gamma-r)\bigr)\alpha m_t
        +\sqrt\alpha\,(u_t-u_{t+1})+C\alpha^2,
\]
which completes the proof of
Lemma~\ref{lem:additiveT1}.

\subsection{Proof of Lemma~\ref{lem:additiveT2}}
\label{sec:proof-additiveT2}
The definition of $R_t$, the operator regularity, and Jensen's
inequality imply
\begin{align*}
    T_{2,t}
    &\leq2\alpha\,
        \E\|\tmT(x_t,\theta^*+\sqrt\alpha Y_t)-\tmT(x_t,\theta^*)\|^2
       +2\alpha\,
        \E\|\mT(\theta^*+\sqrt\alpha A_t)-\mT(\theta^*)\|^2\\
    &\leq C\alpha^2\bigl(\E\|Y_t\|^2+\E\|A_t\|^2\bigr)
    \leq C\alpha^2,
    \qquad t\geq t_\alpha.
\end{align*}
In the i.i.d.\ case, the first bound on the random-operator increment is
applied conditionally on $x_0,\ldots,x_{t-1}$, using independence of
$x_t$ and the fourth-moment Lipschitz condition. In the Markovian case,
the pointwise Lipschitz condition applies directly. The final inequality
uses \eqref{eq:additive-uniform-moments} and proves the lemma.

\subsection{Marginal Convergence of the Additive-Noise Auxiliary}
\label{sec:additive-marginal-convergence}
For completeness, we justify the auxiliary steady-state law and the
marginal convergence used above without applying
Assumption~\ref{assumption:convergence} to a different recursion.
For $0<\alpha<1$, define
\[
    \Phi_{\alpha,x}(a)
    :=(1-\alpha)a+\alpha\mT(a)+\alpha h(x),
    \qquad \varrho_\alpha:=1-\alpha(1-\gamma)\in(0,1).
\]
The deterministic synchronous contraction is
\begin{equation}
\label{eq:additive-synchronous-contraction}
    \|\Phi_{\alpha,x}(a)-\Phi_{\alpha,x}(a')\|_c
    \leq\varrho_\alpha\|a-a'\|_c,
    \qquad \forall\,a,a'\in\R^d,\ x\in\mathcal X.
\end{equation}
On a two-sided stationary extension $\{x_t\}_{t\in\mathbb Z}$, set
\[
    a_0^{[-n]}
    :=\Phi_{\alpha,x_{-1}}\circ\cdots\circ
        \Phi_{\alpha,x_{-n}}(\theta^*),
    \qquad n\geq1.
\]
Writing $\|Z\|_{L^2(c)}:=(\E\|Z\|_c^2)^{1/2}$, we obtain
\[
    \|a_0^{[-(n+1)]}-a_0^{[-n]}\|_{L^2(c)}
    \leq\alpha\varrho_\alpha^n\|h(x_0)\|_{L^2(c)}.
\]
The right-hand side is summable in $n$. Hence $a_0^{[-n]}$ converges in
$L^2$ to a random variable $a_0^{\mathrm{st}}$. Time shifts of this
construction define a stationary solution
$\{(x_t,a_t^{\mathrm{st}})\}_{t\in\mathbb Z}$ of the additive-noise
recursion, with a finite second moment in its iterate coordinate.
The recursion follows by passing to the limit through the Lipschitz
map $\Phi_{\alpha,x_t}$.

Couple the forward recursion initialized at $a_0=\theta^*$ with this
stationary solution using the same future data. By
\eqref{eq:additive-synchronous-contraction},
\[
    \|a_t-a_t^{\mathrm{st}}\|_{L^2(c)}
    \leq\varrho_\alpha^t
          \|\theta^*-a_0^{\mathrm{st}}\|_{L^2(c)}
    \longrightarrow0.
\]
Thus, with $\law(a_\infty^{(\alpha)}):=\law(a_0^{\mathrm{st}})$,
\[
    \W_2\!\left(\law(A_t),\law(A_\infty^{(\alpha)})\right)
    \longrightarrow0
    \qquad\text{as }t\to\infty
\]
for each fixed $\alpha$. Uniqueness of the invariant law with a finite
iterate second moment follows from the same contraction: couple two
invariant initial laws to have the same data state $x_0\sim\mu$ and the
same future data, drawing their initial iterates from their respective
conditional laws given $x_0$. Their iterate distance converges to zero
in $L^2$, while the data coordinates agree at every time, so their
stationary joint laws coincide. This constructs a stationary
\emph{auxiliary} process only; it does not assume stationarity of the
jointly coupled original and auxiliary iterates.

\section{Proof of Proposition  \ref{prop:Jacobian}}\label{sec:proof-Jacobian}
We study two diffusion-scaled processes,
\(\{A_t^{(\alpha)}\}_{t\ge 0}\) and
\(\{B_t^{(\alpha)}\}_{t\ge 0}\), defined by
\[
A_t^{(\alpha)}
:=
\frac{a_t^{(\alpha)}-\theta^*}{\sqrt{\alpha}},
\qquad
B_t^{(\alpha)}
:=
\frac{b_t^{(\alpha)}-\theta^*}{\sqrt{\alpha}}.
\]
We couple the two recursions using the same noise sequence and initialize
\(a_0^{(\alpha)}=b_0^{(\alpha)}=\theta^*\).
When the dependence on \(\alpha\) is clear, we suppress the
superscript \((\alpha)\) for notational simplicity. From \eqref{eq:auxiliary-additive} and \eqref{eq:auxiliary-Jacobian}, the dynamics of the scaled processes
can be written as
\begin{equation}
\begin{aligned}
A_{t+1}
&=
(1-\alpha)A_t
+
\sqrt{\alpha}\big(
\mT(\sqrt{\alpha}A_t+\theta^*)-\mT(\theta^*)+h_t
\big), \\
B_{t+1}
&=
(1-\alpha)B_t
+
\sqrt{\alpha}\big(
\sqrt{\alpha}J B_t+h_t
\big),
\end{aligned}
\end{equation}
Subtracting the second recursion from the
first gives
\begin{align*}
A_{t+1}-B_{t+1}
&=
(1-\alpha)(A_t-B_t) +
\sqrt{\alpha}\Big(
\mT(\sqrt{\alpha}A_t+\theta^*)
-\mT(\theta^*)
-\sqrt{\alpha}J B_t
\Big)\\
&=\big(I-\alpha I+\alpha J\big)(A_t-B_t) +
\sqrt{\alpha}\Big(
\mT(\sqrt{\alpha}A_t+\theta^*)
-\mT(\theta^*)
-\sqrt{\alpha}J A_t
\Big)
\end{align*}
By Assumption~\ref{assumption:local-quadratic},
\[
    \|\mT(\theta^*+u)-\mT(\theta^*)-Ju\|
    \leq L_R\|u\|^2,
    \qquad \|u\|\leq\epsilon.
\]
Since \(J=\nabla\mT(\theta^*)\), Assumption~\ref{assumption:contraction}
also implies \(\|Ju\|_c\leq\gamma\|u\|_c\) for every \(u\in\R^d\).
Consequently, contraction and norm equivalence give a constant \(C_0<\infty\)
such that
\[
    \|\mT(\theta^*+u)-\mT(\theta^*)-Ju\|
    \leq C_0\|u\|,
    \qquad \forall u\in\R^d.
\]
For \(\|u\|>\epsilon\), the right-hand side is at most
\((C_0/\epsilon)\|u\|^2\). Combining the two bounds, we obtain
\begin{equation}\label{eq:quadratic-remainder}
    \|\mT(\theta^*+u)-\mT(\theta^*)-Ju\|
    \lesssim \|u\|^2,
    \qquad \forall u\in\R^d.
\end{equation}
Let $G_\alpha:=I-\alpha I+\alpha J$. We will use the following lemma, whose proof is deferred to Section~\ref{sec:proof-Jacobian-hurwitz}.
\begin{lemma}\label{lem:Jacobian-hurwitz}
There exist constants
\(\alpha_0>0\), \(\lambda>0\), and a matrix \(P=P^\top\succ0\) such that, for
all \(\alpha\in(0,\alpha_0)\),
\(
G_\alpha^\top P G_\alpha
\preceq
(1-\lambda\alpha)P .
\)
\end{lemma}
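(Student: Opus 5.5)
The plan is to obtain the quadratic Lyapunov matrix $P$ from the continuous-time Lyapunov equation for the Hurwitz matrix $J-I_d$. Then I will check that the discrete-time perturbation $G_\alpha=I+\alpha(J-I_d)$ inherits a contraction of order $\alpha$, with the $\alpha^2$ terms absorbed for small $\alpha$.

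\textbf{Step 1 (Hurwitz property).} Assumption~\ref{assumption:contraction} together with differentiability at $\theta^*$ gives $\|Ju\|_c\le\gamma\|u\|_c$ for all $u$, so $\rho(J)\le\gamma<1$. Every eigenvalue $\lambda_J$ of $J$ then satisfies $\operatorname{Re}(\lambda_J-1)\le\gamma-1<0$, and hence $K:=J-I_d$ is Hurwitz. This was already noted before~\eqref{eq:Lyapunov}.

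\textbf{Step 2 (Lyapunov matrix).} Let $P$ be the unique solution of $K^\top P+PK=-I_d$, namely
\[
P=\int_0^\infty e^{K^\top s}e^{Ks}\,\d s .
\]
It is symmetric and positive definite because the integrand is positive definite and the integral converges by the Hurwitz property.

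\textbf{Step 3 (expansion).} Expanding the product gives
\[
G_\alpha^\top P G_\alpha=P+\alpha(K^\top P+PK)+\alpha^2K^\top PK=P-\alpha I_d+\alpha^2K^\top PK .
\]
Two elementary bounds control the remaining terms: $I_d\succeq \lambda_{\max}(P)^{-1}P$, and $K^\top PK\preceq \|K\|^2\lambda_{\max}(P)\,I_d\preceq \|K\|^2\kappa(P)\,P$, where $\kappa(P)=\lambda_{\max}(P)/\lambda_{\min}(P)$. Substituting both into the expansion yields
\[
G_\alpha^\top P G_\alpha\preceq\bigl(1-\alpha/\lambda_{\max}(P)+\alpha^2\|K\|^2\kappa(P)\bigr)P .
\]

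\textbf{Step 4 (choice of constants).} Set $\lambda:=1/(2\lambda_{\max}(P))$ and $\alpha_0:=\lambda/(\|K\|^2\kappa(P))$, with the convention $\alpha_0=\infty$ if $K=0$, and additionally take $\alpha_0\le 1/\lambda$ so that $1-\lambda\alpha>0$. For $\alpha<\alpha_0$ the $\alpha^2$ term is at most $\lambda\alpha$, which gives $G_\alpha^\top P G_\alpha\preceq(1-\lambda\alpha)P$.

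No step is a genuine obstacle. The only point needing care is Step 1: the spectral bound must come from contraction in the possibly non-Euclidean norm $\|\cdot\|_c$, and this works because operator-norm bounds in any norm dominate the spectral radius.
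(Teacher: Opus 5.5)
Your proposal is correct and follows the paper's proof essentially line for line. Both use the Lyapunov equation $K^\top P+PK=-I_d$ and the expansion $G_\alpha^\top PG_\alpha=P-\alpha I_d+\alpha^2K^\top PK$, together with the comparisons $I_d\succeq P/\lambda_{\max}(P)$ and $K^\top PK\preceq C P$, and both choose $\lambda=1/(2\lambda_{\max}(P))$ with $\alpha_0\le\lambda/C$. The only differences are cosmetic: you make $C=\|K\|^2\lambda_{\max}(P)/\lambda_{\min}(P)$ explicit and spell out why $J-I_d$ is Hurwitz, whereas the paper leaves both implicit.
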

Define the \(P\)-norm by $\|z\|_P^2:= z^\top P z$ for $z \in \R^d$. Therefore, by Lemma \ref{lem:Jacobian-hurwitz} and Young's inequality with parameter \(\lambda\alpha/2\),
\begin{align*}
&\E[\|A_{t+1}-B_{t+1}\|_P^2] \\
&\quad = \E[\|G_\alpha(A_t-B_t) +
\sqrt{\alpha}\big(
\mT(\sqrt{\alpha}A_t+\theta^*)
-\mT(\theta^*)
-\sqrt{\alpha}J A_t
\big)\|_P^2]\\
&\leq (1+\lambda\alpha/2)\E[\|G_\alpha(A_t-B_t)\|_P^2] \\
&\qquad + (1+2/(\lambda\alpha))\E[\|
\sqrt{\alpha}\big(
\mT(\sqrt{\alpha}A_t+\theta^*)
-\mT(\theta^*)
-\sqrt{\alpha}J A_t
\big)\|_P^2]\\
&\leq (1+\lambda\alpha/2)(1-\lambda\alpha)\E[\|A_t-B_t\|_P^2] + C\E[\|\mT(\sqrt{\alpha}A_t+\theta^*)
-\mT(\theta^*)
-\sqrt{\alpha}J A_t
\|_P^2]\\
&\leq (1-\lambda\alpha/2)\E[\|A_t-B_t\|_P^2] + C\E[\|\mT(\sqrt{\alpha}A_t+\theta^*)
-\mT(\theta^*)
-\sqrt{\alpha}J A_t
\|_P^2],
\end{align*}
where $C>0$ is a constant independent with $\alpha$ and $t$. By \eqref{eq:quadratic-remainder} and Lemma~\ref{lem:fourthmoment}, for all
sufficiently small \(\alpha>0\) and all \(t\geq t_\alpha\), we have
\begin{align*}
\E[\|\mT(\sqrt{\alpha}A_t+\theta^*)
-\mT(\theta^*)
-\sqrt{\alpha}J A_t
\|_P^2] \lesssim\alpha^2\E[\|A_t\|^4]
\lesssim \alpha^2.
\end{align*}
Therefore, there exists \(\alpha_0>0\) such that, for every
\(\alpha\in(0,\alpha_0)\), there exists \(t_\alpha>0\) for which
\begin{align*}
\E[\|A_{t+1}-B_{t+1}\|_P^2] &\leq (1-\lambda\alpha/2)\E[\|A_t-B_t\|_P^2]  + \mathcal{O}(\alpha^2).
\end{align*}
Therefore, we obtain
$
\limsup_{t \to \infty} \E[\|A_{t}-B_{t}\|_P^2]  \in \mathcal{O}(\alpha).
$
By triangle inequality, we have
\begin{align*}
\W_2\big(\mathcal{L}(A^{(\alpha
)}), \mathcal{L}({B }^{(\alpha
)})\big) &\lesssim \limsup_{t \to \infty} \W_2\big(\mathcal{L}(A^{(\alpha
)}), \mathcal{L}(A_{t})\big) +\W_2\big(\mathcal{L}(A_{t}), \mathcal{L}(B _{t})\big) +\W_2\big(\mathcal{L}(B _{t}), \mathcal{L}({B }^{(\alpha
)})\big)\\
&\lesssim \limsup_{t \to \infty} \sqrt{\E[\|A_{t}-B_{t}\|_P^2]}\in \mathcal{O}(\sqrt{\alpha}),
\end{align*}
which completes the proof of Theorem  \ref{prop:Jacobian}.

\subsection{Proof of Lemma \ref{lem:Jacobian-hurwitz}}\label{sec:proof-Jacobian-hurwitz}
\(H:=J-I_d\) is Hurwitz and since there exists \(P=P^\top\succ0\) such that $H^\top P+PH=-I.$ Therefore,
\begin{align*}
G_\alpha^\top P G_\alpha
=
(I+\alpha H)^\top P(I+\alpha H) =
P+\alpha(H^\top P+PH)+\alpha^2 H^\top P H =
P-\alpha I+\alpha^2 H^\top P H.
\end{align*}
Since \(P\succ 0\), we have $I \succeq \frac{1}{\lambda_{\max}(P)}P.$ Moreover, there exists a constant \(C_H<\infty\) such that $H^\top P H \preceq C_H P.$ Hence,
\[
G_\alpha^\top P G_\alpha
\preceq
P-\frac{\alpha}{\lambda_{\max}(P)}P+\alpha^2 C_H P.
\]
Choose $\alpha_0
\le
\frac{1}{2C_H\lambda_{\max}(P)}$ and set $\lambda:=\frac{1}{2\lambda_{\max}(P)}.$ Then, for all \(\alpha\in(0,\alpha_0)\),
\[
G_\alpha^\top P G_\alpha
\preceq
(1-\lambda\alpha)P,
\]
thereby completing the proof of Lemma \ref{lem:Jacobian-hurwitz}.
\section{Proof of Proposition \ref{prop:geometric-gaussian}}
\label{sec:proof-geometric-gaussian}

We focus on the Markovian setting. The i.i.d.\ case follows directly from
Lemma~\ref{lem:GA-independent} and does not require the Poisson-equation
decomposition below. Throughout the proof, \(C<\infty\) denotes a constant
independent of \(\alpha\) and \(t\), whose value may change from line to line.

For convenience, write $A:=I_d-J$ so that \(Q_\alpha=I_d-\alpha A\). Let \(g\) be the solution to the Poisson equation
\(g-Pg=h\) appearing in Lemma~\ref{lemma:GA-MDS}, and define
\[
    D_t:=g(x_t)-Pg(x_{t-1}),
    \qquad
    R_t:=Pg(x_t).
\]
Then \(\{D_t\}_{t\in\mathbb Z}\) is a stationary martingale-difference
sequence and
\begin{equation}
\label{eq:poisson-decomposition-h}
    h(x_t)
    =
    D_t+R_{t-1}-R_t.
\end{equation}
Indeed,
\[
    \sum_{t=1}^n h(x_t)
    =
    \sum_{t=1}^n D_t+R_0-R_n.
\]
Since \(R_0-R_n\) is bounded in \(L^2\), it is negligible under the
\(n^{-1/2}\) scaling. Consequently, the long-run covariance of
\(\{h(x_t)\}\) is $\Sigma
    =
    \E[D_0D_0^\top].$ Substituting \eqref{eq:poisson-decomposition-h} into
\eqref{eq:geometric-sum-representation}, we obtain $B_\infty^{(\alpha)}
    =
    M_\alpha+E_\alpha,$ where
\[
    M_\alpha
    :=
    \sqrt{\alpha}
    \sum_{k=0}^{\infty}
    Q_\alpha^kD_{-k-1},\qquad E_\alpha
    :=
    \sqrt{\alpha}
    \sum_{k=0}^{\infty}
    Q_\alpha^k
    \bigl(R_{-k-2}-R_{-k-1}\bigr).
\]
A summation by parts gives
\begin{align*}
    E_\alpha
    =
    \sqrt{\alpha}
    \left[
        -R_{-1}
        +
        \sum_{k=0}^{\infty}
        Q_\alpha^k(I_d-Q_\alpha)R_{-k-2}
    \right]=
    \sqrt{\alpha}
    \left[
        -R_{-1}
        +
        \alpha
        \sum_{k=0}^{\infty}
        Q_\alpha^k A R_{-k-2}
    \right].
\end{align*}
Hence, by stationarity, Minkowski's inequality, and
\eqref{eq:Qalpha-geometric-decay},
\begin{align*}
    \|E_\alpha\|_{L^2}
    \leq
    \sqrt{\alpha}\,\|R_0\|_{L^2}
    \left(
        1+
        \alpha\|A\|_{\mathrm{op}}
        \sum_{k=0}^{\infty}
        \|Q_\alpha^k\|_{\mathrm{op}}
    \right)\leq
    C\sqrt{\alpha}.
\end{align*}
Using the coupling
\(B_\infty^{(\alpha)}=M_\alpha+E_\alpha\), we therefore have
\begin{equation}
\label{eq:Salpha-Malpha-W2}
    \W_2\left(
        \law(B_\infty^{(\alpha)}),
        \law(M_\alpha)
    \right)
    \leq
    C\sqrt{\alpha}.
\end{equation}
Let \(V\) be the unique positive semidefinite solution to
\begin{equation}
\label{eq:continuous-Lyapunov-V}
    AV+VA^\top=\Sigma,
\end{equation}
or equivalently, $V
    =
    \int_0^\infty
    e^{-sA}\Sigma e^{-sA^\top}\,\mathrm ds.$ The covariance of \(M_\alpha\) is
\[
    V_\alpha
    :=
    \operatorname{Cov}(M_\alpha)
    =
    \alpha
    \sum_{k=0}^{\infty}
    Q_\alpha^k\Sigma(Q_\alpha^k)^\top.
\]
Let $\mathsf H:=\operatorname{Range}(V)$ and $r:=\dim(\mathsf H).$ For any \(u\in\ker(V)\),
\[
    0
    =
    u^\top Vu
    =
    \int_0^\infty
    \left\|
        \Sigma^{1/2}e^{-sA^\top}u
    \right\|^2
    \,\mathrm ds.
\]
Then, we have
\[
    \Sigma^{1/2}e^{-sA^\top}u=0,
    \qquad s\geq0.
\]
Setting \(s=0\) gives \(\Sigma u=0\), and therefore
\[
    \operatorname{Range}(\Sigma)\subseteq\operatorname{Range}(V)=\mathsf H.
\]
Moreover, differentiating the preceding identity in \(s\) shows that
\(A^\top u\in\ker(V)\). Thus \(\ker(V)\) is invariant under \(A^\top\),
or equivalently, \(\mathsf H\) is invariant under \(A\), and hence also
under \(Q_\alpha=I_d-\alpha A\). Since
\[
    \E[(u^\top D_t)^2]
    =
    u^\top\Sigma u
    =
    0,
    \qquad
    u\in\ker(\Sigma),
\]
we have
\[
    D_t\in\operatorname{Range}(\Sigma)\subseteq\mathsf H
    \qquad\text{a.s.}
\]
Consequently,
\[
    M_\alpha\in\mathsf H
    \qquad\text{a.s.}
\]
If \(r=0\), then \(\Sigma=0\), so \(D_t=0\) almost surely and hence
\(M_\alpha=0\). Since \(V=0\), \eqref{eq:Salpha-Malpha-W2} immediately
yields
\[
    \W_2\left(
        \law(B_\infty^{(\alpha)}),
        \mathcal N(0,V)
    \right)
    \leq
    C\sqrt{\alpha}.
\]
We therefore assume \(r\geq1\). Let \(U\in\R^{d\times r}\) have orthonormal columns spanning
\(\mathsf H\), and define
\[
    \overline V:=U^\top VU,
    \qquad
    \overline V_\alpha:=U^\top V_\alpha U.
\]
By construction, \(\overline V\succ0\). The covariance \(V_\alpha\) satisfies $V_\alpha-Q_\alpha V_\alpha Q_\alpha^\top
    =
    \alpha\Sigma.$ On the other hand, \eqref{eq:continuous-Lyapunov-V} gives
\begin{align*}
    V-Q_\alpha VQ_\alpha^\top
    =
    V-(I_d-\alpha A)V(I_d-\alpha A)^\top=
    \alpha\Sigma-\alpha^2AVA^\top.
\end{align*}
Subtracting the two identities and iterating yields
\begin{equation}
\label{eq:Valpha-minus-V}
    V_\alpha-V
    =
    \alpha^2
    \sum_{k=0}^{\infty}
    Q_\alpha^kAVA^\top(Q_\alpha^k)^\top
    \succeq0.
\end{equation}
In particular, $\overline V_\alpha
    \succeq
    \overline V
    \succ0,$ so that $\|\overline V_\alpha^{-1}\|_{\mathrm{op}}
    \leq
    \lambda_{\min}(\overline V)^{-1}.$ Moreover,
\[
    \|\overline V_\alpha\|_{\mathrm{op}}
    \leq
    \|V_\alpha\|_{\mathrm{op}}
    \leq
    \alpha\|\Sigma\|_{\mathrm{op}}
    \sum_{k=0}^{\infty}
    \|Q_\alpha^k\|_{\mathrm{op}}^2
    \leq C.
\]
Therefore,
\begin{equation}
\label{eq:Valpha-uniform-bounds}
    \sup_{\alpha}
    \left(
        \|\overline V_\alpha\|_{\mathrm{op}}
        +
        \|\overline V_\alpha^{-1}\|_{\mathrm{op}}
    \right)
    <\infty,
\end{equation}
For \(n\geq1\), define $M_{\alpha,n}
    :=
    \sqrt{\alpha}
    \sum_{k=0}^{n-1}
    Q_\alpha^kD_{-k-1},$ with covariance
\[
    V_{\alpha,n}
    :=
    \alpha
    \sum_{k=0}^{n-1}
    Q_\alpha^k\Sigma(Q_\alpha^k)^\top,
    \qquad
    \overline V_{\alpha,n}
    :=
    U^\top V_{\alpha,n}U.
\]
By \eqref{eq:Qalpha-geometric-decay},
\[
    \|V_\alpha-V_{\alpha,n}\|_{\mathrm{op}}
    \leq
    C\alpha
    \sum_{k=n}^{\infty}e^{-2c\alpha k}
    \leq
    Ce^{-2c\alpha n}.
\]
Hence, there exists \(C_0<\infty\) such that, for all sufficiently small
\(\alpha\) and all \(n\geq C_0/\alpha\),
\begin{equation}
\label{eq:finite-covariance-conditioning}
    \overline V_{\alpha,n}
    \succeq
    \frac12\overline V.
\end{equation}
Since $\overline V_{\alpha,n}
    \preceq
    \overline V_\alpha$ and \eqref{eq:Valpha-uniform-bounds} holds, it follows that
\[
    \|\overline V_{\alpha,n}^{1/2}\|_{\mathrm{op}}
    +
    \|\overline V_{\alpha,n}^{-1/2}\|_{\mathrm{op}}
    \leq C,
\]
where \(C\) is independent of \(\alpha\) and \(n\) whenever
\(n\geq C_0/\alpha\). Let
\[
    \overline D_i:=U^\top D_i,
    \qquad
    \overline Q_\alpha:=U^\top Q_\alpha U.
\]
Since \(\mathsf H\) is invariant under \(Q_\alpha\), stationarity gives
\[
    U^\top M_{\alpha,n}
    \overset{\mathrm d}{=}
    \sqrt{\alpha}
    \sum_{i=1}^n
    \overline Q_\alpha^{\,n-i}\overline D_i.
\]
Moreover, \(\{\overline D_i\}\) is precisely the martingale-difference
sequence associated with the projected Poisson equation
\[
    U^\top g-P(U^\top g)=U^\top h.
\]
Fix \(n\geq C_0/\alpha\), and define
\[
    Y_i
    :=
    \overline V_{\alpha,n}^{-1/2}
    \sqrt{\alpha}\,
    \overline Q_\alpha^{\,n-i}\overline D_i,
    \qquad
    i=1,\ldots,n.
\]
Then $\operatorname{Cov}\left(\sum_{i=1}^nY_i\right)
    =
    I_r.$ Using Lemma~\ref{lemma:GA-MDS}, \eqref{eq:Qalpha-geometric-decay} and
\eqref{eq:finite-covariance-conditioning}, we have
\begin{align*}
    \W_2^2\left(
        \law\left(
            \overline V_{\alpha,n}^{-1/2}
            U^\top M_{\alpha,n}
        \right),
        \mathcal N(0,I_r)
    \right)
    \leq C\alpha.
\end{align*}
Applying the linear map
\(z\mapsto U\overline V_{\alpha,n}^{1/2}z\) gives
\begin{equation}
\label{eq:finite-martingale-GA}
    \W_2\left(
        \law(M_{\alpha,n}),
        \mathcal N(0,V_{\alpha,n})
    \right)
    \leq
    C\sqrt{\alpha}.
\end{equation}
As \(n\to\infty\), $M_{\alpha,n}$ converges to $M_\alpha$ in $L^2$ and $V_{\alpha,n}\to  V_\alpha.$ Hence
\[
    \mathcal N(0,V_{\alpha,n})
    \longrightarrow
    \mathcal N(0,V_\alpha)
    \qquad\text{in }\W_2.
\]
Letting \(n\to\infty\) in \eqref{eq:finite-martingale-GA} yields
\begin{equation}
\label{eq:infinite-martingale-GA}
    \W_2\left(
        \law(M_\alpha),
        \mathcal N(0,V_\alpha)
    \right)
    \leq
    C\sqrt{\alpha}.
\end{equation}
By \eqref{eq:Valpha-minus-V} and
\eqref{eq:Qalpha-geometric-decay} gives
\begin{align}
    \W_2^2\left(
        \mathcal N(0,V_\alpha),
        \mathcal N(0,V)
    \right)
    &\leq
    C\alpha^2
    \sum_{k=0}^{\infty}e^{-2c\alpha k}
    \leq
    C\alpha.
\label{eq:Gaussian-covariance-W2}
\end{align}
Combining
\eqref{eq:Salpha-Malpha-W2},
\eqref{eq:infinite-martingale-GA}, and
\eqref{eq:Gaussian-covariance-W2} with the triangle inequality gives
\begin{align*}
    \W_2\left(
        \law(B_\infty^{(\alpha)}),
        \mathcal N(0,V)
    \right)\leq
    C\sqrt{\alpha}.
\end{align*}
This completes the proof of Proposition~\ref{prop:geometric-gaussian}.
\section{Proof of Proposition \ref{prop:uniform}}\label{sec:proof-uniform}
We prove a strengthened version of the proposition that also specifies
the coupling's measurability and conditional independence properties at
block boundaries. For each fixed \(n\in\mathbb N\), the coupling can be
realized sequentially using mutually independent auxiliary random
elements \(\{\zeta_m\}_{m\geq0}\), with the entire family independent of
the original Markov chain. Define
\(\mathcal F_0^{(n)}:=\{\varnothing,\Omega\}\) and
\begin{equation}
\label{eq:block-boundary-filtration}
    \mathcal F_m^{(n)}
    :=
    \sigma\bigl(
        x_0,\ldots,x_{mn-1},
        \zeta_0,\ldots,\zeta_{m-1}
    \bigr),
    \qquad m\geq1.
\end{equation}
We construct the coupling so that each Gaussian block
\[
    \mathbf w_m
    :=
    \bigl(
        w_{mn}^{\top},\ldots,w_{(m+1)n-1}^{\top}
    \bigr)^{\top}
\]
is \(\mathcal F_{m+1}^{(n)}\)-measurable and satisfies, almost surely,
\begin{equation}
\label{eq:block-boundary-Gaussian}
    \mathcal L\bigl(
        \mathbf w_m\mid\mathcal F_m^{(n)}
    \bigr)
    =
    \mathcal N(0,I_n\otimes\Sigma_h),
    \qquad m\geq0.
\end{equation}
Moreover, independence of the auxiliary randomization from the original
chain ensures that, for every bounded measurable
\(f:\mathcal X\to\mathbb R\),
\begin{equation}
\label{eq:block-boundary-Markov}
    \E\bigl[
        f(x_{mn+r})\mid\mathcal F_m^{(n)}
    \bigr]
    =
    P^{r+1}f(x_{mn-1})
    \quad\text{a.s.},
    \qquad m\geq1,\quad r\geq0.
\end{equation}
Fix \(n\in\mathbb N\). Throughout this proof, constants are independent of
\(n\) and the block index \(m\). For \(x\in\mathcal X\), let
\(K_n(x,\cdot)\) denote the law of
\[
    S^{(n)}(x):=\frac1{\sqrt n}\sum_{j=0}^{n-1}h(X_j^x),
\]
where \(\{X_j^x\}_{j\geq0}\) is a copy of the Markov chain started from
\(X_0^x=x\). For \(m\geq1\), define
\(\mathcal H_m:=\sigma(x_0,\ldots,x_{mn-1})\). By the Markov property,
\[
    \mathcal L\bigl(S_m^{(n)}\mid\mathcal H_m\bigr)
    =\int K_n(x,\cdot)\,P(x_{mn-1},\mathrm dx).
\]
Since \(x_0\sim\mu\), all block sums have the same marginal law
\[
    \nu_n:=\mathcal L(S_0^{(n)})
    =\mathcal L(S_m^{(n)})
    =\int K_n(x,\cdot)\,\mu(\mathrm dx).
\]
We use the following mixture inequality with \(p=2\); its general
Wasserstein--\(p\) formulation is recorded for completeness, and its proof
is deferred to Section~\ref{sec:proof-mixture-Wp}.

\begin{lemma}\label{lem:mixture-Wp}
Let $(\mathsf S,d)$ be a Polish metric space, and let
$\mathcal P(\mathsf S)$ denote the set of Borel probability measures on
$\mathsf S$. Let $(\mathsf E,\mathcal E)$ and $(\mathsf F,\mathcal F)$ be
standard Borel spaces, let $\alpha\in\mathcal P(\mathsf E)$ and
$\beta\in\mathcal P(\mathsf F)$, and let
\[
x\mapsto \mu_x\in\mathcal P(\mathsf S),
\qquad
y\mapsto \nu_y\in\mathcal P(\mathsf S)
\]
be Borel probability kernels. Define the corresponding mixtures
\[
\bar\mu \;:=\; \int_{\mathsf E}\mu_x\,\alpha(\mathrm dx),
\qquad
\bar\nu \;:=\; \int_{\mathsf F}\nu_y\,\beta(\mathrm dy)
\;\in\; \mathcal P(\mathsf S).
\]
Then, for any $p\in[1,\infty)$,
\begin{equation}\label{eq:mixture-Wp}
\mathcal W_p(\bar\mu,\bar\nu)^p
\;\le\;
\inf_{\pi\in\Pi(\alpha,\beta)}
\int_{\mathsf E\times\mathsf F}
\mathcal W_p(\mu_x,\nu_y)^p\,\pi(\mathrm dx,\mathrm dy),
\end{equation}
where $\Pi(\alpha,\beta)$ denotes the set of couplings of $\alpha$ and $\beta$.
\end{lemma}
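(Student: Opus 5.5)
The plan is to prove \eqref{eq:mixture-Wp} by gluing near-optimal couplings measurably. The measure $\pi$ decides which pair $(x,y)$ is compared, and for each pair we then transport $\mu_x$ to $\nu_y$ optimally. If the right-hand side is infinite there is nothing to prove, so fix $\pi\in\Pi(\alpha,\beta)$ with finite integral. It suffices to show $\mathcal W_p(\bar\mu,\bar\nu)^p\le\int\mathcal W_p(\mu_x,\nu_y)^p\,\pi(\mathrm dx,\mathrm dy)$ and then take the infimum over $\pi$.

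\textbf{Step 1 (measurable selection).} I need a kernel $(x,y)\mapsto\gamma_{x,y}\in\Pi(\mu_x,\nu_y)$ that is measurable and satisfies
\[
\int d(s,s')^p\,\gamma_{x,y}(\mathrm ds,\mathrm ds')=\mathcal W_p(\mu_x,\nu_y)^p
\]
for $\pi$-a.e.\ $(x,y)$ where this quantity is finite. I would invoke the standard measurable selection of optimal plans (e.g.\ Villani, \emph{Optimal Transport}, Corollary~5.22). On a Polish space, the map $(\mu,\nu)\mapsto$ set of optimal plans admits a Borel selection. Composing this selection with the Borel maps $x\mapsto\mu_x$ and $y\mapsto\nu_y$ gives the kernel.

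If one prefers to avoid that result, an $\varepsilon$-version works just as well. Take a countable dense family of measures, use near-optimal plans on countably many cells of a Borel partition of $\mathsf E\times\mathsf F$, and let $\varepsilon\downarrow0$ at the end. This measurability step is the only nontrivial point, and I expect it to be the main obstacle; the rest is bookkeeping.

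\textbf{Step 2 (glue).} Define $\Gamma(A):=\int\gamma_{x,y}(A)\,\pi(\mathrm dx,\mathrm dy)$ for Borel $A\subseteq\mathsf S\times\mathsf S$. Its first marginal is
\[
\int\mu_x(\cdot)\,\pi(\mathrm dx,\mathrm dy)=\int\mu_x(\cdot)\,\alpha(\mathrm dx)=\bar\mu,
\]
because the first marginal of $\pi$ is $\alpha$. By the same argument its second marginal is $\bar\nu$, so $\Gamma\in\Pi(\bar\mu,\bar\nu)$.

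\textbf{Step 3 (conclude).} By the definition of $\mathcal W_p$ and Fubini,
\[
\mathcal W_p(\bar\mu,\bar\nu)^p\le\int d^p\,\mathrm d\Gamma=\int\mathcal W_p(\mu_x,\nu_y)^p\,\pi(\mathrm dx,\mathrm dy).
\]
Taking the infimum over $\pi\in\Pi(\alpha,\beta)$ gives \eqref{eq:mixture-Wp}. If the $p$-th moments of $\mu_x$ or $\nu_y$ are infinite on some set, the bound is trivial there, since both sides are read in $[0,\infty]$.
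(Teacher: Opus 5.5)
Your proposal is correct and is essentially the paper's proof: fix $\pi\in\Pi(\alpha,\beta)$, glue optimal couplings of $(\mu_x,\nu_y)$ along $\pi$ into a measure on $\mathsf S\times\mathsf S$, check by Fubini that its marginals are $\bar\mu$ and $\bar\nu$, compute its cost, and take the infimum over $\pi$. Your Step~1 (a Borel selection of optimal plans, plus restricting to the $\pi$-a.e.\ set where $\mathcal W_p(\mu_x,\nu_y)<\infty$) makes explicit a measurability point the paper's proof does not address. Your optional $\varepsilon$-fallback is incomplete as sketched, because a plan chosen for a representative pair does not have marginals $(\mu_x,\nu_y)$, and fixing that mismatch needs an estimate of the same kind as the lemma; rely on the selection theorem instead.
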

Applying Lemma~\ref{lem:mixture-Wp} with the independent coupling of
\(P(x_{mn-1},\cdot)\) and \(\mu\) gives
\begin{equation}\label{eq:cond-Wp-final}
    \W_2^2\bigl(\mathcal L(S_m^{(n)}\mid\mathcal H_m),\nu_n\bigr)
    \leq
    \int\W_2^2\bigl(K_n(x,\cdot),K_n(y,\cdot)\bigr)
        P(x_{mn-1},\mathrm dx)\,\mu(\mathrm dy).
\end{equation}
To bound the integrand, we use the following coupling result for uniformly
ergodic Markov chains.

\begin{lemma}[Lemma~7 in \cite{zhang2026wasserstein} with $V\equiv1$]
\label{lem:forward-meeting}
Under \eqref{eq:uniform-GE}, there exist constants
$C<\infty$ and $\rho_{\mathrm{meet}}\in(0,1)$ such that, for every
$x,y\in\mathcal X$, one can construct a coupling
$(X_t^x,X_t^y)_{t\geq0}$ of two copies of the Markov chain initialized at
$X_0^x=x$ and $X_0^y=y$ such that the following holds: the meeting time $T^+(x,y):=\inf\{t\ge0: X_t^x=X_t^y\}$ satisfies%
\[
  \P\bigl(T^+(x,y)>k\bigr)
  \lesssim
C\rho_{\mathrm{meet}}^k,
  \qquad k\ge0.
\]
Here, $C<\infty$ and $\rho_{\mathrm{meet}} \in (0,1)$ are independent of $x$, $y$, and $k$.
Consequently, for every $p\geq1$, there exists a finite constant
$C_p<\infty$, independent of $x$ and $y$, such that $\E\!\left[T^+(x,y)^p\right]
    \leq C_p.$
\end{lemma}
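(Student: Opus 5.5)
The plan is to prove Lemma~\ref{lem:forward-meeting} by a blockwise Doeblin-type coupling built directly from the uniform ergodicity bound \eqref{eq:uniform-GE}; no drift function is needed, which is why one may take $V\equiv1$.

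\textbf{Step 1 (uniform overlap).} Choose an integer $k_0\geq1$ with $c_{\mathrm{mix}}\rho_{\mathrm{mix}}^{k_0}\leq 1/4$. The triangle inequality in total variation then gives
\[
\bigl\|P^{k_0}(x,\cdot)-P^{k_0}(y,\cdot)\bigr\|_{\mathrm{TV}}\leq \tfrac12,\qquad\forall x,y\in\mathcal X .
\]
With the convention $\mathrm{TV}=\sup_A|\nu(A)-\mu(A)|$ used in the paper, the maximal coupling of $P^{k_0}(x,\cdot)$ and $P^{k_0}(y,\cdot)$ produces a pair $(X',Y')$ with $\P(X'=Y')\geq 1/2$. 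Because $\mathcal X$ is standard Borel, this maximal coupling can be chosen measurably in $(x,y)$.

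\textbf{Step 2 (one-block path coupling).} Given the coupled endpoints $(X_{k_0},Y_{k_0})$, fill in the intermediate states. Draw $(X_1,\dots,X_{k_0-1})$ from the regular conditional law of the chain path started at $x$ given its value at time $k_0$, and do the same for $Y$ given its endpoint. Take the two bridges conditionally independent, except that if $x=y$ the two chains simply move together. Disintegration on standard Borel spaces guarantees that each marginal path is a genuine Markov chain path from $x$ (resp.\ $y$). It also guarantees that the two paths agree at time $k_0$ with probability at least $1/2$, uniformly in $(x,y)$.

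\textbf{Step 3 (concatenation).} Apply Step~2 successively on the blocks $[jk_0,(j+1)k_0]$, each time starting from the current pair of states and using fresh randomness. Once the chains have met, let them evolve together. By the Markov property each marginal is a copy of the chain. Let $\mathcal G_j$ be the information up to time $jk_0$. Then
\[
\P\bigl(X_{(j+1)k_0}\neq Y_{(j+1)k_0}\mid \mathcal G_j\bigr)\leq \tfrac12\,\mathds 1_{\{X_{jk_0}\neq Y_{jk_0}\}} .
\]
Iterating gives $\P(T^+>jk_0)\leq 2^{-j}$. It follows that $\P(T^+>k)\leq 2\cdot 2^{-k/k_0}$, so one may take $C=2$ and $\rho_{\mathrm{meet}}=2^{-1/k_0}$; both are independent of $x$, $y$ and $k$.

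\textbf{Step 4 (moments).} The moment bound follows from $\E[(T^+)^p]\leq \sum_{k\geq0}\bigl((k+1)^p-k^p\bigr)\P(T^+>k)$ together with the geometric tail from Step~3. This gives a finite $C_p$ that does not depend on $x$ or $y$.

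The main obstacle is Step~2. There one must ensure measurability of the maximal coupling in $(x,y)$ and a valid bridge construction, so that the concatenated process has exactly the correct Markov marginals rather than merely the correct $k_0$-skeleton. Both points are handled by the standard Borel assumption via regular conditional distributions.
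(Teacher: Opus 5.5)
Your proof is correct, but it does not follow the paper, which gives no argument here. The paper imports Lemma~7 of \cite{zhang2026wasserstein}, a meeting-time bound formulated with a weight/Lyapunov function $V$, and specializes it to $V\equiv1$. Your route is different: uniform ergodicity already gives a meeting probability of at least $1/2$ from every pair of states, so a maximal coupling of the $k_0$-step kernels, path bridges, and concatenation over blocks suffice. This is elementary and self-contained, and gives explicit constants $C=2$, $\rho_{\mathrm{meet}}=2^{-1/k_0}$. What the general $V$-version buys is coverage of non-uniformly ergodic chains, which is not needed here.

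One point is worth recording. Your coupling is not Markovian at the one-step level, because the interior bridges depend on the jointly chosen block endpoints. Consequently the two chains can meet inside a block and then separate again, and in general neither coordinate is Markov with respect to the joint filtration. This does not affect the lemma as stated, since $T^+$ is only the first meeting time.

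It does matter for how the paper uses the lemma. In the proof of Proposition~\ref{prop:uniform} the chains are kept ``equal after their meeting time $T^+(x,y)$'', and for your coupling, gluing at an interior meeting time is not automatically legitimate. The fix is immediate: take instead the first block boundary $jk_0$ at which $X_{jk_0}=Y_{jk_0}$. Your Step~3 gives it the same tail bound $\P(\,\cdot>jk_0)\le 2^{-j}$, and by construction the chains coincide from that time on. That is exactly what the estimate \eqref{eq:cond-Wp-final1} requires.
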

Couple the chains started from \(x\) and \(y\) as in
Lemma~\ref{lem:forward-meeting}, keeping them equal after their meeting
time \(T^+(x,y)\). Boundedness of \(h\) yields
\begin{align}
    \W_2^2\bigl(K_n(x,\cdot),K_n(y,\cdot)\bigr)
    &\leq\E\bigl[\|S^{(n)}(x)-S^{(n)}(y)\|^2\bigr]\nonumber\\
    &\leq\frac{4\sup_{z\in\mathcal X}\|h(z)\|^2}{n}
        \E\bigl[(T^+(x,y)\wedge n)^2\bigr]
    \leq\frac{C}{n}.
    \label{eq:cond-Wp-final1}
\end{align}
Together with \eqref{eq:cond-Wp-final}, this gives an almost-sure bound
\[
    \W_2^2\bigl(\mathcal L(S_m^{(n)}\mid\mathcal H_m),\nu_n\bigr)
    \leq C/n,
    \qquad m\geq1.
\]

\noindent\textbf{Sequential decoupling and Gaussian coupling.}
Enlarge the probability space by adjoining mutually independent primitive
randomizers, all independent of the entire original chain. For each block
\(m\), let \(U_m,V_m\) be independent uniform random variables on
\((0,1)\), and let
\(\eta_{m,2},\ldots,\eta_{m,n}\) be independent
\(\mathcal N(0,\Sigma_h)\) random vectors. Put
\[
    \zeta_m:=(U_m,V_m,\eta_{m,2},\ldots,\eta_{m,n})
\]
and use the block-boundary sigma-fields
\(\mathcal F_m^{(n)}\) defined in
\eqref{eq:block-boundary-filtration}. For \(n=1\), the Gaussian bridge
variables are absent. Independence of the primitive randomizers from the
chain implies
\begin{equation}
\label{eq:block-conditional-law}
    \mathcal L(S_m^{(n)}\mid\mathcal F_m^{(n)})
    =\mathcal L(S_m^{(n)}\mid\mathcal H_m),
    \qquad m\geq1.
\end{equation}

Write \(S_m^\sharp\) for the decoupled block
\(S_m'^{(n)}\) introduced in Section~\ref{sec:ind-Gaussian-universality}.
Take \(S_0^\sharp=S_0^{(n)}\). For \(m\geq1\), apply
Lemma~\ref{lem:wp-decoupling} with \(p=2\),
\(Y=S_m^{(n)}\), and \(\mathcal G=\mathcal F_m^{(n)}\).
The measurable optimal-coupling and disintegration construction in its proof
can be realized with the fresh randomizer \(U_m\). In particular, choose
\(S_m^\sharp\) as a measurable function of
\(\mathcal L(S_m^{(n)}\mid\mathcal H_m)\), \(S_m^{(n)}\), and \(U_m\),
so that
\[
    \mathcal L(S_m^\sharp\mid\mathcal F_m^{(n)})=\nu_n,
    \qquad
    \E\bigl[\|S_m^{(n)}-S_m^\sharp\|^2\bigr]
    =\E\!\left[
        \W_2^2\bigl(\mathcal L(S_m^{(n)}\mid\mathcal F_m^{(n)}),\nu_n\bigr)
      \right].
\]
Thus \(S_m^\sharp\) is \(\mathcal F_{m+1}^{(n)}\)-measurable and independent
of \(\mathcal F_m^{(n)}\). Equations~\eqref{eq:cond-Wp-final}--
\eqref{eq:block-conditional-law} give
\begin{equation}\label{eq:block-decoupled}
    \sup_{m\geq0}
    \left(\E\bigl[\|S_m^{(n)}-S_m^\sharp\|^2\bigr]\right)^{1/2}
    \leq C/\sqrt n.
\end{equation}
In particular, the decoupled blocks are i.i.d.\ with law \(\nu_n\).

Fix an optimal coupling between \(\nu_n\) and
\(\mathcal N(0,\Sigma_h)\). Disintegrate it with respect to its first
marginal and use \(V_m\) to generate \(G_m\) conditionally on
\(S_m^\sharp\) through this fixed kernel. Then
\[
    \mathcal L(G_m\mid\mathcal F_m^{(n)})=\mathcal N(0,\Sigma_h),
    \qquad
    \E\bigl[\|S_m^\sharp-G_m\|^2\bigr]
    =\W_2^2\bigl(\nu_n,\mathcal N(0,\Sigma_h)\bigr).
\]
By stationarity and Lemma~\ref{lem:GA-MC},
\begin{equation}\label{eq:block-decoupled1}
    \sup_{m\geq0}
    \left(\E\bigl[\|S_m^\sharp-G_m\|^2\bigr]\right)^{1/2}
    \leq C/\sqrt n.
\end{equation}
Combining \eqref{eq:block-decoupled} and \eqref{eq:block-decoupled1} gives
\[
    \sup_{m\geq0}
    \left(\E\bigl[\|S_m^{(n)}-G_m\|^2\bigr]\right)^{1/2}
    \leq C/\sqrt n.
\]

\noindent\textbf{Gaussian bridges and block-boundary information.}
By construction, \(G_m\) depends only on the original chain through the
end of block \(m\), \(U_m\), and \(V_m\). In particular, the entire
family of Gaussian bridge variables is independent of \(\{G_m\}_{m\geq0}\).
Define
\[
    \mathbf u_m:=(G_m^\top,\eta_{m,2}^\top,\ldots,
                  \eta_{m,n}^\top)^\top.
\]
Conditionally on \(\mathcal F_m^{(n)}\), this vector has law
\(\mathcal N(0,I_n\otimes\Sigma_h)\). Choose an orthogonal matrix
\(H\in\mathbb R^{n\times n}\) whose first row is
\(n^{-1/2}(1,\ldots,1)\), and set
\begin{equation}\label{eq:w-def}
    \mathbf w_m
    :=(w_{mn}^\top,\ldots,w_{(m+1)n-1}^\top)^\top
    =(H^\top\otimes I_d)\mathbf u_m.
\end{equation}
Orthogonality implies
\[
    \mathcal L(\mathbf w_m\mid\mathcal F_m^{(n)})
    =\mathcal N(0,I_n\otimes\Sigma_h),
    \qquad
    \frac1{\sqrt n}\sum_{k=mn}^{(m+1)n-1}w_k=G_m.
\]
The block \(\mathbf w_m\) is \(\mathcal F_{m+1}^{(n)}\)-measurable. Since
all preceding Gaussian blocks are \(\mathcal F_m^{(n)}\)-measurable,
its deterministic conditional law proves that the blocks are mutually
independent; the coordinates within each block are also independent.
Thus \(\{w_k\}_{k\geq0}\) is i.i.d.\ with common law
\(\mathcal N(0,\Sigma_h)\), and
\(Z_m^{(n)}=G_m\). This proves \eqref{eq:uniform-block-W2} and
\eqref{eq:block-boundary-Gaussian}.

Finally, because all primitive randomizers are independent of the original
chain, the Markov property gives, for every bounded measurable \(f\),
\[
    \E[f(x_{mn+r})\mid\mathcal F_m^{(n)}]
    =\E[f(x_{mn+r})\mid\mathcal H_m]
    =P^{r+1}f(x_{mn-1}),
    \qquad m\geq1,\quad r\geq0.
\]
This proves \eqref{eq:block-boundary-Markov} and completes the proof of
Proposition~\ref{prop:uniform}. No adaptedness of the individual Gaussian
noises to the original within-block Markov filtration is asserted or needed.

\subsection{Proof of Lemma \ref{lem:wp-decoupling}}\label{sec:proof-wp-decoupling}
Let
\[
    \mu_\omega
    :=
    \mathcal L(Y\mid\mathcal G)(\omega),
    \qquad
    \mu:=\mathcal L(Y).
\]
Since $\mathcal M$ is Polish, a regular conditional distribution
$\mu_\omega$ exists. Moreover, since $\mu\in\mathcal P_p(\mathcal M)$,
for any fixed $x_0\in\mathcal M$,
\[
    \E\left[
        \int_{\mathcal M}d(x,x_0)^p\,\mu_\omega(\mathrm dx)
    \right]
    =
    \E[d(Y,x_0)^p]
    <
    \infty.
\]
Hence $\mu_\omega\in\mathcal P_p(\mathcal M)$ almost surely. After
modifying $\mu_\omega$ on a $\mathcal G$-null set if necessary, we may
regard $\omega\mapsto\mu_\omega$ as a $\mathcal G$-measurable map from $\Omega$ into
$\mathcal P_p(\mathcal M)$.

We next choose the optimal couplings measurably. Since
$\mathcal P_p(\mathcal M)$ is Polish, we may apply the measurable-selection
result for the parametric Kantorovich problem in
\cite[Theorem~4.2]{bogachev2020kantorovich}, with parameter
$\nu\in\mathcal P_p(\mathcal M)$, marginals $(\nu,\mu)$, and cost $c(x,y):=d(x,y)^p.$ It follows that there exists a Borel measurable map $\nu
    \longmapsto
    \Psi(\nu)\in\mathcal P(\mathcal M\times\mathcal M)$ such that $\Psi(\nu)\in\Pi(\nu,\mu)$ and
\[
    \int_{\mathcal M\times\mathcal M}
    d(x,y)^p\,\Psi(\nu)(\mathrm dx,\mathrm dy)
    =
    \W_{p,d}^p(\nu,\mu).
\]
Define $\pi_\omega:=\Psi(\mu_\omega).$ Then $\omega\mapsto\pi_\omega$ is $\mathcal G$-measurable,
$\pi_\omega\in\Pi(\mu_\omega,\mu)$, and
\begin{equation}\label{eq:wp-decoupling-optimal}
    \int_{\mathcal M\times\mathcal M}
    d(x,y)^p\,\pi_\omega(\mathrm dx,\mathrm dy)
    =
    \W_{p,d}^p(\mu_\omega,\mu).
\end{equation}
By measurable disintegration for standard Borel spaces
(cf.\ \cite[Corollary~4.4]{bogachev2020kantorovich}), there exists a
$(\mathcal G\otimes\mathcal B(\mathcal M))$-measurable Markov kernel
$K$ such that
\begin{equation}\label{eq:wp-decoupling-disintegration}
    \pi_\omega(\mathrm dx,\mathrm dy)
    =
    \mu_\omega(\mathrm dx)\,
    K(\omega,x,\mathrm dy).
\end{equation}
We now use $K$ to construct the desired extension. Let $\widetilde\Omega:=\Omega\times\mathcal M,$ $\widetilde{\mathcal F}
    :=
    \mathcal F\otimes\mathcal B(\mathcal M),$ and define a probability measure $\widetilde{\mathbb P}$ by
\[
    \widetilde{\mathbb P}(\mathrm d\omega,\mathrm dy)
    :=
    \mathbb P(\mathrm d\omega)\,
    K(\omega,Y(\omega),\mathrm dy).
\]
Then
\[
    \widetilde{\mathbb P}(A\times\mathcal M)
    =
    \mathbb P(A),
    \qquad A\in\mathcal F,
\]
so this indeed defines an extension of the original probability space.
We identify the original random variables and $\sigma$-fields with their
canonical lifts to $\widetilde\Omega$, and define $ Y^*(\omega,y):=y.$ We claim that, conditionally on $\mathcal G$,
\begin{equation}\label{eq:conditional-optimal-coupling}
    \mathcal L\bigl((Y,Y^*)\mid\mathcal G\bigr)(\omega)
    =
    \pi_\omega.
\end{equation}
Indeed, for any bounded Borel function
$\varphi:\mathcal M\times\mathcal M\to\mathbb R$,
the definition of $\widetilde{\mathbb P}$ and the tower property give
\begin{align*}
    \widetilde{\E}\!\left[
        \varphi(Y,Y^*)
        \mid\mathcal G
    \right](\omega)
    =
    \int_{\mathcal M}
    \int_{\mathcal M}
    \varphi(x,y)\,
    K(\omega,x,\mathrm dy)\,
    \mu_\omega(\mathrm dx) =
    \int_{\mathcal M\times\mathcal M}
    \varphi(x,y)\,
    \pi_\omega(\mathrm dx,\mathrm dy),
\end{align*}
where the last equality follows from
\eqref{eq:wp-decoupling-disintegration}. This proves
\eqref{eq:conditional-optimal-coupling}. Since the second marginal of $\pi_\omega$ is $\mu$, we immediately obtain
\[
    \mathcal L(Y^*\mid\mathcal G)
    =
    \mu
    \qquad\text{a.s.}
\]
As the right-hand side is deterministic, $Y^*$ is independent of
$\mathcal G$. Moreover, $\mathcal L(Y^*)
    =
    \mu
    =
    \mathcal L(Y).$ Finally, applying \eqref{eq:conditional-optimal-coupling} to the
nonnegative cost $(x,y)\mapsto d(x,y)^p$ and using
\eqref{eq:wp-decoupling-optimal}, we obtain
\[
    \widetilde{\E}\!\left[
        d(Y,Y^*)^p
        \mid\mathcal G
    \right](\omega)
    =
    \int_{\mathcal M\times\mathcal M}
    d(x,y)^p\,
    \pi_\omega(\mathrm dx,\mathrm dy) =
    \W_{p,d}^p(\mu_\omega,\mu).
\]
Taking expectations yields
\[
    \widetilde{\E}\!\left[d(Y,Y^*)^p\right]
    =
    \E\left[
        \W_{p,d}^p\bigl(
            \mathcal L(Y\mid\mathcal G),
            \mathcal L(Y)
        \bigr)
    \right],
\]
which completes the proof of Lemma \ref{lem:wp-decoupling}.
\subsection{Proof of Lemma \ref{lem:mixture-Wp}}\label{sec:proof-mixture-Wp}
Fix any $\pi\in\Pi(\alpha,\beta)$.  For each $(x,y)\in\mathsf E\times\mathsf F$, let
$\eta_{x,y}\in\Pi(\mu_x,\nu_y)$ be an optimal coupling satisfying
\[
    \int_{\mathsf S\times\mathsf S}
    d(z,z')^p\,
    \eta_{x,y}(\mathrm dz,\mathrm dz')
    =
    \mathcal W_p^p(\mu_x,\nu_y).
\]
Now define a probability measure $\Lambda$ on
$\mathsf S\times\mathsf S$ by
\[
    \Lambda(B)
    :=
    \int_{\mathsf E\times\mathsf F}
    \eta_{x,y}(B)\,
    \pi(\mathrm dx,\mathrm dy),
    \qquad
    B\in\mathcal B(\mathsf S\times\mathsf S).
\]
We first verify that $\Lambda$ couples $\bar\mu$ and $\bar\nu$.
For any bounded Borel function $\varphi:\mathsf S\to\R$, Fubini's theorem
gives
\begin{align*}
    \int_{\mathsf S\times\mathsf S}
    \varphi(z)\,
    \Lambda(\mathrm dz,\mathrm dz')
    &=
    \int_{\mathsf E\times\mathsf F}
    \left(
        \int_{\mathsf S}
        \varphi(z)\,\mu_x(\mathrm dz)
    \right)
    \pi(\mathrm dx,\mathrm dy) \\
    &=
    \int_{\mathsf E}
    \left(
        \int_{\mathsf S}
        \varphi(z)\,\mu_x(\mathrm dz)
    \right)
    \alpha(\mathrm dx) =
    \int_{\mathsf S}\varphi(z)\,\bar\mu(\mathrm dz),
\end{align*}
where the second equality uses that the first marginal of $\pi$ is
$\alpha$. Hence the first marginal of $\Lambda$ is $\bar\mu$.
Similarly, since the second marginal of $\pi$ is $\beta$, the second
marginal of $\Lambda$ is $\bar\nu$. Therefore, $ \Lambda\in\Pi(\bar\mu,\bar\nu).$ The transportation cost of this coupling is
\begin{align*}
    \int_{\mathsf S\times\mathsf S}
    d(z,z')^p\,
    \Lambda(\mathrm dz,\mathrm dz')
    &=
    \int_{\mathsf E\times\mathsf F}
    \left[
        \int_{\mathsf S\times\mathsf S}
        d(z,z')^p\,
        \eta_{x,y}(\mathrm dz,\mathrm dz')
    \right]
    \pi(\mathrm dx,\mathrm dy) \\
    &=
    \int_{\mathsf E\times\mathsf F}
    \mathcal W_p^p(\mu_x,\nu_y)\,
    \pi(\mathrm dx,\mathrm dy).
\end{align*}
Since $\mathcal W_p^p(\bar\mu,\bar\nu)$ is the minimum transportation cost
over all couplings of $\bar\mu$ and $\bar\nu$, we conclude that
\[
    \mathcal W_p^p(\bar\mu,\bar\nu)
    \leq
    \int_{\mathsf E\times\mathsf F}
    \mathcal W_p^p(\mu_x,\nu_y)\,
    \pi(\mathrm dx,\mathrm dy).
\]
Because $\pi\in\Pi(\alpha,\beta)$ was arbitrary, taking the infimum over
$\pi$ yields \eqref{eq:mixture-Wp}.
\section{Proof of Proposition \ref{prop:ind-Gaussian}}\label{sec:proof-universal-noise}
We study two diffusion-scaled processes,
\(\{A_t^{(\alpha)}\}_{t\ge 0}\) and
\(\{Z_t^{(\alpha)}\}_{t\ge 0}\), defined by
\[
A_t^{(\alpha)}
:=
\frac{a_t^{(\alpha)}-\theta^*}{\sqrt{\alpha}},
\qquad
Z_t^{(\alpha)}
:=
\frac{\beta_t^{(\alpha)}-\theta^*}{\sqrt{\alpha}}.
\]
When the dependence on \(\alpha\) is clear, we suppress the
superscript \((\alpha)\) for notational simplicity. From \eqref{eq:auxiliary-additive} and \eqref{eq:auxiliary-Gaussian}, the dynamics of the scaled processes
can be written as
\begin{equation}\label{eq: additive different noise}
\begin{aligned}
A_{t+1} &= (1-\alpha)A_t + \sqrt{\alpha}\big(\mT(\sqrt{\alpha} A_t + \theta^*) - \mT(\theta^*) + h(x_t)\big),\\
Z_{t+1}  &= (1-\alpha)Z_t  + \sqrt{\alpha}\big(\mT(\sqrt{\alpha} Z_t  + \theta^*) - \mT(\theta^*) + w_t\big),
\end{aligned}
\end{equation}
We take \(x_0\sim\mu\), set \(A_0=Z_0=0\), and choose
\(n=\lfloor\alpha^{-1/2}\rfloor\). In the Markovian setting, use the
coupling of Proposition~\ref{prop:uniform}, including the sequential
realization in \eqref{eq:block-boundary-filtration}--\eqref{eq:block-boundary-Markov}. In the i.i.d.\ setting, couple each
original block independently to a Gaussian block sum and then use the
same Gaussian bridge construction. Lemma~\ref{lem:GA-independent},
applied on \(\operatorname{Range}(\Sigma_h)\) when \(\Sigma_h\) is
singular, gives the same bound \eqref{eq:uniform-block-W2}. The components
of the centered i.i.d.\ noise on \(\ker(\Sigma_h)\) vanish almost surely.
Thus, in both settings, the Gaussian block is independent of
\(\mathcal F_m^{(n)}\) and has the conditional law
\eqref{eq:block-boundary-Gaussian}.

Let \(\Delta_k:=A_k-Z_k\), and write
\(\|U\|_{L^2}:=(\E\|U\|^2)^{1/2}\) for a random vector \(U\).
By Lemma~\ref{lem:fourthmoment}, for every sufficiently small \(\alpha\)
there is a time \(k_\alpha\) such that
\begin{equation}
\label{eq:universal-moment-bound}
    \sup_{k\geq k_\alpha}
    \bigl(\|A_k\|_{L^2}+\|Z_k\|_{L^2}+\|\Delta_k\|_{L^2}\bigr)
    \leq C,
\end{equation}
where \(C\) is independent of \(\alpha\), \(n\), and \(k\).
All constants below may depend on the fixed problem parameters, but not
on \(\alpha\), \(n\), or the time index. The estimate \eqref{eq:uniform-block-W2} implies
\begin{equation}
\label{eq:unscaled-block-error}
    \left\|\sum_{k=mn}^{(m+1)n-1}\bigl(h(x_k)-w_k\bigr)\right\|_{L^2}
    =\sqrt n\,\|S_m^{(n)}-Z_m^{(n)}\|_{L^2}
    \leq C.
\end{equation}
For \(s=mn\), subtracting the recursions in
\eqref{eq: additive different noise} and telescoping gives
\begin{align*}
    \Delta_{s+n}-\Delta_s
    &=-\alpha\sum_{k=s}^{s+n-1}\Delta_k+\sqrt\alpha\sum_{k=s}^{s+n-1}
        \bigl[\mT(\theta^*+\sqrt\alpha A_k)
              -\mT(\theta^*+\sqrt\alpha Z_k)\bigr]+\sqrt\alpha\sum_{k=s}^{s+n-1}\bigl(h(x_k)-w_k\bigr).
\end{align*}
By contraction, equivalence of norms, Minkowski's inequality, and
\eqref{eq:universal-moment-bound}--\eqref{eq:unscaled-block-error},
\begin{equation}
\label{eq:Delta-block-increment}
    \|\Delta_{s+n}-\Delta_s\|_{L^2}
    \leq C\alpha\sum_{k=s}^{s+n-1}\|\Delta_k\|_{L^2}+C\sqrt\alpha
    \leq C(\alpha n+\sqrt\alpha)
    \leq C\sqrt\alpha,
    \qquad s=mn\geq k_\alpha.
\end{equation}
Unrolling \eqref{eq: additive different noise} over \(n\) steps gives
\begin{equation}\label{eq:different-noise}
\begin{aligned}
&A_{n t+n}   
= (1-\alpha )^n A_{n t} + \sqrt{\alpha}{n}\big(\mT(\sqrt{\alpha} A_{{n}t}   + \theta^*) - \mT(\theta^*)\big)\\
&+\sqrt{\alpha} \sum_{j=1}^n \big(\mT(\sqrt{\alpha} A_{n t+n-j} + \theta^*)  - \mT(\sqrt{\alpha} A_{n t} + \theta^*)  \big)\\
&+\sqrt{\alpha} \sum_{j=1}^n \big((1-\alpha )^{j-1}-1\big)\big(\mT(\sqrt{\alpha} A_{n t+n-j} + \theta^*) - \mT( \theta^*) \big)
+\sqrt{\alpha} \sum_{j=1}^n (1-\alpha )^{j-1}h(x_{n t+n-j}),\\
&Z_{{n} t+{n}}   = (1-\alpha )^{n} Z_{{n} t}   + \sqrt{\alpha}{n}\big(\mT(\sqrt{\alpha} Z_{{n}t}   + \theta^*) - \mT(\theta^*)\big)\\
&+\sqrt{\alpha} \sum_{j=1}^{n} \big(\mT(\sqrt{\alpha} Z_{{n}t+{n}-j}  + \theta^*) - \mT(\sqrt{\alpha} Z_{{n}t}   + \theta^*) \big)\\
&+\sqrt{\alpha} \sum_{j=1}^{n} \big((1-\alpha )^{j-1}-1\big)\big(\mT(\sqrt{\alpha} Z_{{n}t+{n}-j}   + \theta^*) - \mT(\theta^*) \big) 
+\sqrt{\alpha} \sum_{j=1}^{n} (1-\alpha )^{j-1}w_{{n}t+{n}-j}.
\end{aligned}
\end{equation}
Subtracting the two recursions in \eqref{eq:different-noise} yields
\begin{equation}
\label{eq:universal-block-difference}
    \Delta_{nt+n}=(1-\alpha)^n\Delta_{nt}+R,
\end{equation}
where
\begin{align*}
R={}&\sqrt\alpha n
    \bigl[\mT(\theta^*+\sqrt\alpha A_{nt})
          -\mT(\theta^*+\sqrt\alpha Z_{nt})\bigr]\\
&+\sqrt\alpha\sum_{j=1}^n
    \bigl[\mT(\theta^*+\sqrt\alpha A_{nt+n-j})
          -\mT(\theta^*+\sqrt\alpha A_{nt})\bigr]\\
&-\sqrt\alpha\sum_{j=1}^n
    \bigl[\mT(\theta^*+\sqrt\alpha Z_{nt+n-j})
          -\mT(\theta^*+\sqrt\alpha Z_{nt})\bigr]\\
&+\sqrt\alpha\sum_{j=1}^n\bigl((1-\alpha)^{j-1}-1\bigr)
    \bigl[\mT(\theta^*+\sqrt\alpha A_{nt+n-j})-\mT(\theta^*)\bigr]\\
&-\sqrt\alpha\sum_{j=1}^n\bigl((1-\alpha)^{j-1}-1\bigr)
    \bigl[\mT(\theta^*+\sqrt\alpha Z_{nt+n-j})-\mT(\theta^*)\bigr]\\
&+\sqrt\alpha\sum_{j=1}^n(1-\alpha)^{j-1}
    \bigl(h(x_{nt+n-j})-w_{nt+n-j}\bigr).
\end{align*}
Choose \(\gamma<r<\sqrt\gamma\) and then fix \(\eta>0\) sufficiently
small that
\begin{equation}
\label{eq:universal-contraction-margin}
    \gamma\frac{u_{cm}}{l_{cm}}\leq r.
\end{equation}
This is possible because \(u_{cm}/l_{cm}\to1\) as \(\eta\downarrow0\).
Both \(r\) and \(\eta\) are fixed independently of \(\alpha\) and \(n\).
Let \(V_t:=\E[M_\eta(\Delta_{nt})]\), where \(M_\eta\) is the Moreau
envelope in \eqref{eq:Moreau-envelope}. Applying
Lemma~\ref{lem:Moreau-envelope} to
\eqref{eq:universal-block-difference} gives
\begin{equation}\label{eq:maininduction1}
    V_{t+1}
    \leq (1-\alpha)^{2n}V_t
    +(1-\alpha)^n
        \underbrace{\E\langle\nabla M_\eta(\Delta_{nt}),R\rangle}_{T_1}
    +\frac1{2\eta}\underbrace{\E\|R\|^2}_{T_2}.
\end{equation}
The following lemmas, proved in Sections~\ref{sec:problemT1} and
\ref{sec:problemT2}, bound these terms under the same coupling fixed above.

\begin{lemma}
\label{lem:SSCT1}
Under the coupling and initialization specified above, there exist
\(C<\infty\) and \(\alpha_0>0\) such that, for every
\(\alpha\in(0,\alpha_0)\), there is \(t_\alpha\) for which
\[
    T_1\leq2\sqrt\gamma\,\alpha n V_t+C\alpha,
    \qquad t\geq t_\alpha.
\]
\end{lemma}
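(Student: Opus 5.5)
We split $R$ into its six groups of terms, $R=R_1+R_2-R_3+R_4-R_5+R_6$, in the order they appear in the display after \eqref{eq:universal-block-difference}, and bound $\E\langle\nabla M_\eta(\Delta_{nt}),R_i\rangle$ for each. Throughout, $n=\lfloor\alpha^{-1/2}\rfloor$, so $\alpha n\le\sqrt\alpha$ and $\alpha n^2\le 1$.

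\textbf{Leading term $R_1$.} This is the contraction term and is handled exactly as the mean-operator term in the proof of Lemma~\ref{lem:additiveT1}. Lemma~\ref{lem:Moreau-envelope}(4), Assumption~\ref{assumption:contraction} and \eqref{eq:universal-contraction-margin} give
\[
\sqrt\alpha n\,\E\langle\nabla M_\eta(\Delta_{nt}),\mT(\theta^*+\sqrt\alpha A_{nt})-\mT(\theta^*+\sqrt\alpha Z_{nt})\rangle\le \frac{\alpha n\gamma}{l_{cm}}\E[\|\Delta_{nt}\|_m\|\Delta_{nt}\|_c]\le 2r\alpha n V_t .
\]
This leaves a margin $2(\sqrt\gamma-r)\alpha nV_t$ available for absorbing the other terms.

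\textbf{Drift-correction terms $R_2$--$R_5$.} These are bounded crudely by Cauchy--Schwarz, using $\|\nabla M_\eta(\Delta)\|\lesssim\|\Delta\|$ and \eqref{eq:universal-moment-bound}.
\begin{itemize}
\item For $R_2$ and $R_3$, Lipschitz continuity of $\mT$ and the increment estimate $\|A_{nt+n-j}-A_{nt}\|_{L^2}\lesssim \alpha n+\sqrt{\alpha}\sqrt{j}$ (bounded noise gives a sum of $j$ terms of size $\sqrt\alpha$ in $L^2$, with the martingale/Poisson structure giving $\sqrt j$; even the crude bound $\sqrt\alpha j$ suffices) yield $\|R_2\|_{L^2}\lesssim \sqrt\alpha\cdot n\cdot\sqrt\alpha\,(\sqrt\alpha n)\lesssim \alpha n^2\sqrt\alpha\lesssim\sqrt\alpha$.
\item The contribution is then at most $C\sqrt\alpha\,\|\Delta_{nt}\|_{L^2}\le C\sqrt\alpha\sqrt{V_t}$. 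By Young's inequality this is at most $(\sqrt\gamma-r)\alpha nV_t+C/n\le(\sqrt\gamma-r)\alpha nV_t+C\sqrt\alpha$, which is within $C\alpha$ only if handled more finely. So instead we bound the increment more sharply: $\|\sqrt\alpha A_{k}-\sqrt\alpha A_{nt}\|\lesssim\alpha n\|A\|+\sqrt\alpha\|\sum h\|$, giving $\|R_2\|_{L^2}\lesssim \sqrt\alpha n\cdot\alpha\sqrt n\lesssim \alpha^{3/2}n^{3/2}=\alpha^{3/4}$.
\item Then Young gives $C\alpha^{3/4}\sqrt{V_t}\le (\sqrt\gamma-r)\alpha nV_t+C\alpha^{3/2}/(\alpha n)\le(\sqrt\gamma-r)\alpha nV_t+C\alpha$.
\item $R_4$ and $R_5$ carry the extra factor $|(1-\alpha)^{j-1}-1|\le\alpha n$. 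Their size is $\sqrt\alpha\cdot n\cdot\alpha n\cdot\sqrt\alpha=\alpha^2n^2\le\alpha$ in $L^2$, so their contribution is $C\alpha\sqrt{V_t}\le C\alpha$.
\end{itemize}

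\textbf{Noise-mismatch term $R_6$ (main obstacle).} This term is not small in norm: by \eqref{eq:unscaled-block-error} and the weights, $\|R_6\|_{L^2}\lesssim\sqrt\alpha$. A Cauchy--Schwarz bound would therefore give only $\sqrt\alpha\sqrt{V_t}$, which is too weak. We instead exploit the block-boundary structure of the coupling in Proposition~\ref{prop:uniform}.
\begin{itemize}
\item $\Delta_{nt}$ is $\mathcal F_t^{(n)}$-measurable.
\item By \eqref{eq:block-boundary-Gaussian}, $\E[w_k\mid\mathcal F_t^{(n)}]=0$ for $k$ in block $t$.
\item By \eqref{eq:block-boundary-Markov}, $\E[h(x_{nt+r})\mid\mathcal F_t^{(n)}]=P^{r+1}h(x_{nt-1})$, whose norm is at most $C\rho_{\mathrm{mix}}^{r+1}$ by uniform ergodicity and $\E_\mu h=0$.
\end{itemize}
Conditioning on $\mathcal F_t^{(n)}$ therefore gives
\[
|\E\langle\nabla M_\eta(\Delta_{nt}),R_6\rangle|\le C\sqrt\alpha\,\E\|\Delta_{nt}\|\sum_{r\ge0}\rho_{\mathrm{mix}}^{r+1}\le C\sqrt\alpha\sqrt{V_t}\le(\sqrt\gamma-r)\alpha nV_t+C\alpha\cdot\frac{1}{\alpha n}\cdot\alpha .
\]
The last step needs checking: it gives $C\alpha/(\alpha n)\cdot\alpha^{0}$, i.e.\ $C\sqrt\alpha$. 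If that is too weak, we use the bounded $\Delta$ moments directly: $C\sqrt\alpha\,\E\|\Delta_{nt}\|$ is an $O(\sqrt\alpha)$ term. Hence this is the step I expect to need care.
\begin{itemize}
\item The likely fix is that the nonvanishing conditional mean arises only from the first few steps of the block. One can then replace $\nabla M_\eta(\Delta_{nt})$ by $\nabla M_\eta(\Delta_{nt-\ell})$ with $\ell\asymp\log(1/\alpha)$ of gap, using \eqref{eq:Delta-block-increment}-type increment bounds. This makes the Markov conditional mean $O(\rho_{\mathrm{mix}}^\ell)=O(\alpha)$, at a cost of $\sqrt\alpha\cdot\alpha\ell$.
\item In the i.i.d.\ case the conditional mean is exactly zero.
\end{itemize}

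\textbf{Conclusion.} Summing the three parts yields $T_1\le 2\sqrt\gamma\,\alpha nV_t+C\alpha$ for $t\ge t_\alpha$, which is the claim.
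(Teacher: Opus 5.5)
Your bounds for $R_1$ through $R_5$ agree with the paper's treatment of $T_{11}$--$T_{15}$. Discard the remark that the crude increment bound $\sqrt\alpha\,j$ suffices; your own next bullet shows it is false, and the $\sqrt{\alpha j}$ growth you then use is Lemma~\ref{lem:difference-in-Y}.

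The genuine gap is the Markov-noise part of $R_6$ (the paper's $T_{16}$), which is the heart of this lemma. What you actually establish there is $C\sqrt\alpha\,\E\|\Delta_{nt}\|=O(\sqrt\alpha)$. As you suspect, this is too weak: fed into the recursion for $V_t$, it only gives $\limsup_t V_t=O(1)$. Your proposed repair replaces $\nabla M_\eta(\Delta_{nt})$ by $\nabla M_\eta(\Delta_{nt-\ell})$ with an interior lag $\ell\asymp\log(1/\alpha)$. This does not work with the coupling of Proposition~\ref{prop:uniform}, for two reasons.

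\textbf{The Markov property is unavailable at an interior time.} That coupling carries conditional structure only at block boundaries. The Gaussian variables $w_k$ in block $t-1$ are an orthogonal rotation of $\mathbf u_{t-1}$ (see \eqref{eq:w-def}). The first component $G_{t-1}$ of $\mathbf u_{t-1}$ is coupled to $S^{(n)}_{t-1}$, which is a function of $x_{n(t-1)},\dots,x_{nt-1}$. Hence $Z_{nt-\ell}$, and therefore $\Delta_{nt-\ell}$, depends on the chain after time $nt-\ell$, in particular on $x_{nt-1}$. So the Markov property cannot be invoked at time $nt-\ell$, and your $O(\rho_{\mathrm{mix}}^\ell)$ bound on the conditional mean is unjustified.

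\textbf{The cost estimate is wrong.} Even granting the first step, the cost is not $\sqrt\alpha\cdot\alpha\ell$. The mismatch $h(x_k)-w_k$ cancels only over complete blocks, since the bridge variables $\eta_{m,i}$ are independent of the chain. Over a partial block, $\|\Delta_{nt}-\Delta_{nt-\ell}\|_{L^2}$ is of order $\sqrt{\alpha\ell}$. Your cost is then of order $\alpha\sqrt{\log(1/\alpha)}$, not $C\alpha$. Estimate \eqref{eq:Delta-block-increment} is a full-block statement only.

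\textbf{The missing idea.} Lag by exactly one full block, as the paper does. Your observation that the $O(1)$ conditional mean comes from the first few steps of block $t$ is correct; the lag just has to be aligned with the coupling. Set $H_t:=\sum_{j=1}^n(1-\alpha)^{j-1}h(x_{nt+n-j})$. Both $\Delta_{nt}$ and $\Delta_{n(t-1)}$ are $\mathcal F_t^{(n)}$-measurable, and $\Delta_{n(t-1)}$ is $\mathcal F_{t-1}^{(n)}$-measurable, so
\[
\E\langle\nabla M_\eta(\Delta_{nt}),H_t\rangle
=\E\bigl\langle\nabla M_\eta(\Delta_{n(t-1)}),\E[H_t\mid\mathcal F_{t-1}^{(n)}]\bigr\rangle
+\E\bigl\langle\nabla M_\eta(\Delta_{nt})-\nabla M_\eta(\Delta_{n(t-1)}),\E[H_t\mid\mathcal F_t^{(n)}]\bigr\rangle .
\]
By \eqref{eq:block-boundary-Markov}, the first conditional mean is $O(\rho_{\mathrm{mix}}^n)$. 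This is negligible because $n=\lfloor\alpha^{-1/2}\rfloor$.

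The second conditional mean is $O(1)$, but it is paired with $\|\Delta_{nt}-\Delta_{n(t-1)}\|_{L^2}\le C\sqrt\alpha$ from \eqref{eq:Delta-block-increment}. That bound holds precisely because full-block sums of $h(x_k)-w_k$ are $O(1)$ in $L^2$, by \eqref{eq:unscaled-block-error}.

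Together these give $T_{16}\le C(\sqrt\alpha\,\rho_{\mathrm{mix}}^n+\alpha)\le C\alpha$. Aligning the lag with the block structure supplies both the conditional structure you need and an $O(\sqrt\alpha)$ increment, rather than the naive $O(\alpha^{1/4})$.
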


\begin{lemma}
\label{lem:SSCT2}
Under the same coupling and initialization, there exist \(C<\infty\) and
\(\alpha_0>0\) such that, for every \(\alpha\in(0,\alpha_0)\), there is
\(t_\alpha\) for which
\[
    T_2\leq C\alpha,
    \qquad t\geq t_\alpha.
\]
\end{lemma}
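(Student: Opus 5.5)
The plan is to bound $\|R\|_{L^2}$ directly. I will split $R$ into its six displayed terms $R^{(1)},\ldots,R^{(6)}$ (in the order written), show that each satisfies $\|R^{(i)}\|_{L^2}\le C\sqrt\alpha$ for block indices $nt\ge k_\alpha$, and conclude $T_2=\E\|R\|^2\le 6\sum_i\|R^{(i)}\|_{L^2}^2\le C\alpha$. Throughout I will use three facts: $n=\lfloor\alpha^{-1/2}\rfloor$, so $\alpha n^2\le 1$; the uniform moment bound \eqref{eq:universal-moment-bound}; and the global Lipschitz property $\|\mT(u)-\mT(v)\|\le C\|u-v\|$, which follows from Assumption~\ref{assumption:contraction} and norm equivalence.

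The first step handles $R^{(1)}$. Lipschitz continuity gives $\|R^{(1)}\|\le C\alpha n\|\Delta_{nt}\|$. Hence $\|R^{(1)}\|_{L^2}\le C\alpha n\le C\sqrt\alpha$ by \eqref{eq:universal-moment-bound}.

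The second step handles $R^{(2)}$ and $R^{(3)}$, and it needs an auxiliary within-block increment estimate. From \eqref{eq: additive different noise}, each one-step increment satisfies $\|A_{k+1}-A_k\|\le C\alpha\|A_k\|+C\sqrt\alpha$, using boundedness of $h$. Summing over $i\le n$ steps and applying Minkowski together with \eqref{eq:universal-moment-bound} gives $\|A_{nt+i}-A_{nt}\|_{L^2}\le C(\alpha i+\sqrt\alpha\, i)\le C\sqrt\alpha\, i$. The same bound holds for $Z$, since $\|\sum_{k}w_k\|_{L^2}\le C\sqrt i\le C i$. Then $\|R^{(2)}\|_{L^2}\le C\alpha\sum_{j\le n}\sqrt\alpha\, n\le C\alpha^{3/2}n^2\le C\sqrt\alpha$, and likewise for $R^{(3)}$. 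This crude bound, with no martingale cancellation needed, suffices precisely because of the choice $n\asymp\alpha^{-1/2}$.

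The third step handles $R^{(4)}$ and $R^{(5)}$. I use $|(1-\alpha)^{j-1}-1|\le\alpha n$ and $\|\mT(\theta^*+\sqrt\alpha A)-\mT(\theta^*)\|\le C\sqrt\alpha\|A\|$. These give $\|R^{(4)}\|_{L^2}\le \sqrt\alpha\cdot n\cdot \alpha n\cdot C\sqrt\alpha=C\alpha^2n^2\le C\alpha$, and the same for $R^{(5)}$.

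The final step treats the noise term $R^{(6)}$, which I expect to be the only place where the coupling matters. I write $(1-\alpha)^{j-1}=1+\bigl((1-\alpha)^{j-1}-1\bigr)$. The indices $nt,\ldots,nt+n-1$ form exactly block $t$, so the unweighted part equals $\sqrt\alpha\sum_{k=nt}^{nt+n-1}(h(x_k)-w_k)$. Its $L^2$ norm is at most $C\sqrt\alpha$ by \eqref{eq:unscaled-block-error}, which is the consequence of Proposition~\ref{prop:uniform} (or its i.i.d.\ analogue). The correction part consists of $n$ terms, each with weight at most $\alpha n$ and $\|h(x_k)-w_k\|_{L^2}\le C$. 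It is therefore bounded by $C\sqrt\alpha\,\alpha n^2\le C\sqrt\alpha$.

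The main obstacle is that the block-aligned coupling bound \eqref{eq:unscaled-block-error} must be applied to exactly the block indexed in $R$, uniformly in $t$. This is guaranteed by the $\sup_m$ in \eqref{eq:uniform-block-W2}. Everything else is bookkeeping, valid once $nt\ge k_\alpha$, which defines $t_\alpha$.
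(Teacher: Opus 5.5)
Your proof is correct, but it is organized differently from the paper's. The paper never bounds the six displayed pieces of $R$ individually. It observes that, by the definition of $R$ in \eqref{eq:universal-block-difference},
\[
R=(\Delta_{nt+n}-\Delta_{nt})+\bigl(1-(1-\alpha)^n\bigr)\Delta_{nt}.
\]
It then combines the already established block-increment bound \eqref{eq:Delta-block-increment}, $\|\Delta_{nt+n}-\Delta_{nt}\|_{L^2}\le C\sqrt\alpha$, with $1-(1-\alpha)^n\le\alpha n\le\sqrt\alpha$ and \eqref{eq:universal-moment-bound}. Because \eqref{eq:Delta-block-increment} comes from telescoping the \emph{unweighted} one-step difference recursion, the paper never has to handle the geometric weights $(1-\alpha)^{j-1}$ or the within-block drift differences $\mT(\theta^*+\sqrt\alpha A_{nt+n-j})-\mT(\theta^*+\sqrt\alpha A_{nt})$. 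Its drift term only involves $A_k-Z_k$ at equal times, so no increment estimate for $A$ or $Z$ alone is needed. It also reuses an estimate the paper needs anyway for $T_{16}$.

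Your term-by-term route uses the same ingredients: Lipschitz continuity of $\mT$, the uniform moment bound, \eqref{eq:unscaled-block-error} applied to exactly block $t$, and $\alpha n^2\le 1$. It is longer but self-contained from the expanded formula for $R$, and it shows explicitly that crude triangle-inequality bounds suffice for each piece under $n\asymp\alpha^{-1/2}$.

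One cosmetic point: in the i.i.d.\ setting $h$ is not bounded, only $L^4$-integrable. Your one-step increment estimate for $A$ only uses $\sup_k\|h(x_k)\|_{L^2}<\infty$, so nothing changes.
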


Substituting the two bounds into \eqref{eq:maininduction1}, for all
sufficiently small \(\alpha\) and sufficiently large \(t\), gives
\begin{align*}
    V_{t+1}
    &\leq\bigl((1-\alpha)^{2n}
          +2\sqrt\gamma\,\alpha n(1-\alpha)^n\bigr)V_t+C\alpha\\
    &\leq\bigl(1-(1-\sqrt\gamma)\alpha n\bigr)V_t+C\alpha.
\end{align*}
Indeed, since \(\alpha n\to0\), the coefficient in the first line equals
\(1-2(1-\sqrt\gamma)\alpha n+\mathcal O((\alpha n)^2)\).
Iterating this inequality at fixed \(\alpha\) yields
\[
    \limsup_{t\to\infty}V_t
    \leq\frac{C\alpha}{(1-\sqrt\gamma)\alpha n}
    \leq\frac{C}{n}
    \leq C\sqrt\alpha.
\]
Finally, the triangle inequality and convergence of the marginal laws give
\begin{align*}
    \W_2\bigl(\law(A_\infty^{(\alpha)}),\law(Z_\infty^{(\alpha)})\bigr)
    &\leq\limsup_{t\to\infty}\Bigl[
        \W_2\bigl(\law(A_\infty^{(\alpha)}),\law(A_{nt})\bigr)+
        \W_2\bigl(\law(A_{nt}),\law(Z_{nt})\bigr)\\
    &\hspace{3em}
        +\W_2\bigl(\law(Z_{nt}),\law(Z_\infty^{(\alpha)})\bigr)
      \Bigr]\\
    &\leq C\sqrt{\limsup_{t\to\infty}V_t}
    \leq C\alpha^{1/4}.
\end{align*}
This completes the proof of Proposition~\ref{prop:ind-Gaussian}.

\subsection{Proof of Lemma \ref{lem:SSCT1}}\label{sec:problemT1}
The block-boundary construction ensures that \(\Delta_{nt}\) is
\(\mathcal F_t^{(n)}\)-measurable and that the entire Gaussian block
\((w_{nt},\ldots,w_{nt+n-1})\) is centered and independent of
\(\mathcal F_t^{(n)}\). Its contribution to
\(\E\langle\nabla M_\eta(\Delta_{nt}),R\rangle\) is therefore zero.
By property (4) in Lemma~\ref{lem:Moreau-envelope},
\(T_1\leq\sum_{i=1}^6T_{1i}\), where
\begin{align*}
T_{11}:={}&\sqrt\alpha n\,
    \E\!\left[\|\Delta_{nt}\|_m
        \bigl\|\mT(\theta^*+\sqrt\alpha A_{nt})
               -\mT(\theta^*+\sqrt\alpha Z_{nt})\bigr\|_m\right],\\
T_{12}:={}&\sqrt\alpha\,
    \E\!\left[\|\Delta_{nt}\|_m
        \left\|\sum_{j=1}^n
            \bigl[\mT(\theta^*+\sqrt\alpha A_{nt+n-j})
                  -\mT(\theta^*+\sqrt\alpha A_{nt})\bigr]
        \right\|_m\right],\\
T_{13}:={}&\sqrt\alpha\,
    \E\!\left[\|\Delta_{nt}\|_m
        \left\|\sum_{j=1}^n
            \bigl[\mT(\theta^*+\sqrt\alpha Z_{nt+n-j})
                  -\mT(\theta^*+\sqrt\alpha Z_{nt})\bigr]
        \right\|_m\right],\\
T_{14}:={}&\sqrt\alpha\,
    \E\!\left[\|\Delta_{nt}\|_m
        \left\|\sum_{j=1}^n\bigl((1-\alpha)^{j-1}-1\bigr)
            \bigl[\mT(\theta^*+\sqrt\alpha A_{nt+n-j})-\mT(\theta^*)\bigr]
        \right\|_m\right],\\
T_{15}:={}&\sqrt\alpha\,
    \E\!\left[\|\Delta_{nt}\|_m
        \left\|\sum_{j=1}^n\bigl((1-\alpha)^{j-1}-1\bigr)
            \bigl[\mT(\theta^*+\sqrt\alpha Z_{nt+n-j})-\mT(\theta^*)\bigr]
        \right\|_m\right],\\
T_{16}:={}&\sqrt\alpha\left|
    \E\left\langle\nabla M_\eta(\Delta_{nt}),
        \sum_{j=1}^n(1-\alpha)^{j-1}h(x_{nt+n-j})
    \right\rangle\right|.
\end{align*}
We bound the six terms separately, taking \(t\) sufficiently large that
\eqref{eq:universal-moment-bound} holds throughout the preceding block.

\noindent\textbf{The \(T_{11}\) term.}
By contraction, the norm equivalences in
Lemma~\ref{lem:Moreau-envelope}, and
\eqref{eq:universal-contraction-margin},
\begin{align*}
    T_{11}
    \leq\frac{\alpha n\gamma}{l_{cm}}
        \E\bigl[\|\Delta_{nt}\|_m\|\Delta_{nt}\|_c\bigr]\leq\frac{\alpha n\gamma u_{cm}}{l_{cm}}
        \E\|\Delta_{nt}\|_m^2
    =\frac{2\alpha n\gamma u_{cm}}{l_{cm}}V_t
    \leq2r\alpha nV_t.
\end{align*}

\noindent\textbf{The \(T_{12}\) and \(T_{13}\) terms.}
We use the following individual-increment estimate, whose proof is deferred
to Section~\ref{sec:difference-in-Y}.
\begin{lemma}\label{lem:difference-in-Y}
Under Assumptions~\ref{assumption:contraction} and \ref{assumption:noise},
for the auxiliary recursions and initialization specified above, there is
\(\alpha_0>0\) such that, for every \(\alpha\in(0,\alpha_0)\), there is
\(t_\alpha\) for which
\[
    \E\|A_{t+m}-A_t\|^2\lesssim\alpha m+\alpha^2m^2,
    \qquad
    \E\|Z_{t+m}-Z_t\|^2\lesssim\alpha m+\alpha^2m^2,
    \qquad m\geq1,\quad t\geq t_\alpha.
\]
\end{lemma}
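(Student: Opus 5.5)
We unroll the scaled recursion over $m$ steps and bound drift and noise contributions separately. Take $A_t$ first; the argument for $Z_t$ is identical, with $h(x_k)$ replaced by $w_k$. Write
\[
    A_{t+m}-A_t
    =\sum_{k=t}^{t+m-1}\Bigl(-\alpha A_k+\sqrt\alpha\bigl[\mT(\theta^*+\sqrt\alpha A_k)-\mT(\theta^*)\bigr]\Bigr)
    +\sqrt\alpha\sum_{k=t}^{t+m-1}h(x_k).
\]
\textbf{Drift part.} Contraction and norm equivalence give $\|\mT(\theta^*+\sqrt\alpha A_k)-\mT(\theta^*)\|\lesssim\sqrt\alpha\|A_k\|$. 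Each summand of the drift sum therefore has norm $\lesssim\alpha\|A_k\|$. By Minkowski's inequality in $L^2$ and the uniform second-moment bound \eqref{eq:universal-moment-bound} (a consequence of Lemma~\ref{lem:fourthmoment}) for $k\geq t_\alpha$, the $L^2$ norm of the drift sum is $\lesssim\alpha m$. Its second moment is then $\lesssim\alpha^2m^2$.

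\textbf{Noise part, i.i.d.\ and Gaussian cases.} When $\{w_k\}$ or $\{h(x_k)\}$ is a centered i.i.d.\ sequence, the summands are orthogonal in $L^2$. We get $\E\|\sum_{k}h(x_k)\|^2=m\,\E\|h(x_0)\|^2\lesssim m$, using the fourth-moment bound in Assumption~\ref{assumption:noise:iid}. The same computation gives $m\operatorname{tr}\Sigma_h$ for the Gaussian noise. The order in which the $w_k$ were generated in the coupling of Proposition~\ref{prop:uniform} does not matter here, since $\{w_k\}$ is i.i.d.\ $\mathcal N(0,\Sigma_h)$ marginally. Multiplying by $\alpha$ gives a contribution $\lesssim\alpha m$.

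\textbf{Noise part, Markovian case.} With $x_0\sim\mu$ the chain is stationary. We apply the Poisson decomposition \eqref{eq:poisson-decomposition-h}, $h(x_k)=D_k+R_{k-1}-R_k$. The martingale differences $D_k$ are bounded and orthogonal, so $\E\|\sum_k D_k\|^2\lesssim m$. The telescoping remainder $R_{t-1}-R_{t+m-1}$ is uniformly bounded. Hence $\E\|\sum_{k=t}^{t+m-1}h(x_k)\|^2\lesssim m+1\lesssim m$. Alternatively, one can sum the geometric covariance decay $\|\E_\mu[h(x_0)h(x_\ell)^\top]\|\lesssim\rho_{\mathrm{mix}}^{|\ell|}$. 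The noise contribution is again $\lesssim\alpha m$.

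\textbf{Conclusion.} Combining the two parts via $\|a+b\|^2\le2\|a\|^2+2\|b\|^2$ yields $\E\|A_{t+m}-A_t\|^2\lesssim\alpha m+\alpha^2m^2$ for all $t\geq t_\alpha$ and $m\ge1$. The identical computation gives the bound for $Z$. The only delicate point is that the constants must be uniform in $t$, $m$ and $\alpha$. The uniform moment bound handles this for the drift, and stationarity of the data together with boundedness of $h$ and of the Poisson solution $g$ handles it for the noise. No step is expected to be a real obstacle.
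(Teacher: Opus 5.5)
Your proposal is correct and follows essentially the same route as the paper: telescope over $m$ steps, bound the drift sum by $\alpha^2m^2$ using contraction and the uniform moment bound from Lemma~\ref{lem:fourthmoment}, and bound $\E\|\sum_k h(x_k)\|^2\lesssim m$ so that the noise contributes $\alpha m$. The only differences are cosmetic: the paper controls the noise sum by directly summing the geometrically decaying covariances (your stated alternative) rather than by the Poisson decomposition, and it uses Cauchy--Schwarz instead of Minkowski for the drift; both variants work.
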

Since \(\alpha n\leq1\) for sufficiently small \(\alpha\), contraction,
Cauchy--Schwarz, and Lemma~\ref{lem:difference-in-Y} give
\begin{align*}
    T_{12}
    \leq C\alpha\sum_{j=1}^n
        \|\Delta_{nt}\|_{L^2}\|A_{nt+n-j}-A_{nt}\|_{L^2}\leq C\alpha^{3/2}\sum_{j=1}^n\sqrt{n-j}\,\sqrt{V_t}
    &\leq C\alpha^{3/2}n^{3/2}\sqrt{V_t}\\
    &\leq(\sqrt\gamma-r)\alpha nV_t+C\alpha^2n^2,
\end{align*}
where the last step uses Young's inequality with the fixed positive
margin \(\sqrt\gamma-r\). The same argument gives
\[
    T_{13}\leq(\sqrt\gamma-r)\alpha nV_t+C\alpha^2n^2.
\]

\noindent\textbf{The \(T_{14}\) and \(T_{15}\) terms.}
By contraction and \eqref{eq:universal-moment-bound},
\begin{align*}
    T_{14}
    \leq C\alpha\sum_{j=1}^n
        \bigl(1-(1-\alpha)^{j-1}\bigr)
        \E\bigl[\|\Delta_{nt}\|\,\|A_{nt+n-j}\|\bigr]\leq C\alpha\sum_{j=1}^n\alpha(j-1)
    \leq C\alpha^2n^2.
\end{align*}
Similarly, \(T_{15}\leq C\alpha^2n^2\).

\noindent\textbf{The \(T_{16}\) term.}
Let
\[
    H_t:=\sum_{j=1}^n(1-\alpha)^{j-1}h(x_{nt+n-j}).
\]
In the i.i.d.\ setting, the original block is independent of
\(\mathcal F_t^{(n)}\) and centered, so
\(\E[H_t\mid\mathcal F_t^{(n)}]=0\) and \(T_{16}=0\).
We therefore consider the Markovian setting. Boundedness and centering of
\(h\), together with uniform ergodicity, imply
\[
    \sup_{x\in\mathcal X}\|P^\ell h(x)\|
    \leq C\rho_{\mathrm{mix}}^\ell,
    \qquad \ell\geq1.
\]
By \eqref{eq:block-boundary-Markov}, for \(t\geq2\),
\begin{align}
    \left\|\E[H_t\mid\mathcal F_t^{(n)}]\right\|
    &=\left\|\sum_{j=1}^n(1-\alpha)^{j-1}
        P^{n-j+1}h(x_{nt-1})\right\|
    \leq C,\nonumber\\
    \left\|\E[H_t\mid\mathcal F_{t-1}^{(n)}]\right\|
    &=\left\|\sum_{j=1}^n(1-\alpha)^{j-1}
        P^{2n-j+1}h(x_{n(t-1)-1})\right\|
    \leq C\rho_{\mathrm{mix}}^n.
    \label{eq:block-noise-conditional-means}
\end{align}
Both \(\Delta_{nt}\) and \(\Delta_{n(t-1)}\) are
\(\mathcal F_t^{(n)}\)-measurable, and
\(\Delta_{n(t-1)}\) is \(\mathcal F_{t-1}^{(n)}\)-measurable.
The tower property therefore gives the identity
\begin{align*}
    \E\langle\nabla M_\eta(\Delta_{nt}),H_t\rangle
    &=\E\left\langle\nabla M_\eta(\Delta_{n(t-1)}),
        \E[H_t\mid\mathcal F_{t-1}^{(n)}]\right\rangle\\
    &\quad +\E\left\langle
        \nabla M_\eta(\Delta_{nt})-\nabla M_\eta(\Delta_{n(t-1)}),
        \E[H_t\mid\mathcal F_t^{(n)}]\right\rangle.
\end{align*}
Since \(\nabla M_\eta(0)=0\) and \(\nabla M_\eta\) is
\(1/\eta\)-Lipschitz, equations~\eqref{eq:universal-moment-bound},
\eqref{eq:Delta-block-increment}, and
\eqref{eq:block-noise-conditional-means} imply
\begin{align*}
    T_{16}
    \leq C\sqrt\alpha\,\rho_{\mathrm{mix}}^n
        \E\|\Delta_{n(t-1)}\|
       +C\sqrt\alpha\,
        \E\|\Delta_{nt}-\Delta_{n(t-1)}\|\leq C\bigl(\sqrt\alpha\,\rho_{\mathrm{mix}}^n
                 +\alpha^{3/2}n+\alpha\bigr)
    \leq C\alpha.
\end{align*}
The last inequality uses \(n=\lfloor\alpha^{-1/2}\rfloor\) and holds
for all sufficiently small \(\alpha\). The full-block lag is admissible
because the coupled-difference increment in
\eqref{eq:Delta-block-increment} is \(\mathcal O(\sqrt\alpha)\).
No conditioning at an interior point of a coupled block is used. Combining the bounds for \(T_{11},\ldots,T_{16}\), and using
\(\alpha^2n^2\leq\alpha\), gives
\[
    T_1\leq
    \bigl(2r+2(\sqrt\gamma-r)\bigr)\alpha nV_t+C\alpha
    =2\sqrt\gamma\,\alpha nV_t+C\alpha.
\]
This proves Lemma~\ref{lem:SSCT1}.

\subsection{Proof of Lemma \ref{lem:SSCT2}}\label{sec:problemT2}
By \eqref{eq:universal-block-difference},
\[
    R=(\Delta_{nt+n}-\Delta_{nt})
      +\bigl(1-(1-\alpha)^n\bigr)\Delta_{nt}.
\]
Using \(1-(1-\alpha)^n\leq\alpha n\),
\eqref{eq:universal-moment-bound}, and
\eqref{eq:Delta-block-increment}, we obtain, for all sufficiently large \(t\),
\begin{align*}
    \|R\|_{L^2}
    \leq\|\Delta_{nt+n}-\Delta_{nt}\|_{L^2}
          +\alpha n\|\Delta_{nt}\|_{L^2}\leq C(\sqrt\alpha+\alpha n)
    \leq C\sqrt\alpha.
\end{align*}
Consequently, \(T_2=\E\|R\|^2\leq C\alpha\), proving
Lemma~\ref{lem:SSCT2}.

\subsection{Proof of Lemma \ref{lem:difference-in-Y}}\label{sec:difference-in-Y}
By \eqref{eq: additive different noise},
\[
    A_{t+1}-A_t
    =
    -\alpha A_t
    +
    \sqrt{\alpha}
    \bigl(
        \mT(\sqrt{\alpha}A_t+\theta^*)
        -
        \mT(\theta^*)
    \bigr)
    +
    \sqrt{\alpha}h(x_t).
\]
Therefore, telescoping over $m$ steps yields
\begin{align}
A_{t+m}-A_t
&=
\sum_{k=t}^{t+m-1}
\left[
    -\alpha A_k
    +
    \sqrt{\alpha}
    \bigl(
        \mT(\sqrt{\alpha}A_k+\theta^*)
        -
        \mT(\theta^*)
    \bigr)
\right]
+
\sqrt{\alpha}
\sum_{k=t}^{t+m-1}h(x_k).\nonumber
\end{align}
Then, by Assumption \ref{assumption:contraction} and Lemma \ref{lem:fourthmoment},
\begin{align*}
\E[\|A_{t+m}-A_t\|^2]
&\lesssim
\E\left[\left\|
    \sum_{k=t}^{t+m-1}
        -\alpha A_k
        +
        \sqrt{\alpha}
        \bigl(
            \mT(\sqrt{\alpha}A_k+\theta^*)
            -
            \mT(\theta^*)
        \bigr)
\right\|^2\right]+
\alpha
\E\left[\left\|
    \sum_{k=t}^{t+m-1}h(x_k)
\right\|^2\right]\\
&\lesssim
\alpha^2
m\sum_{k=t}^{t+m-1}\E[\|A_k\|^2]
+
\alpha
\E\left[\left\|
    \sum_{k=t}^{t+m-1}h(x_k)
\right\|^2\right]\lesssim
\alpha^2m^2
+
\alpha
\E\left[\left\|
    \sum_{k=t}^{t+m-1}h(x_k)
\right\|^2\right].
\end{align*}
Notice that, by Assumption~\ref{assumption:noise}, 
\begin{align*}
\E\left[\left\|
    \sum_{k=t}^{t+m-1}h(x_k)
\right\|^2\right]
=
\sum_{k,\ell=t}^{t+m-1}
\E\bigl[
    \langle h(x_k),h(x_\ell)\rangle
\bigr]\lesssim
m+
\sum_{r=1}^{m-1}(m-r)\rho_{\operatorname{mix}}^r
\lesssim m.
\end{align*}
Then, we have
\[
    \E[\|A_{t+m}-A_t\|^2]
    \lesssim
    \alpha^2m^2+\alpha m.
\]
Similarly, we have
\[
    \E[\|Z_{t+m}-Z_t\|^2]
    \lesssim
    \alpha^2m^2+\alpha m,
\]
which completes the proof of Lemma \ref{lem:difference-in-Y}.

\section{Details Omitted from Section~\ref{sec:linearSA}}
\begin{lemma}
\label{lem:LSA}
Suppose that \(\overline{\mathsf A}\) is Hurwitz, and $\kappa
    >
    \max_{\lambda\in\sigma(\overline{\mathsf A})}
    \frac{|\lambda|^2}{-2\operatorname{Re}(\lambda)}.$ Define $\mT_\kappa(\theta)
    :=
    \left(I_d+\frac{\overline{\mathsf A}}{\kappa}\right)\theta
    +\frac{\overline s}{\kappa}.$ Then \(\mT_\kappa\) is contractive under some norm.
\end{lemma}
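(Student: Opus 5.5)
The plan is to show that $M:=I_d+\overline{\mathsf A}/\kappa$ has spectral radius strictly less than one. Then we build a norm in which the affine map $\mT_\kappa$ is a contraction. Since $\mT_\kappa(\theta)-\mT_\kappa(\theta')=M(\theta-\theta')$, contractivity of $\mT_\kappa$ under a norm $\|\cdot\|_c$ is equivalent to $\|M\|_{c}<1$ for the induced operator norm. It is a standard fact that for any matrix $M$ with $\rho(M)<1$ and any $\gamma\in(\rho(M),1)$ there is a vector norm whose induced operator norm satisfies $\|M\|_c\le\gamma$. So the whole argument reduces to a spectral computation.

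\textbf{Spectral step.} The eigenvalues of $M$ are exactly $1+\lambda/\kappa$ for $\lambda\in\sigma(\overline{\mathsf A})$. Write $\lambda=a+ib$ with $a<0$ by the Hurwitz property. Then
\[
    |1+\lambda/\kappa|^2
    =1+\frac{2a}{\kappa}+\frac{a^2+b^2}{\kappa^2}
    =1-\frac{1}{\kappa}\Bigl(-2a-\frac{|\lambda|^2}{\kappa}\Bigr).
\]
The assumption $\kappa>|\lambda|^2/(-2\operatorname{Re}\lambda)$ is equivalent to $-2a\kappa>|\lambda|^2$, that is, $-2a-|\lambda|^2/\kappa>0$. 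Hence $|1+\lambda/\kappa|<1$ for each of the finitely many eigenvalues, and $\rho(M)<1$.

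\textbf{Norm construction.} Fix $\gamma\in(\rho(M),1)$. I would take the explicit norm $\|x\|_c:=\sum_{k\ge0}\gamma^{-k}\|M^kx\|$. The series converges because Gelfand's formula gives $\|M^k\|\le C(\rho(M)+\epsilon)^k$ with $\rho(M)+\epsilon<\gamma$. It is a norm because it is sublinear and dominates $\|x\|$. It satisfies
\[
    \|Mx\|_c=\sum_{k\ge0}\gamma^{-k}\|M^{k+1}x\|
    =\gamma\sum_{k\ge1}\gamma^{-k}\|M^kx\|\le\gamma\|x\|_c .
\]
Alternatively, one could cite the Lyapunov-equation construction $\|x\|_c=\sqrt{x^\top Px}$ with $M^\top PM\preceq\gamma^2P$.

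\textbf{Main obstacle.} The only non-routine point is the spectral inequality, which is where the specific threshold on $\kappa$ is used. The rest is the standard spectral-radius-to-norm argument.
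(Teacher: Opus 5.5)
Your proof is correct and follows the paper's route: the same eigenvalue computation $|1+\lambda/\kappa|^2=1+2\operatorname{Re}(\lambda)/\kappa+|\lambda|^2/\kappa^2<1$ gives $\rho(I_d+\overline{\mathsf A}/\kappa)<1$, and a standard construction then supplies a norm in which this matrix is a strict contraction. The only difference is the norm: the paper uses the quadratic norm $\|v\|_c=\sqrt{v^\top Pv}$ with $P=\sum_{j\ge0}(J_{\mathrm{LSA}}^j)^\top J_{\mathrm{LSA}}^j$ (your stated alternative), which gives modulus $\sqrt{1-1/\lambda_{\max}(P)}$, whereas your weighted-orbit norm $\sum_{k\ge0}\gamma^{-k}\|M^kx\|$ gives any prescribed modulus $\gamma\in(\rho(M),1)$; both are valid.
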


\begin{proof}[Proof of Lemma~\ref{lem:LSA}]
Recall that the Jacobian of \(\mT_\kappa\) is
\[
    J_{\mathrm{LSA}}
    :=\nabla\mT_\kappa(\theta^*)
    =I_d+\frac{\overline{\mathsf A}}{\kappa}.
\]
Since $\overline{\mathsf A}$ is Hurwitz,
every $\lambda\in\sigma(\overline{\mathsf A})$ satisfies
$\operatorname{Re}(\lambda)<0$. By the choice of $\kappa$,
\[
    \left|1+\frac{\lambda}{\kappa}\right|^2
    =1+\frac{2\operatorname{Re}(\lambda)}{\kappa}
      +\frac{|\lambda|^2}{\kappa^2}
    <1.
\]
Hence $\rho(J_{\mathrm{LSA}})<1$, and the series $P:=\sum_{j=0}^{\infty}(J_{\mathrm{LSA}}^j)^\top J_{\mathrm{LSA}}^j$ converges to a symmetric positive-definite matrix satisfying
\[
    J_{\mathrm{LSA}}^\top P J_{\mathrm{LSA}}-P=-I_d,
    \qquad P\succeq I_d.
\]
Define $\|v\|_c:=\sqrt{v^\top Pv}$. For every $v\in\R^d$,
\[
    \|J_{\mathrm{LSA}} v\|_c^2
    =\|v\|_c^2-\|v\|^2
    \leq
    \left(1-\frac{1}{\lambda_{\max}(P)}\right)\|v\|_c^2.
\]
Since $P\succeq I_d$, the number
\[
    \gamma_\kappa
    :=\sqrt{1-\frac{1}{\lambda_{\max}(P)}}
\]
is well-defined and belongs to $[0,1)$. Taking square roots therefore
gives
\[
    \|\mT_\kappa(\theta)-\mT_\kappa(\theta')\|_c
    =\|J_{\mathrm{LSA}}(\theta-\theta')\|_c
    \leq\gamma_\kappa\|\theta-\theta'\|_c,
    \qquad \forall\,\theta,\theta'\in\R^d.
\]
\end{proof}

\section{Details Omitted from Section~\ref{sec:Q-learning}}\label{sec:proof-Q-learning}
Recall that the mean operator of asynchronous \(Q\)-learning is given
coordinatewise by
\begin{align}
    [\mT(q)](s,a)
    &=
    \bigl(1-\nu(s,a)\bigr)q(s,a)
    +
    \nu(s,a)
    \left(
        r(s,a)
        +
        \rho
        \sum_{s'\in\mS}
        \mathsf P(s'\mid s,a)
        \max_{a'\in\mA}q(s',a')
    \right).
    \label{eq:q-mean-operator-proof}
\end{align}
We verify the nonsmoothness condition in
Corollary~\ref{co:bias}.

For \(s\in\mS\), since \(\mS\) and \(\mA\) are finite, there exists
\(\Delta_*>0\) such that every nonoptimal action satisfies
\[
    V^*(s)-q^*(s,a)\geq\Delta_*,
    \qquad
    a\notin\mA^*(s),
\]
with the convention \(\Delta_*=\infty\) if every action is optimal.
Consequently, for any fixed \(u\in\R^{d_Q}\), when \(w>0\) is sufficiently small,
\[
    \max_{a\in\mA}
    \left\{
        q^*(s,a)+wu(s,a)
    \right\}
    =
    V^*(s)
    +
    w\max_{a\in\mA^*(s)}u(s,a).
\]
Using \eqref{eq:q-mean-operator-proof}, we therefore obtain
\[
    \lim_{w\downarrow0}
    \frac{\mT(q^*+wu)-\mT(q^*)}{w}
    =
    H(u),
\]
where
\begin{equation}
\label{eq:q-directional-derivative}
    [H(u)](s,a)
    =
    \bigl(1-\nu(s,a)\bigr)u(s,a) 
    +
    \rho\nu(s,a)
    \sum_{s'\in\mS}
    \mathsf P(s'\mid s,a)
    \max_{a'\in\mA^*(s')}u(s',a').
\end{equation}
Thus, \(\mT\) is locally one-sided directionally differentiable at \(q^*\),
and \(H\) is the positively homogeneous extension of the the one-sided
directional derivative map at \(q^*\). It remains to show that some coordinate of \(H\) has a nonsingleton
subdifferential at the origin. By assumption, there exists
\(\bar s\in\mS\) such that $|\mA^*(\bar s)|>1.$ Moreover, Assumption~\ref{assumption:MC} implies that the state chain induced
by \(\pi_b\) is irreducible. Hence its stationary distribution
\(\rho_{\pi_b}\) has full support, and
\[
    0
    <
    \rho_{\pi_b}(\bar s)
    =
    \sum_{s\in\mS}
    \rho_{\pi_b}(s)
    \sum_{a\in\mA}
    \pi_b(a\mid s)
    \mathsf P(\bar s\mid s,a).
\]
Therefore, there exists
\((\hat s,\hat a)\in\mS\times\mA\) such that $\mathsf P(\bar s\mid\hat s,\hat a)>0.$ Also, Assumption~\ref{assumption:MC} gives $\nu(\hat s,\hat a)>0.$ Since each mapping $u
    \mapsto
    \max_{a'\in\mA^*(s')}u(s',a')$ is a finite-valued convex function, the subdifferential sum rule
\cite{rockafellar2015convex}, together with
\eqref{eq:q-directional-derivative}, yields
\begin{align}
    \partial [H_Q]_{(\hat s,\hat a)}(0)
    &=
    \bigl(1-\nu(\hat s,\hat a)\bigr)e_{\hat s,\hat a}
    +
    \rho\nu(\hat s,\hat a)
    \sum_{s'\in\mS}
    \mathsf P(s'\mid\hat s,\hat a)
    \operatorname{conv}
    \left\{
        e_{s',a}:a\in\mA^*(s')
    \right\}.
    \label{eq:q-subdiff}
\end{align}
Because $\rho\nu(\hat s,\hat a)
    \mathsf P(\bar s\mid\hat s,\hat a)>0,$ the set in \eqref{eq:q-subdiff} contains at least two elements. Hence $\partial [H]_{(\hat s,\hat a)}(0)$ is nonsingleton.

\section{Details of the Numerical Experiments}

\subsection{A Motivating Numerical Example}\label{sec:experiment-intro}
To illustrate the long-run behavior of constant-stepsize Q-learning and the
effect of tail averaging, we consider a two-state, two-action discounted MDP
with discount factor $\rho=0.9$. The reward matrix and the transition
matrices corresponding to the two actions are given by
\[
R=
\begin{pmatrix}
1 & 0.4\\
0.7 & 0.2
\end{pmatrix},
\qquad
P_0=
\begin{pmatrix}
0.85 & 0.15\\
0.75 & 0.25
\end{pmatrix},
\qquad
P_1=
\begin{pmatrix}
0.20 & 0.80\\
0.10 & 0.90
\end{pmatrix}.
\]
We use a fixed behavior policy that selects action $0$ with probability $0.9$
and action $1$ with probability $0.1$ at each state. Starting from
$q_0=0$, we run Q-learning for $300{,}000$ iterations with constant stepsizes
$\alpha\in\{0.2,0.4,0.8\}$. The first three panels of
Figure~\ref{fig:q-learning-long-run-tail-average} display the last $5{,}000$
iterates, projected onto the coordinates
$(q(0,0),q(0,1),q(1,0))$, with the star indicating the optimal action-value
function $q^*$ computed by value iteration. The final panel illustrates the
effect of tail averaging: at each time $t$, we define
\[
    \bar q_t^{(\alpha)}
    :=
    \frac{1}{t-\lfloor t/2\rfloor}
    \sum_{k=\lfloor t/2\rfloor+1}^{t} q_k^{(\alpha)},
\]
and plot $\bigl\|\bar q_t^{(\alpha)}-q^*\bigr\|_\infty$ against $t$ on a log--log scale for
the three stepsizes.
\subsection{Main Numerical Experiments}\label{app:numerical-details}
\noindent\textbf{MDP and Behavior Policy.}
We consider discounted MDPs with
\[
    |\mS|=4,
    \qquad
    |\mA|=2,
    \qquad
    \rho=0.9.
\]
The behavior policy is uniform over the two actions:
\[
    \pi_b(a\mid s)=\frac12,
    \qquad
    \forall (s,a)\in\mS\times\mA.
\]
The transition kernel is action-independent and given by $\mathsf P(\cdot\mid s,a)=P_0(s,\cdot),$
where
\[
P_0
=
\begin{pmatrix}
0.92 & 0.06 & 0.01 & 0.01\\
0.01 & 0.92 & 0.06 & 0.01\\
0.01 & 0.01 & 0.92 & 0.06\\
0.06 & 0.01 & 0.01 & 0.92
\end{pmatrix}.
\]
The reward is deterministic and has the form $r(s,a)=r_{\mS}(s)+r_{\mA}(a),$ with
\[
    r_{\mS}=(0.05,\,0.15,\,0,\,0.20).
\]
For the no-tie MDP, we set
\[
    r_{\mA}=(0.75,\,0.10),
\]
so that one action is uniquely optimal at every state. For the tied MDP, we
set
\[
    r_{\mA}=(0.75,\,0.75),
\]
so that the two actions are tied and optimal at every state. 

\noindent\textbf{Q-Learning Implementation.}
We run asynchronous constant-stepsize Q-learning with
\[
    \alpha\in\{0.05,\,0.10,\,0.20,\,0.40\}.
\]
Each run is initialized at
\[
    Q_0(s,a)=-5,
    \qquad
    \forall (s,a)\in\mS\times\mA.
\]
Each trajectory
is simulated for $T=10^7$ iterations.

\end{document}